\def\isarxiv{1} 
\newcommand{\assign}{:=}
\newcommand{\mathd}{\mathrm{d}}
\newcommand{\mathe}{\mathrm{e}}
\newcommand{\tmop}[1]{\ensuremath{\operatorname{#1}}}
\newcommand{\tmstrong}[1]{\textbf{#1}}
\newcommand{\tmmathbf}[1]{\ensuremath{\boldsymbol{#1}}}
\newenvironment{itemizedot}{\begin{itemize} }{\end{itemize}}
\newcommand{\ind}{\mathrel{\perp\!\!\!\perp}} 
\definecolor{shadecolor}{gray}{0.90}
\declaretheoremstyle[
headfont=\normalfont\bfseries,
notefont=\mdseries, notebraces={(}{)},
bodyfont=\normalfont,
postheadspace=0.5em,
spaceabove=0.5em,
spacebelow=0.5em,
mdframed={
  skipabove=8pt,
  skipbelow=8pt,
  hidealllines=true,
  backgroundcolor={shadecolor},
  innerleftmargin=4pt,
  innerrightmargin=4pt}
]{shaded}
\declaretheorem[within=section]{definition}
\declaretheorem[style=shaded,sibling=definition]{theorem}
\declaretheorem[sibling=definition]{proposition}
\declaretheorem[style=shaded,sibling=definition]{assumption}
\declaretheorem[style=shaded,sibling=definition]{corollary}
\declaretheorem[style=shaded,sibling=definition]{lemma}
\declaretheorem[sibling=definition]{remark}
\begin{document}

\ifdefined\isarxiv
  \title{\textbf{Unveiling the Cycloid Trajectory of EM Iterations in Mixed Linear Regression}}
  \author{
  Zhankun Luo\thanks{Correspondence to: \href{mailto:luo333@purdue.edu}{luo333@purdue.edu}}
  \and 
  Abolfazl Hashemi\thanks{Correspondence to: \href{mailto:abolfazl@purdue.edu}{abolfazl@purdue.edu}\\
  \indent\texttt{Keywords:} Expectation-Maximization (EM) algorithm, mixtures of
linear regression, statistical learning.}
  }
  \date{Purdue University}
  \maketitle
\else
  \twocolumn[
  \icmltitle{Unveiling the Cycloid Trajectory of EM Iterations in Mixed Linear Regression}
  \icmlsetsymbol{equal}{*}
  \begin{icmlauthorlist}
  \icmlauthor{Zhankun Luo}{yyy}
  \icmlauthor{Abolfazl Hashemi}{yyy}
  \end{icmlauthorlist}
  \icmlaffiliation{yyy}{School of Electrical and Computer Engineering, Purdue University, IN, USA}
  \icmlcorrespondingauthor{Zhankun Luo}{luo333@purdue.edu}
  \icmlcorrespondingauthor{Abolfazl Hashemi}{abolfazl@purdue.edu}
  \vskip 0.3in
  ]
  \printAffiliationsAndNotice{}
\fi
\begin{abstract}
We study the trajectory of iterations and the convergence rates of the Expectation-Maximization (EM) algorithm for two-component Mixed Linear Regression (2MLR).
The fundamental goal of MLR is to learn the regression models from unlabeled observations.
The EM algorithm finds extensive applications in solving the mixture of linear regressions.
Recent results have established the super-linear convergence of EM for 2MLR in the noiseless and high SNR settings under some assumptions and its global convergence rate with random initialization has been affirmed.
However, the exponent of convergence has not been theoretically estimated and the geometric properties of the trajectory of EM iterations are not well-understood. 
In this paper, first, using Bessel functions we provide explicit closed-form expressions for the EM updates under all SNR regimes. Then, 
in the noiseless setting, we completely characterize the behavior of EM iterations by deriving a recurrence relation at the population level 
and notably show that all the iterations lie on a certain cycloid.
Based on this new trajectory-based analysis, we exhibit the theoretical estimate for the exponent of super-linear convergence and further improve the statistical error bound at the finite-sample level.
Our analysis provides a new framework for studying the behavior of EM for Mixed Linear Regression.
\end{abstract}

\doparttoc 
\faketableofcontents 

\section{Introduction}\label{sec:intro}
A mixture model of parameterized distributions, such as the Mixture of Linear Regression (MLR) and Gaussian Mixture Model (GMM), is remarkably powerful for modeling intricate relationships in practice.
It is highly suitable to address the challenges arising from data with corruptions, missing values, and latent variables~\citep{beale1975missing}. 
In this paper, we focus on the symmetrical two-component mixed linear regression (2MLR) that can be expressed as follows:
\begin{equation}\label{eq:model}
  y= (- 1)^{z+ 1} \langle \theta^{\ast}, x
     \rangle +\varepsilon,
\end{equation}
where $\varepsilon$ is the addictive noise, $s= (x,y) \in \mathbb{R}^d \times \mathbb{R}$ is a pair of the covariate and response random variable,  $z\in\{1, 2\} \sim \mathcal{C}\mathcal{A}\mathcal{T} (\pi^{\ast})$ represents the latent
variable, namely the label of data, and $\theta^{\ast}, \pi^{\ast}$ are the true values for the regression parameters and the mixing weights, respectively.

Maximum Likelihood Estimation (MLE) provides a systematic framework to study such models.
However, computing the Maximum Likelihood Estimate (MLE) for high-dimensional data is intractable due to its non-convexity and numerous spurious local maxima.
Various approaches have been proposed to handle this intractable problem. 
~\citet{tipping1999mixtures} adopted PCA by connecting the inherent geometric property and probabilistic interpretation with Gaussian covariates and errors.
~\citet{kong20nips, kong20icml} employed a meta-learning approach to learn the parameters of MLR with small batches.
~\citet{shen2019nips} proposed an iterative variant of the least trimmed squares to handle MLR with corruptions.
Another competitor is the moment-based method, in conjunction with the gradient descent algorithm~\citep{li2018learning}.
Moreover, the Expectation Maximization (EM) method stands out for its computational efficiency and ease of practical implementation. In the context of  \eqref{eq:model}, EM estimates the regression parameters and the mixing weights from observations. It operates in two steps: E-step computes the expected log-likelihood using the current parameter estimate; M-step updates the parameters to maximize the expected log-likelihood compute in the E-step.
These steps serve to maximize the lower bound on the MLE objective iteratively until convergence.

\citet{dempster1977maximum} presented the modern EM algorithm and demonstrated its likelihood to be monotonically increasing with EM updates.
Theoretically, ~\citet{wu1983convergence} established the global convergence of a unimodal likelihood under some regularity conditions.
Empirically, EM showed success in the MLR problem ~\citep{de1989mixtures,jordan1994hierarchical,jordan1995convergence}. 
Additionally, ~\citet{wedel1995mixture} introduced a framework of EM for MLR involving unknown latent variables.

\noindent\textbf{Related Works.}
Both MLR and GMM can be viewed as instances of subspace clustering, thus sharing similarities in the analysis of EM.
~\citet{dasgupta2007} showed parameterized well-separated spherical Gaussians can be learned to near-optimal precision using a variant of EM.
Furthermore, ~\citet{zhao2020statistical} illustrated the linear global convergence of EM with well-separated spherical Gaussians and initialization within a ball around the truth.
For GMM with $k\geq 3$ components, ~\citet{jin2016local} demonstrated the EM with a random initialization is frequently trapped in local minima
with high probability, and local maxima can exhibit arbitrarily inferior likelihood than that of any global maximum.
~\citet{chen2020likelihood} and ~\citet{qian2022spurious} characterized the only types of local minima for EM and $k$-means (EM with hard labels) in GMM under a separation condition.
By leveraging the characterized structures of local minimum, a general framework was proposed to escape local minima ~\cite{zhang2020symmetry, hong2022geometric}, unifying variants of $k$-means from a geometric perspective.
~\citet{katsevich2023likelihood} revealed the link between EM and the moment method for GMM via an asymptotic expansion of log-likelihood in low Signal-to-Noise Ratio (SNR).
So far, the specific case of GMM with $k=2$ components (2GMM) has been studied intensively. 
The global convergence of EM with random initialization for spherical 2GMM was established in ~\citep{klusowski2016statistical, xu2016global, daskalakis17b}.
~\citet{wu2021randomly} elaborated the convergence result in all SNR regimes, while constraining the initialization within a very small radius.
~\citet{qian2019global,qian2020local} extended this convergence result from the spherical Gaussian to rotation-invariant log-concave densities.
~\citet{ndaoud2018sharp} examined the phase transition threshold of SNR for the exact recovery of 2GMM.
Similarly, EM for MLR with two components (2MLR) with random initialization converges globally.
~\citet{balakrishnan2017statistical} firstly proved the global convergence of EM for 2MLR with valid initialization within a ball around the truth.
~\citet{dana2019estimate2mix} extended the convergence result in the high SNR regime for the case where the cosine angle between the initial parameters and the truth is sufficiently large.
~\citet{kwon2022dissertation, kwon2019global} confirmed that EM for 2MLR converges from a random initialization with high probability.
~\citet{yudong2018trans} bounded the statistical error of EM for 2MLR in different SNR regimes.
~\citet{xu2020towards} illustrated the generalization error bounds of log-likelihood of the first-order EM for 2MLR.
~\citet{kwon2021minimax} further studied the statistical error and the convergence rate of EM for 2MLR under all regimes of SNR.
~\citet{yi2014alternating, yi2016solving} considered Alternating Minimization (AM), an EM variant with hard labels, for 2MLR in the setting of no noise.
Accordingly, ~\citet{ghosh20a} demonstrated a super-linear convergence rate of AM for 2MLR in the noiseless setting within a specific convergence region.
~\citet{kwon2021minimax} generalized the noiseless setting to the high SNR regime while retaining the super-linear convergence.
~\citet{kwon2020converges} provided a convergence analysis of EM for MLR with multiple components, covering the most general scenarios.

Previous works on the convergence analysis of EM for 2MLR have overlooked the existence of unbalanced mixing weights and assumed a balanced setting.
~\citet{dwivedi2020sharp,dwivedi2020unbalanced} accounted for unbalanced mixing weights and revealed a sharp contrast in statistical error and convergence rate between unbalanced and balanced cases, for the special case of no separation of parameters.
In the previous convergence analysis of EM, the location scale of GMM and the noise variance of MLR are fixed.
In light of this, ~\citet{ren2022beyond} proposed an EM variant for 2GMM with the unknown location scale to speed up the convergence of EM.
~\citet{chandrasekher2021sharp} devised a tool that demonstrated noteworthy potential, employing Gordon state evolution update ~\citep{thrampoulidis2014gaussian, thrampoulidis2015gordon} to accurately estimate both the statistical error and the convergence rate at the finite-sample level.

\noindent\textbf{Contributions.}
In this paper, we propose a framework that offers explicit closed-form expressions with Bessel functions (Chapter 10 of ~\citep{olver2010nist}) for the EM updates of 2MLR, enabling the analysis of convergence rate across all SNR regimes.
Moreover, our framework includes both scenarios where the mixing weights are balanced and unbalanced.
More specifically, we focus on EM updates in the noiseless setting, and present the following contributions:
\begin{itemize}
  \item We derive the recurrence relation, and further show the cycloid trajectory of EM iterations at the population level.
  \item We establish the super-linear convergence without restrictions in previous works \citep{ghosh20a,kwon2021minimax}.
  \item We conduct a finer analysis for the statistical errors in regression parameters, and explore how the error in mixing weights is influenced by the angle formed by the EM iteration and the true regression parameters, the true mixing weights.
\end{itemize}

\section{Problem Setup}\label{sec:setup}
\textbf{Notation.}
Consider the 2MLR model in \eqref{eq:model}.
Let $n$ denote the number of
samples $\mathcal{S} \assign \{ x_i, y_i \}_{i=1}^n$ used for each EM
update, $\{ z_i \}_{i=1}^n$ be the values of latent variable for these
samples. 
Further, $\sigma^2$ denotes the noise variance, $\eta \assign \frac{\| \theta^{\ast}
\|}{\sigma}$ is the signal-to-noise ratio (SNR), and $\bar{\theta}\assign\frac{\theta}{\sigma}, \bar{\theta}^\ast\assign\frac{\theta^\ast}{\sigma}$ are  the normalized parameters. $f(\nu)\ast g(\nu)$ stands for a convolution of $f(\nu)$ and $g(\nu)$, and $a\vee b, a\wedge b$ refer to the the least upper bound $\max(a, b)$ and greatest lower bound $\min(a, b)$ of $a,b$ respectively.
The symbol $\ind$ indicates that the random variables are independent. 
$K_0,K_1$ are the modified Bessel functions of the second kind with parameters 0,1 respectively  (Chapter 10 of ~\citep{olver2010nist}).

\noindent\textbf{Assumptions.}

\begin{assumption}\label{ass:1}
    The latent variable and the mixing weights $(z ; \pi)$ are independent
    of the regression parameters $\theta$, namely $(z; \pi) \ind \theta$.
\end{assumption}

\begin{assumption}\label{ass:2}
    The additive noise $\varepsilon$ is independent of the covariate random variable, latent variable, the regression parameters, and the mixing weights, that is $\varepsilon \ind (x, z; \theta, \pi)$.
\end{assumption}

\begin{assumption}\label{ass:3}
    The covariate random variable $x$ is independent of the latent
    variable, the regression parameters, and the mixing weights, namely $x \ind
    (z ; \theta, \pi)$.
\end{assumption}

\begin{assumption}\label{ass:4}
    Both the covariate $x$ and the noise $\varepsilon$ are Gaussians, $x
    \sim \mathcal{N} (0, I_d), \varepsilon \sim \mathcal{N} (0, \sigma^2)$,
    where $I_d$ is a $d$ by $d$ identity matrix.
\end{assumption}
We leverage the assumption of the Gaussianity of the covariate, which is standard in this line of work (see, e.g., Assumption 1 in~\cite{ghosh20a} and Section 2.1 in~\cite{kwon2021minimax}). 
The above standard assumptions are necessary to derive the forthcoming results. 

\textbf{EM Updates.}
\citet{balakrishnan2017statistical} considered the following population EM update for 2MLR given the balanced 
mixing weights $\pi = \pi^\ast = \left\{ \frac{1}{2}, \frac{1}{2} \right\}$, where $\mathbb{E}_{s\sim p(s\mid\theta^\ast, \pi^\ast)} \assign \mathbb{E}_{x \sim \mathcal{N} (0, I_d)}$ $\mathbb{E}_{y \mid x \sim \pi^\ast(1)\mathcal{N} (\langle x, \theta^{\ast} \rangle, \sigma^2)+ \pi^\ast(2)\mathcal{N} (-\langle x, \theta^{\ast} \rangle, \sigma^2)}$.
\begin{equation}
  M (\theta) =\mathbb{E}_{s\sim p(s\mid\theta^\ast, \pi^\ast)} 
   \tanh \left(
  \frac{y\langle x, \theta \rangle}{\sigma^2}\right)
  y  x\nonumber
\end{equation}
To extend the EM update for both balanced and unbalanced mixing weights, we introduce 
\begin{equation}\label{eq:nu_def}
    \nu\assign \frac{\log \pi(1) - \log \pi(2)}{2}, \quad \pi=\{\pi(1), \pi(2)\},
\end{equation}
that is $\tanh(\nu)=\pi(1)- \pi(2)$. Thus, the population EM update rule for regression parameters $\theta$ becomes
\begin{equation}\label{eq:theta}
  M(\theta, \nu) \assign \mathbb{E}_{s\sim p(s\mid\theta^\ast, \pi^\ast)} 
  \tanh \left(
  \frac{y \langle x, \theta \rangle}{\sigma^2} +\nu\right)
  y  x ,
\end{equation}
while the corresponding EM update rule for $\tanh(\nu)$ is (see the derivations in the supplementary, Appendix~\ref{sup:derive_em})
\begin{equation}\label{eq:nu}
  N(\theta, \nu) \assign \mathbb{E}_{s\sim p(s\mid\theta^\ast, \pi^\ast)} 
  \tanh \left(
  \frac{y \langle x, \theta \rangle}{\sigma^2} +\nu\right).
\end{equation}

Subsequently, the finite-sample EM update rules are
\begin{eqnarray}\label{eq:finite}
M_n  (\theta, \nu) &=& \left( \frac{1}{n}  \sum_{i = 1}^n x_i x_i^{\top}
\right)^{-1}\nonumber\\
& &\left( \frac{1}{n}  \sum_{i = 1}^n \tanh \left(
\frac{y_i\langle x_i, \theta\rangle}{\sigma^2} +\nu \right) y_i x_i \right)\nonumber\\
N_n  (\theta, \nu) &=& \frac{1}{n}  \sum_{i = 1}^n \tanh \left(
\frac{y_i\langle x_i, \theta\rangle}{\sigma^2} +\nu\right).
\end{eqnarray}

For the ease of theoretical analysis, we will use the easy EM method as discussed in Section \ref{sec:finite} with the following update
\begin{equation}\label{eq:easyEM}
    M^{\tmop{easy}}_n  (\theta, \nu)=  \frac{1}{n}  \sum_{i = 1}^n \tanh \left(
\frac{y_i\langle x_i, \theta\rangle}{\sigma^2} +\nu \right) y_i x_i. 
\end{equation}

\textbf{Trajectory-Relevant Quantities.} 
Our trajectory-based analysis further utilizes certain angles described next.
We denote the cosine of the angle between the estimate for
parameters $\theta$ and the true value $\theta^{\ast}$ by $\rho \assign \frac{\langle \theta, \theta^{\ast} \rangle}{\| \theta \| \cdot
\| \theta^{\ast} \|}$. Furthermore, $\varphi \assign \frac{\pi}{2} - \arccos | \rho | \in [ 0,
\frac{\pi}{2} ), \phi \assign 2 \arccos | \rho | \in (0, \pi] $ are
defined accordingly. As apparent from these definitions, we refer to $\rho$ as the sub-optimality cosine and $\varphi$ and $\phi$ as \textit{sub-optimality angles}, respectively.

Further, let $\hat{e}_1 \assign \frac{\theta^{\ast}}{\| \theta^{\ast} \|}, \vec{e}_1 \assign \frac{\theta}{\| \theta \|}$ be the direction unit vectors of $\theta^{\ast}, \theta$, and $\hat{e}_2 : = \frac{\theta - \hat{e}_1
\hat{e}_1^{\top} \theta}{\| \theta - \hat{e}_1 \hat{e}_1^{\top} \theta \|},
\vec{e}_2 : = \frac{\theta^\ast - \vec{e}_1
\vec{e}_1^{\top} \theta^\ast}{\| \theta^\ast - \vec{e}_1 \vec{e}_1^{\top} \theta^\ast \|}$ 
be the unit vectors on the plane $\tmop{span} \{ \theta^{\ast}, \theta \}$ which are perpendicular to $\hat{e}_1,\vec{e}_1$ respectively.
The superscript $t$ stands for the $t$-th EM iteration. For instance, $\theta^t, \pi^t$ denote the $t$-th iteration for regression parameters and mixing weights. 
These vectors will again simplify the ensuing trajectory-based discussion (see Fig.~\ref{fig:vec} for visualization).

\section{Population EM Updates}\label{sec:updates}
In this section we derive closed-form expressions for the update of population EM we introduced in Eq.~\eqref{eq:theta},~\eqref{eq:nu}. Let $U(\theta, \nu) : =\mathbb{E}_{s \sim p (s \mid \theta^{\ast},
    \pi^{\ast})}\log \cosh ( \frac{y \langle x, \theta \rangle}{\sigma^2} + \nu )$. Then, Eq.~\eqref{eq:theta},~\eqref{eq:nu}  can be written as 
the following relations.
\begin{equation}
   M(\theta, \nu) = \sigma^2 \nabla_{\theta} U(\theta, \nu), \quad N(\theta, \nu) =\nabla_{\nu}U(\theta, \nu).
\end{equation}
Therefore, we need to derive a closed-form expression for $U(\theta, \nu)$  which appears in the population update of both $\theta$ and $\nu$.
In the supplementary materials (Appendix~\ref{sup:lemma}), we provide the explicit expression of this expectation 
by introducing $K_0$, the modified Bessel function of the second kind with parameter 0 (Chapter 10 of ~\citep{olver2010nist}).
Subsequently, we derive the closed-form expressions for EM update rules at the population level in Theorem~\ref{thm:em_update} below (see Appendices~\ref{sup:lemma},~\ref{sup:updates}).
\begin{theorem}{(EM Updates across All SNR)}\label{thm:em_update}
    Let $\rho \assign \frac{\langle \theta,
    \theta^{\ast} \rangle}{\| \theta \| \cdot \| \theta^{\ast} \|},
    \bar{\theta} \assign \frac{\theta}{\sigma}, \bar{\theta}^{\ast}
    \assign \frac{\theta^{\ast}}{\sigma}$, then the EM update rules for
    $\theta, \tanh(\nu)$ at Population level are
    \ifdefined\isarxiv
      \normalsize
    \else
      \scriptsize
    \fi
    \begin{equation}\nonumber
        \begin{aligned}
    &M(\theta, \nu)=\left[ - \frac{\sigma}{\pi} \cdot \frac{\| \bar{\theta}^{\ast}
    \|^2}{\| \bar{\theta} \|^2} \cdot \frac{\sqrt{1 - \rho^2}}{\left( 1 + (1 -
    \rho^2) \| \bar{\theta}^{\ast} \|^2 \right)^{\frac{3}{2}}}  \cosh^{-1}
    (\nu^{\ast}) \right]\\
    &\left\{ \tanh (\nu) \ast \nu \left[ \alpha(\nu) \left(
    \frac{\frac{\bar{\theta}}{\| \bar{\theta} \|}}{\sqrt{1 - \rho^2} \|
    \bar{\theta}^{\ast} \|^2} + \hat{e}_2 \right) + \beta(\nu)  \vec{e}_2
    \right] \right\},\\
    &N(\theta, \nu)=\frac{{(1 + (1 - \rho^2) \| \bar{\theta}^{\ast} \|^2)^{-
    \frac{1}{2}}} }{\pi \| \bar{\theta} \| \cosh (\nu^{\ast})}
    \underset{\mathbb{R}}{\int} \mathd \nu'  \tanh (\nu - \nu')\\
    &K_0 \left(
    \frac{\sqrt{1 + \| \bar{\theta}^{\ast} \|^2} \cdot \left| \frac{\nu'}{\|
    \bar{\theta} \|} \right|}{[1 + (1 - \rho^2) \| \bar{\theta}^{\ast} \|^2]}
    \right)\cosh \left( \frac{\rho \| \bar{\theta}^{\ast} \| \left(
    \frac{\nu'}{\| \bar{\theta} \|} \right)}{[1 + (1 - \rho^2) \|
    \bar{\theta}^{\ast} \|^2]} - \nu^{\ast} \right) 
        \end{aligned}
    \end{equation}

    \normalsize
    where these coefficients are defined as 
    \ifdefined\isarxiv
      \normalsize
    \else
      \scriptsize
    \fi            
    \begin{equation}\nonumber
        \begin{aligned}
    &\alpha(\nu) \assign \cosh \left( \frac{\rho \| \bar{\theta}^{\ast} \|
    \left( \frac{\nu}{\| \bar{\theta} \|} \right)}{[1 + (1 - \rho^2) \|
    \bar{\theta}^{\ast} \|^2]} - \nu^{\ast} \right)\\
    &\quad K_0 \left(
    \frac{\sqrt{1 + \| \bar{\theta}^{\ast} \|^2} \cdot \left| \frac{\nu}{\|
    \bar{\theta} \|} \right|}{[1 + (1 - \rho^2) \| \bar{\theta}^{\ast} \|^2]}
    \right),\\
    &\beta(\nu) \assign \tmop{sgn} (\nu) \frac{\sqrt{1 + \|
    \bar{\theta}^{\ast} \|^2}}{\| \bar{\theta}^{\ast} \|} \sinh \left(
    \frac{\rho \| \bar{\theta}^{\ast} \| \left( \frac{\nu}{\| \bar{\theta} \|}
    \right)}{[1 + (1 - \rho^2) \| \bar{\theta}^{\ast} \|^2]} - \nu^{\ast}
    \right)\\ 
    &\quad K_1 \left( \frac{\sqrt{1 + \| \bar{\theta}^{\ast} \|^2} \cdot
    \left| \frac{\nu}{\| \bar{\theta} \|} \right|}{[1 + (1 - \rho^2) \|
    \bar{\theta}^{\ast} \|^2]} \right).
        \end{aligned}
    \end{equation}
  \end{theorem}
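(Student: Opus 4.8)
The plan is to reduce the $d$-dimensional expectations in \eqref{eq:theta} and \eqref{eq:nu} to a one-dimensional problem governed by the law of a single scalar, and then recognize that scalar's density as the classical product-of-two-correlated-Gaussians density, which is exactly what produces the modified Bessel function $K_0$. Since $x\sim\mathcal N(0,I_d)$ and every integrand depends on $x$ only through the projections $\langle x,\theta^{\ast}\rangle$ and $\langle x,\theta\rangle$, I would first pass to the orthonormal frame $\hat{e}_1,\hat{e}_2$ of $\tmop{span}\{\theta^{\ast},\theta\}$ and set $u\assign\langle x,\hat{e}_1\rangle$, $v\assign\langle x,\hat{e}_2\rangle$, which are i.i.d.\ $\mathcal N(0,1)$; the component of $x$ orthogonal to the plane is independent of the integrand and, by symmetry, forces $M(\theta,\nu)\in\tmop{span}\{\theta^{\ast},\theta\}$. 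Writing $\tilde y\assign y/\sigma=(-1)^{z+1}\|\bar{\theta}^{\ast}\|u+\tilde\varepsilon$ with $\tilde\varepsilon\sim\mathcal N(0,1)$ and $Q\assign\langle x,\bar{\theta}\rangle=\|\bar{\theta}\|(\rho u+\sqrt{1-\rho^2}\,v)$, the quantity inside every $\tanh$/$\log\cosh$ is $W+\nu$ with $W\assign\tilde y\,Q$.

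The crux is that, conditional on the label $z$, the pair $(\tilde y,Q)$ is centered jointly Gaussian, so $W=\tilde y Q$ is a product of two correlated centered Gaussians. I would record its density in closed form,
\[
f_z(w)=\frac{1}{\pi\,\sigma_P\sigma_Q\sqrt{1-r^2}}\,\exp\!\Big(\tfrac{r\,w}{\sigma_P\sigma_Q(1-r^2)}\Big)K_0\!\Big(\tfrac{|w|}{\sigma_P\sigma_Q(1-r^2)}\Big),
\]
with $\sigma_P^2=1+\|\bar{\theta}^{\ast}\|^2$, $\sigma_Q^2=\|\bar{\theta}\|^2$, and correlation $r=\rho\|\bar{\theta}^{\ast}\|/\sqrt{1+\|\bar{\theta}^{\ast}\|^2}$ (the sign of $r$ flipping with $z$); a short computation gives $1-r^2=(1+(1-\rho^2)\|\bar{\theta}^{\ast}\|^2)/(1+\|\bar{\theta}^{\ast}\|^2)$, which is precisely the denominator in the stated $K_0$- and $\cosh$-arguments. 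Averaging over $z$ with the true weights $\pi^{\ast}$ collapses the two exponentials into $\cosh(cw+\nu^{\ast})/\cosh\nu^{\ast}$, where $c=r/(\sigma_P\sigma_Q(1-r^2))$ and $\nu^{\ast}$ is the truth analogue of \eqref{eq:nu_def}; this is the origin of the $\cosh(\cdots-\nu^{\ast})/\cosh\nu^{\ast}$ factors. Then $N(\theta,\nu)=\int\tanh(w+\nu)\,f(w)\,\mathd w$, and the substitution $\nu'=-w$ (which also produces the sign in the $\cosh(\cdots-\nu^{\ast})$ kernel) turns this into the convolution $(\tanh\ast\,\cdot)(\nu)$ displayed in the theorem, with the prefactor $\tfrac{1}{\pi\,\sigma_P\sigma_Q\sqrt{1-r^2}\cosh\nu^{\ast}}$ simplifying to $(1+(1-\rho^2)\|\bar{\theta}^{\ast}\|^2)^{-1/2}/(\pi\|\bar{\theta}\|\cosh\nu^{\ast})$.

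For $M$ I would use the relation $M=\sigma^2\nabla_{\theta}U$ from the excerpt together with the fact that $U$, and hence $f$, depends on $\theta$ only through $\|\bar{\theta}\|$ and $\rho$. The chain rule then splits $\nabla_{\theta}U$ along the two plane directions $\nabla_{\theta}\|\bar{\theta}\|=\vec{e}_1/\sigma=(\bar{\theta}/\|\bar{\theta}\|)/\sigma$ and $\nabla_{\theta}\rho=\tfrac{\sqrt{1-\rho^2}}{\|\theta\|}\vec{e}_2$ (the latter identity is where the $\vec{e}_2$ direction, and thus the $\beta$-term, originates). Differentiating the $K_0$-kernel of $f$ with respect to these parameters produces $K_0'=-K_1$, the source of the $K_1$ in $\beta(\nu)$, while differentiating the $\cosh(cw+\nu^{\ast})$ factor produces the $\sinh$ in $\beta$ and leaves $K_0$ in $\alpha$; an integration by parts in $w$ replaces $\log\cosh$ by its derivative $\tanh$, yielding the outer convolution $\tanh(\nu)\ast_\nu[\cdots]$. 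Re-expressing the resulting $\hat{e}_1,\hat{e}_2$ components in the redundant frame $\{\bar{\theta}/\|\bar{\theta}\|,\hat{e}_2,\vec{e}_2\}$ gives exactly the bracketed vector $\alpha(\nu)\big(\tfrac{\bar{\theta}/\|\bar{\theta}\|}{\sqrt{1-\rho^2}\|\bar{\theta}^{\ast}\|^2}+\hat{e}_2\big)+\beta(\nu)\vec{e}_2$.

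The main obstacle will be the $M$-computation of the previous paragraph: correctly tracking the geometry of the gradient (how $\rho$ and $\|\bar{\theta}\|$ vary with $\theta$, and how the frames $\hat{e}_i,\vec{e}_i$ interconvert) while simultaneously carrying out the Bessel calculus (the identity $K_0'=-K_1$ and the recurrence relations) so that the radial and angular pieces assemble into exactly the coefficients $\alpha(\nu),\beta(\nu)$ and a single convolution with $\tanh$. By contrast, the product-density identity of the second paragraph, though conceptually central, is a known Gaussian computation and should be routine to verify, e.g.\ by integrating out one Gaussian direction and recognizing the integral representation of $K_0$.
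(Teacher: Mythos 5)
Your reduction to the plane $\tmop{span}\{\theta^{\ast},\theta\}$ and your identification of the law of $W=\tilde y\,Q$ as the product-of-correlated-centered-Gaussians density are correct, and for $N(\theta,\nu)$ your argument is essentially the paper's: the expectation $\mathbb{E}_{x}\bigl[\pi^{\ast}(1)\mathcal{N}(-\langle x,\bar{\theta}^{\ast}\rangle\langle x,\bar{\theta}\rangle,\langle x,\bar{\theta}\rangle^2)+\pi^{\ast}(2)\mathcal{N}(\langle x,\bar{\theta}^{\ast}\rangle\langle x,\bar{\theta}\rangle,\langle x,\bar{\theta}\rangle^2)\bigr]$ that the paper evaluates via Schl\"afli's integral for $K_0$ is exactly the mixture density you write down (your bookkeeping of $\sigma_P,\sigma_Q,r$, of $1-r^2=(1+(1-\rho^2)\|\bar{\theta}^{\ast}\|^2)/(1+\|\bar{\theta}^{\ast}\|^2)$, and of the collapse into $\cosh(\cdot-\nu^{\ast})/\cosh\nu^{\ast}$ all check out). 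Where you genuinely diverge is in $M$: the paper never differentiates the closed form of $U$. It writes $yx=-\sigma^2\,\nu'\cdot\frac{x}{\langle x,\theta\rangle}$ with $\nu'=-y\langle x,\theta\rangle/\sigma^2$, pulls the convolution $\tanh(\nu)\ast\nu$ out of the $y$-integral, and then evaluates the \emph{tilted} plane expectation $\mathbb{E}_x\bigl[\frac{x}{\langle x,\bar{\theta}\rangle}\mathcal{N}_{\nu}(\cdots)\bigr]$ by splitting $\frac{x}{\langle x,\bar\theta\rangle}$ into $\vec{e}_1/\|\bar\theta\|$ and $(\lambda_2/\lambda_1)\vec{e}_2/\|\bar\theta\|$; the $K_1$ arises from $\mathbb{E}[(\lambda_2/\lambda_1)\mathcal{N}_\nu]$. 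Your route — $M=\sigma^2\nabla_\theta U$ with the chain rule through $(\|\bar\theta\|,\rho)$ and $\nabla_\theta\rho=\frac{\sqrt{1-\rho^2}}{\|\theta\|}\vec{e}_2$ — is a legitimately different and arguably more conceptual decomposition, and it explains \emph{why} the answer splits into an $\alpha$-part and a $\beta$-part along two geometric directions.

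There is, however, one concrete step your plan does not resolve, and it is precisely the one your last paragraph worries about. For the $\vec e_1$ component, integration by parts works because the scaling identity $\partial_{\|\bar\theta\|}f(w)=-\|\bar\theta\|^{-1}\partial_w\bigl(wf(w)\bigr)$ hands you an explicit $w$-antiderivative, so $\log\cosh$ becomes $\tanh$ against the kernel $wf(w)$. For the $\vec e_2$ component you need a $w$-antiderivative of $\partial_\rho f$, i.e.\ a closed form for $\partial_\rho F$ where $F$ is the CDF of $W$; without it the integration by parts does not produce a kernel of the advertised form $\nu\cdot(\text{Bessel expression})$. The missing identity is
\begin{equation*}
  \partial_r F(w)\;=\;\sigma_P\sigma_Q\,w\,\partial_w f(w),
\end{equation*}
which follows from Price's theorem (Gaussian integration by parts in the covariance parameter) applied to $F(w)=\mathbb{E}\,\mathbf{1}\{PQ\le w\}$, or can be checked directly from the modified Bessel equation together with $K_0'=-K_1$ and $\frac{\mathd}{\mathd z}[zK_1(z)]=-zK_0(z)$. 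With it, $\partial_\rho U=\mp\|\bar\theta^{\ast}\|\|\bar\theta\|\int\tanh(w+\nu)\,w\,\partial_wf(w)\,\mathd w$, whose kernel $w\partial_wf=w\,Ae^{cw}[cK_0(b|w|)-b\,\tmop{sgn}(w)K_1(b|w|)]$ is exactly what assembles (after averaging over $z$ and converting $\{\vec e_1,\vec e_2\}$ into the redundant frame $\{\bar\theta/\|\bar\theta\|,\hat e_2,\vec e_2\}$ via $\hat e_2+\rho\vec e_2=\sqrt{1-\rho^2}\,\theta/\|\theta\|$) into $\alpha(\nu)$ and $\beta(\nu)$. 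Supply that identity and your proof closes; without it the claim that "the radial and angular pieces assemble" is an assertion, not a derivation.
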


  \begin{figure}[!htbp]
    \centering
    \includegraphics[width=0.35\textwidth,trim={0 5cm 14cm 0},clip]{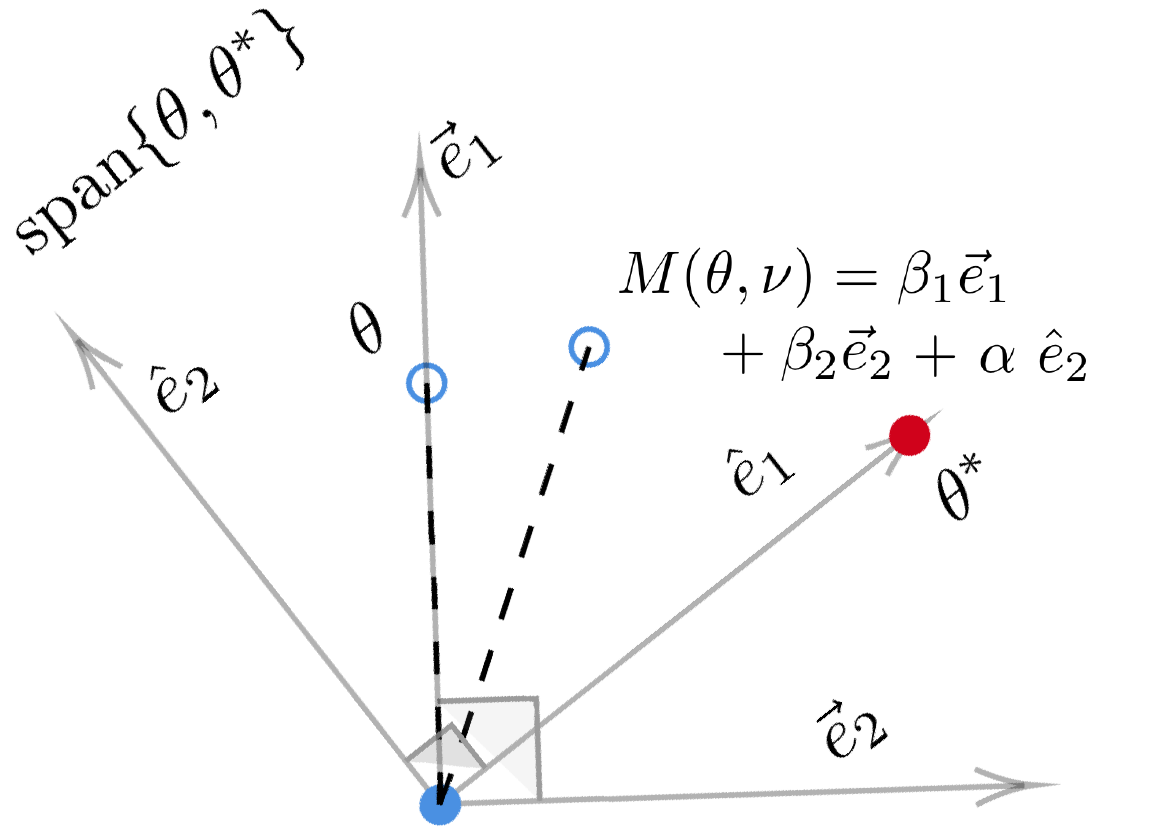}
    \caption{The EM update $M(\theta, \nu)$ for regression parameters lies on span$\{\theta, \theta^\ast\}$.}
    \label{fig:vec}
  \end{figure}
 \normalsize 

Note that in Theorem~\ref{thm:em_update} we have $\{\bar{\theta}, \hat{e}_2, \vec{e}_2\}\in\text{span}\{\theta, \theta^\ast\}$, which in turn implies the population EM update for the regression parameters satisfies $M(\theta, \nu)\in\text{span}\{\theta, \theta^\ast\}$. 
We visualize these vectors and their relations in Fig.~\ref{fig:vec}.

\textbf{Special cases.} For the special case of no separation $\tmop{SNR} \assign \frac{\| \theta^{\ast}\|}{\sigma} \rightarrow 0$ in Theorem~\ref{thm:em_update}, the population EM update rules can be simplified (see Appendix~\ref{sup:updates}, proof of Corollary~\ref{cor:no_separa}).
  \begin{corollary}{(EM Updates for No Separation Case)}\label{cor:no_separa}
    For the special case of no separation of parameters, namely $\tmop{SNR}
    \assign \frac{\| \theta^{\ast} \|}{\sigma} \rightarrow 0$, the EM update
    rules for $\theta^t, \tanh(\nu^t)$ at Population level are $\bar{\theta}^t =
    \frac{\bar{\theta}^0}{\| \bar{\theta}^0 \|} \cdot \frac{1}{\pi} 
    \int_{\mathbb{R}} \tanh (\| \bar{\theta}^{t - 1} \| x - \nu^{t - 1}) xK_0
    (|x|) \mathrm{d} x$ and $\tanh (\nu^t) = \frac{1}{\pi}  \int_{\mathbb{R}}
    \tanh (\nu^{t - 1} - \| \bar{\theta}^{t - 1} \| x) K_0 (|x|) \mathrm{d} x$,
    where $\bar{\theta}^t \assign \frac{\theta^t}{\sigma}$ and $\bar{\theta}^0
    \assign \frac{\theta^0}{\sigma}$.
  \end{corollary}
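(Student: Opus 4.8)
The plan is to obtain Corollary~\ref{cor:no_separa} by passing to the limit $\tmop{SNR}=\|\bar{\theta}^{\ast}\|\to 0$ directly inside the closed-form expressions of Theorem~\ref{thm:em_update}, treating the update $N$ for $\tanh(\nu)$ and the update $M$ for $\theta$ separately. Throughout, I would first justify exchanging the limit with the $\nu'$-integral by dominated convergence: the kernels are built from $K_0$ and $K_1$, which decay exponentially at infinity and have only a logarithmic (resp.\ first-order) singularity at the origin, so a fixed integrable envelope dominates the integrands uniformly for all small $\|\bar{\theta}^{\ast}\|$. With that in hand, the limit reduces to replacing every $\|\bar{\theta}^{\ast}\|$-dependent factor by its value at $\|\bar{\theta}^{\ast}\|=0$.

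For the $N$-update this is immediate. As $\|\bar{\theta}^{\ast}\|\to 0$ the scalar prefactor $(1+(1-\rho^2)\|\bar{\theta}^{\ast}\|^2)^{-1/2}\to 1$, the Bessel argument tends to $|\nu'|/\|\bar{\theta}\|$, and $\cosh\bigl(\tfrac{\rho\|\bar{\theta}^{\ast}\|(\nu'/\|\bar{\theta}\|)}{1+(1-\rho^2)\|\bar{\theta}^{\ast}\|^2}-\nu^{\ast}\bigr)\to\cosh(\nu^{\ast})$, which cancels the $1/\cosh(\nu^{\ast})$ sitting in the prefactor. After the substitution $\nu'=\|\bar{\theta}\|x$ the factor $\|\bar{\theta}\|$ cancels and one lands exactly on $\tanh(\nu^t)=\tfrac1\pi\int_{\mathbb{R}}\tanh(\nu^{t-1}-\|\bar{\theta}^{t-1}\|x)K_0(|x|)\,\mathrm{d}x$.

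The $M$-update is the delicate step and where I expect the real work. The point is a cancellation of competing powers of $\|\bar{\theta}^{\ast}\|$: the scalar prefactor carries a factor $\|\bar{\theta}^{\ast}\|^2$, while inside the braces the $\alpha(\nu)$-term parallel to $\bar{\theta}/\|\bar{\theta}\|$ carries the reciprocal $1/(\sqrt{1-\rho^2}\,\|\bar{\theta}^{\ast}\|^2)$. I would carry out careful power counting: the prefactor times the $\bar{\theta}$-parallel term is $O(1)$ and survives; the $\alpha(\nu)\hat{e}_2$ contribution is $O(\|\bar{\theta}^{\ast}\|^2)$ and vanishes; and since $\beta(\nu)=O(1/\|\bar{\theta}^{\ast}\|)$ (its explicit $1/\|\bar{\theta}^{\ast}\|$ prefactor multiplies quantities with finite limits), the $\beta(\nu)\vec{e}_2$ contribution is $O(\|\bar{\theta}^{\ast}\|)$ and also vanishes. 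On the surviving term the factors $\sqrt{1-\rho^2}$ cancel, and the limit $\alpha(\nu)\to\cosh(\nu^{\ast})K_0(|\nu|/\|\bar{\theta}\|)$ cancels the $\cosh^{-1}(\nu^{\ast})$ in the prefactor. After the same substitution $\nu'=\|\bar{\theta}\|x$ and using that $\tanh$ is odd to flip $\tanh(\nu-\|\bar{\theta}\|x)$ into $\tanh(\|\bar{\theta}\|x-\nu)$ (absorbing the leading minus sign), one obtains $\bar{\theta}^t=c^{t-1}\,\bar{\theta}^{t-1}/\|\bar{\theta}^{t-1}\|$ with $c^{t-1}=\tfrac1\pi\int_{\mathbb{R}}\tanh(\|\bar{\theta}^{t-1}\|x-\nu^{t-1})xK_0(|x|)\,\mathrm{d}x$.

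Finally, to replace $\bar{\theta}^{t-1}/\|\bar{\theta}^{t-1}\|$ by the fixed direction $\bar{\theta}^0/\|\bar{\theta}^0\|$ claimed in the statement, I would show the direction of $\theta$ is invariant along the iterations, which follows once $c^{t-1}>0$. Positivity is clean: pairing the contributions at $x$ and $-x$ and using the identity $\tanh(\|\bar{\theta}^{t-1}\|x-\nu^{t-1})+\tanh(\|\bar{\theta}^{t-1}\|x+\nu^{t-1})=\sinh(2\|\bar{\theta}^{t-1}\|x)/[\cosh(\|\bar{\theta}^{t-1}\|x+\nu^{t-1})\cosh(\|\bar{\theta}^{t-1}\|x-\nu^{t-1})]$ together with $xK_0(|x|)>0$ for $x>0$ shows the integrand is positive on $(0,\infty)$, hence $c^{t-1}>0$ and the unit direction is preserved. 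An induction on $t$ then gives $\bar{\theta}^{t-1}/\|\bar{\theta}^{t-1}\|=\bar{\theta}^0/\|\bar{\theta}^0\|$, completing the proof. The main obstacle, as noted, is the asymptotic bookkeeping in $M$ — tracking the exact powers of $\|\bar{\theta}^{\ast}\|$ and the $\cosh(\nu^{\ast})$, $\sqrt{1-\rho^2}$ cancellations — together with the dominated-convergence justification near the logarithmic singularity of $K_0$.
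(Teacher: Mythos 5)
Your proposal is correct and follows essentially the same route as the paper: take the limit $\|\bar{\theta}^{\ast}\|\to 0$ in Theorem~\ref{thm:em_update}, observe that the $\hat{e}_2$ and $\vec{e}_2$ contributions vanish (since $\|\bar{\theta}^{\ast}\|^2\alpha(\nu)\to 0$ and $\|\bar{\theta}^{\ast}\|^2\beta(\nu)\to 0$) so only the component along $\bar{\theta}^{t-1}/\|\bar{\theta}^{t-1}\|$ survives, substitute $\nu'=\|\bar{\theta}\|x$, and then prove the scalar coefficient is positive to propagate the direction back to $\bar{\theta}^0/\|\bar{\theta}^0\|$ by induction. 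The only cosmetic differences are that you make the dominated-convergence justification explicit and establish positivity via the identity $\tanh u+\tanh v=\sinh(u+v)/(\cosh u\cosh v)$ rather than by the monotonicity of $\tanh$, both of which are equivalent to the paper's argument.
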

  

The second special case, which will be our main focus moving forward, arises by letting $\tmop{SNR} \assign \frac{\| \theta^{\ast}
    \|}{\sigma} \rightarrow \infty$ in Theorem~\ref{thm:em_update}. In doing so, we obtain the EM update rules at the population level in the noiseless setting described below (see Appendix~\ref{sup:updates}, proof of Corollary~\ref{cor:noiseless}).
  \begin{corollary}{(EM Updates in Noiseless Setting)}\label{cor:noiseless}
    In the noiseless setting, namely $\tmop{SNR} \assign \frac{\| \theta^{\ast}
    \|}{\sigma} \rightarrow \infty$, the EM update rules for $\theta^t, \tanh(\nu^t)$
    at the Population level are
    \begin{equation}\nonumber
    \begin{aligned}
      &\frac{\theta^t}{\| \theta^{\ast} \|} = \frac{2}{\pi} \left[ \mathrm{sgn}
      (\rho^{t - 1}) \varphi^{t-1}  \frac{\theta^{\ast}}{\| \theta^{\ast} \|} + \cos
      \varphi^{t - 1} \frac{\theta^{t - 1}}{\| \theta^{t - 1} \|} \right]\\
      &
                \tanh (\nu^t) = \mathrm{sgn} (\rho^{t - 1}) \left( \frac{2}{\pi}
      \varphi^{t - 1} \right) \cdot \tanh (\nu^{\ast}),
          \end{aligned}
    \end{equation}
    where $\rho^{t - 1} \assign \frac{\langle \theta^{t - 1}, \theta^{\ast}
    \rangle}{\| \theta^{t - 1} \| \| \theta^{\ast} \|},
    \varphi^{t -
    1} \assign \frac{\pi}{2} - \arccos | \rho^{t - 1} |$.
  \end{corollary}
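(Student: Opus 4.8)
The plan is to derive Corollary~\ref{cor:noiseless} as the $\mathrm{SNR}=\|\bar{\theta}^\ast\|\to\infty$ limit of the closed forms in Theorem~\ref{thm:em_update}. Writing $A\assign\|\bar{\theta}^\ast\|$, two things happen in this limit. First, the rational prefactors simplify via the elementary asymptotics $\sqrt{1+A^2}\sim A$ and $1+(1-\rho^2)A^2\sim(1-\rho^2)A^2$. Second, I would rescale the integration variable so that the Bessel argument is $O(1)$, setting $w\assign\frac{\sqrt{1+A^2}}{1+(1-\rho^2)A^2}\frac{\nu'}{\|\bar{\theta}\|}$, which places $\nu'$ on the scale $A\|\bar{\theta}\|$; since $\nu$ stays $O(1)$, the smooth factor $\tanh(\nu-\nu')$ then saturates to $-\mathrm{sgn}(\nu')=-\mathrm{sgn}(w)$. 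Under the same rescaling the Bessel argument tends to $|w|$ and the hyperbolic factor tends to $\cosh(\rho w-\nu^\ast)$ (resp. $\sinh(\rho w-\nu^\ast)$ inside $\beta$).

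For $N(\theta,\nu)$ the kernel $\mathrm{sgn}(w)K_0(|w|)$ is odd, so only the odd-in-$w$ part of $\cosh(\rho w-\nu^\ast)$, namely $-\sinh(\rho w)\sinh\nu^\ast$, survives the integral; this $\sinh\nu^\ast$ combines with the $\cosh^{-1}\nu^\ast$ prefactor to produce exactly $\tanh\nu^\ast$. The leftover is a Laplace-type transform $\int_0^\infty\sinh(\rho w)K_0(w)\,\mathd w=\frac{\arcsin\rho}{\sqrt{1-\rho^2}}$, and after the remaining algebraic factors are collected one obtains $\frac{2}{\pi}\arcsin\rho$. Using $\arcsin\rho=\mathrm{sgn}(\rho)\arcsin|\rho|=\mathrm{sgn}(\rho)\varphi$ gives $\tanh(\nu^t)=\mathrm{sgn}(\rho)\frac{2}{\pi}\varphi\cdot\tanh\nu^\ast$, as claimed.

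For $M(\theta,\nu)$ the convolution $\tanh(\nu)\ast\nu[\cdots]$ now carries the even kernel $|w|$, so it is the even part $\cosh(\rho w)\cosh\nu^\ast$ that survives; the $\cosh\nu^\ast$ cancels the $\cosh^{-1}\nu^\ast$ prefactor, removing all $\pi^\ast$-dependence from $M$. The $\frac{\vec{e}_1}{\sqrt{1-\rho^2}A^2}$ piece of the bracket is $O(A^{-2})$ and drops. The most transparent way to pin the resulting vector coefficients is to observe that this entire limit computes the hard-EM expectation $M=\mathbb{E}_x[\mathrm{sgn}(\langle\theta^\ast,x\rangle\langle\theta,x\rangle)\langle\theta^\ast,x\rangle\,x]$, where the latent sign $(-1)^{z+1}$ cancels (which is precisely why $\pi^\ast$ disappears). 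Diagonalizing on $\mathrm{span}\{\theta,\theta^\ast\}$ with $a=\langle\hat{e}_1,x\rangle$, $b=\langle\hat{e}_2,x\rangle$ and $V=\langle\vec{e}_1,x\rangle=\rho a+\sqrt{1-\rho^2}\,b$ (so $\mathrm{Corr}(a,V)=\rho$), the two needed integrals $\mathbb{E}[a^2\mathrm{sgn}(aV)]=\frac{2}{\pi}(\arcsin\rho+\rho\sqrt{1-\rho^2})$ and $\mathbb{E}[ab\,\mathrm{sgn}(aV)]=\frac{2}{\pi}(1-\rho^2)$ follow from the arcsine (Sheppard) law and elementary polar-coordinate Gaussian moments. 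Passing to the oblique basis $\{\theta^\ast/\|\theta^\ast\|,\theta/\|\theta\|\}$ via $\theta/\|\theta\|=\rho\,\theta^\ast/\|\theta^\ast\|+\sqrt{1-\rho^2}\,\hat{e}_2$, the $\rho\sqrt{1-\rho^2}$ terms cancel and one is left with $\frac{M}{\|\theta^\ast\|}=\frac{2}{\pi}\big[\arcsin\rho\,\frac{\theta^\ast}{\|\theta^\ast\|}+\sqrt{1-\rho^2}\,\frac{\theta}{\|\theta\|}\big]$, which is the stated update since $\sqrt{1-\rho^2}=\cos\varphi$.

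The bookkeeping above is routine; the genuine obstacle is justifying the limit itself, i.e. interchanging $A\to\infty$ with the integral/convolution and replacing $\tanh(\nu-\nu')$ by $\mp\mathrm{sgn}(\nu')$. I would control both steps by dominated convergence, exploiting the exponential decay $K_0(r),K_1(r)=O(r^{-1/2}e^{-r})$ to construct an $A$-uniform integrable majorant on the $w$-scale and to show that the unsaturated region $\{|\nu'|=O(1)\}$ carries vanishing mass after rescaling. A secondary subtlety is that the prefactor is $O(A^{-1})$ while the rescaled integral is $O(A)$, so I must verify that these scales cancel to an $O(1)$ limit; here the hard-EM reformulation serves as a convenient consistency check, since it fixes the final value independently of the asymptotic analysis.
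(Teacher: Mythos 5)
Your proposal is correct, and its skeleton coincides with the paper's proof: rescale the convolution variable so the Bessel argument is $O(1)$ (the paper uses $x = k\alpha^2\nu$ with $\alpha^2 = (1+(1-\rho^2)\|\bar{\theta}^\ast\|^2)^{-1}$, which is your $w$ up to constants), saturate $\tanh(\nu^{t-1}-\nu')$ to $-\mathrm{sgn}$, and check that the $O(A^{-1})$ prefactor cancels the $O(A)$ Jacobian. Your treatment of $N(\theta,\nu)$ — parity of $\mathrm{sgn}(w)K_0(|w|)$ isolating the $\sinh(\rho w)\sinh\nu^\ast$ term and the transform $\int_0^\infty\sinh(\rho w)K_0(w)\,\mathrm{d}w=\arcsin\rho/\sqrt{1-\rho^2}$ — is exactly what the paper does. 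Where you diverge is the vector update $M$: the paper stays inside the Bessel framework, directly evaluating $\int_{\mathbb{R}}|x|\cosh(\rho x-\nu^\ast)K_0(|x|)\,\mathrm{d}x$ and $\int_{\mathbb{R}}x\sinh(\rho x-\nu^\ast)K_1(|x|)\,\mathrm{d}x$ and then converting $\hat{e}_2,\vec{e}_2$ back to $\theta^\ast,\theta$ via the unit-vector identities, whereas you first cancel the $\nu^\ast$-dependence by parity and then bypass the $K_1$ integral entirely by recognizing the limit as the hard-EM expectation $\mathbb{E}_x[|\langle x,\theta^\ast\rangle|\,\mathrm{sgn}\langle x,\theta\rangle\,x]$ and computing the two sign-weighted Gaussian moments by the arcsine law. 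Both moments check out ($\mathbb{E}[a^2\mathrm{sgn}(aV)]=\tfrac{2}{\pi}(\arcsin\rho+\rho\sqrt{1-\rho^2})$, $\mathbb{E}[ab\,\mathrm{sgn}(aV)]=\tfrac{2}{\pi}(1-\rho^2)$), and the hard-EM identification is itself stated as a lemma in the paper's appendix (and the first moment appears there as the $\mathbb{E}|\theta^{\ast\top}xx^\top\theta|$ lemma), so your route is a legitimate and arguably more elementary shortcut that doubles as the consistency check you describe. Your one honest caveat — that the interchange of limit and integral needs a dominated-convergence justification via the exponential decay of $K_0,K_1$ — is well taken; the paper passes to the limit formally without addressing it, so on this point your write-up would actually be more careful than the original.
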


\section{Population Level Analysis}\label{sec:population}
In this section, we focus on the properties of EM update rules (Corollary~\ref{cor:noiseless}) in the noiseless setting at the population level and present one of our main theoretical results on the convergence rate and the estimation of error of the population EM through a trajectory-based analysis.
In particular, we show that the error of mixing weights (in $\ell_1$ norm) is proportional to the angle between the estimated regression parameters and the true parameters and establish a consistent quadratic convergence rate, which, interestingly, is independent of mixing weights.

\begin{theorem}{(Population Level Convergence)}\label{thm:population}
  If the initial sup-optimality cosine $\rho^0 \assign \frac{\langle \theta^0,
  \theta^{\ast} \rangle}{\| \theta^0 \| \cdot \| \theta^{\ast} \|} \neq 0$,
  then with the number of total iterations at most $T =\mathcal{O} \left( \log
  \frac{1}{| \rho^0 |} \vee \log \log \frac{1}{\varepsilon} \right)$, the error of EM update at the population level is bounded by $\frac{\|
  \theta^{T + 1} - \tmop{sgn} (\rho^0) \theta^{\ast} \|}{\| \theta^{\ast} \|}
  < \varepsilon$, and $\|\pi^{T+1} -\bar{\pi}^\ast\|_1 =\mathcal{O}(\sqrt{\varepsilon})\cdot \|\frac{1}{2}-\pi^\ast\|_1$
  , where $\bar{\pi}^\ast := \frac{1}{2}-\tmop{sgn} (\rho^0) (\frac{1}{2}-\pi^\ast)$.
\end{theorem}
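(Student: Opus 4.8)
The plan is to reduce the entire statement to a one-dimensional recurrence for the sub-optimality angle $\varphi^t$. I would first observe that the sign $\tmop{sgn}(\rho^t)$ is invariant along the trajectory. Projecting the update of Corollary~\ref{cor:noiseless} onto $\hat{e}_1=\theta^\ast/\|\theta^\ast\|$, the $\hat{e}_1$-component of $\theta^t/\|\theta^\ast\|$ equals $\frac{2}{\pi}\,\tmop{sgn}(\rho^{t-1})\big(\varphi^{t-1}+\sin\varphi^{t-1}\cos\varphi^{t-1}\big)$, a strictly positive scalar times $\tmop{sgn}(\rho^{t-1})$ whenever $\varphi^{t-1}\in(0,\tfrac{\pi}{2})$; hence $\tmop{sgn}(\rho^t)=\tmop{sgn}(\rho^{t-1})=\cdots=\tmop{sgn}(\rho^0)$. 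Using the symmetry $\theta^\ast\mapsto-\theta^\ast$ of the 2MLR model I may assume $\tmop{sgn}(\rho^0)=+1$, in which case the target reduces to $\theta^\ast$ and $\bar{\pi}^\ast=\pi^\ast$; the opposite sign follows by relabeling.

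Next I would derive the exact angle recurrence. Decomposing $\theta^{t-1}/\|\theta^{t-1}\|=\sin\varphi^{t-1}\,\hat{e}_1+\cos\varphi^{t-1}\,\hat{e}_2$ in the orthonormal frame $\{\hat{e}_1,\hat{e}_2\}$ of $\tmop{span}\{\theta^\ast,\theta^{t-1}\}$ and substituting into Corollary~\ref{cor:noiseless}, the components of $\theta^t/\|\theta^\ast\|$ become $a=\frac{2}{\pi}(\varphi^{t-1}+\sin\varphi^{t-1}\cos\varphi^{t-1})$ along $\hat{e}_1$ and $b=\frac{2}{\pi}\cos^2\varphi^{t-1}>0$ along $\hat{e}_2$. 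Since $\tan\varphi^t=a/b$, this yields the clean scalar recurrence
\[
\tan\varphi^t=\frac{\varphi^{t-1}}{\cos^2\varphi^{t-1}}+\tan\varphi^{t-1}.
\]
Crucially this map does not involve $\nu$, which is precisely the source of the claimed independence of the convergence rate from the mixing weights; reparametrizing by $\phi=\pi-2\varphi$ turns the endpoint of $\theta^t/\|\theta^\ast\|$ into a standard cycloid, matching the trajectory described in the introduction.

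I would then analyze this recurrence, which increases monotonically ($\tan\varphi^t>\tan\varphi^{t-1}$ forces $\varphi^t>\varphi^{t-1}$) toward the fixed point $\varphi=\frac{\pi}{2}$, in two phases. In the escape phase, while $\varphi$ is small the bounds $\frac{\xi}{\cos^2\xi}\ge\xi$ and $\tan\xi\ge\xi$ give $\tan\varphi^t\ge2\varphi^{t-1}$, hence a uniform geometric factor $\gamma>1$; since $\varphi^0\ge\sin\varphi^0=|\rho^0|$, it takes $\mathcal{O}(\log\frac{1}{|\rho^0|})$ steps to drive $\varphi$ past a fixed constant. In the contraction phase, writing $\delta^t:=\frac{\pi}{2}-\varphi^t$ and recasting the recurrence as $\cot\delta^t=\frac{\pi/2-\delta^{t-1}}{\sin^2\delta^{t-1}}+\cot\delta^{t-1}$, a Taylor estimate yields $\delta^t\le C(\delta^{t-1})^2$ once $\delta^{t-1}$ is below a threshold. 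This is super-linear convergence with exponent exactly $2$, and it reaches any target $\delta^T=\mathcal{O}(\sqrt\varepsilon)$ in $\mathcal{O}(\log\log\frac{1}{\varepsilon})$ steps, producing the stated $T=\mathcal{O}(\log\frac{1}{|\rho^0|}\vee\log\log\frac{1}{\varepsilon})$.

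Finally I would convert the angle bound into the two error bounds. Expanding $a,b$ at $\varphi=\frac{\pi}{2}-\delta$ gives $a-1=\mathcal{O}(\delta^3)$ and $b=\frac{2}{\pi}\sin^2\delta$, so
\[
\frac{\|\theta^{T+1}-\theta^\ast\|}{\|\theta^\ast\|}=\sqrt{(a-1)^2+b^2}=\frac{2}{\pi}(\delta^T)^2+\mathcal{O}\big((\delta^T)^4\big),
\]
which is quadratic in the angle, so the choice $\delta^T=\mathcal{O}(\sqrt\varepsilon)$ makes this $<\varepsilon$. For the weights, Corollary~\ref{cor:noiseless} gives $\tanh\nu^{T+1}=\frac{2}{\pi}\varphi^T\tanh\nu^\ast$; using $\tanh\nu=\pi(1)-\pi(2)$, the two-point identity $\|p-q\|_1=|(p(1)-p(2))-(q(1)-q(2))|$, and $\|\frac{1}{2}-\pi^\ast\|_1=|\tanh\nu^\ast|$, I obtain $\|\pi^{T+1}-\bar{\pi}^\ast\|_1=|\tanh\nu^{T+1}-\tanh\nu^\ast|=\frac{2}{\pi}\delta^T\,\|\frac{1}{2}-\pi^\ast\|_1=\mathcal{O}(\sqrt\varepsilon)\,\|\frac{1}{2}-\pi^\ast\|_1$. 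The decisive contrast is that the $\theta$-error scales like $\delta^2$ while the $\pi$-error scales like $\delta$, which is exactly why enforcing a $\theta$-error below $\varepsilon$ only yields a $\sqrt\varepsilon$ rate for the mixing weights. I expect the main obstacle to be the contraction-phase estimate — establishing $\delta^t\le C(\delta^{t-1})^2$ with an explicit constant and threshold from the $\cot$ identity — together with stitching the two phases so their iteration counts combine into the $\vee$ bound.
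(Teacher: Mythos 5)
Your proposal is correct and follows essentially the same route as the paper: the same projection onto $\{\hat e_1,\hat e_2\}$ yielding the recurrence $\tan\varphi^t=\tan\varphi^{t-1}+\varphi^{t-1}(\tan^2\varphi^{t-1}+1)$ (Proposition~\ref{prop:recurrence}), the same two-phase analysis (geometric growth of $\tan\varphi$ until it passes a constant, then quadratic contraction of $\delta=\tfrac{\pi}{2}-\varphi$, as in Proposition~\ref{prop:quad}), the same cycloid-based expansion $\tfrac{1}{\pi}\sqrt{(\phi-\sin\phi)^2+(1-\cos\phi)^2}=\Theta(\delta^2)$ for the $\theta$-error, and the same use of $\tanh\nu^{T+1}=\tfrac{2}{\pi}\varphi^T\tanh\nu^\ast$ for the $\sqrt{\varepsilon}$ rate on the mixing weights. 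The only cosmetic difference is in the escape phase, where the paper proves a uniform golden-ratio factor $\tan\varphi^t\ge\tfrac{1+\sqrt5}{2}\tan\varphi^{t-1}$ by induction while you use the simpler local bound valid when $\varphi$ is bounded away from $\tfrac{\pi}{2}$; both suffice for the $\mathcal{O}(\log\tfrac{1}{|\rho^0|})$ count.
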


\textbf{Proof Sketch of Theorem~\ref{thm:population}.}
To estimate the errors of mixing weights and regression parameters, we need to study the trajectory of the regression parameters $\theta^t$. 
Recall, Corollary~\ref{cor:noiseless} demonstrated that for the special case of the noiseless setting, the EM update for $\theta^t$ depends on the angle $\varphi^t$. Additionally, upon recalling $|\tanh(\nu)|=\|\frac{1}{2} - \pi^\ast\|_1$, we directly obtain the following Corollary~\ref{cor:err_mixing} for the error of mixing weights.
Regarding the regression parameters, we derive the recurrence relation in Proposition~\ref{prop:recurrence} and further show the cycloid trajectory in Proposition~\ref{prop:cycloid}. Hence, the error of regression parameters is only determined by the sup-optimality angle $\varphi^t$. Therefore, instead of directly analyzing the error of regression parameters, we could characterize its behavior by studying the evolution of the \textit{sub-optimality angle} $\varphi^t$.
\begin{corollary}{(Error of Mixing Weights $\pi^t$)}\label{cor:err_mixing}
  In the noiseless setting, the error of mixing weights for EM updates at 
  the population level is
  \begin{equation}
      \| \pi^t - \bar{\pi}^{\ast} \|_1 = \left| 1 - \frac{2}{\pi} \varphi^{t - 1}
      \right| \cdot \left\| \frac{1}{2} - \pi^{\ast} \right\|_1
  \end{equation}
where $\bar{\pi}^{\ast} \assign \frac{1}{2} - \tmop{sgn} (\rho^0) 
(\frac{1}{2} - \pi^{\ast}), \varphi^{t - 1} \assign \frac{\pi}{2} - \arccos |
\rho^{t - 1} |$ and $\rho^{t - 1} \assign \frac{\langle \theta^{t - 1},
\theta^{\ast} \rangle}{\| \theta^{t - 1} \| \cdot \| \theta^{\ast} \|}$,
$\rho^0 \assign \frac{\langle \theta^0, \theta^{\ast} \rangle}{\| \theta^0 \|
\cdot \| \theta^{\ast} \|}$.
\end{corollary}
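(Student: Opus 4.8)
\textbf{Proof proposal for Corollary~\ref{cor:err_mixing}.}
The plan is to reduce everything to a one-dimensional statement about $\tanh(\nu^t)$, since for a two-point distribution the mixing weights are completely determined by the scalar $\nu$. First I would record the elementary dictionary between mixing weights and $\nu$: because $\pi(1)+\pi(2)=1$ and $\tanh(\nu)=\pi(1)-\pi(2)$, any two mixing-weight vectors differ in $\ell_1$ by exactly the absolute difference of their $\tanh(\nu)$ values, i.e. $\| \pi - \pi' \|_1 = 2|\pi(1)-\pi'(1)| = |\tanh(\nu)-\tanh(\nu')|$; in particular $\| \frac{1}{2} - \pi^{\ast} \|_1 = |\tanh(\nu^{\ast})|$. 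I would also note that $\bar{\pi}^{\ast} \assign \frac{1}{2} - \tmop{sgn}(\rho^0)(\frac{1}{2}-\pi^{\ast})$ is itself a valid two-point distribution (it is $\pi^{\ast}$ when $\rho^0>0$ and the coordinate swap of $\pi^{\ast}$ when $\rho^0<0$), whose scalar parameter satisfies $\bar{\pi}^{\ast}(1)-\bar{\pi}^{\ast}(2)=\tmop{sgn}(\rho^0)\tanh(\nu^{\ast})$.

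With these identities in hand, applying the first one to $\pi^t$ and $\bar{\pi}^{\ast}$ gives
\[
  \| \pi^t - \bar{\pi}^{\ast} \|_1 = \left| \tanh(\nu^t) - \tmop{sgn}(\rho^0)\tanh(\nu^{\ast}) \right|.
\]
Now I would substitute the noiseless population update of Corollary~\ref{cor:noiseless}, namely $\tanh(\nu^t) = \tmop{sgn}(\rho^{t-1})\,(\frac{2}{\pi}\varphi^{t-1})\tanh(\nu^{\ast})$, into this expression. Factoring out $|\tanh(\nu^{\ast})| = \| \frac{1}{2} - \pi^{\ast} \|_1$, the claim will follow provided the two signs coincide, $\tmop{sgn}(\rho^{t-1}) = \tmop{sgn}(\rho^0)$: in that case the bracket collapses to $|\tmop{sgn}(\rho^0)|\cdot|\frac{2}{\pi}\varphi^{t-1}-1| = |1 - \frac{2}{\pi}\varphi^{t-1}|$, which is exactly the stated coefficient.

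The main obstacle is therefore this sign-preservation fact $\tmop{sgn}(\rho^{t-1}) = \tmop{sgn}(\rho^0)$, needed to reconcile the $\tmop{sgn}(\rho^0)$ baked into $\bar{\pi}^{\ast}$ with the $\tmop{sgn}(\rho^{t-1})$ appearing in the update. I would prove it by induction using the regression-parameter update of Corollary~\ref{cor:noiseless}. Taking the inner product of $\theta^t/\| \theta^{\ast} \|$ with $\theta^{\ast}/\| \theta^{\ast} \|$ and using $\rho^{t-1} = \tmop{sgn}(\rho^{t-1})\sin\varphi^{t-1}$ (which follows from $\varphi^{t-1}=\frac{\pi}{2}-\arccos|\rho^{t-1}|$), I obtain
\[
  \frac{\langle \theta^t, \theta^{\ast}\rangle}{\| \theta^{\ast} \|^2} = \frac{2}{\pi}\,\tmop{sgn}(\rho^{t-1})\left[ \varphi^{t-1} + \sin\varphi^{t-1}\cos\varphi^{t-1} \right].
\]
Since $\varphi^{t-1}\in[0,\frac{\pi}{2})$, the bracket is nonnegative and is strictly positive whenever $\varphi^{t-1}>0$, i.e. whenever $\rho^{t-1}\neq 0$. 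Hence $\rho^t$ is nonzero and shares the sign of $\rho^{t-1}$; starting from the hypothesis $\rho^0\neq 0$, induction yields $\rho^t\neq 0$ and $\tmop{sgn}(\rho^t) = \tmop{sgn}(\rho^0)$ for all $t$ (this can equivalently be read off the recurrence relation of Proposition~\ref{prop:recurrence}). This closes the argument, the remaining steps being routine $\tanh$ and $\ell_1$ bookkeeping rather than genuine difficulties.
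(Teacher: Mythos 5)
Your proposal is correct and follows essentially the same route as the paper's proof: reduce the $\ell_1$ distance to $|\tanh(\nu^t)-\tmop{sgn}(\rho^0)\tanh(\nu^\ast)|$, substitute the noiseless update from Corollary~\ref{cor:noiseless}, and use $|\tanh(\nu^\ast)|=\|\tfrac{1}{2}-\pi^\ast\|_1$. The only difference is that you explicitly prove the sign-preservation fact $\tmop{sgn}(\rho^{t-1})=\tmop{sgn}(\rho^0)$ by induction, whereas the paper simply invokes it (having verified it inside the proof of Proposition~\ref{prop:recurrence}); your inductive argument is valid and makes the corollary's proof self-contained.
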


Thus, at the population level, we investigate the evolution of the \textit{sub-optimality angle} $\varphi^t$ to characterize the convergence behavior of EM iterations. In Proposition~\ref{prop:quad} we show the linear convergence rate of $\tan \varphi^t$ when $\varphi^t$ is relatively small and demonstrate the quadratic convergence rate when  $\varphi^t$ is large enough. Hence, we can divide the EM updates into two stages: In the first stage, it takes $T'=\mathcal{O}(\frac{1}{\tan \varphi^0})=\mathcal{O}(\frac{1}{\sin \varphi^0})=\mathcal{O}(\frac{1}{|\rho^0|})$ iterations to ensure $\tan \varphi^{T'}$ is large enough. Subsequently, in the second stage, it takes the other $T''=\mathcal{O}(\log \log \frac{1}{\varepsilon})$ to guarantee $\phi^T=\mathcal{O}(\sqrt{\varepsilon})$. Hence, $\frac{\|\theta^{T + 1} - \tmop{sgn} (\rho^0) \theta^{\ast} \|}{\| \theta^{\ast} \|}= \frac{1}{\pi} \sqrt{\left(\phi^T-\sin \phi^T\right)^2+\left(1-\cos \phi^T\right)^2}\leq \frac{[\phi^T]^2}{2 \pi}=\mathcal{O}(\varepsilon)$.

The next proposition derives a recurrence equation for the update of the sub-optimality angle $\varphi$.
\begin{proposition}{(Recurrence Relation)}\label{prop:recurrence}
  Assume the  initial sub-optimality cosine satisfies $\rho^0 \assign \frac{\langle \theta^0, \theta^{\ast} \rangle}{\|
  \theta^0 \| \cdot \| \theta^{\ast} \|} \neq \pm 1$, or equivalently $\varphi^0 \assign \frac{\pi}{2} -
  \arccos | \rho^0 | \in [0, \pi/2 )$. 
  Then the recurrence relation for EM updates at
  population level characterized by the sub-optimality angle is
  \begin{equation}
    \tan \varphi^t = \tan \varphi^{t - 1} + \varphi^{t - 1}  (\tan^2
    \varphi^{t - 1} + 1).
  \end{equation}
\end{proposition}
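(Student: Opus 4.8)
The plan is to carry out the entire computation inside the two-dimensional plane $\tmop{span}\{\theta^{\ast}, \theta^{t-1}\}$, in which the update of Corollary~\ref{cor:noiseless} lives, using the orthonormal frame $\{\hat{e}_1, \hat{e}_2\}$ with $\hat{e}_1 = \theta^{\ast}/\|\theta^{\ast}\|$. The first step is to rewrite everything in this frame. Writing $\varphi := \varphi^{t-1}$ and $s := \tmop{sgn}(\rho^{t-1})$, and recalling that $\sin\varphi = |\rho^{t-1}|$ and $\cos\varphi = \sqrt{1-(\rho^{t-1})^2}$ follow directly from $\varphi = \tfrac{\pi}{2} - \arccos|\rho^{t-1}|$, I would decompose the current normalized iterate as $\tfrac{\theta^{t-1}}{\|\theta^{t-1}\|} = s\sin\varphi\,\hat{e}_1 + \cos\varphi\,\hat{e}_2$. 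Here the sign $s$ is forced by $\langle \tfrac{\theta^{t-1}}{\|\theta^{t-1}\|}, \hat{e}_1\rangle = \rho^{t-1}$, while the $\hat{e}_2$-coefficient is nonnegative because $\hat{e}_2$ is by definition the normalized perpendicular component of $\theta^{t-1}$.

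Substituting this decomposition into the formula of Corollary~\ref{cor:noiseless} and collecting terms gives $\tfrac{\theta^t}{\|\theta^{\ast}\|} = \tfrac{2}{\pi}\bigl[\,s(\varphi + \sin\varphi\cos\varphi)\,\hat{e}_1 + \cos^2\varphi\,\hat{e}_2\,\bigr]$, so that the parallel and perpendicular components of $\theta^t$ are $a := \tfrac{2}{\pi}s(\varphi+\sin\varphi\cos\varphi)$ and $b := \tfrac{2}{\pi}\cos^2\varphi$. Since $\varphi \in [0,\pi/2)$ makes both $\varphi+\sin\varphi\cos\varphi \ge 0$ and $\cos^2\varphi > 0$, the next step is simply to read off $\rho^t = a/\sqrt{a^2+b^2}$, whence $\sin\varphi^t = |\rho^t| = |a|/\sqrt{a^2+b^2}$ and $\cos\varphi^t = b/\sqrt{a^2+b^2}$, and therefore $\tan\varphi^t = |a|/b = (\varphi + \sin\varphi\cos\varphi)/\cos^2\varphi$. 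Splitting this into $\varphi/\cos^2\varphi + \tan\varphi$ and using $\sec^2\varphi = 1 + \tan^2\varphi$ yields exactly $\tan\varphi^t = \tan\varphi^{t-1} + \varphi^{t-1}(\tan^2\varphi^{t-1}+1)$.

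I expect the only delicate point to be the bookkeeping of signs and orientation. Because $\varphi^t$ is defined through $|\rho^t|$, the claimed recurrence must be sign-free, and this is precisely what makes the factor $s$ drop out once we pass to $|a|$; I would emphasize that $b > 0$ holds throughout $\varphi \in [0,\pi/2)$ (the excluded endpoint $\rho^{t-1} = \pm 1$ being ruled out by the hypothesis $\rho^0 \neq \pm 1$ together with the invariance of the span), so no absolute value is needed on $\cos\varphi^t$ and $\tan\varphi^t$ is genuinely nonnegative as it must be. A secondary check is that the orientation chosen for $\hat{e}_2$ is immaterial: flipping $\hat{e}_2$ flips the sign of $b$ but leaves $|b|$, and hence $\tan\varphi^t$, unchanged. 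Beyond these sign considerations the argument is a short and routine trigonometric simplification.
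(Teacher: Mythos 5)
Your proposal is correct and follows essentially the same route as the paper: both substitute the decomposition of $\theta^{t-1}/\|\theta^{t-1}\|$ in the frame $\{\hat{e}_1,\hat{e}_2\}$ into the noiseless update of Corollary~\ref{cor:noiseless}, identify the parallel component $\tfrac{2}{\pi}(\varphi^{t-1}+\sin\varphi^{t-1}\cos\varphi^{t-1})$ and perpendicular component $\tfrac{2}{\pi}\cos^2\varphi^{t-1}$, and obtain $\tan\varphi^t$ from their ratio. Your version reads $\tan\varphi^t$ off directly as $|a|/b$ rather than computing $\sin\varphi^t$ and $\cos\varphi^t$ separately via the norm $\|\theta^t\|/\|\theta^\ast\|$ as the paper does, but this is only a cosmetic difference.
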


In the following Proposition, we show that the trajectory of iterations $\theta^t, \forall t \in \mathbb{N}_+$ is
on the cycloid ~\citep{harris1998handbook} on the plane $\tmop{span} \{ \theta^0, \theta^{\ast} \}$.
\begin{proposition}{(Cycloid Trajectory)}\label{prop:cycloid}
  If $\rho^0 \assign \frac{\langle \theta^0, \theta^{\ast} \rangle}{\|
  \theta^0 \| \cdot \| \theta^{\ast} \|} \neq \pm 1$, namely $\phi^0 \assign 2
  \arccos | \rho^0 | \in (0, \pi]$. Then the coordinates $\mathtt{x}^t,
  \mathtt{y}^t$ of normalized vector $\frac{\theta^t}{\| \theta^{\ast} \|} =
  \mathtt{x}^t \hat{e}_1 + \mathtt{y}^t \hat{e}_2^t = \mathtt{x}^t \hat{e}_1 +
  \mathtt{y}^t \hat{e}_2^0, \forall t \in \mathbb{N}_+$ for EM updates at
  the Population level can be parameterized with the angle $\phi^{t - 1} \assign 2
  \arccos | \rho^{t - 1} | \in (0, \pi]$ as follows, where $\rho^{t - 1}
  \assign \frac{\langle \theta^{t - 1}, \theta^{\ast} \rangle}{\| \theta^{t -
  1} \| \cdot \| \theta^{\ast} \|}$.
  \begin{eqnarray}
    1 - \tmop{sgn} (\rho^0) \mathtt{x}^t & = & \frac{1}{\pi} [\phi^{t - 1} -
    \sin \phi^{t - 1}] \nonumber\\
    \mathtt{y}^t & = & \frac{1}{\pi} [1 - \cos \phi^{t - 1}] 
  \end{eqnarray}
  Hence, the trajectory of iterations $\theta^t, \forall t \in \mathbb{N}_+$ is
  on the cycloid with a parameter $\frac{\| \theta^{\ast} \|}{\pi}$, on the
  plane $\tmop{span} \{ \theta^0, \theta^{\ast} \}$ (see Fig.~\ref{fig:cycloid}).
\end{proposition}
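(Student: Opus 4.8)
The plan is to start from the noiseless update in Corollary~\ref{cor:noiseless} and express the normalized vector $\theta^t/\|\theta^{\ast}\|$ in the \emph{fixed} orthonormal frame $\{\hat{e}_1,\hat{e}_2^0\}$ of the plane $\tmop{span}\{\theta^0,\theta^{\ast}\}$. First I would observe, by induction on $t$, that every $\theta^t$ lies in this plane, since the update writes $\theta^t$ as a linear combination of $\theta^{\ast}$ and $\theta^{t-1}$, so the plane is invariant. Within it I would write $\theta^{t-1}/\|\theta^{t-1}\| = \rho^{t-1}\hat{e}_1 + \sqrt{1-(\rho^{t-1})^2}\,\hat{e}_2^0$, where the crucial claim to be carried through the induction is that the perpendicular component always points along $+\hat{e}_2^0$, i.e.\ $\hat{e}_2^t=\hat{e}_2^0$ for every $t$ (this is what lets us use a single frame and produces the stated identity $\mathtt{x}^t\hat{e}_1+\mathtt{y}^t\hat{e}_2^t=\mathtt{x}^t\hat{e}_1+\mathtt{y}^t\hat{e}_2^0$).

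Next I would convert the angular quantities. Setting $\gamma^{t-1}:=\arccos|\rho^{t-1}|\in[0,\tfrac{\pi}{2}]$ yields the clean identities $\varphi^{t-1}=\tfrac{\pi}{2}-\gamma^{t-1}$, $\phi^{t-1}=2\gamma^{t-1}$, together with $\cos\varphi^{t-1}=\sin\gamma^{t-1}=\sqrt{1-(\rho^{t-1})^2}$ and $|\rho^{t-1}|=\cos\gamma^{t-1}$. Reading off the components of the Corollary~\ref{cor:noiseless} update along $\hat{e}_1$ and $\hat{e}_2^0$ gives
\[
  \mathtt{x}^t=\tfrac{2}{\pi}\,[\,\tmop{sgn}(\rho^{t-1})\varphi^{t-1}+\rho^{t-1}\cos\varphi^{t-1}\,],
  \qquad
  \mathtt{y}^t=\tfrac{2}{\pi}\,\cos\varphi^{t-1}\sqrt{1-(\rho^{t-1})^2}.
\]
The double-angle relations $2\sin\gamma^{t-1}\cos\gamma^{t-1}=\sin\phi^{t-1}$ and $2\sin^2\gamma^{t-1}=1-\cos\phi^{t-1}$ then collapse these to $\mathtt{y}^t=\tfrac{1}{\pi}[1-\cos\phi^{t-1}]$ and $\tmop{sgn}(\rho^{t-1})\,\mathtt{x}^t=1-\tfrac{1}{\pi}[\phi^{t-1}-\sin\phi^{t-1}]$, which is the asserted parametrization up to the replacement of $\tmop{sgn}(\rho^{t-1})$ by $\tmop{sgn}(\rho^0)$.

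To finish I would run a sign-preservation argument to upgrade $\tmop{sgn}(\rho^{t-1})$ to $\tmop{sgn}(\rho^0)$ and simultaneously close the frame induction. Since $\phi^{t-1}\in(0,\pi]$ and $\psi\mapsto\psi-\sin\psi$ is increasing with range $(0,\pi]$ there, we have $\tfrac{1}{\pi}[\phi^{t-1}-\sin\phi^{t-1}]\in(0,1]$, so $\tmop{sgn}(\rho^{t-1})\,\mathtt{x}^t\in[0,1)$; as $\tmop{sgn}(\rho^t)=\tmop{sgn}(\langle\theta^t,\theta^{\ast}\rangle)=\tmop{sgn}(\mathtt{x}^t)$, this forces $\tmop{sgn}(\rho^t)=\tmop{sgn}(\rho^{t-1})$, and induction gives $\tmop{sgn}(\rho^{t-1})=\tmop{sgn}(\rho^0)$ for all $t$. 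Meanwhile $\mathtt{y}^t\ge 0$ confirms $\hat{e}_2^t=\hat{e}_2^0$, closing the induction begun in the first step. Substituting and rescaling by $\|\theta^{\ast}\|$, the pair $\bigl(\|\theta^{\ast}\|(1-\tmop{sgn}(\rho^0)\mathtt{x}^t),\,\|\theta^{\ast}\|\mathtt{y}^t\bigr)$ matches the standard cycloid parametrization $X=r(\phi-\sin\phi),\,Y=r(1-\cos\phi)$ with $r=\|\theta^{\ast}\|/\pi$ and parameter $\phi=\phi^{t-1}\in(0,\pi]$, identifying the locus as the claimed cycloid.

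I expect the main obstacle to be the orientation and sign bookkeeping rather than the trigonometric reduction, which is routine once $\gamma^{t-1}$ is introduced. One must track two coupled facts through a single induction: that the perpendicular component never flips ($\hat{e}_2^t=\hat{e}_2^0$, so that $\mathtt{y}^t\ge0$) and that $\tmop{sgn}(\rho^{t-1})$ is constant along the trajectory. The enabling estimate for both is the monotonicity of $\phi-\sin\phi$ on $(0,\pi]$, which pins $\tmop{sgn}(\rho^0)\mathtt{x}^t$ to $[0,1)$ and thereby prevents the iterates from crossing to the opposite side of $\hat{e}_1$. The degenerate values $\rho^{t-1}\in\{0,\pm1\}$ (a stuck fixed point, respectively exact recovery) would be handled separately, but they are excluded or terminal under the standing hypothesis $\rho^0\neq\pm1$ together with $\rho^0\neq0$.
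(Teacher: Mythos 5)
Your proposal is correct and follows essentially the same route as the paper: express the Corollary~\ref{cor:noiseless} update in the fixed frame $\{\hat{e}_1,\hat{e}_2^0\}$, show the perpendicular direction never flips and that $\tmop{sgn}(\rho^t)$ is preserved, then convert via double-angle identities to the cycloid parametrization. The only cosmetic difference is that the paper establishes $\hat{e}_2^{t}=\hat{e}_2^{t-1}$ by computing $\langle \theta^{t-1}-\hat{e}_1\hat{e}_1^{\top}\theta^{t-1},\,\theta^t-\hat{e}_1\hat{e}_1^{\top}\theta^t\rangle\propto\cos^3\varphi^{t-1}>0$ and gets sign preservation from the monotonicity of $\varphi^t$ in Proposition~\ref{prop:recurrence}, whereas you carry both through a single induction using the positivity of the $\hat{e}_2^0$-coefficient and the range of $\phi-\sin\phi$ on $(0,\pi]$ --- the same facts, packaged differently.
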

\begin{figure}[t]
  \centering
  \includegraphics[width=0.4\textwidth]{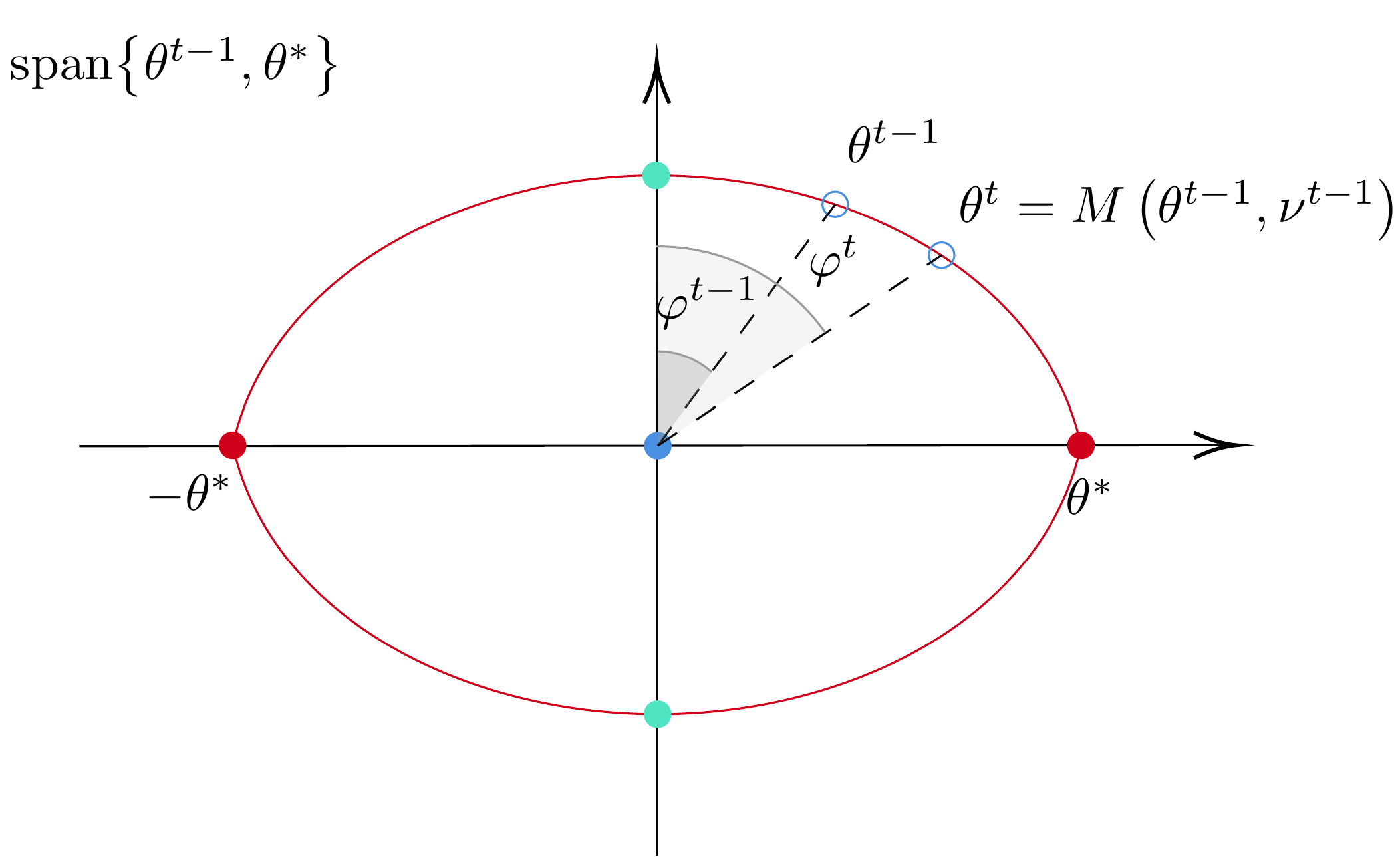}
  \caption{The cycloid trajectory for the EM update $M(\theta, \nu)$ of regression parameters $\theta$. The figure further shows the two global solutions (red dots), the unstable solution (blue dot), and the two saddle points (green dots). As long as the initial suboptimality angle is sufficiently large, $\varphi^t$ and in turn $\theta^t$ super-linearly converge to $\frac{\pi}{2}$ and $\theta^\ast$.}
  \label{fig:cycloid}
\end{figure}

\begin{proposition}{(Quadratic Convergence Rate)}\label{prop:quad}
  If $\varphi^0 \assign \frac{\pi}{2} - \arccos | \rho^0 | \in \left( 0,
  \frac{\pi}{2} \right)$, then the EM updates at population level satisfies
  \begin{equation}
    \tan \varphi^t \geq \frac{1 + \sqrt{5}}{2} \cdot \tan \varphi^{t - 1}.
  \end{equation}
  Particularly, if $\varphi^{t-1} \assign \frac{\pi}{2} - \arccos | \rho^{t-1} | \in [ \arctan
  1.5, \frac{\pi}{2} )$, then the EM updates at the Population level
  satisfies
  \begin{equation}\label{eq:quad}
    \frac{\pi}{2} \left( \tan \varphi^t - \frac{\pi}{4} \right) \geq \left\{
    \frac{\pi}{2} \left( \tan \varphi^{t - 1} - \frac{\pi}{4} \right)
    \right\}^2.
  \end{equation}
\end{proposition}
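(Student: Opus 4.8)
The plan is to treat both inequalities as direct consequences of the recurrence in Proposition~\ref{prop:recurrence}, namely $\tan\varphi^t = \tan\varphi^{t-1} + \varphi^{t-1}(\tan^2\varphi^{t-1}+1)$, by reducing each to an elementary one-variable estimate. Note first that since the recurrence forces $\tan\varphi^t > \tan\varphi^{t-1} > 0$ while $\varphi^t = \arctan(\tan\varphi^t)\in(0,\pi/2)$, the angle stays in the admissible range, so the recurrence applies at every step and it suffices to analyze a single update.

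For the golden-ratio bound I would divide the recurrence by $\tan\varphi^{t-1}>0$ and use $\tan^2\varphi^{t-1}+1 = \sec^2\varphi^{t-1}$ to put the multiplicative increment in closed form:
\[
\frac{\tan\varphi^t}{\tan\varphi^{t-1}} = 1 + \varphi^{t-1}\,\frac{\sec^2\varphi^{t-1}}{\tan\varphi^{t-1}} = 1 + \frac{2\varphi^{t-1}}{\sin 2\varphi^{t-1}}.
\]
Since $2\varphi^{t-1}\in(0,\pi)$ and $\sin x \le x$ for $x\ge 0$, the last fraction is at least $1$, so the ratio is at least $2$. As $2 > \frac{1+\sqrt5}{2}$, the first claim follows; in fact the sharp constant here is $2$, and the golden ratio is simply a convenient weaker bound.

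For the quadratic step I would set $a := \tan\varphi^{t-1}\ge 1.5$ and $u := \varphi^{t-1} = \arctan a$, so the recurrence reads $\tan\varphi^t = a + u(a^2+1)$. The one technical ingredient is the lower bound $u = \arctan a = \frac{\pi}{2} - \arctan(1/a) \ge \frac{\pi}{2} - \frac{1}{a}$, which follows from $\arctan x \le x$. Substituting gives $\tan\varphi^t \ge a + \left(\frac{\pi}{2}-\frac1a\right)(a^2+1)$; plugging this into the target inequality \eqref{eq:quad} reduces it, after cancelling the common $\frac{\pi^2}{4}a^2$ term and clearing the positive factor $\pi/8$, to the single-variable polynomial inequality
\[
\pi^2 a + \pi - \frac{\pi^3}{8} - \frac{4}{a} \ge 0.
\]
Its left side is increasing in $a$ and already positive at $a = 1.5$ (numerically about $11.4$), hence it holds on $[1.5,\infty)$, which establishes the quadratic estimate.

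I expect the main obstacle to be the quadratic step rather than the linear one: the reduction must use a lower bound on $\arctan a$ that is accurate enough near the threshold (a cruder estimate such as $\arctan a \ge 1$ would not close the gap), and one must keep careful track of the constant $1.5$. The threshold $\varphi^{t-1}\ge\arctan 1.5$ is not arbitrary—it guarantees $\frac{\pi}{2}\bigl(\tan\varphi^{t-1}-\frac{\pi}{4}\bigr) > 1$, which is precisely the condition under which the recurrence $X_t \ge X_{t-1}^2$, with $X_t := \frac{\pi}{2}\bigl(\tan\varphi^t - \frac{\pi}{4}\bigr)$, yields genuine doubly-exponential growth rather than contraction, feeding into the $\mathcal{O}(\log\log\frac1\varepsilon)$ iteration count of the second stage described in the proof sketch of Theorem~\ref{thm:population}.
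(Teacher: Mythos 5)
Your proposal is correct, and both halves take a genuinely different (and in places sharper) route than the paper. For the golden-ratio bound, the paper never computes the one-step ratio directly: it uses concavity of $\arctan$ to compare consecutive increments $a^{t+2}-a^{t+1}$ and $a^{t+1}-a^{t}$, solves $x>1+x^{-1}$ to extract $\frac{1+\sqrt5}{2}$, and closes with an induction over $t$. Your identity $\tan\varphi^t/\tan\varphi^{t-1}=1+\tfrac{2\varphi^{t-1}}{\sin 2\varphi^{t-1}}\ge 2$ replaces all of that with a single application of $\sin x\le x$, and yields the strictly stronger constant $2>\tfrac{1+\sqrt5}{2}$; this is equivalent to the inequality $\arctan a>\tfrac{a}{1+a^2}$ that the paper only uses implicitly to seed its induction. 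For the quadratic step, the paper lower-bounds $\arctan a$ by the rational function $\tfrac{\pi a}{2a+\pi}$ and completes the square after a partial-fraction expansion, checking a numerical constant at $a=1.5$; you instead use $\arctan a=\tfrac{\pi}{2}-\arctan(1/a)\ge\tfrac{\pi}{2}-\tfrac1a$, which is in fact a tighter bound for large $a$ (the paper's expands as $\tfrac{\pi}{2}-\tfrac{\pi^2}{4a}+\cdots$ with $\pi^2/4>1$), and your reduction to $\pi^2 a+\pi-\tfrac{\pi^3}{8}-\tfrac4a\ge 0$ checks out: the left side is increasing in $a$ and equals about $11.4$ at $a=1.5$. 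Both approaches hinge on the same structural facts (monotonicity of $\varphi^t$ from the recurrence, so the hypothesis propagates, and a rational lower bound on $\arctan$ accurate near the threshold), but yours is more economical and makes the role of the threshold $\arctan 1.5$ — ensuring $\tfrac{\pi}{2}(\tan\varphi^{t-1}-\tfrac{\pi}{4})>1$ so that squaring expands rather than contracts — explicit, which the paper leaves to the proof of Theorem~\ref{thm:population}.
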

Proposition~\ref{prop:quad} states that as long as the initial sub-optimality angle is nonzero, the angle increases linearly which implies convergence to global optima. 
Note that if $\varphi^0 =0$, then the update may converge to one of the saddle points (see Fig.~\ref{fig:cycloid} for a visualization). 
Let $\phi = 2 (\frac{\pi}{2}- \varphi)$ be small enough, then $\tan(\varphi)=\cot(\frac{\phi}{2})\approx \frac{2}{\phi}$, and by Proposition~\ref{prop:quad}, $\frac{\pi}{2} \tan(\varphi^{t}) \approx [\frac{\pi}{2} \tan(\varphi^{t-1})]^2$, so we show that $\phi^{t}/\pi \approx [\phi^{t-1}/\pi]^2$, demonstrating quadratic convergence in the angle $\phi$ (see Appendix~\ref{sup:finite_sample}, proof of Proposition~\ref{prop:convg_angle}). 
Further, note that at the population level, \citet{kwon2021minimax} (Page 5, Lemma 1) established the quadratic convergence when $C \sqrt{\log \left\|\bar{\theta}^*\right\|}/\left\|\bar{\theta}^*\right\| \leq \frac{\left\|\theta-\theta^*\right\|}{\|\theta^\ast\|} \leq 1 / 10$ for high SNR. Proposition~\ref{prop:quad} then extends the region of quadratic convergence from 1/10 to  $\frac{1}{\pi}\sqrt{(\phi-\sin \phi)^2 + (1-\cos \phi)^2}_{\phi=2(\frac{\pi}{2}-\arctan 1.5)} \approx 0.22$ in the noiseless setting.

\section{Finite-sample Level Analysis}\label{sec:finite}
In this section, we give a finite-sample analysis by coupling the population EM with the corresponding finite-sample EM in the noiseless setting.

\begin{theorem}{(Convergence at Finite-sample Level)}\label{thm:convg_finite}
  In the noiseless setting, suppose any initial mixing weights $\pi^0$ and any initial regression parameters $\theta^0 \in \mathbb{R}^d$ ensuring that $\varphi^0 \geq \Theta \left( \sqrt{\frac{\log
  \frac{1}{\delta}}{n}} \vee \frac{\log \frac{1}{\delta}}{n} \right)$. If we
  run finite-sample Easy EM for at most $T_1=\mathcal{O}\left( \log
  \frac{n}{\log \frac{1}{\delta}}\right)$ iterations followed by the finite-sample standard EM for at most $T' =\mathcal{O} \left(
  \log \frac{n}{d} 
    \wedge \log \frac{n}{\log \frac{1}{\delta}} \right)$
  iterations with all the same $n =
  \Omega \left( d \vee \log \frac{1}{\delta} 
  \right)$ samples, then we have
  \ifdefined\isarxiv
    \normalsize
  \else
    \scriptsize
  \fi  
  \begin{equation*}
      \begin{aligned}
         &\frac{\| \theta^{T + 1} - \mathrm{sgn}(\rho^{T+1}) \theta^{\ast} \|}{\| \theta^{\ast} \|} =\mathcal{O}
    \left( \sqrt{\frac{d}{n}} \vee \frac{\log \frac{1}{\delta}}{n}
    \vee \sqrt{\frac{\log \frac{1}{\delta}}{n}} \right),\\
    &\| \pi^{T + 1} - \bar{\pi}^{\ast} \|_1 = \left\| \frac{1}{2} - \pi^{\ast}
    \right\|_1
     \mathcal{O} \left(  \sqrt{\frac{d}{n}} \vee \frac{\log
    \frac{1}{\delta}}{n} \vee \sqrt{\frac{\log \frac{1}{\delta}}{n}}
    \right)\\
    &\quad+ c (\pi^{\ast}) \cdot \mathcal{O} \left( \sqrt{\frac{\log
    \frac{1}{\delta}}{n}} \right),
      \end{aligned}
  \end{equation*}
  \normalsize
  with probability at least $1 - T\delta$, where $T:=T_1+T',\varphi^0 \assign
  \frac{\pi}{2} - \arccos \left| \frac{\langle \theta^0, \theta^{\ast}
  \rangle}{\| \theta^0 \| \cdot \| \theta^{\ast} \|} \right|, 
  \rho^{T+1}\assign \frac{\langle \theta^{T+1}, \theta^{\ast}
  \rangle}{\| \theta^{T+1} \| \cdot \| \theta^{\ast} \|},
  \bar{\pi}^{\ast} \assign \frac{1}{2} - \tmop{sgn} (\rho^0) 
(\frac{1}{2} - \pi^{\ast})$, and the
  coefficient \ $c (\pi^{\ast}) =\mathcal{O} (1)$, especially $c (\pi^{\ast})
  = 0$ when $\pi^{\ast} = \{1, 0\}$ or $\{0, 1\}$.
\end{theorem}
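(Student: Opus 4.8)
The plan is to couple the finite-sample Easy/standard EM iterates with their population counterparts from Corollary~\ref{cor:noiseless} and to track the single scalar summary $\varphi^t$ through the recurrence of Proposition~\ref{prop:recurrence}, exactly as the population analysis of Theorem~\ref{thm:population} does. The backbone is the perturbation decomposition $\theta^{t+1}_n = M(\theta^t_n,\nu^t_n) + \big(M^{\mathrm{easy}}_n - M\big)(\theta^t_n,\nu^t_n)$ (and likewise with $M_n$ in the second stage), so that each iterate equals the population update of the current iterate plus a statistical fluctuation. I would first prove two uniform concentration lemmas over the relevant cone $\{\theta : \varphi(\theta)\in[\varphi^0,\pi/2)\}$: (i) a bound $\|M^{\mathrm{easy}}_n(\theta,\nu)-M(\theta,\nu)\| = \mathcal{O}\big(\sqrt{\log(1/\delta)/n}\vee \log(1/\delta)/n\big)\cdot\|\theta^\ast\|$, which follows from Bernstein-type concentration of the sub-exponential summands of \eqref{eq:easyEM} (in the noiseless limit $y_ix_i = (-1)^{z_i+1}\langle\theta^\ast,x_i\rangle x_i$ is a product of Gaussians), and (ii) a bound for the standard operator $M_n$ of \eqref{eq:finite} that additionally controls $(\tfrac1n\sum x_ix_i^\top)^{-1}$ and thereby introduces the $\sqrt{d/n}$ term via operator-norm concentration of the sample covariance, valid once $n=\Omega(d)$. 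A useful simplification is that the noiseless limit turns $\tanh$ into a hard label, so these operators coincide with the Alternating-Minimization maps and the fluctuations reduce to indicator-weighted Gaussian products.

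Second, I would analyze Stage~1 (Easy EM). Here the population map only guarantees the linear growth $\tan\varphi^t\ge\frac{1+\sqrt5}{2}\tan\varphi^{t-1}$ of Proposition~\ref{prop:quad}, so the statistical error can stall this growth when $\varphi^t$ is comparable to the noise floor. The hypothesis $\varphi^0\ge\Theta(\sqrt{\log(1/\delta)/n}\vee\log(1/\delta)/n)$ is precisely what guarantees the signal dominates the fluctuation, so that $\tan\varphi^t_n$ still grows by a constant factor per step; iterating this geometric growth shows that after $T_1=\mathcal{O}(\log\frac{n}{\log(1/\delta)})$ steps the iterate enters the constant-order region $\varphi\ge\arctan 1.5$ where the quadratic regime of Proposition~\ref{prop:quad} applies. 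Using Easy EM in this (long) first stage is deliberate: it avoids accumulating the $\sqrt{d/n}$ covariance-inverse error over the $T_1$ iterations, confining that error to the short second stage.

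Third, in Stage~2 I switch to the standard map $M_n$ and use the quadratic recurrence \eqref{eq:quad}. Writing $u^t:=\frac{\pi}{2}(\tan\varphi^t-\frac{\pi}{4})$, the population part satisfies $u^t\ge (u^{t-1})^2$ while the fluctuation adds an $\mathcal{O}(\epsilon_{\mathrm{stat}})$ perturbation with $\epsilon_{\mathrm{stat}}=\sqrt{d/n}\vee\sqrt{\log(1/\delta)/n}\vee\log(1/\delta)/n$; the doubly-exponential growth means only $T'=\mathcal{O}(\log\frac{n}{d}\wedge\log\frac{n}{\log(1/\delta)})$ steps are needed before the residual population error drops below $\epsilon_{\mathrm{stat}}$, at which point the angle saturates at the statistical floor $\phi^T=\mathcal{O}(\sqrt{\epsilon_{\mathrm{stat}}})$. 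Crucially, because the convergence is super-linear, the statistical errors injected at earlier steps are squared away rather than accumulating geometrically, so the final floor is set by a single step's fluctuation. Feeding $\phi^T$ into the cycloid parametrization of Proposition~\ref{prop:cycloid} gives $\frac{\|\theta^{T+1}-\mathrm{sgn}(\rho^{T+1})\theta^\ast\|}{\|\theta^\ast\|}=\frac{1}{\pi}\sqrt{(\phi^T-\sin\phi^T)^2+(1-\cos\phi^T)^2}\le\frac{(\phi^T)^2}{2\pi}=\mathcal{O}(\epsilon_{\mathrm{stat}})$, which is the claimed bound on the regression parameters.

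Finally, for the mixing weights I would combine Corollary~\ref{cor:err_mixing}, namely $\|\pi^t-\bar\pi^\ast\|_1=|1-\frac{2}{\pi}\varphi^{t-1}|\,\|\tfrac12-\pi^\ast\|_1$, with a direct concentration of the scalar update $N_n$ of \eqref{eq:finite} around $N$. The first factor transfers the $\mathcal{O}(\epsilon_{\mathrm{stat}})$ angle error into the $\|\tfrac12-\pi^\ast\|_1\,\mathcal{O}(\epsilon_{\mathrm{stat}})$ contribution, while the independent fluctuation of $N_n$ produces the additive $c(\pi^\ast)\cdot\mathcal{O}(\sqrt{\log(1/\delta)/n})$ term; the coefficient $c(\pi^\ast)$ is the variance proxy of the summand $\tanh(\cdot)$, which collapses to $0$ at the degenerate weights $\{1,0\},\{0,1\}$ because every hard label is then identical. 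I expect the main obstacle to be the uniform concentration in the hard-label limit together with the propagation bookkeeping of Stage~1: one must establish monotone escape of $\varphi^t_n$ from the noise floor as a high-probability statement holding simultaneously across all $T=T_1+T'$ iterations via a union bound (hence the $1-T\delta$ guarantee), rather than merely a one-step contraction.
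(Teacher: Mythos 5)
Your overall architecture matches the paper's: decompose each iterate as population update plus statistical fluctuation, run a multi-stage angle analysis (geometric growth of $\tan\varphi$ until the constant-order region, then the quadratic recurrence), convert the terminal angle $\phi^T$ into the $\theta$-error via the cycloid bound $\frac{1}{\pi}\sqrt{(\phi^T-\sin\phi^T)^2+(1-\cos\phi^T)^2}\le\frac{(\phi^T)^2}{2\pi}$, and handle the mixing weights via Corollary~\ref{cor:err_mixing} plus Bernoulli concentration of $N_n$ (Grothendieck's identity). However, your first concentration lemma contains a genuine error that breaks Stage~1 as you have set it up. The claim $\|M^{\mathrm{easy}}_n(\theta,\nu)-M(\theta,\nu)\| = \mathcal{O}\big(\sqrt{\log(1/\delta)/n}\vee \log(1/\delta)/n\big)\cdot\|\theta^\ast\|$ is false: decomposing $x_i=\tilde x_i+x_i^{\perp}$ with $\tilde x_i\in\mathrm{span}\{\theta,\theta^\ast\}$, the orthogonal component $\frac1n\sum_i|\langle\tilde x_i,\theta^\ast\rangle|\,\mathrm{sgn}\langle\tilde x_i,\theta\rangle\, x_i^{\perp}$ has norm of order $\sqrt{d/n}$ (it is $\frac{\|\theta^\ast\|}{n}\sqrt{Z_1Z_2}$ with $Z_1\sim\chi^2(n)$, $Z_2\sim\chi^2(d-2)$), so the $\sqrt{d/n}$ term is present for Easy EM as well — it does not come from the covariance inverse, contrary to your stated rationale for using Easy EM in Stage~1. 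If you then feed the (corrected) full-norm error $\sqrt{d/n}$ into a one-step angle perturbation bound of the form $\varphi^{t+1}\ge\bar\varphi^{t+1}-\arcsin r$, Stage~1 only survives when $\varphi^0\gtrsim\sqrt{d/n}$, which is strictly stronger than the theorem's hypothesis $\varphi^0\ge\Theta(\sqrt{\log(1/\delta)/n}\vee\log(1/\delta)/n)$.

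The missing idea is the distinction between the \emph{projected} statistical error (Proposition~\ref{prop:proj_err}, dimension-free) and the \emph{total} statistical error (Proposition~\ref{prop:stat_err}, carrying $\sqrt{d/n}$). In Stage~1 the paper tracks $\sin\varphi^{t+1}=|\langle\theta^{t+1},\hat e_1\rangle|/\|\theta^{t+1}\|$: the numerator is driven only by the component of the fluctuation in $\mathrm{span}\{\theta^t,\theta^\ast\}$ (since $\hat e_1$ lies in that plane), which is $\mathcal{O}(\sqrt{\log(1/\delta)/n}\vee\log(1/\delta)/n)$, while the denominator is merely bounded by a constant. This is exactly what lets the angle escape from the weak noise floor $\sqrt{\log(1/\delta)/n}$ up to the total-error threshold $\Theta(\sqrt{d/n}\vee\cdots)$ in $T_1$ Easy-EM steps, after which the standard-EM stages take over. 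Your proposal, as written, cannot recover the theorem's stated initialization condition without this projection argument. A secondary, smaller imprecision: for $c(\pi^\ast)=0$ at $\pi^\ast=\{1,0\}$, the summand $W_i=\mathrm{sgn}\langle x_i,\theta^\ast\rangle\,\mathrm{sgn}\langle x_i,\theta^T\rangle$ is not constant even though all labels coincide; its variance is small only because $\varphi^T$ is already near $\pi/2$, and one needs a Bennett/Chernoff-type bound exploiting that small variance (the plain Hoeffding rate $\sqrt{\log(1/\delta)/n}$ does not vanish), which your "variance proxy" remark gestures at but does not pin down.
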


We note that \citet{ghosh20a} showed the quadratic convergence at the finite-sample level, only when the error of regression parameters $\lesssim \min \left\{\pi^\ast(1), \pi^\ast(2)\right\}\left\|\theta_1^*-\theta_2^*\right\|$ for a variant of EM in the noiseless setting. Theorem~\ref{thm:convg_finite} on the other hand removes the restriction of mixing weights and still obtains the quadratic convergence rate. 
Our results for convergence rate and statistical error hold when $\frac{d}{n}$ is not greater than some constant, which is less than 1 (see Appendix~\ref{sup:finite_sample}, proof of Proposition~\ref{prop:convg_angle}). However, if $\frac{d}{n}>1$, recovering regression parameters $\theta\in\mathbb{R}^d$ from a reduced number $n$ of measurements invalidates the EM update rules for regression parameters at the finite-sample level (see Eq.~\eqref{eq:finite}), as $\text{rank}(\frac{1}{n} \sum_{i=1}^n x_i x_i^\top)\leq n< d$. Additional restrictions on $\theta$, such as assuming some components of $\theta$ are zeros~\citep{barik2022sparse}, are necessary.

\textbf{Sketch Proof of Theorem~\ref{thm:convg_finite}}
The first step is to upper-bound the statistical error of the finite-sample EM. In the following Proposition~\ref{prop:proj_err}, Proposition~\ref{prop:stat_err}, we give the bounds for the projected statistical error $\mathcal{O}(\sqrt{\frac{\log \frac{1}{\delta}}{n}} \vee \frac{\log \frac{1}{\delta}}{n})$ and the total statistical error $\mathcal{O} ( \sqrt{\frac{d}{n}} \vee \frac{\log\frac{1}{\delta}}{n} \vee \sqrt{\frac{\log \frac{1}{\delta}}{n}})$.
\vspace{1em}
\begin{proposition}{(Projected Statistical Error)}\label{prop:proj_err}
    In the noiseless setting, the projection on $\text{span}\{\theta,\theta^\ast\}$ for the statistical error of $\theta$ satisfies
      \small
    \begin{equation}\nonumber
      \frac{\|P_{\theta,\theta^\ast}
      [M^{\tmop{easy}}_n (\theta, \nu) - M (\theta, \nu)]\|}{\|\theta^\ast\|}
      = \mathcal{O}\left(\sqrt{\frac{\log \frac{1}{\delta}}{n}} \vee \frac{\log \frac{1}{\delta}}{n}\right),
    \end{equation}
    \normalsize
    with probability at least $1 - \delta$, where $M_n (\theta, \nu), M (\theta,
    \nu)$ are the EM update rules for $\theta$ at the Finite-sample level and the
    population level respectively, and the orthogonal projection matrix $P_{\theta,\theta^\ast}$ satisfies
    $\text{span}(P_{\theta,\theta^\ast})=\text{span}\{\theta,\theta^\ast\}$.
\end{proposition}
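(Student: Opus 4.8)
The plan is to reduce the projected error to two scalar concentration problems and apply a Bernstein-type bound. Since $\text{span}\{\theta,\theta^\ast\}$ is two-dimensional, I would write the orthogonal projection as $P_{\theta,\theta^\ast} = \hat{e}_1\hat{e}_1^{\top} + \hat{e}_2\hat{e}_2^{\top}$, where $\hat{e}_1 = \theta^\ast/\|\theta^\ast\|$ and $\hat{e}_2$ is the unit vector in the plane orthogonal to $\hat{e}_1$ introduced in Section~\ref{sec:setup}; crucially, both are deterministic given the (fixed) iterate $\theta$ and the truth $\theta^\ast$. Then $\|P_{\theta,\theta^\ast}[M^{\tmop{easy}}_n - M]\|^2 = \sum_{j=1}^{2} \langle\hat{e}_j, M^{\tmop{easy}}_n - M\rangle^2$, so it suffices to bound each scalar deviation $\langle\hat{e}_j, M^{\tmop{easy}}_n(\theta,\nu) - M(\theta,\nu)\rangle$ separately and combine them by a union bound.

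By Eq.~\eqref{eq:easyEM} and Eq.~\eqref{eq:theta}, each such deviation is a centered empirical average $\frac{1}{n}\sum_{i=1}^{n}(W_i^{(j)} - \mathbb{E} W_i^{(j)})$ of i.i.d. random variables $W_i^{(j)} \assign \tanh\!\big(\frac{y_i\langle x_i,\theta\rangle}{\sigma^2}+\nu\big)\, y_i\,\langle\hat{e}_j, x_i\rangle$, whose mean is exactly $\langle\hat{e}_j, M(\theta,\nu)\rangle$. I would next control the tail of $W_i^{(j)}$. Because $|\tanh(\cdot)|\le 1$ uniformly in its argument, and hence uniformly in $\nu$, we have $|W_i^{(j)}| \le |y_i|\,|\langle\hat{e}_j,x_i\rangle|$; in the noiseless setting $y_i = (-1)^{z_i+1}\langle\theta^\ast,x_i\rangle = \pm\|\theta^\ast\|\,\langle\hat{e}_1,x_i\rangle$, so $|W_i^{(j)}|\le \|\theta^\ast\|\,|\langle\hat{e}_1,x_i\rangle\,\langle\hat{e}_j,x_i\rangle|$. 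Under Assumption~\ref{ass:4} the inner products $\langle\hat{e}_1,x_i\rangle$ and $\langle\hat{e}_j,x_i\rangle$ are standard, possibly correlated, Gaussians, so their product is sub-exponential; this shows $W_i^{(j)}$ is sub-exponential with norm $\mathcal{O}(\|\theta^\ast\|)$, a bound independent of the ambient dimension $d$.

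With the sub-exponential norm in hand, I would apply Bernstein's inequality to each coordinate $j$: with probability at least $1-\delta/2$,
\begin{equation}\nonumber
\left|\langle\hat{e}_j, M^{\tmop{easy}}_n(\theta,\nu) - M(\theta,\nu)\rangle\right| \lesssim \|\theta^\ast\|\left(\sqrt{\frac{\log\frac{1}{\delta}}{n}} \vee \frac{\log\frac{1}{\delta}}{n}\right).
\end{equation}
A union bound over $j\in\{1,2\}$ followed by dividing through by $\|\theta^\ast\|$ then yields the claimed rate for $\|P_{\theta,\theta^\ast}[M^{\tmop{easy}}_n - M]\|/\|\theta^\ast\|$ with probability at least $1-\delta$. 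The main obstacle, and the reason this bound is cleaner than the full statistical error of Proposition~\ref{prop:stat_err} which carries an extra $\sqrt{d/n}$ factor, is precisely the sub-exponential norm estimate: projecting first onto the fixed two-dimensional plane collapses the problem to two scalar sums of products of two Gaussians, so no dimension-dependent covering or net argument over $\mathbb{R}^d$ is required, and the two-sided $\sqrt{\log(1/\delta)/n}\vee\log(1/\delta)/n$ Bernstein rate for averages of sub-exponential variables is exactly what survives.
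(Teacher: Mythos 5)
Your proposal is correct and follows essentially the same route as the paper: the paper likewise reduces the projected error to two scalar empirical averages over the fixed two-dimensional plane $\tmop{span}\{\theta,\theta^\ast\}$, shows each summand is sub-exponential because it is dominated by a product of two correlated standard Gaussians (bounded by a $\chi^2(2)$ variable, with explicit moment and MGF estimates), and applies the two-tailed Bernstein-type concentration bound before combining the two coordinates. The only cosmetic difference is the choice of orthonormal basis for the plane ($\{\vec{e}_1,\vec{e}_2\}$ adapted to $\theta$ in the paper versus $\{\hat{e}_1,\hat{e}_2\}$ adapted to $\theta^\ast$ in your write-up), which does not affect the argument.
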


\begin{proposition}{(Statistical Error)}\label{prop:stat_err}
  In the noiseless setting, the statistical error of $\theta$ for EM
  updates at the Finite-sample level satisfies
  \small
  \begin{equation}\nonumber
    \frac{\| M_n (\theta, \nu) - M (\theta, \nu) \|_2}{\| \theta^{\ast} \|}
    =\mathcal{O} \left( \sqrt{\frac{d}{n}} \vee \frac{\log
    \frac{1}{\delta}}{n} \vee \sqrt{\frac{\log \frac{1}{\delta}}{n}}
    \right),
  \end{equation}
  \normalsize
  with probability at least $1 - \delta$, $M_n (\theta, \nu), M (\theta,
  \nu)$ denote the EM update rules for $\theta$ at the Finite-sample level and the
  Population level.
\end{proposition}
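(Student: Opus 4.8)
The plan is to relate the full finite-sample update $M_n$ to the easy update $M_n^{\tmop{easy}}$, for which Proposition~\ref{prop:proj_err} already controls the in-plane fluctuation, and to handle separately the component of the error orthogonal to $\text{span}\{\theta,\theta^\ast\}$. Writing $\hat{\Sigma}\assign\frac{1}{n}\sum_{i=1}^n x_i x_i^\top$, I would start from the identity
\[
M_n(\theta,\nu) - M(\theta,\nu) = \hat{\Sigma}^{-1}\bigl(M_n^{\tmop{easy}}(\theta,\nu) - M(\theta,\nu)\bigr) + \bigl(\hat{\Sigma}^{-1} - I\bigr)M(\theta,\nu),
\]
which splits the total error into a fluctuation term carried by $M_n^{\tmop{easy}} - M$ and a covariance-misestimation term acting on the population update $M$.

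For the covariance-misestimation term I would invoke the standard Gaussian concentration $\|\hat{\Sigma}-I\|_{\tmop{op}}=\mathcal{O}(\sqrt{\frac{d}{n}}\vee\frac{d}{n})$, valid with probability $1-\delta$, which under $n=\Omega(d\vee\log\frac{1}{\delta})$ also yields $\|\hat{\Sigma}^{-1}\|_{\tmop{op}}=\mathcal{O}(1)$ and $\|\hat{\Sigma}^{-1}-I\|_{\tmop{op}}=\mathcal{O}(\sqrt{\frac{d}{n}})$. Since Corollary~\ref{cor:noiseless} gives $\|M(\theta,\nu)\|=\mathcal{O}(\|\theta^\ast\|)$, this term contributes $\mathcal{O}(\|\theta^\ast\|\sqrt{\frac{d}{n}})$.

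The heart of the argument is the fluctuation term, which I would split into its projection onto $\text{span}\{\theta,\theta^\ast\}$ and its orthogonal complement. The parallel piece is exactly the quantity bounded in Proposition~\ref{prop:proj_err} by $\mathcal{O}(\sqrt{\frac{\log\frac{1}{\delta}}{n}}\vee\frac{\log\frac{1}{\delta}}{n})$, and $M$ is itself parallel. For the orthogonal piece, set $g_i\assign\tanh(\frac{y_i\langle x_i,\theta\rangle}{\sigma^2}+\nu)y_i$ and decompose $x_i=x_i^{\parallel}+x_i^{\perp}$. The \emph{crucial structural observation} is that in the noiseless model $g_i$ depends on $x_i$ only through $\langle x_i,\theta\rangle$ and $\langle x_i,\theta^\ast\rangle$, hence only through $x_i^{\parallel}$; therefore $g_i\ind x_i^{\perp}$ and the orthogonal part of the error is precisely $\frac{1}{n}\sum_{i=1}^n g_i x_i^{\perp}$, a mean-zero sum supported on a $(d-2)$-dimensional subspace. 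Conditioning on $\{g_i\}$ makes this a Gaussian vector whose norm concentrates around $\sqrt{\frac{d-2}{n}\cdot\frac{1}{n}\sum_i g_i^2}$; because $|g_i|\le|y_i|$ is sub-Gaussian of scale $\|\theta^\ast\|$, the average $\frac{1}{n}\sum_i g_i^2$ is sub-exponential and concentrates around $\mathbb{E}[g^2]=\mathcal{O}(\|\theta^\ast\|^2)$, producing the $\sqrt{\frac{\log\frac{1}{\delta}}{n}}$ and $\frac{\log\frac{1}{\delta}}{n}$ corrections. Altogether $\|\frac{1}{n}\sum_i g_i x_i^{\perp}\|=\mathcal{O}(\|\theta^\ast\|(\sqrt{\frac{d}{n}}\vee\sqrt{\frac{\log\frac{1}{\delta}}{n}}\vee\frac{\log\frac{1}{\delta}}{n}))$.

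Combining the three contributions through $\|\hat{\Sigma}^{-1}\|_{\tmop{op}}=\mathcal{O}(1)$, dividing by $\|\theta^\ast\|$, and taking a union bound over the covariance and fluctuation events yields the claimed rate with probability $1-\delta$. I expect the main obstacle to be this orthogonal fluctuation term: one must carefully exploit the independence $g_i\ind x_i^{\perp}$ to reduce to a conditionally Gaussian vector, and then control the interplay between the heavy (sub-exponential) tail of $\frac{1}{n}\sum_i g_i^2$ and the $\chi^2$-type concentration of $\sum_i\|x_i^{\perp}\|^2$ so as to recover all three terms $\sqrt{\frac{d}{n}}$, $\sqrt{\frac{\log\frac{1}{\delta}}{n}}$, and $\frac{\log\frac{1}{\delta}}{n}$ with the correct $\|\theta^\ast\|$ scaling.
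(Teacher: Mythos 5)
Your proposal is correct and follows essentially the same route as the paper: the identity $M_n - M = \hat{\Sigma}^{-1}(M_n^{\tmop{easy}} - M) + \hat{\Sigma}^{-1}(I-\hat{\Sigma})M$, the operator-norm control of $\hat{\Sigma}$, the use of Proposition~\ref{prop:proj_err} for the in-plane fluctuation, and the key observation that the scalar weight depends only on the in-plane component of $x_i$ so the orthogonal fluctuation is conditionally Gaussian. Your "norm concentrates around $\sqrt{\tfrac{d-2}{n}\cdot\tfrac{1}{n}\sum_i g_i^2}$" step is exactly the paper's reduction of that term to $\tfrac{\|\theta^\ast\|}{n}\sqrt{Z_1 Z_2}$ with $Z_1\sim\chi^2(n)$, $Z_2\sim\chi^2(d-2)$ independent, bounded via Laurent--Massart.
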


\citet{kwon2021minimax} (Appendix E, page 17, Lemma 11) and \citet{balakrishnan2017statistical} upper-bounded $\mathcal{O}(\sqrt{\frac{d}{n}} \log \frac{n}{\delta})$ for the statistical error, while our finer analysis presents a tighter bound $\mathcal{O} ( \sqrt{\frac{d}{n}} \vee \frac{\log
    \frac{1}{\delta}}{n} \vee \sqrt{\frac{\log \frac{1}{\delta}}{n}}
    ) \lesssim \mathcal{O}(\sqrt{\frac{d}{n}} \log \frac{n}{\delta})$.
The difference arises from the techniques used for bounding the statistical error in the $\ell_2$ norm. We obtain our additive log factor by leveraging the rotational invariance of Gaussians, rewriting the $\ell_2$ norm of the error as the geometric mean of two Chi-square distributions (see Appendix~\ref{sup:finite_sample}, proof of Proposition~\ref{prop:stat_err}). 
The multiplicative log factor is achieved using the standard symmetrization technique and the Ledoux-Talagrand contraction argument (see Appendix E, proof of Lemma 11, page 17 of~\citep{kwon2021minimax}).

Proposition~\ref{prop:init} shows in the first step when using the initialization for $\theta^0,\pi^0$ the angle $\varphi^t$ is larger than the projected statistical error, if we run at most $\mathcal{O}(\log \frac{1}{\delta})$ iterations of  Easy EM  with at most $\Theta(\frac{n}{\log \frac{1}{\delta}})$ fresh samples per iteration. 

\begin{proposition}{(Initialization with Easy EM)}\label{prop:init}
  In the noiseless setting, suppose we run the sample-splitting finite-sample
  Easy EM with \ $n' \assign \Theta \left( \frac{n}{\log \frac{1}{\delta}}
  \wedge \left[ \frac{n}{\log \frac{1}{\delta}} \right]^2 \right)$ fresh
  samples for each iteration, then after at most $T_0 =\mathcal{O} \left( \log
  \frac{1}{\delta} \right)$ iterations, it satisfies $\varphi^{T_0} \geq
  \Theta \left( \sqrt{\frac{\log \frac{1}{\delta}}{n}} \vee \frac{\log
  \frac{1}{\delta}}{n} \right)$ with probability at least $1 - \delta$.
\end{proposition}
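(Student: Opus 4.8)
The plan is to reduce everything to the scalar sub-optimality angle $\varphi^t$ and to show that, starting from an arbitrary (possibly vanishing) $\varphi^0$, the fluctuations of the finite-sample Easy EM map suffice to drive $\varphi^t$ above the statistical floor within $\mathcal{O}(\log\frac{1}{\delta})$ fresh-sample iterations. First I would note that, by Corollary~\ref{cor:noiseless}, in the noiseless setting the population update of $\theta$ lives in $\mathrm{span}\{\theta^{t-1},\theta^\ast\}$ and is completely determined by $\varphi^{t-1}$; consequently only the projection of the finite-sample error onto this plane can affect $\varphi^t$. Writing $M^{\tmop{easy}}_{n'}(\theta^{t-1},\nu^{t-1}) = M(\theta^{t-1},\nu^{t-1}) + \xi^t$ and projecting, Proposition~\ref{prop:proj_err} (applied with per-iteration sample size $n'$ and confidence $1-\delta'$) bounds the in-plane perturbation by $\epsilon' = \mathcal{O}(\sqrt{\log\frac{1}{\delta'}/n'}\vee \log\frac{1}{\delta'}/n')$. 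Sample splitting is essential here: since the fresh $n'$ samples at step $t$ are independent of $\theta^{t-1}$, the pointwise bound of Proposition~\ref{prop:proj_err} applies conditionally, and the perturbations $\xi^1,\dots,\xi^{T_0}$ are independent across steps.

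Second, I would fix the bookkeeping of constants. Choosing $\delta' = \delta/T_0$ with $T_0 = \mathcal{O}(\log\frac{1}{\delta})$ gives $\log\frac{1}{\delta'} = \mathcal{O}(\log\frac{1}{\delta})$, so a union bound over the $T_0$ steps keeps the total failure probability at $\delta$. With $n' = \Theta(\frac{n}{\log\frac{1}{\delta}}\wedge[\frac{n}{\log\frac{1}{\delta}}]^2)$ the total sample budget $T_0 n'$ is $\Theta(n)$, and the per-step perturbation has \emph{typical} (standard-deviation) size $\asymp \sqrt{\log\frac{1}{\delta}/n}\vee \log\frac{1}{\delta}/n$; denote this threshold by $\tau$, which is exactly the target level, while its high-probability magnitude is the larger quantity $\epsilon'\asymp \sqrt{\log\frac{1}{\delta}}\,\tau$.

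Third, the core is a saddle-escape analysis for the recursion of $\tan\varphi^t$. From the small-angle limit of Proposition~\ref{prop:recurrence} (equivalently Proposition~\ref{prop:quad}) the population map is expansive near the saddle, $\tan\varphi^t \ge 2\tan\varphi^{t-1}$, yet $\varphi=0$ is a fixed point, so when $\varphi^0$ is tiny there is no deterministic drift to rely on. I would instead use the stochasticity of $\xi^t$: by an anti-concentration (Paley--Zygmund type) bound on the in-plane, $\theta^\ast$-directed component of $\xi^t$, each step produces a good kick of magnitude $\ge c\,\tau$ in the aligning direction with at least a constant probability $p_0$, independently across steps. Combining the constant per-step success probability with the expansiveness of the population map, and boosting over the $T_0=\mathcal{O}(\log\frac{1}{\delta})$ independent iterations so that $(1-p_0)^{T_0}\le\delta$, yields $\varphi^{T_0}\ge \Theta(\tau)$ with probability at least $1-\delta$. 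Finally I would translate the two regimes of $\tau$ (namely $\sqrt{\log\frac{1}{\delta}/n}$ when $n\gtrsim\log\frac{1}{\delta}$, and $\log\frac{1}{\delta}/n$ otherwise, matching the $[\cdot]^2$ branch of $n'$) to recover the stated bound.

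The hard part will be the saddle escape itself, precisely because the per-step fluctuation can exceed the target $\tau$ by a factor $\sqrt{\log\frac{1}{\delta}}$: a rare large kick in the wrong direction can temporarily undo progress, so a one-sided monotone-growth argument is not available. The delicate points are (i) obtaining a genuine anti-concentration lower bound on the kicks (their typical size being $\asymp\tau$) rather than only the upper tail furnished by Proposition~\ref{prop:proj_err}; (ii) controlling the sign $\mathrm{sgn}(\rho^t)$ of the alignment so that the kicks add coherently toward a single optimum; and (iii) ensuring that once $\varphi$ is amplified past $\epsilon'$ the population expansion dominates the residual noise, so the escaped value persists through step $T_0$. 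The independence granted by sample splitting is what makes both the conditional application of Proposition~\ref{prop:proj_err} and the probability boosting legitimate.
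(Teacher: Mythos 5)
Your proposal follows essentially the same route as the paper: decompose the fresh-sample Easy EM update into the population map plus an in-plane fluctuation, establish via an anti-concentration bound that the fluctuation in the aligning direction exceeds $c\sqrt{1/n'}$ with constant probability per iteration (the paper uses a Berry--Esseen/CLT lower bound where you propose Paley--Zygmund, together with a Chebyshev bound keeping the $\vec{e}_1$-component controlled), and then boost over $T_0=\mathcal{O}(\log\frac{1}{\delta})$ independent sample-split rounds so the good event occurs at least once with probability $1-\delta$. Your concern (iii) about persistence is moot in the paper's reading of the statement: it declares $T_0$ to be the first iteration at which the good event fires, so a single successful kick giving $\varphi^{T_0}\geq\sqrt{1/n'}=\Theta(\sqrt{\log\frac{1}{\delta}/n}\vee\log\frac{1}{\delta}/n)$ suffices and no expansiveness or sign-coherence argument is needed at this stage.
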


This result allows us to divide the analysis of the convergence rate into three stages.
In the first stage, we show that after at most 
$T_1=\mathcal{O}(\log \frac{\text{statistical error}}{\text{projected statistical error}})=\mathcal{O}(\log \frac{n}{\log \frac{1}{\delta}})$ iterations of easy EM, we can ensure the angle $\varphi^t$ is larger than the statistical error.
Then, in the second stage, by using the linear convergence rate for $\varphi^t$ established in Proposition~\ref{prop:quad}, we show that the angle $\varphi^t$ would be larger than $\arctan(1.5)$ after at most $T_2=\mathcal{O}(\log \frac{1}{\text{statistical error}}) = \mathcal{O}(\log \frac{n}{d} \wedge \log \frac{n}{\log \frac{1}{\delta}})$ iterations of standard EM.
In the third stage, with the quadratic convergence rate for $\varphi^t \geq \arctan(1.5)$ in Proposition~\ref{prop:quad}, we further show the distance between $\theta^t$ and $\theta^\ast$ will decrease with the quadratic speed until the distance reaches the statistical error (using Proposition~\ref{prop:convg_angle}). 
Hence, with at most $T_3=\mathcal{O}(\log\log \frac{1}{\text{statistical error}})=\mathcal{O}(\log [\log \frac{n}{d} \wedge \log \frac{n}{\log \frac{1}{\delta}}])$ iterations of standard EM, the error of regression parameters reaches the statistical error.
\begin{proposition}{(Convergence of Angle)}\label{prop:convg_angle}
  In the noiseless setting, suppose $\varphi^0 \geq \Theta \left(
  \sqrt{\frac{\log \frac{1}{\delta}}{n}} \vee \frac{\log \frac{1}{\delta}}{n}
  \right)$, run finite-sample Easy EM for $T_1=\mathcal{O}\left( \log
  \frac{n}{\log \frac{1}{\delta}}\right)$ iterations followed by the finite-sample standard EM for at most $T' =\mathcal{O} \left(
  \log \frac{n}{d} 
    \wedge \log \frac{n}{\log \frac{1}{\delta}} \right)$
  iterations with all the same $n =
  \Omega \left( d \vee \log \frac{1}{\delta} 
  \right)$ samples, then it satisfies
  \begin{equation}
    \varphi^T \geq \frac{\pi}{2} - \Theta \left( \sqrt{\frac{d}{n}} \vee
    \frac{\log \frac{1}{\delta}}{n} \vee \sqrt{\frac{\log
    \frac{1}{\delta}}{n}} \right),
  \end{equation}
  with probability at least $1 - T \delta$, where $T=T_1+T',\varphi^0 \assign
  \frac{\pi}{2} - \arccos \left| \frac{\langle \theta^0, \theta^{\ast}
  \rangle}{\| \theta^0 \| \cdot \| \theta^{\ast} \|} \right|$ and $\varphi^T
  \assign \frac{\pi}{2} - \arccos \left| \frac{\langle \theta^T, \theta^{\ast}
  \rangle}{\| \theta^T \| \cdot \| \theta^{\ast} \|} \right|$.
\end{proposition}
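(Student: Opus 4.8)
The plan is to couple the finite-sample iterates with the population update from Corollary~\ref{cor:noiseless} and reduce the entire analysis to tracking a single scalar, the sub-optimality angle $\varphi^t$. Writing each finite-sample step as $\theta^t = M(\theta^{t-1},\nu^{t-1}) + \xi^{t-1}$ with statistical-error perturbation $\xi^{t-1}$, I would decompose $\xi^{t-1}$ into its projection onto $\operatorname{span}\{\theta^{t-1},\theta^\ast\}$ and its orthogonal complement. Proposition~\ref{prop:proj_err} controls the in-plane part at the dimension-free rate $\mathcal{O}(\sqrt{\log(1/\delta)/n}\vee\log(1/\delta)/n)$, while Proposition~\ref{prop:stat_err} controls the full displacement at the rate $\mathcal{O}(\sqrt{d/n}\vee\cdots)$ that carries the $\sqrt{d/n}$ coming from the $d-2$ out-of-plane directions. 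Since $M(\theta,\nu)\in\operatorname{span}\{\theta,\theta^\ast\}$ by Theorem~\ref{thm:em_update}, the population dynamics never leave the plane, so the angle increment is governed by the in-plane geometry whereas the out-of-plane error is precisely what sets the terminal floor for $\pi/2-\varphi^T$.

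\textbf{Stages one and two.} Starting from $\varphi^{0}\geq\Theta(\sqrt{\log(1/\delta)/n}\vee\log(1/\delta)/n)$ guaranteed by Proposition~\ref{prop:init}, I would show the angle keeps increasing as long as it exceeds the statistical-error level. The engine is the geometric growth $\tan\varphi^{t}\geq\frac{1+\sqrt5}{2}\tan\varphi^{t-1}$ of Proposition~\ref{prop:quad}: whenever $\varphi^{t-1}$ lies above the projected-error scale, the population increase of $\tan\varphi$ dominates the noise-induced perturbation, so $\tan\varphi^{t}$ still grows by a constant factor per step. Because the growth is geometric, the number of steps to lift $\varphi$ from one scale to another is the logarithm of their ratio; running Easy EM with fresh (sample-split) batches for $T_1=\mathcal{O}(\log\frac{n}{\log(1/\delta)})$ iterations first raises $\varphi$ above the total statistical error, and a further $T'=\mathcal{O}(\log\frac{n}{d}\wedge\log\frac{n}{\log(1/\delta)})$ iterations of standard EM push $\varphi$ past the threshold $\arctan 1.5$ separating the linear from the quadratic regime.

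\textbf{Stage three.} Once $\varphi^{t-1}\geq\arctan 1.5$, Proposition~\ref{prop:quad} gives the quadratic recursion $\tfrac{\pi}{2}(\tan\varphi^t-\tfrac{\pi}{4})\geq[\tfrac{\pi}{2}(\tan\varphi^{t-1}-\tfrac{\pi}{4})]^2$, which, translated through $\phi^t=2(\pi/2-\varphi^t)$ as in the remark after Proposition~\ref{prop:quad}, yields $\phi^t/\pi\approx(\phi^{t-1}/\pi)^2$. Hence $\pi/2-\varphi^t$ contracts doubly-exponentially and reaches the statistical-error level after $\mathcal{O}(\log\log\frac{1}{\text{stat.\ error}})$ further steps. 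The contraction cannot drop below the floor set by the out-of-plane component of $\xi^t$: using $\pi/2-\varphi^t\approx\|\theta^t_\perp\|/\|\theta^t\|$ with $\|\theta^t\|\to\|\theta^\ast\|$, and bounding the perpendicular displacement by the total statistical error of Proposition~\ref{prop:stat_err}, I obtain $\varphi^T\geq\pi/2-\Theta(\sqrt{d/n}\vee\frac{\log(1/\delta)}{n}\vee\sqrt{\frac{\log(1/\delta)}{n}})$.

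\textbf{Main obstacle and bookkeeping.} The crux is the coupling across iterations. For the Easy EM phase this is clean, since sample splitting makes each $\xi^{t}$ independent of $\theta^t$ and the per-step high-probability bound applies directly; but the standard-EM phase reuses the same $n$ samples, so I must invoke a uniform (net-based) concentration over the whole trajectory to keep the error valid simultaneously at every iterate, which is the technically delicate step and the source of the reused-sample $\sqrt{d/n}$ rate. A second subtlety is reconciling the two error scales: the growth phases must be charged to the smaller projected error so the angle can be driven up from the tiny initialization, while the terminal floor must be charged to the larger total error. Finally, the overall probability is assembled by a union bound over the $T=T_1+T'$ iterations, producing the stated $1-T\delta$.
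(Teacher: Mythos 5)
Your three-stage architecture (lift $\varphi$ past the projected error with Easy EM, then past $\arctan 1.5$ with linearly growing $\tan\varphi$, then contract $\tfrac{\pi}{2}-\varphi$ quadratically down to the statistical floor) is exactly the paper's plan, and the iteration counts and the final union bound over $T$ iterations match. However, the technical heart of the argument is missing. You assert that ``the population increase of $\tan\varphi$ dominates the noise-induced perturbation,'' but you give no mechanism for converting a bound on $\|\xi^{t}\|$ into a bound on the resulting angle loss. The paper supplies this via a dedicated geometric lemma (Lemma~\ref{lem:single_iter}): if $\theta^{t+1}=M(\theta^t,\nu^t)+\xi^t$ with $\|\xi^t\|/\|\theta^\ast\|=r<\sin\bar\varphi^{t+1}$, then $\varphi^{t+1}\geq\bar\varphi^{t+1}-\arcsin r$, uniformly over the direction of the perturbation. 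All of Stages 2 and 3 are then delicate algebra built on this: e.g.\ expanding $\tan(\bar\varphi^{t+1}-\arcsin\Theta)$ to get $\tan\varphi^{t+1}>\tfrac{22}{17}\tan\varphi^t-\Theta$, hence geometric growth of $\tan\varphi^t-3.4\Theta$; and in Stage 3 the recursion $\phi^{t+1}/\pi<(\phi^t/\pi)^2+r^{t+1}$, which is what actually produces the floor $\phi^T=\mathcal{O}(\Theta)$ after a constant number of extra iterations. Without a quantitative perturbation-to-angle lemma your Stages 2 and 3 do not close. Relatedly, the floor is set by the \emph{total} statistical error $r^{t+1}=\|M_n(\theta^t,\nu^t)-M(\theta^t,\nu^t)\|/\|\theta^\ast\|$ entering that recursion, not specifically by the out-of-plane component (though the $\sqrt{d/n}$ term indeed originates from the $d-2$ orthogonal directions in Proposition~\ref{prop:stat_err}).

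Two further mismatches with the statement and the paper's accounting. First, the proposition explicitly runs both phases ``with all the same $n$ samples''; sample splitting is used only in the separate initialization result (Proposition~\ref{prop:init}), so your plan to use fresh batches during the $T_1$ Easy-EM iterations changes the setup and the error scales. Second, the paper does \emph{not} invoke a net-based uniform concentration over the trajectory: it applies the per-iteration high-probability bounds at each iterate and takes a union bound over the $T$ steps, which is precisely why the conclusion holds with probability $1-T\delta$ rather than $1-\delta$. Your claim that a uniform argument is ``the source of the reused-sample $\sqrt{d/n}$ rate'' is incorrect --- that rate comes from the geometric-mean-of-chi-squares bound on the out-of-plane projection --- and proposing a net argument while simultaneously charging $T\delta$ to a per-iteration union bound is internally inconsistent. (The data-dependence of $\theta^t$ that you flag is a legitimate concern, but it is not how the paper resolves the issue, and you would need to actually carry out the uniform argument if you took that route.)
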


Upon taking all these three stages into account, the total number of iterations until convergence with the initialization (i.e., $\varphi_0\geq$ the projected statistical error) is $T=T_1 +T_2 +T_3 = \mathcal{O} ( \log \frac{n}{\log \frac{1}{\delta}} )$. 
In particular, with a good initialization (i.e., $\varphi_0\geq$ the statistical error), the total number of iterations is $T'=T_2+T_3=\mathcal{O} (\log \frac{n}{d}  \wedge \log \frac{n}{\log \frac{1}{\delta}} )$.

For the error of the mixing weights, we first establish the upper bound for the error between the population EM update and the finite-sample EM update for mixing weights $\|N_n(\theta, \nu) - N(\theta, \nu)\|_1=c (\pi^{\ast}) \cdot \mathcal{O} ( \sqrt{\frac{\log\frac{1}{\delta}}{n}} )$. 
We establish this result by estimating the Chernoff bound (the full proof of Theorem~\ref{thm:convg_finite} is described in the supplementary materials, Appendix~\ref{sup:finite_sample}). The final error is obtained by summing up the error stemming from the population EM update (Corollary~\ref{cor:err_mixing}) and the error between the population and finite-sample EM updates.

\section{Experiments}\label{sec:experiments}
\begin{figure*}[t]
  \centering
  \begin{subfigure}[!htbp]{0.31\textwidth}
      \centering
      \includegraphics[width=\textwidth]{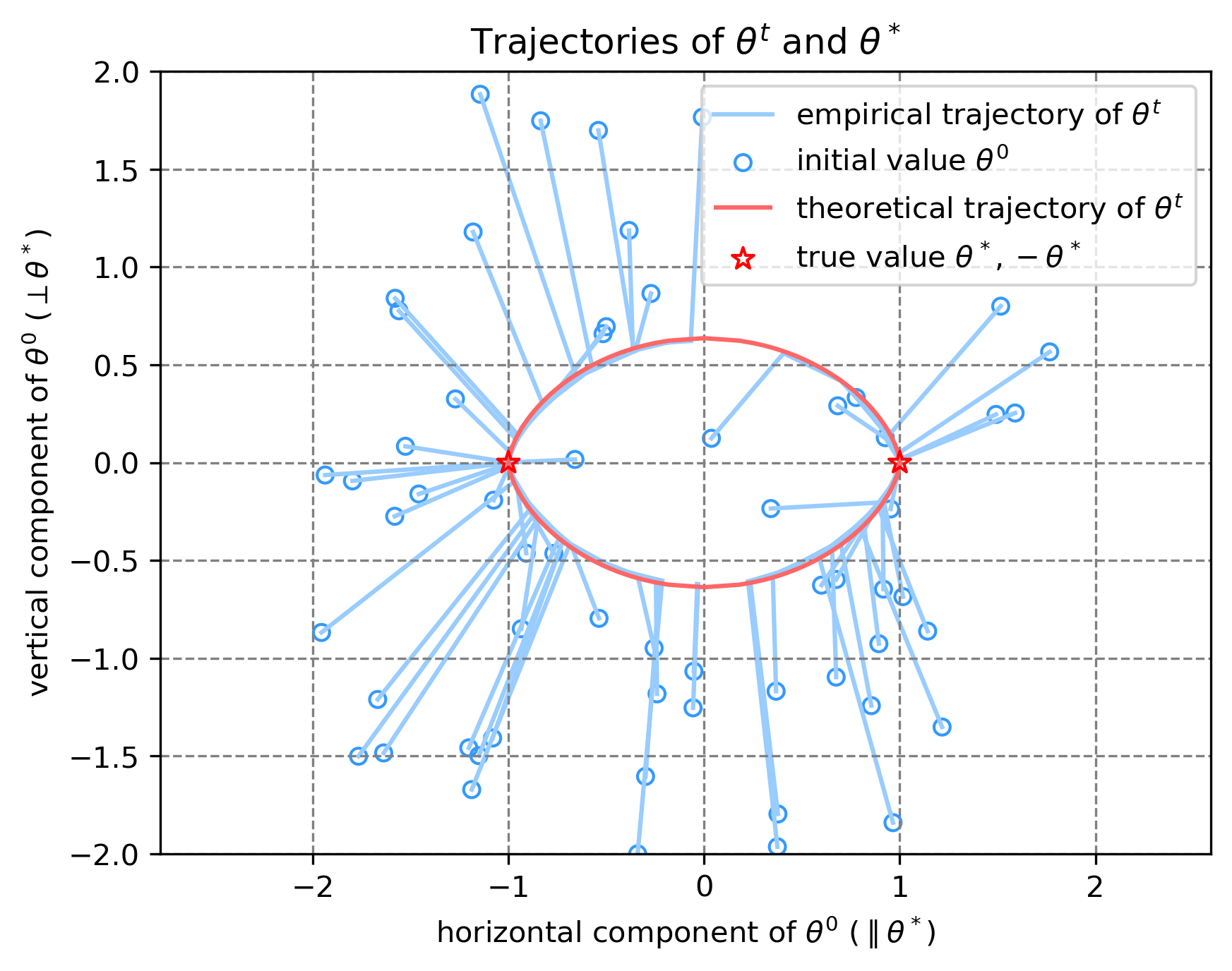}
      \caption{$d=2$, $\theta^\ast=[1, 0]$, $\pi^\ast=[0.7, 0.3]$, 
      trajectories of $\theta^t$ for 60 trials with $\theta^0$ and $\pi^0$ uniformly sampled from $[-2, 2]^2$ and $[0, 1]$, respectively.}
      \label{fig:traj_d2}
  \end{subfigure}
  \hfill
  \begin{subfigure}[!htbp]{0.31\textwidth}
      \centering
      \includegraphics[width=0.75\textwidth,height=0.75\textwidth]{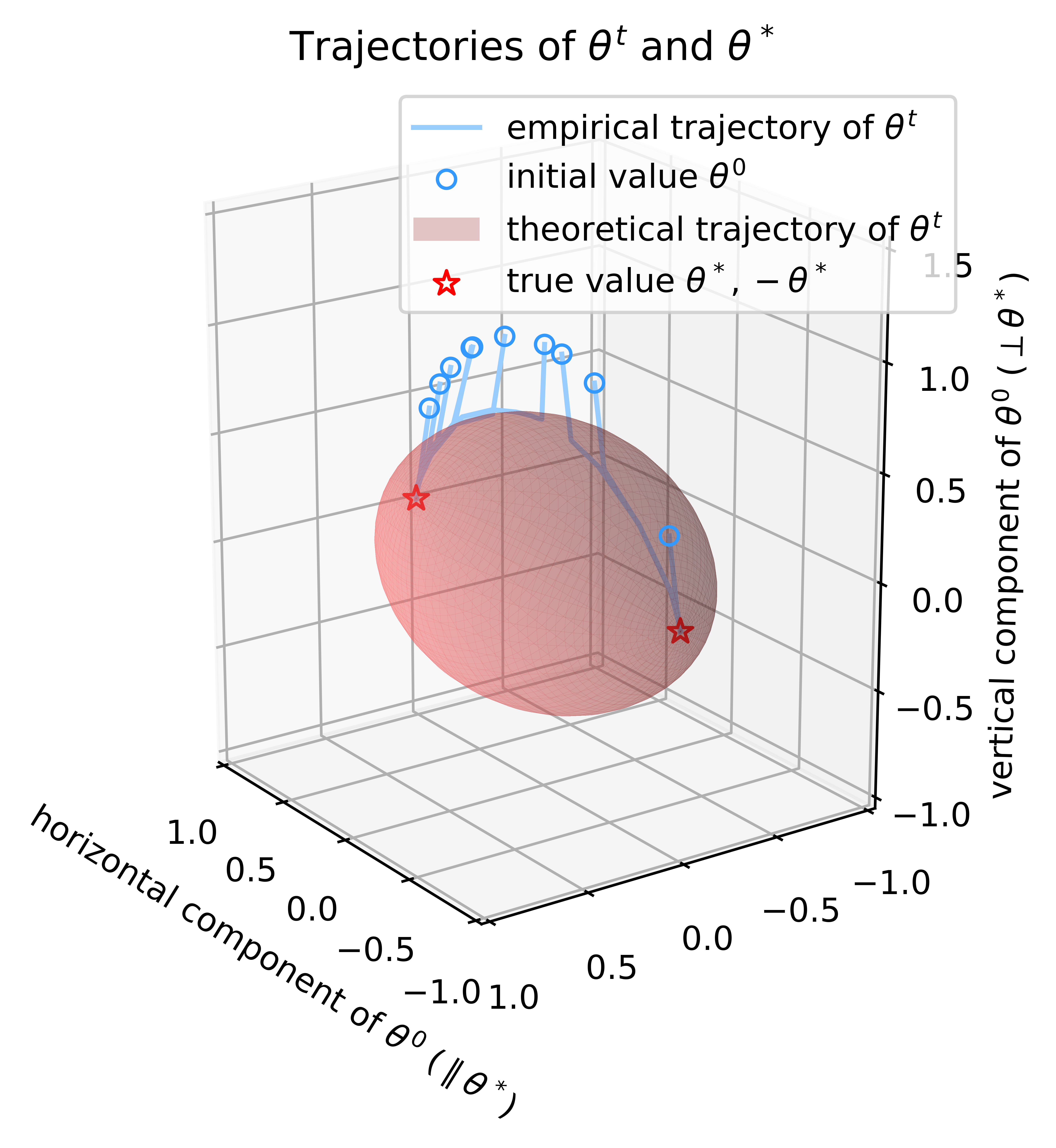}
      \caption{$d=3$, trajectories of $\theta^t$ for 10 trials, with $\theta^\ast, \theta^0$ sampled from three-dimensional unit sphere, $\pi^\ast, \pi^0$ drawn uniformly from $[0, 1]$.}
      \label{fig:traj_d3}
  \end{subfigure}
  \hfill
  \begin{subfigure}[!htbp]{0.31\textwidth}
      \centering
      \includegraphics[width=\textwidth]{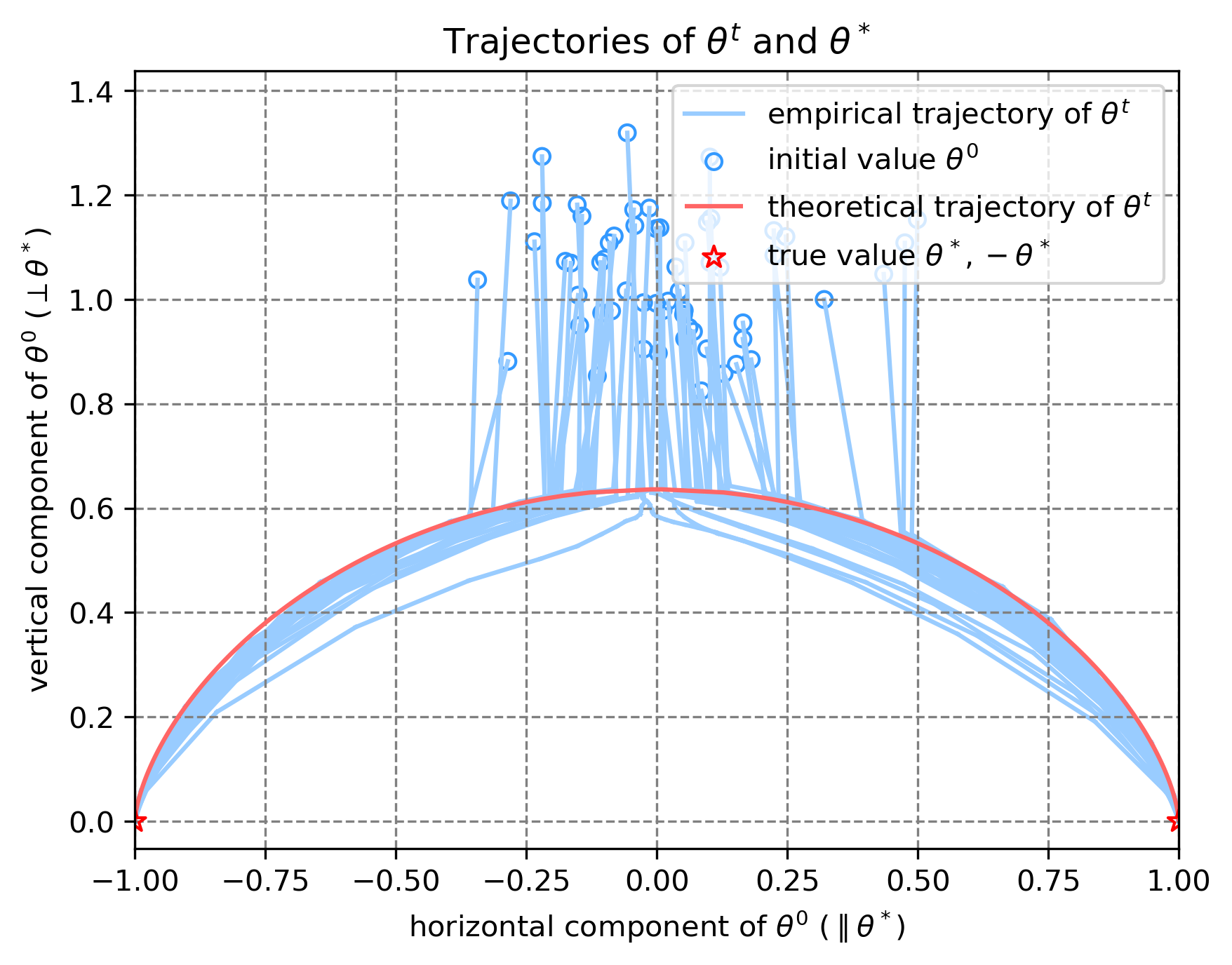}
      \caption{$d=50$, trajectories of $\theta^t$ are displayed across 60 trials, with $\theta^\ast, \theta^0$ sampled from $\mathcal{N}(0, I_d)$, $\pi^\ast, \pi^0$ drawn uniformly from $[0, 1]$.}
      \label{fig:traj_dhigh}
  \end{subfigure}
\caption{Cycloid trajectory of EM iterations $\theta^t$-- 
we perform 100 iterations of Finite-sample EM with SNR=$10^8$, varying dimensions ($d=2,3,50$).
}
\label{fig:traj}
\end{figure*}

\begin{figure*}[t]
  \centering
  \begin{subfigure}[!htbp]{0.31\textwidth}
      \centering
      \includegraphics[width=\textwidth,height=0.8\textwidth]{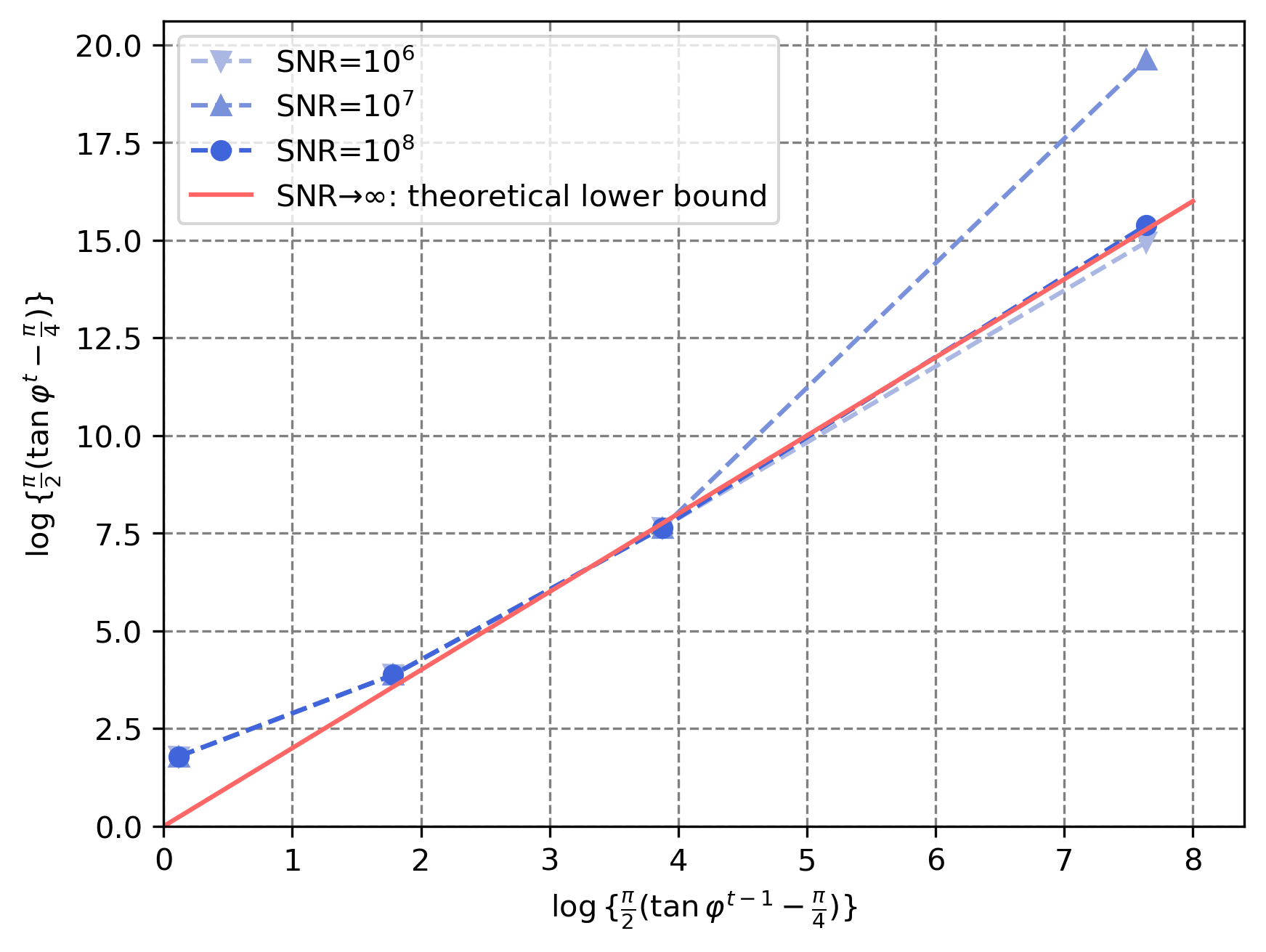}
      \caption{Quadratic convergence of $\frac{\pi}{2}(\tan \varphi^t -\frac{\pi}{4})$ with $\pi^\ast, \pi_0\sim[0, 1]$.} 
      \label{fig:superlinear}
  \end{subfigure}
  \hfill
  \begin{subfigure}[!htbp]{0.31\textwidth}
      \centering
      \includegraphics[width=\textwidth,height=0.8\textwidth]{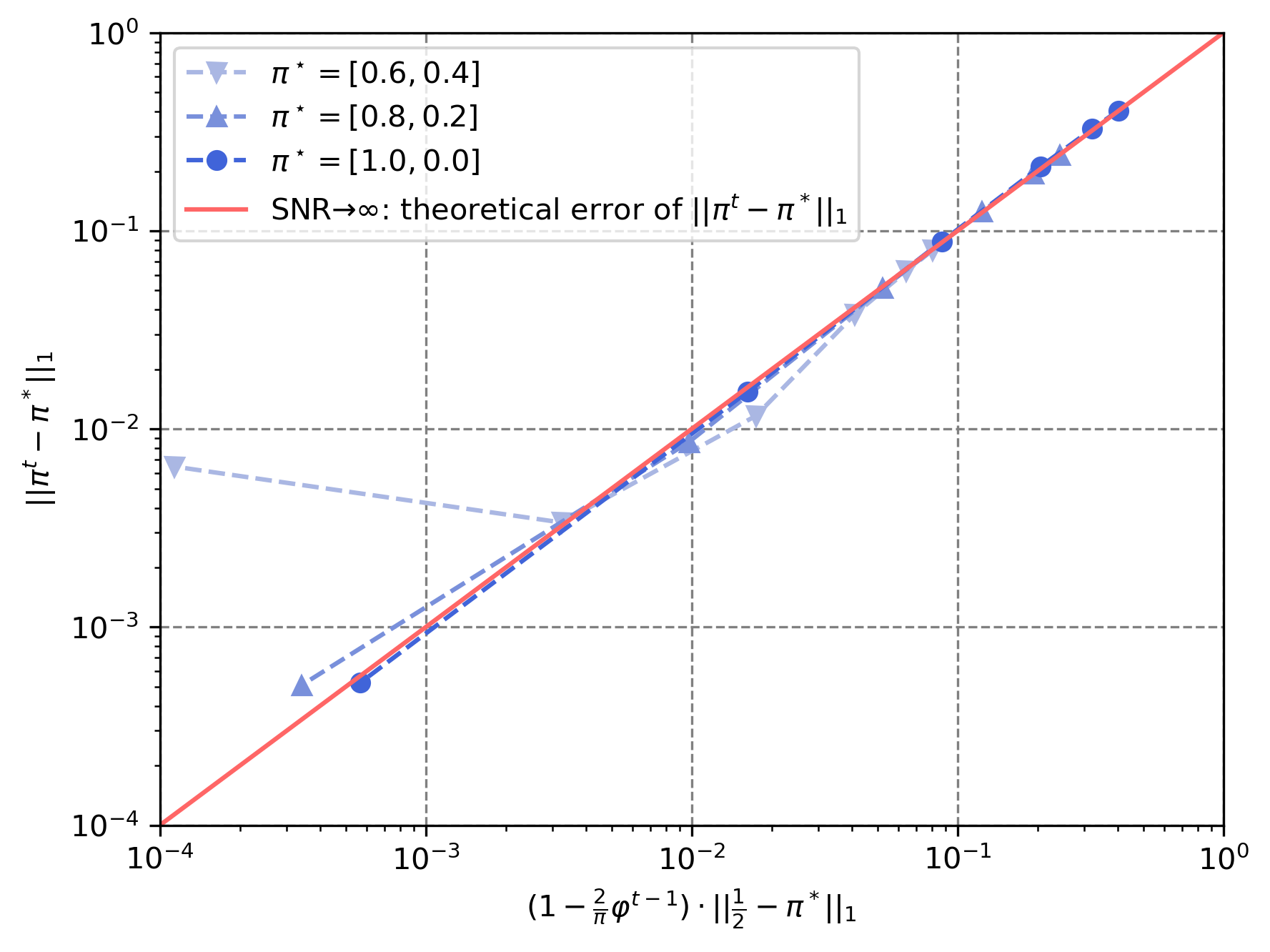}
      \caption{Correlation between $\|\pi^t-\bar{\pi}^*\|_1$ 
      and $\frac{\pi}{2}-\varphi^{t-1}=\arccos \left|\frac{\langle\theta^{t-1}, \theta^\ast \rangle}{\|\theta^{t-1}\|\|\theta^\ast\|}\right|$.}
      \label{fig:mixing}
  \end{subfigure}
  \begin{subfigure}[!htbp]{0.31\textwidth}
      \centering
      \includegraphics[width=\textwidth,height=0.8\textwidth]{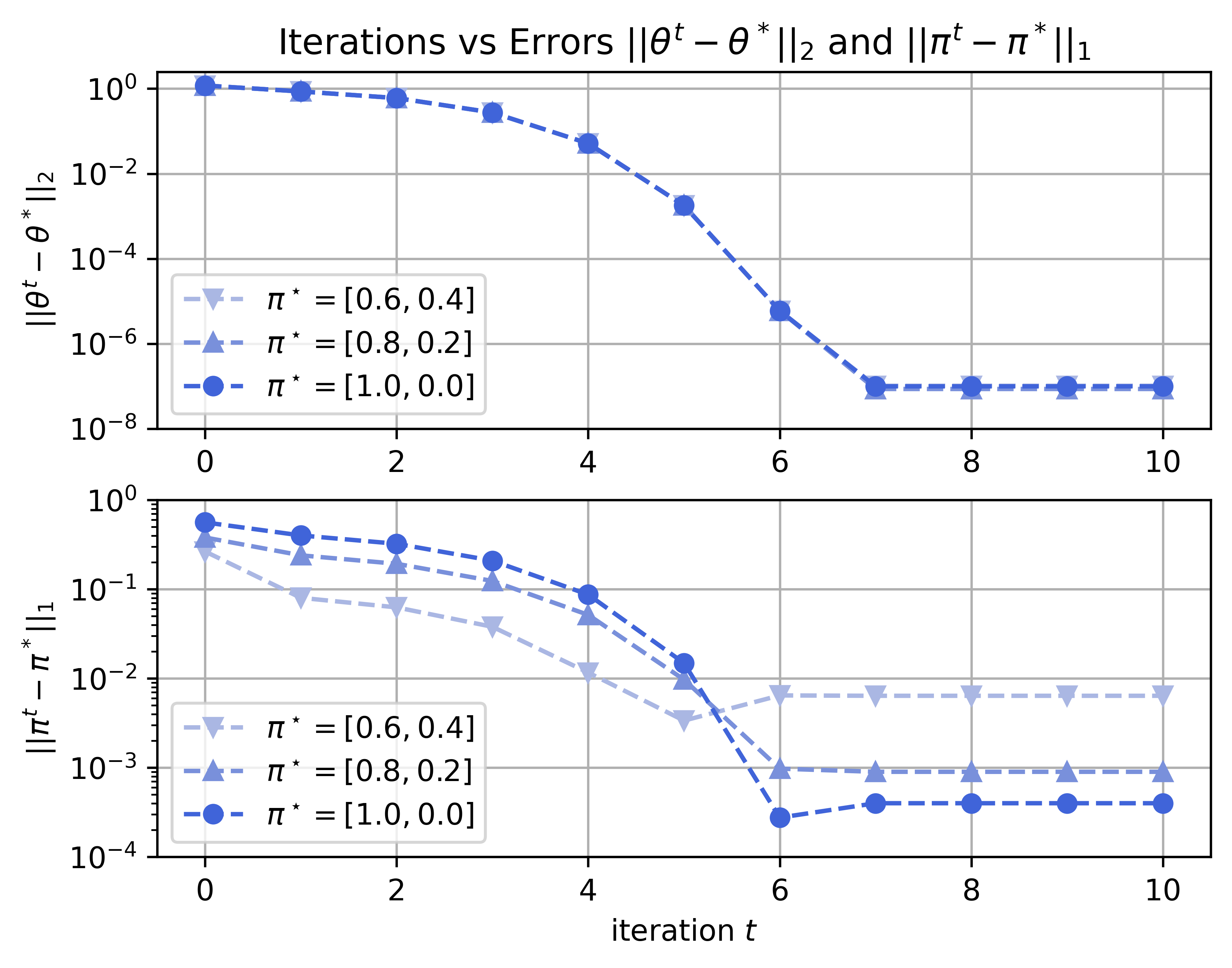}
      \caption{Estimation error of EM v.s. iteration for various mixing weights.}
      \label{fig:dist}
  \end{subfigure}
\caption{Left and Middle: Quadratic convergence and correlation are shown with $\theta^\ast, \theta^0$ from $d=50$ unit sphere, 
s.t. $\varphi^0 = \arctan(1.5)$ in Panel (a), $\varphi^0 = 0.3$ in Panel (b). Right: The errors of regression parameters and mixing weights for ten EM iterations, with $d=50, \varphi^0=0.3$, SNR=$10^8$ and different true mixing weights $\pi^\ast=\{0.6, 0.4\},\{0.8, 0.2\}, \{1, 0\}$.}
\label{fig:convg_mix}
\end{figure*}

In this section of empirical experiments, we validate the theoretical findings established in the preceding sections.
From a normal distribution $\mathcal{N}(0, I_d)$, 
we sample 5,000 independent and identically distributed (i.i.d.) 
$d$-dimensional covariates, denoted as $\{x_i\}_{i=1}^n$. 
The true parameters $\theta^\ast$ are randomly chosen from a $d$-dimensional unit sphere.
We subsequently manually/randomly set the true mixture weights $\pi^\ast$ for two components,
utilizing them to generate latent variable samples $\{z_i\}_{i=1}^n$ from a categorical distribution $\mathcal{CAT}(\pi^\ast)$. 
Following this, we introduce Gaussian noise to the linear regression determined by these latent variables,
yielding output samples $\{y_i\}_{i=1}^n$.
In all experiments, we utilize the entire dataset for EM updates at every iteration. 
Each point on the plots of Fig.~\ref{fig:convg_mix} is an average taken from 50 trials with different initial values for EM updates.
The code for numerical experiments is available at \url{https://github.com/dassein/cycloid_em_mlr}.

\textbf{Cycloid Trajectory of EM Iterations. } 
At the population level, we show that the output of the $t$-th iteration lies on the cycloid of the spanning space $\text{span}\{\theta^{t-1}, \theta^\ast\}$ in the noiseless setting. 
In the corresponding experiments, we choose the signal-to-noise ratio SNR$=10^8$ and consider different values of $d$ (2, 3, and 50). In Fig.~\ref{fig:traj}, all the iterations are near the theoretical cycloid. 
Thus, our experimental results validate our theoretical analysis in Proposition~\ref{prop:cycloid}.

\textbf{Quadratic Convergence for 2 Mixtures. }
We show the super-linear convergence of $\frac{\pi}{2}(\tan \varphi^t -\frac{\pi}{4})$ in Fig.~\ref{fig:superlinear} under high SNR regimes.
We specified the dimension ($d$=50) and considered different high SNR values (SNR=$10^6, 10^7, 10^8$). 
We uniformly choose the initial values for the parameters and the mixing weights from a unit sphere and the interval $[0, 1]$, respectively. 
All the points of 4 EM iterations in Fig.~\ref{fig:superlinear} are the average of 50 trials with different initial values. 
The slope of the plot indicates the convergence rate exponent. Notably, the slopes of lines at different SNR values consistently hover around or slightly exceed 2. 
That aligns with our theoretical result of quadratic convergence rate in Proposition~\ref{prop:quad}. 

\textbf{Error of Mixing Weights and Angle. }
In the noiseless setting, we prove that the error of mixing weights $\|\pi^t-\bar{\pi}^*\|_1$ at the Population level is proportional to 
$\frac{\pi}{2}-\varphi^{t-1}=\arccos \left|\frac{\langle\theta^{t-1}, \theta^\ast \rangle}{\|\theta^{t-1}\| \cdot \|\theta^\ast\|}\right|$ in Corollary~\ref{cor:err_mixing}.
The angle $\frac{\pi}{2}-\varphi^{t-1}$ is determined by the output of EM updates at the $(t-1)$-th iteration and the true value of parameters $\theta^\ast$. 
We demonstrate the linear correlation between the error of mixing weights and the angle in Fig.~\ref{fig:mixing}.
For the setting of experiments, we specify the dimension $d=50$, and consider different high SNR values (SNR=$10^6, 10^7, 10^8$), respectively. 
We note that the error in the mixing weights during the $t$-th iteration is precisely quantified by the angle $\frac{\pi}{2}-\varphi^{t-1}$ at the preceding iteration.
Hence, our experimental results validate Corollary~\ref{cor:err_mixing}.

\textbf{Comparison with Different Mixing Weights. }
In the noiseless setting, we establish in Corollary~\ref{cor:noiseless} and Proposition~\ref{prop:recurrence} that the EM update for regression parameters $\theta^t$ is independent of the true mixing weights $\pi^\ast$.
The first subplot of Fig.~\ref{fig:dist} demonstrates that, at high SNR ($10^8$), the error in regression parameters (measured in $\ell_2$ norm) remains nearly constant for varying true mixing weights $\pi^\ast=\{0.6, 0.4\}$, $\{0.8, 0.2\}$, and $\{1, 0\}$, thus affirming our theoretical analysis.

In Theorem~\ref{thm:convg_finite}, we prove that the final error (in $\ell_1$ norm) in mixing weights depends on the error in regression parameters and true mixing weights.
Specifically, when the error in regression parameters is relatively small, the closer the true mixing weights are to $\{1, 0\}$ or $\{0, 1\}$, the smaller the final mixing weight error.
To validate our theoretical analysis in Theorem~\ref{thm:convg_finite} concerning the statistical errors in regression parameters $\theta$ (measured in $\ell_2$ norm) and mixing weights $\pi$ (measured in $\ell_1$ norm), we experiment with various true mixing weights $\pi^\ast = \{0.6, 0.4\}$, $\{0.8, 0.2\}$, and $\{1, 0\}$. 
The second subplot of Fig.~\ref{fig:dist} illustrates the relationship between the errors and the true mixing weights $\pi^\ast$, further supports our theoretical analysis.
In our experimental setup, $\theta^\ast, \theta^0$ are sampled from $50$-dimensional unit sphere, with $\varphi^0=0.3$.

\section{Conclusion}\label{sec:conclusion}
We derived closed-form expressions for the EM updates in the 2MLR problem. 
Notably, in the noiseless setting  we first showed and then analyzed the cycloid trajectory of EM updates. 
Additionally, we demonstrated the quadratic convergence rate for regression parameters, which is independent of mixing weights. 
We emphasized that errors in mixing weights primarily arise from the angle formed between true and estimated regression parameters. 
Finally, we conducted a detailed analysis of the statistical errors in the estimation of regression parameters and mixing weights.
We investigate the special case of the noiseless setting, namely when SNR tends to infinity. 
Could we propose a more refined analysis using the recurrence relations outlined in Corollary~\ref{cor:no_separa} for weakly separated scenarios? 
These questions could guide our potential future endeavors.

\section*{Acknowledgements}
We are grateful to the ICML area chair and anonymous reviewers for their insightful input that improved this paper.

\section*{Impact Statement}
There are several potential applications of our theory in Mixed Linear Regression: 
\textbf{Analysis of Over-specified Model}: 
Corollary~\ref{cor:no_separa} enables a thorough analysis of no separation case as SNR$\to 0$ to obtain a fundamental understanding of EM with over-specified mixture models~\citep{dwivedi2020sharp,dwivedi2020unbalanced}. 
\textbf{Extension to Finite High/Low SNR Cases}: 
Leveraging the insights from Theorem~\ref{thm:em_update}, we can conduct asymptotic expansions of integrals~\citep{wong2001asymptotic,bleistein1986asymptotic}, enabling the extension of results from limit cases (SNR $\to \infty,0$) to practical, finite high and low SNR scenarios, exploring the transition from low SNR to high SNR regimes.
\textbf{Generalization to Multiple Components}: 
We could expand our analysis from a mixture of two components to scenarios involving multiple components, albeit requiring strong separation of regression parameters compared to the noise level and good initialization~\citep{kwon2020converges}. 

This paper presents work whose goal is to advance the field of Machine Learning. There are many potential societal consequences of our work, none of which we feel must be specifically highlighted here.

\ifdefined\isarxiv
  \bibliographystyle{plainnat}
\else
  \bibliographystyle{icml2024}
\fi
\bibliography{ref_EM2MLR} 
\onecolumn 
\newpage
\appendix
\ifdefined\isarxiv
    \setlength{\parskip}{0em}
    \newgeometry{top=0.64in, bottom=0.64in, left=0.4in, right=0.4in, columnsep=1cm}
    \begin{center}
        \LARGE{\textbf{Supplementary Materials: Unveiling the Cycloid Trajectory of EM Iterations in Mixed Linear Regression}}
    \end{center}
\else
    \icmltitle{Supplementary Materials: Unveiling the Cycloid Trajectory of EM Iterations in Mixed Linear Regression}
\fi
\part{Appendix}
\parttoc
We organize the Appendix as follows:
\begin{itemize}
    \item In Section~\ref{sup:lemma}, we prepare some lemmas for integrals, convolutions related to Bessel functions, expectations for Gaussians
    used in proofs, etc.
    \item In Section~\ref{sup:derive_em}, we derive EM update rules at the population level and the finite-sample level.
    \item In Section~\ref{sup:updates}, we provide the proof for the explicit closed-form expressions with Bessel functions for Population EM Updates.
    \item In Section~\ref{sup:population}, we give the full proof for the results at the population level in the noiseless setting.
    \item In Section~\ref{sup:finite_sample}, we give the full proof for the results at the finite-sample level in the noiseless setting.
\end{itemize}

\section{Lemmas: Integrals, Convolutions, Expectations}\label{sup:lemma}

\subsection{\texorpdfstring{Relations between $\theta^{\ast}, \theta$ and
unit vecotors $\vec{e}_1, \vec{e}_2, \hat{e}_1, \hat{e}_2$}
{Relations between theta*, theta and e1, e2, e1, e2}}
The following lemma shows the relations between $\theta^{\ast}, \theta$ and
unit vecotors $\vec{e}_1, \vec{e}_2, \hat{e}_1, \hat{e}_2$ .

\begin{lemma}
  For $\hat{e}_1 \assign \frac{\theta^{\ast}}{\| \theta^{\ast} \|}$,
  $\hat{e}_2 \assign \frac{\theta - \hat{e}_1  \hat{e}_1^{\top} \theta}{\|
  \theta - \hat{e}_1  \hat{e}_1^{\top} \theta \|} = \frac{\frac{\theta}{\|
  \theta \|} - \rho \frac{\theta^{\ast}}{\| \theta^{\ast} \|}}{\sqrt{1 -
  \rho^2}}$, and \ $\vec{e}_1 = \frac{\theta}{\| \theta \|}$ , $\vec{e}_2 =
  \frac{\theta^{\ast} - \vec{e}_1  \vec{e}_1^{\top} \theta^{\ast}}{\|
  \theta^{\ast} - \vec{e}_1  \vec{e}_1^{\top} \theta^{\ast} \|}$, define $\rho
  \assign \frac{\langle \bar{\theta}^{\ast}, \bar{\theta} \rangle}{\|
  \bar{\theta} \| \cdot \| \bar{\theta}^{\ast} \|} = \frac{\langle
  \theta^{\ast}, \theta \rangle}{\| \theta \| \cdot \| \theta^{\ast} \|}$,
  then
  \begin{eqnarray*}
    \vec{e}_2 + \rho \hat{e}_2 & = & \sqrt{1 - \rho^2} 
    \frac{\theta^{\ast}}{\| \theta^{\ast} \|}\\
    \hat{e}_2 + \rho \vec{e}_2 & = & \sqrt{1 - \rho^2}  \frac{\theta}{\|
    \theta \|}
  \end{eqnarray*}
\end{lemma}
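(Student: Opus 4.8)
The plan is to reduce everything to an elementary Gram--Schmidt computation in the two-dimensional plane $\mathrm{span}\{\theta, \theta^{\ast}\}$, working entirely with the two unit vectors $\hat{e}_1 = \frac{\theta^{\ast}}{\| \theta^{\ast} \|}$ and $\vec{e}_1 = \frac{\theta}{\| \theta \|}$, which by definition satisfy $\langle \hat{e}_1, \vec{e}_1 \rangle = \rho$. The first step is to obtain a clean expression for $\vec{e}_2$ analogous to the one already recorded for $\hat{e}_2$. Starting from $\vec{e}_2 = \frac{\theta^{\ast} - \vec{e}_1 \vec{e}_1^{\top} \theta^{\ast}}{\| \theta^{\ast} - \vec{e}_1 \vec{e}_1^{\top} \theta^{\ast} \|}$ and using $\vec{e}_1^{\top} \theta^{\ast} = \| \theta^{\ast} \| \rho$, the numerator becomes $\| \theta^{\ast} \| (\hat{e}_1 - \rho \vec{e}_1)$, so after normalizing I would establish
\begin{equation}\nonumber
  \hat{e}_2 = \frac{\vec{e}_1 - \rho \hat{e}_1}{\sqrt{1 - \rho^2}}, \qquad \vec{e}_2 = \frac{\hat{e}_1 - \rho \vec{e}_1}{\sqrt{1 - \rho^2}},
\end{equation}
where the normalization constants follow from $\| \vec{e}_1 - \rho \hat{e}_1 \|^2 = \| \hat{e}_1 - \rho \vec{e}_1 \|^2 = 1 - 2\rho^2 + \rho^2 = 1 - \rho^2$, since $\hat{e}_1, \vec{e}_1$ are unit vectors with inner product $\rho$.

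Given these two symmetric formulas, the two claimed identities reduce to direct substitution and cancellation. For the first, I would compute
\begin{equation}\nonumber
  \vec{e}_2 + \rho \hat{e}_2 = \frac{(\hat{e}_1 - \rho \vec{e}_1) + \rho (\vec{e}_1 - \rho \hat{e}_1)}{\sqrt{1 - \rho^2}} = \frac{(1 - \rho^2) \hat{e}_1}{\sqrt{1 - \rho^2}} = \sqrt{1 - \rho^2} \, \frac{\theta^{\ast}}{\| \theta^{\ast} \|},
\end{equation}
and the second identity follows by the same cancellation with the roles of $\hat{e}_1, \vec{e}_1$ interchanged, yielding $\hat{e}_2 + \rho \vec{e}_2 = \sqrt{1 - \rho^2} \, \frac{\theta}{\| \theta \|}$.

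There is no genuine obstacle here; the statement is an algebraic identity about orthonormalizing two vectors in a plane. The only point requiring a moment of care is the implicit assumption $\rho \neq \pm 1$, so that $\sqrt{1 - \rho^2} \neq 0$ and the vectors $\hat{e}_2, \vec{e}_2$ are well-defined; when $\theta$ and $\theta^{\ast}$ are collinear the second basis vectors degenerate and the identities are vacuous. I would note this non-degeneracy condition at the outset and otherwise present the two displays above as the complete argument.
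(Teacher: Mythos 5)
Your proof is correct and is essentially the paper's argument: both reduce to the symmetric Gram--Schmidt expressions $\hat{e}_2 = \frac{\vec{e}_1 - \rho \hat{e}_1}{\sqrt{1-\rho^2}}$, $\vec{e}_2 = \frac{\hat{e}_1 - \rho \vec{e}_1}{\sqrt{1-\rho^2}}$ followed by direct cancellation, with the paper merely packaging the same computation as a $2\times 2$ change-of-basis matrix identity. Your explicit remark on the non-degeneracy condition $\rho \neq \pm 1$ is a small point the paper leaves implicit.
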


\begin{proof}
  Let $\bar{\theta}^{\ast} \assign \frac{\theta^{\ast}}{\sigma}$ and
  $\bar{\theta} \assign \frac{\theta}{\sigma}$
  
  Define $\vec{e}_1 = \frac{\theta}{\| \theta \|}$ and $\vec{e}_2 =
  \frac{\theta^{\ast} - \vec{e}_1  \vec{e}_1^{\top} \theta^{\ast}}{\|
  \theta^{\ast} - \vec{e}_1  \vec{e}_1^{\top} \theta^{\ast} \|}$ thus $\langle
  \vec{e}_1, \vec{e}_2 \rangle = 0, \| \vec{e}_1 \| = \| \vec{e}_2 \| = 1$,
  $\tmop{span} \{\theta, \theta^{\ast} \} = \tmop{span} \{
  \bar{\theta}^{\ast}, \bar{\theta} \} = \tmop{span} \{ \vec{e}_1, \vec{e}_2
  \}$
  
  Let $\rho \assign \frac{\langle \bar{\theta}^{\ast}, \bar{\theta}
  \rangle}{\| \bar{\theta} \| \cdot \| \bar{\theta}^{\ast} \|} = \frac{\langle
  \theta^{\ast}, \theta \rangle}{\| \theta \| \cdot \| \theta^{\ast} \|}$,
  then $\left| \bar{\theta}^{\ast} - \frac{\langle \bar{\theta}^{\ast},
  \bar{\theta} \rangle}{\| \bar{\theta} \|^2}  \bar{\theta} \right| = \sqrt{1
  - \rho^2} \| \bar{\theta}^{\ast} \|$
  
  If we define $\hat{e}_1 \assign \frac{\theta^{\ast}}{\| \theta^{\ast} \|}$,
  and $\hat{e}_2 \assign \frac{\theta - \hat{e}_1  \hat{e}_1^{\top} \theta}{\|
  \theta - \hat{e}_1  \hat{e}_1^{\top} \theta \|} = \frac{\frac{\theta}{\|
  \theta \|} - \rho \frac{\theta^{\ast}}{\| \theta^{\ast} \|}}{\sqrt{1 -
  \rho^2}}$ and $\langle \hat{e}_1, \hat{e}_2 \rangle = 0, \| \hat{e}_1 \| =
  \| \hat{e}_2 \| = 1$
  \[ \left( \begin{array}{c}
       \theta\\
       \theta^{\ast}
     \end{array} \right) = \left( \begin{array}{cc}
       \| \theta \| & \\
       & \| \theta^{\ast} \|
     \end{array} \right) \left( \begin{array}{cc}
       1 & 0\\
       \rho & \sqrt{1 - \rho^2}
     \end{array} \right) \left( \begin{array}{c}
       \vec{e}_1\\
       \vec{e}_2
     \end{array} \right) = \left( \begin{array}{cc}
       \| \theta \| & \\
       & \| \theta^{\ast} \|
     \end{array} \right) \left( \begin{array}{cc}
       \rho & \sqrt{1 - \rho^2}\\
       1 & 0
     \end{array} \right) \left( \begin{array}{c}
       \hat{e}_1\\
       \hat{e}_2
     \end{array} \right) \]
  Therefore
  \begin{eqnarray*}
    \left( \begin{array}{c}
      \hat{e}_1\\
      \hat{e}_2
    \end{array} \right) & = & \left( \begin{array}{cc}
      \rho & \sqrt{1 - \rho^2}\\
      1 & 0
    \end{array} \right)^{- 1} \left( \begin{array}{cc}
      1 & 0\\
      \rho & \sqrt{1 - \rho^2}
    \end{array} \right) \left( \begin{array}{c}
      \vec{e}_1\\
      \vec{e}_2
    \end{array} \right) = \left( \begin{array}{cc}
      \rho & \sqrt{1 - \rho^2}\\
      \sqrt{1 - \rho^2} & - \rho
    \end{array} \right) \left( \begin{array}{c}
      \vec{e}_1\\
      \vec{e}_2
    \end{array} \right)\\
    \left( \begin{array}{c}
      \vec{e}_1\\
      \vec{e}_2
    \end{array} \right) & = & \left( \begin{array}{cc}
      \rho & \sqrt{1 - \rho^2}\\
      \sqrt{1 - \rho^2} & - \rho
    \end{array} \right) \left( \begin{array}{c}
      \hat{e}_1\\
      \hat{e}_2
    \end{array} \right)
  \end{eqnarray*}
  \begin{eqnarray*}
    \vec{e}_1 - \frac{\rho \sqrt{1 - \rho^2} \| \bar{\theta}^{\ast} \|^2}{1 +
    (1 - \rho^2) \| \bar{\theta}^{\ast} \|^2}  \vec{e}_2 & = & \left(
    \begin{array}{c}
      1\\
      - \frac{\rho \sqrt{1 - \rho^2} \| \bar{\theta}^{\ast} \|^2}{1 + (1 -
      \rho^2)\| \bar{\theta}^{\ast} \|^2}
    \end{array} \right)^{\top} \left( \begin{array}{cc}
      \rho & \sqrt{1 - \rho^2}\\
      \sqrt{1 - \rho^2} & - \rho
    \end{array} \right) \left( \begin{array}{c}
      \hat{e}_1\\
      \hat{e}_2
    \end{array} \right)\\
    & = & \frac{1}{\frac{1}{\| \bar{\theta}^{\ast} \|^2} + (1 - \rho^2)}
    \left( \begin{array}{c}
      \frac{1}{\| \bar{\theta}^{\ast} \|^2} + (1 - \rho^2)\\
      - \rho \sqrt{1 - \rho^2}
    \end{array} \right)^{\top} \left( \begin{array}{cc}
      \rho & \sqrt{1 - \rho^2}\\
      \sqrt{1 - \rho^2} & - \rho
    \end{array} \right) \left( \begin{array}{c}
      \hat{e}_1\\
      \hat{e}_2
    \end{array} \right)\\
    & = & \frac{\left( \begin{array}{cc}
      \rho \frac{1}{\| \bar{\theta}^{\ast} \|^2} & \sqrt{1 - \rho^2}  \left(
      \frac{1}{\| \bar{\theta}^{\ast} \|^2} + 1 \right)
    \end{array} \right)}{\frac{1}{\| \bar{\theta}^{\ast} \|^2} + (1 - \rho^2)}
    \left( \begin{array}{c}
      \hat{e}_1\\
      \hat{e}_2
    \end{array} \right)\\
    & = & \frac{\frac{1}{\| \bar{\theta}^{\ast} \|^2}}{\frac{1}{\|
    \bar{\theta}^{\ast} \|^2} + (1 - \rho^2)} \cdot \frac{\bar{\theta}}{\|
    \bar{\theta} \|} + \frac{\sqrt{1 - \rho^2}}{\frac{1}{\|
    \bar{\theta}^{\ast} \|^2} + (1 - \rho^2)} \cdot \hat{e}_2
  \end{eqnarray*}
  \[ \frac{1}{\sqrt{1 - \rho^2}} \left( \begin{array}{cc}
       1 & - \rho\\
       - \rho & 1
     \end{array} \right) \left( \begin{array}{c}
       \frac{\theta}{\| \theta \|}\\
       \frac{\theta^{\ast}}{\| \theta^{\ast} \|}
     \end{array} \right) = \left( \begin{array}{c}
       \hat{e}_2\\
       \vec{e}_2
     \end{array} \right) \]
  Therefore, we can show that
  \begin{eqnarray*}
    \vec{e}_2 + \rho \hat{e}_2 = \sqrt{1 - \rho^2} 
    \frac{\theta^{\ast}}{\| \theta^{\ast} \|}
    ,\quad
    \hat{e}_2 + \rho \vec{e}_2 = \sqrt{1 - \rho^2}  \frac{\theta}{\|
    \theta \|}
  \end{eqnarray*}
\end{proof}

\subsection{Integrals and Expectation with Gaussian}

\subsubsection{integrals with Gaussian}
\begin{lemma}\label{lem:gauss_int}
    For $\forall a >0$, then
    \begin{equation*}
        (2 \pi)^{-\frac{1}{2}} \int_{-\infty}^{\infty} \exp \left[-\frac{a t^2+2 b t}{2}\right] \mathrm{d} t
        =\frac{\exp \left[\frac{b^2}{2 a}\right]}{\sqrt{a}}
    \end{equation*}
    \begin{eqnarray*}
        (2 \pi)^{- \frac{1}{2}} \int_{- \infty}^{\infty} \exp \left[ - \frac{a t^2
        + 2 b t}{2} \right] t \mathd t 
        = - \frac{b}{a^{\frac{3}{2}}} \exp \left[ \frac{b^2}{2
        a } \right]
      \end{eqnarray*}
\end{lemma}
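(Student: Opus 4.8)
The plan is to prove both identities by completing the square in the exponent and reducing everything to the single normalization fact $\int_{-\infty}^{\infty} e^{-au^2/2}\,\mathrm{d}u = \sqrt{2\pi/a}$, valid for $a>0$. For the first identity I would write $at^2 + 2bt = a\bigl(t + \tfrac{b}{a}\bigr)^2 - \tfrac{b^2}{a}$, so that $\exp[-(at^2+2bt)/2] = e^{b^2/(2a)}\exp[-\tfrac{a}{2}(t+\tfrac{b}{a})^2]$. Pulling the constant $e^{b^2/(2a)}$ out of the integral and substituting $u = t + b/a$ (which leaves the infinite limits unchanged) turns the integral into $e^{b^2/(2a)}\sqrt{2\pi/a}$; multiplying by $(2\pi)^{-1/2}$ gives $e^{b^2/(2a)}/\sqrt{a}$, as claimed.

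For the second identity I would perform the same substitution but keep track of the factor $t = u - b/a$ inside the integrand. This splits the integral into $\int u\,e^{-au^2/2}\,\mathrm{d}u$, which vanishes by oddness of the integrand, and $-\tfrac{b}{a}\int e^{-au^2/2}\,\mathrm{d}u = -\tfrac{b}{a}\sqrt{2\pi/a}$. After reinstating the prefactor $e^{b^2/(2a)}$ and the normalization $(2\pi)^{-1/2}$, this yields $-\tfrac{b}{a^{3/2}}e^{b^2/(2a)}$.

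Alternatively, and this is the route I would actually prefer for cleanliness, I would obtain the second identity for free by differentiating the first one with respect to $b$. Writing $I(b)$ for the left-hand side of the first identity, we have $\partial_b \exp[-(at^2+2bt)/2] = -t\exp[-(at^2+2bt)/2]$, so $I'(b) = -(2\pi)^{-1/2}\int t\,\exp[-(at^2+2bt)/2]\,\mathrm{d}t$; differentiating the closed form $e^{b^2/(2a)}/\sqrt{a}$ gives $I'(b) = \tfrac{b}{a^{3/2}}e^{b^2/(2a)}$, and equating the two expressions for $I'(b)$ produces exactly the second identity.

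The only point requiring any care, and the nearest thing to an obstacle, is justifying the interchange of differentiation and integration in this second route; but this is immediate from the Gaussian tail decay, since on any bounded interval of $b$ the integrand $t\,\exp[-(at^2+2bt)/2]$ is dominated by an integrable function uniformly in $b$, so dominated convergence legitimizes passing $\partial_b$ inside the integral. If one prefers to avoid even this technicality, the direct completing-the-square computation of the second paragraph is fully self-contained and equally short.
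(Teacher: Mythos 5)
Your proof is correct; the paper states this lemma without proof (it is a standard Gaussian moment computation), and your completing-the-square argument, with the odd-integrand cancellation for the first-moment term, is exactly the canonical derivation one would supply. The alternative differentiation-under-the-integral route is also valid and properly justified by your domination argument, but it is not needed given how short the direct computation is.
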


\subsubsection{expectations with Gaussian}
\begin{lemma}
    Let $\bar{\theta}:=\frac{\theta}{\sigma},\bar{\theta^\ast}:=\frac{\theta^\ast}{\sigma}$, 
    and $\nu\sim \mathcal{N} (- \langle x,
    \bar{\theta}^{\ast} \rangle \langle x, \bar{\theta} \rangle, \langle x,
    \bar{\theta} \rangle^2)$ or 
    $\nu\sim \mathcal{N} (\langle x,
    \bar{\theta}^{\ast} \rangle \langle x, \bar{\theta} \rangle, \langle x,
    \bar{\theta} \rangle^2)$, 
    
    then the expectations for the density functions are
    \begin{eqnarray*}
        & & \mathbb{E}_{x \sim \mathcal{N} (0, I_d)} \mathcal{N} (- \langle x,
        \bar{\theta}^{\ast} \rangle \langle x, \bar{\theta} \rangle, \langle x,
        \bar{\theta} \rangle^2)\\
        &=& (\pi \| \bar{\theta} \|)^{- 1} {(1 + (1 - \rho^2) \|
        \bar{\theta}^{\ast} \|^2)^{- \frac{1}{2}}}  \exp \left[ - \frac{\rho \|
        \bar{\theta}^{\ast} \| \left( \frac{\nu}{\| \bar{\theta} \|} \right)}{[1 +
        (1 - \rho^2) \| \bar{\theta}^{\ast} \|^2]} \right] \cdot K_0 \left(
        \frac{\sqrt{1 + \| \bar{\theta}^{\ast} \|^2} \cdot \left| \frac{\nu}{\|
        \bar{\theta} \|} \right|}{[1 + (1 - \rho^2) \| \bar{\theta}^{\ast} \|^2]}
        \right)\\
        & & \mathbb{E}_{x \sim \mathcal{N} (0, I_d)} \mathcal{N} (\langle x,
        \bar{\theta}^{\ast} \rangle \langle x, \bar{\theta} \rangle, \langle x,
        \bar{\theta} \rangle^2)\\
        &=& (\pi \| \bar{\theta} \|)^{- 1} {(1 + (1 - \rho^2) \|
        \bar{\theta}^{\ast} \|^2)^{- \frac{1}{2}}}  \exp \left[ + \frac{\rho \|
        \bar{\theta}^{\ast} \| \left( \frac{\nu}{\| \bar{\theta} \|} \right)}{[1 +
        (1 - \rho^2) \| \bar{\theta}^{\ast} \|^2]} \right] \cdot K_0 \left(
        \frac{\sqrt{1 + \| \bar{\theta}^{\ast} \|^2} \cdot \left| \frac{\nu}{\|
        \bar{\theta} \|} \right|}{[1 + (1 - \rho^2) \| \bar{\theta}^{\ast} \|^2]}
        \right)
    \end{eqnarray*}
\end{lemma}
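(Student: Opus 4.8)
The plan is to collapse the $d$-dimensional expectation to a two-dimensional one and then evaluate two successive Gaussian integrals, the second of which produces $K_0$ via its standard integral representation. Since the integrand depends on $x$ only through the two inner products $u \assign \langle x, \bar{\theta}\rangle$ and $v \assign \langle x, \bar{\theta}^{\ast}\rangle$, I would first exploit the rotational invariance of $\mathcal{N}(0, I_d)$. Using the orthonormal basis $\vec{e}_1 = \bar{\theta}/\|\bar{\theta}\|,\ \vec{e}_2$ of $\text{span}\{\theta,\theta^{\ast}\}$ from the preceding lemma, I write $u = \|\bar{\theta}\|\,x_1$ and $v = \|\bar{\theta}^{\ast}\|(\rho x_1 + \sqrt{1-\rho^2}\,x_2)$, where $x_1,x_2$ are i.i.d.\ standard normals. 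The target expectation then becomes a two-dimensional integral against $\tfrac{1}{\sqrt{2\pi}\,|u|}\exp[-(\nu+uv)^2/(2u^2)]$, treating the negative-mean case first; the positive-mean case is identical after $v\mapsto -v$.

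Next I would integrate out $x_2$ conditionally on $x_1$. Writing $\nu + uv = c + m x_2$ with $c = \nu + \rho\|\bar{\theta}\|\|\bar{\theta}^{\ast}\|\,x_1^2$ and $m = \|\bar{\theta}\|\|\bar{\theta}^{\ast}\|\,x_1\sqrt{1-\rho^2}$, the $x_2$-integral is Gaussian; completing the square through Lemma~\ref{lem:gauss_int} yields a factor $|u|/\sqrt{m^2+u^2}$ that cancels the $1/|u|$ prefactor exactly, leaving $\tfrac{1}{\sqrt{2\pi}\sqrt{m^2+u^2}}\exp[-c^2/(2(m^2+u^2))]$ (times the $x_1$-density). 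The key simplification is $m^2 + u^2 = \|\bar{\theta}\|^2 x_1^2\,(1+(1-\rho^2)\|\bar{\theta}^{\ast}\|^2)$, which I abbreviate by $\kappa \assign 1 + (1-\rho^2)\|\bar{\theta}^{\ast}\|^2$, so that $\sqrt{m^2+u^2} = \|\bar{\theta}\|\,|x_1|\sqrt{\kappa}$.

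I would then carry out the remaining $x_1$-integration. Expanding $c^2/(2(m^2+u^2))$ and collecting terms, the exponent separates into (i) a constant $-\rho\|\bar{\theta}^{\ast}\|(\nu/\|\bar{\theta}\|)/\kappa$, which becomes the exponential prefactor in the claim, and (ii) a term of the form $-\tfrac{1}{2}P x_1^2 - \tfrac{1}{2}Q\,x_1^{-2}$, where $P = (1+\|\bar{\theta}^{\ast}\|^2)/\kappa$ and $Q = \nu^2/(\|\bar{\theta}\|^2\kappa)$; simplifying the coefficient of $x_1^2$ here uses the identity $\kappa + \rho^2\|\bar{\theta}^{\ast}\|^2 = 1 + \|\bar{\theta}^{\ast}\|^2$. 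Substituting $w = x_1^2$ (with the integrand even in $x_1$) reduces the integral to $\int_0^\infty w^{-1}\exp[-\tfrac{P}{2}w - \tfrac{Q}{2w}]\,\mathrm{d}w = 2K_0(\sqrt{PQ})$, where $\sqrt{PQ} = \sqrt{1+\|\bar{\theta}^{\ast}\|^2}\,|\nu/\|\bar{\theta}\||/\kappa$, matching the stated argument of $K_0$. Tracking the normalization, the prefactor $\tfrac{1}{2\pi\|\bar{\theta}\|\sqrt{\kappa}}$ combines with the factor $2$ from the Bessel formula to give $(\pi\|\bar{\theta}\|)^{-1}\kappa^{-1/2}$, completing the negative-mean identity; the positive-mean identity differs only in the sign of the constant term (i), producing the $+$ in the exponential.

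The main obstacle I anticipate is the bookkeeping: correctly carrying all the $2\pi$ factors and absolute values through the two Gaussian integrations, and recognizing the inner–outer exponential structure $-\tfrac{1}{2}P x_1^2 - \tfrac{1}{2}Q\,x_1^{-2}$ so that the Bessel integral representation $\int_0^\infty w^{-1}e^{-\beta/w-\gamma w}\,\mathrm{d}w = 2K_0(2\sqrt{\beta\gamma})$ applies. The two algebraic identities $m^2+u^2 = \|\bar{\theta}\|^2 x_1^2\kappa$ and $\kappa + \rho^2\|\bar{\theta}^{\ast}\|^2 = 1+\|\bar{\theta}^{\ast}\|^2$ are precisely what make all the constants collapse into the claimed closed form, so verifying these and the sign of the constant exponent term is where the care is concentrated.
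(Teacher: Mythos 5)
Your proposal is correct and follows essentially the same route as the paper: reduce to the two i.i.d. Gaussian coordinates on $\tmop{span}\{\theta,\theta^{\ast}\}$, integrate out the second coordinate via Lemma~\ref{lem:gauss_int} (your $a=\frac{m^2+u^2}{u^2}=1+(1-\rho^2)\|\bar{\theta}^{\ast}\|^2$ and $b=\frac{cm}{u^2}$ are exactly the substitutions the paper names), and evaluate the remaining one-dimensional integral with the representation $\int_0^\infty w^{-1}e^{-\gamma w-\beta/w}\,\mathrm{d}w=2K_0(2\sqrt{\beta\gamma})$, which is the $v=0$ case of the Bessel integral the paper cites from Chapter 10 of \citep{olver2010nist}. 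Your sign-flip argument for the second identity ($v\mapsto -v$, equivalently $\rho\mapsto-\rho$ with $1-\rho^2$ invariant) matches the paper's $\bar{\theta}\to-\bar{\theta}$ symmetry, and the bookkeeping you outline (cancellation of $1/|u|$, the identity $\kappa+\rho^2\|\bar{\theta}^{\ast}\|^2=1+\|\bar{\theta}^{\ast}\|^2$, and the final prefactor $(\pi\|\bar{\theta}\|)^{-1}\kappa^{-1/2}$) all checks out.
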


\begin{proof}
\
We define $\rho := \frac{\langle \bar{\theta}, \bar{\theta^\ast} \rangle}{\| \bar{\theta}\| \cdot \|\bar{\theta^\ast} \|}$,
and $\lambda_1 := \langle \bar{\theta}, \vec{e}_1 \rangle, \lambda_2 := \langle \bar{\theta}, \vec{e}_2 \rangle$.



Then $\langle x, \bar{\theta} \rangle = \lambda_1 \| \bar{\theta} \|,
\langle x, \bar{\theta}^{\ast} \rangle = \left\langle \lambda_1 \vec{e}_1 +
\lambda_2 \vec{e}_2, \rho \| \bar{\theta}^{\ast} \| \vec{e}_1 + \sqrt{1 -
\rho^2} \| \bar{\theta}^{\ast} \| \vec{e}_2 \right\rangle = \lambda_1 \rho
\| \bar{\theta}^{\ast} \| + \lambda_2 \sqrt{1 - \rho^2} \|
\bar{\theta}^{\ast} \|$.

For the evaluation of the first expectation, we let

$v \leftarrow 0, z \leftarrow \frac{\sqrt{1 + \| \bar{\theta}^{\ast}
\|^2} \cdot \left| \frac{\nu}{\| \bar{\theta} \|} \right|}{[1 + (1 - \rho^2)
\| \bar{\theta}^{\ast} \|^2]}$ in \cite{olver2010nist} Chapter 10 
(Schl{\"a}fli's Integral of $K_v (z)$).

$a \leftarrow 1 + (1 - \rho^2) \| \bar{\theta}^{\ast} \|^2$ and $b
\leftarrow \frac{\nu \sqrt{1 - \rho^2} \| \bar{\theta}^{\ast} \|}{\lambda_1
\| \bar{\theta} \|} + \lambda_1 \rho \sqrt{1 - \rho^2} \|
\bar{\theta}^{\ast} \|^2 = \sqrt{1 - \rho^2} \| \bar{\theta}^{\ast} \|
\left( \frac{\nu}{\lambda_1 \| \bar{\theta} \|} + \lambda_1 \rho \|
\bar{\theta}^{\ast} \| \right)$ and $t \leftarrow \lambda_2$ in Lemma~\ref{lem:gauss_int}.

For the evaluation of the second expectation, note that the following relation holds for these two density functions.
\begin{equation*}
    \mathcal{N} (\langle x,
    \bar{\theta}^{\ast} \rangle \langle x, \bar{\theta} \rangle, \langle x,
    \bar{\theta} \rangle^2) 
    = \left[ 
        \mathcal{N} (-\langle x,
        \bar{\theta}^{\ast} \rangle \langle x, \bar{\theta} \rangle, \langle x,
        \bar{\theta} \rangle^2)
    \right]_{\bar{\theta}\to -\bar{\theta}}
\end{equation*}
Since $\bar{\theta}\to -\bar{\theta}$ implies $\rho\to -\rho$, we can obtain such a relation to derive the closed-form expression in this Lemma.
\begin{equation*}
    \mathbb{E}_{x \sim \mathcal{N} (0, I_d)}
    \mathcal{N} (\langle x,
    \bar{\theta}^{\ast} \rangle \langle x, \bar{\theta} \rangle, \langle x,
    \bar{\theta} \rangle^2) 
    = \left[ 
        \mathbb{E}_{x \sim \mathcal{N} (0, I_d)}
        \mathcal{N} (- \langle x,
        \bar{\theta}^{\ast} \rangle \langle x, \bar{\theta} \rangle, \langle x,
        \bar{\theta} \rangle^2)
    \right]_{\rho\to -\rho}
\end{equation*}
\small
\normalsize

\end{proof}

\begin{lemma}
    Let $\bar{\theta}:=\frac{\theta}{\sigma},\bar{\theta^\ast}:=\frac{\theta^\ast}{\sigma}$, 
    and $\lambda_1 := \langle x, \vec{e}_1 \rangle, \lambda_2 := \langle x, \vec{e}_2 \rangle$.

    $\nu\sim \mathcal{N} (- \langle x,
    \bar{\theta}^{\ast} \rangle \langle x, \bar{\theta} \rangle, \langle x,
    \bar{\theta} \rangle^2)$ or 
    $\nu\sim \mathcal{N} (\langle x,
    \bar{\theta}^{\ast} \rangle \langle x, \bar{\theta} \rangle, \langle x,
    \bar{\theta} \rangle^2)$, 
    
    then the expectations for the the products of $\frac{\lambda_2}{\lambda_1}$ and density functions are
\small
    \begin{eqnarray*}
        &  & \mathbb{E}_{x \sim \mathcal{N} (0, I_d)} \frac{\lambda_2}{\lambda_1}
        \mathcal{N} (- \langle x, \bar{\theta}^{\ast} \rangle \langle x,
        \bar{\theta} \rangle, \langle x, \bar{\theta} \rangle^2)\\
    & = & - (\pi \| \bar{\theta} \|)^{- 1} {(1 + (1 - \rho^2) \|
        \bar{\theta}^{\ast} \|^2)^{- \frac{3}{2}}}  \sqrt{1 - \rho^2} \|
        \bar{\theta}^{\ast} \| \exp \left[ - \frac{\rho \| \bar{\theta}^{\ast} \|
        \left( \frac{\nu}{\| \bar{\theta} \|} \right)}{[1 + (1 - \rho^2) \|
        \bar{\theta}^{\ast} \|^2]} \right]\\
        & \cdot & \left[ \tmop{sgn} (\nu) [1 + \| \bar{\theta}^{\ast}
        \|^2]^{\frac{1}{2}} K_1 \left( \frac{\sqrt{1 + \| \bar{\theta}^{\ast}
        \|^2} \cdot \left| \frac{\nu}{\| \bar{\theta} \|} \right|}{[1 + (1 -
        \rho^2) \| \bar{\theta}^{\ast} \|^2]} \right) + \rho \|
        \bar{\theta}^{\ast} \| K_0 \left( \frac{\sqrt{1 + \| \bar{\theta}^{\ast}
        \|^2} \cdot \left| \frac{\nu}{\| \bar{\theta} \|} \right|}{[1 + (1 -
        \rho^2) \| \bar{\theta}^{\ast} \|^2]} \right) \right]\\
        &  & \mathbb{E}_{x \sim \mathcal{N} (0, I_d)} \frac{\lambda_2}{\lambda_1}
        \mathcal{N} (\langle x, \bar{\theta}^{\ast} \rangle \langle x,
        \bar{\theta} \rangle, \langle x, \bar{\theta} \rangle^2)\\
    & = & (\pi \| \bar{\theta} \|)^{- 1} {(1 + (1 - \rho^2) \|
      \bar{\theta}^{\ast} \|^2)^{- \frac{3}{2}}}  \sqrt{1 - \rho^2} \|
      \bar{\theta}^{\ast} \| \exp \left[ + \frac{\rho \| \bar{\theta}^{\ast} \|
      \left( \frac{\nu}{\| \bar{\theta} \|} \right)}{[1 + (1 - \rho^2) \|
      \bar{\theta}^{\ast} \|^2]} \right]\\
      & \cdot & \left[ \tmop{sgn} (\nu) [1 + \| \bar{\theta}^{\ast}
      \|^2]^{\frac{1}{2}} K_1 \left( \frac{\sqrt{1 + \| \bar{\theta}^{\ast}
      \|^2} \cdot \left| \frac{\nu}{\| \bar{\theta} \|} \right|}{[1 + (1 -
      \rho^2) \| \bar{\theta}^{\ast} \|^2]} \right) - \rho \|
      \bar{\theta}^{\ast} \| K_0 \left( \frac{\sqrt{1 + \| \bar{\theta}^{\ast}
      \|^2} \cdot \left| \frac{\nu}{\| \bar{\theta} \|} \right|}{[1 + (1 -
      \rho^2) \| \bar{\theta}^{\ast} \|^2]} \right) \right]
    \end{eqnarray*}
\normalsize
\end{lemma}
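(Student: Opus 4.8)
The plan is to reduce the $d$-dimensional Gaussian expectation to a two-dimensional integral over $\lambda_1 \assign \langle x, \vec{e}_1 \rangle$ and $\lambda_2 \assign \langle x, \vec{e}_2 \rangle$, exactly as in the preceding lemma. Since $\vec{e}_1, \vec{e}_2$ are orthonormal and $x \sim \mathcal{N}(0, I_d)$, these coordinates are independent standard Gaussians, and the integrand depends on $x$ only through them via $\langle x, \bar{\theta} \rangle = \| \bar{\theta} \| \lambda_1$ and $\langle x, \bar{\theta}^{\ast} \rangle = \| \bar{\theta}^{\ast} \| (\rho \lambda_1 + \sqrt{1 - \rho^2}\,\lambda_2)$. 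The only new ingredient relative to that lemma is the weight $\tfrac{\lambda_2}{\lambda_1}$, so I would reuse its entire setup and track how this weight propagates through the two successive one-dimensional integrations.

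First I would integrate over $\lambda_2$ with $\lambda_1$ fixed. Writing out the evaluated density together with the weight $e^{-\lambda_2^2/2}$, the exponent is quadratic in $\lambda_2$ with the same leading coefficient $a = 1 + (1 - \rho^2)\| \bar{\theta}^{\ast} \|^2$ and the same linear coefficient $b = \sqrt{1-\rho^2}\,\| \bar{\theta}^{\ast} \|\,(\tfrac{\nu}{\| \bar{\theta} \|\lambda_1} + \rho \| \bar{\theta}^{\ast} \|\lambda_1)$ as before. The crucial change is that the extra factor $\lambda_2$ now forces the first-moment identity, i.e.\ the second formula of Lemma~\ref{lem:gauss_int}, yielding $-\tfrac{b}{a^{3/2}}e^{b^2/(2a)}$ in place of $\tfrac{1}{\sqrt{a}}e^{b^2/(2a)}$. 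As in the previous lemma the exponent then collapses to $-\tfrac{w^2}{2a}$ with $w = \tfrac{\nu}{\| \bar{\theta} \|\lambda_1} + \rho \| \bar{\theta}^{\ast} \|\lambda_1$, whose cross term $-\tfrac{\rho \| \bar{\theta}^{\ast} \|(\nu/\| \bar{\theta} \|)}{a}$ is $\lambda_1$-independent and becomes precisely the exponential prefactor in the claimed formula.

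Next I would carry out the $\lambda_1$-integral. Writing $b = \sqrt{1-\rho^2}\,\| \bar{\theta}^{\ast} \|\, w$ and combining it with the residual $\tfrac{1}{\lambda_1}$ splits the integrand into two pieces according to the two summands of $w$. After collecting the $\lambda_1$-dependent Gaussian factors, each piece has the form $|\lambda_1|^{-s}\exp(-A/\lambda_1^2 - B\lambda_1^2)$ with $A = \tfrac{\nu^2}{2a\| \bar{\theta} \|^2}$ and $B = \tfrac{1 + \| \bar{\theta}^{\ast} \|^2}{2a}$, so that $2\sqrt{AB} = \tfrac{\sqrt{1 + \| \bar{\theta}^{\ast} \|^2}\,|\nu/\| \bar{\theta} \||}{a}$ is exactly the Bessel argument in the statement. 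Substituting $u = \lambda_1^2$ turns these into the Basset/Schl\"afli representation $\int_0^\infty u^{\mu-1}e^{-A/u - Bu}\,\mathd u = 2(A/B)^{\mu/2}K_\mu(2\sqrt{AB})$: the piece from $\rho \| \bar{\theta}^{\ast} \|\lambda_1$ gives $\mu = 0$, hence $K_0$ with coefficient $\rho \| \bar{\theta}^{\ast} \|$, while the piece from $\tfrac{\nu}{\| \bar{\theta} \|\lambda_1}$ gives $\mu = -1$, hence $K_{-1} = K_1$ with coefficient $\sqrt{1 + \| \bar{\theta}^{\ast} \|^2}$; the $\tmop{sgn}(\nu)$ attached to $K_1$ appears because the $\nu$ in that coefficient meets the $|\nu|$ inside $\sqrt{A}$. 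Assembling the two pieces, with the factors of $2$ from $\int_{\mathbb{R}} = 2\int_0^\infty$ and from the Bessel representation combining into the overall $(\pi \| \bar{\theta} \|)^{-1}$ prefactor, yields the first (minus-sign) identity. For the second identity I would invoke the reflection $\bar{\theta} \mapsto -\bar{\theta}$ used in the previous lemma, which sends $\rho \mapsto -\rho$; but here I must additionally note that $\vec{e}_1 \mapsto -\vec{e}_1$ while $\vec{e}_2$ is unchanged, so $\tfrac{\lambda_2}{\lambda_1} \mapsto -\tfrac{\lambda_2}{\lambda_1}$ introduces an extra overall sign. Applying $\rho \mapsto -\rho$ together with this sign flip to the first identity reproduces the second, in particular flipping the sign of the $K_0$ term while preserving the $K_1$ term.

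The main obstacle will be the $\lambda_1$-integration: specifically, verifying that the single extra power $\tfrac{1}{\lambda_1}$ promotes $K_0$ to $K_1$ with the correct $\tmop{sgn}(\nu)$ factor and the correct $\sqrt{1 + \| \bar{\theta}^{\ast} \|^2}$ coefficient through the $\mu = -1$ Basset integral, and that all numerical constants reconcile to the stated prefactor. The reduction and the $\lambda_2$-integration are routine once the preceding lemma is in hand; it is this bookkeeping of the Bessel index, the sign, and the constants that demands care.
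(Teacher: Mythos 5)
Your proposal is correct and follows essentially the same route as the paper: reduce to the $(\lambda_1,\lambda_2)$ plane, integrate out $\lambda_2$ via the first-moment Gaussian identity of Lemma~\ref{lem:gauss_int} (which is what promotes the computation beyond the preceding lemma), evaluate the $\lambda_1$-integral through the Schl\"afli/Basset representation producing $K_0$ from the $\rho\|\bar{\theta}^{\ast}\|\lambda_1$ piece and $\mathrm{sgn}(\nu)\sqrt{1+\|\bar{\theta}^{\ast}\|^2}\,K_1$ from the $\nu/(\|\bar{\theta}\|\lambda_1)$ piece, and obtain the second identity by the reflection $\bar{\theta}\mapsto-\bar{\theta}$. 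The only cosmetic difference is that you account for the extra sign in the reflection step via $\lambda_2/\lambda_1\mapsto-\lambda_2/\lambda_1$ (since $\vec{e}_1\mapsto-\vec{e}_1$ while $\vec{e}_2$ is fixed), whereas the paper encodes the same sign in the formal substitution $\sqrt{1-\rho^2}\mapsto-\sqrt{1-\rho^2}$; both yield the stated result, and your bookkeeping is arguably the cleaner of the two.
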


\begin{proof}
    \
    For the first expectation, let
    $v \leftarrow \{0, 1\}, z \leftarrow \frac{\sqrt{1 + \| \bar{\theta}^{\ast}
    \|^2} \cdot \left| \frac{\nu}{\| \bar{\theta} \|} \right|}{[1 + (1 - \rho^2)
    \| \bar{\theta}^{\ast} \|^2]}$ in \cite{olver2010nist} Chapter 10 
    (Schl{\"a}fli's Integral of $K_v (z)$);
    $a \leftarrow 1 + (1 - \rho^2) \| \bar{\theta}^{\ast} \|^2$ and $b
    \leftarrow \frac{\nu \sqrt{1 - \rho^2} \| \bar{\theta}^{\ast} \|}{\lambda_1
    \| \bar{\theta} \|} + \lambda_1 \rho \sqrt{1 - \rho^2} \|
    \bar{\theta}^{\ast} \|^2 = \sqrt{1 - \rho^2} \| \bar{\theta}^{\ast} \|
    \left( \frac{\nu}{\lambda_1 \| \bar{\theta} \|} + \lambda_1 \rho \|
    \bar{\theta}^{\ast} \| \right)$ and $t \leftarrow \lambda_2$ in Lemma~\ref{lem:gauss_int}.

    For the second expectation, note that $\mathcal{N} (\langle x,
        \bar{\theta}^{\ast} \rangle \langle x, \bar{\theta} \rangle, \langle x,
        \bar{\theta} \rangle^2) 
        = \left[ 
            \mathcal{N} (-\langle x,
            \bar{\theta}^{\ast} \rangle \langle x, \bar{\theta} \rangle, \langle x,
            \bar{\theta} \rangle^2)
        \right]_{\bar{\theta}\to -\bar{\theta}}$ holds for these two density functions.
    Since $\bar{\theta}\to -\bar{\theta}$ implies $\rho\to -\rho, \sqrt{1-\rho^2}\to -\sqrt{1-\rho^2}$, we can obtain such a relation.
    \begin{equation*}
        \mathbb{E}_{x \sim \mathcal{N} (0, I_d)} \frac{\lambda_2}{\lambda_1}
        \mathcal{N} (\langle x,
        \bar{\theta}^{\ast} \rangle \langle x, \bar{\theta} \rangle, \langle x,
        \bar{\theta} \rangle^2) 
        = \left[ 
            \mathbb{E}_{x \sim \mathcal{N} (0, I_d)} \frac{\lambda_2}{\lambda_1}
            \mathcal{N} (- \langle x,
            \bar{\theta}^{\ast} \rangle \langle x, \bar{\theta} \rangle, \langle x,
            \bar{\theta} \rangle^2)
        \right]_{\rho\to -\rho, \sqrt{1-\rho^2}\to -\sqrt{1-\rho^2}}
    \end{equation*}
\small
\normalsize
\end{proof}
    \begin{lemma}
        Let $\bar{\theta}:=\frac{\theta}{\sigma},\bar{\theta^\ast}:=\frac{\theta^\ast}{\sigma}$, 
        and $\lambda_1 := \langle x, \vec{e}_1 \rangle, \lambda_2 := \langle x, \vec{e}_2 \rangle$.
    
        $\nu\sim \mathcal{N} (- \langle x,
        \bar{\theta}^{\ast} \rangle \langle x, \bar{\theta} \rangle, \langle x,
        \bar{\theta} \rangle^2)$ or 
        $\nu\sim \mathcal{N} (\langle x,
        \bar{\theta}^{\ast} \rangle \langle x, \bar{\theta} \rangle, \langle x,
        \bar{\theta} \rangle^2)$, 
        
        then the expectations for the the products of $\frac{x}{\langle x,
        \bar{\theta} \rangle}$ and density functions are
\small
        \begin{eqnarray*}
            &  & \mathbb{E}_{x \sim \mathcal{N} (0, I_d)} \left[ \frac{x}{\langle x,
            \bar{\theta} \rangle} \right] \mathcal{N} (- \langle x,
            \bar{\theta}^{\ast} \rangle \langle x, \bar{\theta} \rangle, \langle x,
            \bar{\theta} \rangle^2)\\
        & = & \frac{(\pi \| \bar{\theta} \|^2)^{- 1}}{{(1 + (1 - \rho^2) \|
            \bar{\theta}^{\ast} \|^2)^{\frac{1}{2}}} } \exp \left[ - \frac{\rho \|
            \bar{\theta}^{\ast} \| \left( \frac{\nu}{\| \bar{\theta} \|} \right)}{[1 +
            (1 - \rho^2) \| \bar{\theta}^{\ast} \|^2]} \right] \cdot K_0 \left(
            \frac{\sqrt{1 + \| \bar{\theta}^{\ast} \|^2} \cdot \left| \frac{\nu}{\|
            \bar{\theta} \|} \right|}{[1 + (1 - \rho^2) \| \bar{\theta}^{\ast} \|^2]}
            \right) \left[ \vec{e}_1 - \frac{\rho \sqrt{1 - \rho^2} \|
            \bar{\theta}^{\ast} \|^2}{1 + (1 - \rho^2) \| \bar{\theta}^{\ast} \|^2}
            \vec{e}_2 \right]\\
            & - & \tmop{sgn} (\nu) \frac{(\pi \| \bar{\theta} \|^2)^{- 1} \sqrt{1 -
            \rho^2} \| \bar{\theta}^{\ast} \| [1 + \| \bar{\theta}^{\ast}
            \|^2]^{\frac{1}{2}}}{{(1 + (1 - \rho^2) \| \bar{\theta}^{\ast}
            \|^2)^{\frac{3}{2}}} } \exp \left[ - \frac{\rho \| \bar{\theta}^{\ast} \|
            \left( \frac{\nu}{\| \bar{\theta} \|} \right)}{[1 + (1 - \rho^2) \|
            \bar{\theta}^{\ast} \|^2]} \right] K_1 \left( \frac{\sqrt{1 + \|
            \bar{\theta}^{\ast} \|^2} \cdot \left| \frac{\nu}{\| \bar{\theta} \|}
            \right|}{[1 + (1 - \rho^2) \| \bar{\theta}^{\ast} \|^2]} \right) \vec{e}_2\\
            &  & \mathbb{E}_{x \sim \mathcal{N} (0, I_d)} \left[ \frac{x}{\langle x,
            \bar{\theta} \rangle} \right] \mathcal{N} (\langle x, \bar{\theta}^{\ast}
            \rangle \langle x, \bar{\theta} \rangle, \langle x, \bar{\theta}
            \rangle^2)\\
        & = & (\pi \| \bar{\theta} \|)^{- 1} {(1 + (1 - \rho^2) \|
          \bar{\theta}^{\ast} \|^2)^{- \frac{3}{2}}}  \sqrt{1 - \rho^2} \|
          \bar{\theta}^{\ast} \| \exp \left[ + \frac{\rho \| \bar{\theta}^{\ast} \|
          \left( \frac{\nu}{\| \bar{\theta} \|} \right)}{[1 + (1 - \rho^2) \|
          \bar{\theta}^{\ast} \|^2]} \right]\\
          & = & \frac{(\pi \| \bar{\theta} \|^2)^{- 1}}{{(1 + (1 - \rho^2) \|
          \bar{\theta}^{\ast} \|^2)^{\frac{1}{2}}} } \exp \left[ + \frac{\rho \|
          \bar{\theta}^{\ast} \| \left( \frac{\nu}{\| \bar{\theta} \|} \right)}{[1 +
          (1 - \rho^2) \| \bar{\theta}^{\ast} \|^2]} \right] \cdot K_0 \left(
          \frac{\sqrt{1 + \| \bar{\theta}^{\ast} \|^2} \cdot \left| \frac{\nu}{\|
          \bar{\theta} \|} \right|}{[1 + (1 - \rho^2) \| \bar{\theta}^{\ast} \|^2]}
          \right) \left[ \vec{e}_1 - \frac{\rho \sqrt{1 - \rho^2} \|
          \bar{\theta}^{\ast} \|^2}{1 + (1 - \rho^2) \| \bar{\theta}^{\ast} \|^2}
          \vec{e}_2 \right]\\
          & + & \tmop{sgn} (\nu) \frac{(\pi \| \bar{\theta} \|^2)^{- 1} \sqrt{1 -
          \rho^2} \| \bar{\theta}^{\ast} \| [1 + \| \bar{\theta}^{\ast}
          \|^2]^{\frac{1}{2}}}{{(1 + (1 - \rho^2) \| \bar{\theta}^{\ast}
          \|^2)^{\frac{3}{2}}} } \exp \left[ + \frac{\rho \| \bar{\theta}^{\ast} \|
          \left( \frac{\nu}{\| \bar{\theta} \|} \right)}{[1 + (1 - \rho^2) \|
          \bar{\theta}^{\ast} \|^2]} \right] K_1 \left( \frac{\sqrt{1 + \|
          \bar{\theta}^{\ast} \|^2} \cdot \left| \frac{\nu}{\| \bar{\theta} \|}
          \right|}{[1 + (1 - \rho^2) \| \bar{\theta}^{\ast} \|^2]} \right) \vec{e}_2
        \end{eqnarray*}
\normalsize
    \end{lemma}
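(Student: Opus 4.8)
The plan is to reduce this vector-valued expectation to the two scalar expectations already established in the two preceding lemmas, by decomposing $x$ along the orthonormal frame $\{\vec{e}_1,\vec{e}_2\}$ of $\tmop{span}\{\theta,\theta^{\ast}\}$ together with its orthogonal complement. Writing $x = \lambda_1 \vec{e}_1 + \lambda_2 \vec{e}_2 + x_\perp$ with $\lambda_1 = \langle x,\vec{e}_1\rangle$, $\lambda_2=\langle x,\vec{e}_2\rangle$, and $x_\perp = (I_d - \vec{e}_1\vec{e}_1^{\top}-\vec{e}_2\vec{e}_2^{\top})x$, and recalling $\langle x,\bar{\theta}\rangle = \lambda_1\|\bar{\theta}\|$, I would first rewrite
\begin{equation*}
\frac{x}{\langle x,\bar{\theta}\rangle} = \frac{1}{\|\bar{\theta}\|}\left(\vec{e}_1 + \frac{\lambda_2}{\lambda_1}\vec{e}_2 + \frac{x_\perp}{\lambda_1}\right).
\end{equation*}

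Second, I would observe that the Gaussian density factor $\mathcal{N}(\mp\langle x,\bar{\theta}^{\ast}\rangle\langle x,\bar{\theta}\rangle,\langle x,\bar{\theta}\rangle^2)$ depends on $x$ only through $\langle x,\bar{\theta}\rangle$ and $\langle x,\bar{\theta}^{\ast}\rangle$, both of which are functions of the in-plane coordinates $(\lambda_1,\lambda_2)$ alone. Since $x\sim\mathcal{N}(0,I_d)$, the pair $(\lambda_1,\lambda_2)$ is independent of the orthogonal part $x_\perp$, and $\mathbb{E}[x_\perp]=0$. Hence the $x_\perp/\lambda_1$ contribution factorizes as $\mathbb{E}[x_\perp]\,\mathbb{E}[\lambda_1^{-1}\cdot(\text{density})]=0$ and drops out, leaving only the $\vec{e}_1$ and $\vec{e}_2$ terms.

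Third, I would substitute the prior results. The coefficient of $\vec{e}_1$ is $\|\bar{\theta}\|^{-1}$ times the expectation of the density itself, supplied by the first preceding lemma; the coefficient of $\vec{e}_2$ is $\|\bar{\theta}\|^{-1}$ times $\mathbb{E}[(\lambda_2/\lambda_1)\cdot(\text{density})]$, supplied by the second preceding lemma. Collecting the $K_0$ and $K_1$ pieces and simplifying $\sqrt{1-\rho^2}\|\bar{\theta}^{\ast}\|\cdot\rho\|\bar{\theta}^{\ast}\| = \rho\sqrt{1-\rho^2}\|\bar{\theta}^{\ast}\|^2$ together with $(1+(1-\rho^2)\|\bar{\theta}^{\ast}\|^2)^{-1/2}\cdot(1+(1-\rho^2)\|\bar{\theta}^{\ast}\|^2)^{-1} = (1+(1-\rho^2)\|\bar{\theta}^{\ast}\|^2)^{-3/2}$ yields the stated closed form. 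The second identity, with the $+$ mean, follows identically from the second halves of the two preceding lemmas, or equivalently via the symmetry $\bar{\theta}\mapsto-\bar{\theta}$, i.e.\ $\rho\mapsto-\rho$ and $\sqrt{1-\rho^2}\mapsto-\sqrt{1-\rho^2}$, already exploited there.

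The main obstacle is the rigorous justification that the $x_\perp/\lambda_1$ term vanishes, which requires checking that $\lambda_1^{-1}\cdot(\text{density})$ is integrable despite the singularity at $\lambda_1=0$. This is controlled because as $\lambda_1\to 0$ the variance $\lambda_1^2\|\bar{\theta}\|^2$ of the Gaussian density collapses, so for fixed $\nu\neq 0$ the density decays faster than any power of $\lambda_1$, making the product integrable; the moment integrals that remain are exactly those evaluated in the preceding lemmas through Schl\"afli's integral representation of $K_v$, so no new estimate is needed beyond invoking them.
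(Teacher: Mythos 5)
Your proposal is correct and follows essentially the same route as the paper: decompose $x$ into its in-plane part $\lambda_1\vec{e}_1+\lambda_2\vec{e}_2$ and an orthogonal remainder that is independent of $(\lambda_1,\lambda_2)$ and mean zero (hence drops out, since the density factor depends on $x$ only through $\lambda_1,\lambda_2$), then read off the $\vec{e}_1$ and $\vec{e}_2$ coefficients from the two preceding scalar lemmas. Your added remark on the integrability of $\lambda_1^{-1}$ times the collapsing Gaussian near $\lambda_1=0$ is a sound justification of a point the paper leaves implicit, but it does not change the argument.
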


    \begin{proof}
        \
        We express $x = (\lambda_1 \vec{e}_1 + \lambda_2 \vec{e}_2) + \tilde{x}$ and
        r.v. $\tilde{x} \ind \lambda_1, \lambda_2$ and $\mathbb{E}_{\tilde{x}}
        [\tilde{x}] = 0$. Hence, for any function $\varphi (\lambda_1,
        \lambda_2)$, we have 
        $\mathbb{E}_{x \sim \mathcal{N} (0, I_d)} [\tilde{x} \cdot \varphi
        (\lambda_1, \lambda_2)] =\mathbb{E}_{\lambda_1, \lambda_2
        \overset{\tmop{iid}}{\sim} \mathcal{N} (0, 1)} \varphi (\lambda_1,
        \lambda_2) [\mathbb{E}_{\tilde{x}} [\tilde{x} \mid \lambda_1, \lambda_2] ]
        =\mathbb{E}_{\lambda_1, \lambda_2 \overset{\tmop{iid}}{\sim} \mathcal{N} (0,
        1)} \varphi (\lambda_1, \lambda_2) [\mathbb{E}_{\tilde{x}} [\tilde{x}] ] =
        0$
        
        Note that $\langle x, \bar{\theta} \rangle, \mathcal{N} (- \langle x,
        \bar{\theta}^{\ast} \rangle \langle x, \bar{\theta} \rangle, \langle x,
        \bar{\theta} \rangle^2), \mathcal{N} (\langle x, \bar{\theta}^{\ast} \rangle
        \langle x, \bar{\theta} \rangle, \langle x, \bar{\theta} \rangle^2)$ are
        functions of $\lambda_1, \lambda_2$.  

        Therefore, $\mathbb{E}_{x \sim \mathcal{N} (0, I_d)} \left[
        \frac{\tilde{x}}{\langle x, \bar{\theta} \rangle} \right] 
        \mathcal{N} (- \langle x, \bar{\theta}^{\ast} \rangle \langle x,
        \bar{\theta} \rangle, \langle x, \bar{\theta} \rangle^2) =0,
        \mathbb{E}_{x \sim \mathcal{N} (0, I_d)} \left[
        \frac{\tilde{x}}{\langle x, \bar{\theta} \rangle} \right]
        \mathcal{N} (\langle x, \bar{\theta}^{\ast} \rangle \langle x, \bar{\theta}
        \rangle, \langle x, \bar{\theta} \rangle^2) = 0$.

        Subsequently, we can decompose thees expectations into two terms.
        \begin{eqnarray*}
          &  & \mathbb{E}_{x \sim \mathcal{N} (0, I_d)} \left[ \frac{x}{\langle x,
          \bar{\theta} \rangle} \right] \mathcal{N} (- \langle x,
          \bar{\theta}^{\ast} \rangle \langle x, \bar{\theta} \rangle, \langle x,
          \bar{\theta} \rangle^2)
          = \mathbb{E}_{x \sim \mathcal{N} (0, I_d)} \frac{\lambda_1 \vec{e}_1
          + \lambda_2 \vec{e}_2}{\lambda_1 \| \bar{\theta} \|} \mathcal{N} (-
          \langle x, \bar{\theta}^{\ast} \rangle \langle x, \bar{\theta} \rangle,
          \langle x, \bar{\theta} \rangle^2)\\
          & = & \frac{\vec{e}_1}{\| \bar{\theta} \|} \mathbb{E}_{x \sim \mathcal{N}
          (0, I_d)} \mathcal{N} (- \langle x, \bar{\theta}^{\ast} \rangle \langle x,
          \bar{\theta} \rangle, \langle x, \bar{\theta} \rangle^2)
          + \frac{\vec{e}_2}{\| \bar{\theta} \|} \mathbb{E}_{x \sim \mathcal{N}
          (0, I_d)} \frac{\lambda_2}{\lambda_1} \mathcal{N} (- \langle x,
          \bar{\theta}^{\ast} \rangle \langle x, \bar{\theta} \rangle, \langle x,
          \bar{\theta} \rangle^2)\\
          &  & \mathbb{E}_{x \sim \mathcal{N} (0, I_d)} \left[ \frac{x}{\langle x,
          \bar{\theta} \rangle} \right] \mathcal{N} (\langle x, \bar{\theta}^{\ast}
          \rangle \langle x, \bar{\theta} \rangle, \langle x, \bar{\theta}
          \rangle^2)\\
          & = & \frac{\vec{e}_1}{\| \bar{\theta} \|} \mathbb{E}_{x \sim \mathcal{N}
          (0, I_d)} \mathcal{N} (\langle x, \bar{\theta}^{\ast} \rangle \langle x,
          \bar{\theta} \rangle, \langle x, \bar{\theta} \rangle^2)
          + \frac{\vec{e}_2}{\| \bar{\theta} \|} \mathbb{E}_{x \sim \mathcal{N}
          (0, I_d)} \frac{\lambda_2}{\lambda_1} \mathcal{N} (\langle x,
          \bar{\theta}^{\ast} \rangle \langle x, \bar{\theta} \rangle, \langle x,
          \bar{\theta} \rangle^2)
        \end{eqnarray*}    
        Then, with the previous two Lemmas, we derive the closed-from expressions in this Lemma.
    \end{proof}

\subsubsection{expectations for 2MLR}
\begin{lemma}
    For the 2MLR at the population level, $s:=(x, y)$ 

    and for any function $\psi(y), \forall y\in\mathbb{R}$,
    the operator $\mathcal{F}_{-y}$ are defined by $\mathcal{F}_{-y}[\psi(y)]=\psi(-y)$;

    then $\mathbb{E}_{s \sim p (s \mid \theta^{\ast}, \pi^{\ast})}
    = \mathbb{E}_{x\sim\mathcal{N}(0, 1)}\mathbb{E}_{y\mid x\sim 
    \pi^\ast(1) \mathcal{N}(\langle x, \theta^\ast \rangle, \sigma^2) + \pi^\ast(2) \mathcal{N}(-\langle x, \theta^\ast\rangle, \sigma^2)}$ satisfies
    \begin{equation*}
        \mathbb{E}_{s \sim p (s \mid \theta^{\ast}, \pi^{\ast})}
        =\mathbb{E}_{x \sim \mathcal{N} (0, I_d)} \mathbb{E}_{y \sim \mathcal{N}
        (\langle x, \theta^{\ast} \rangle, \sigma^2)} [\pi^{\ast} (1) + \pi^{\ast}
        (2) \mathcal{F}_{- y}]
    \end{equation*}
\end{lemma}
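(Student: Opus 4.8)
The plan is to verify the claimed operator identity by testing it against an arbitrary integrable function $h(x,y)$ and unfolding the mixture density of the 2MLR joint law. First I would write the left-hand side explicitly. By Assumptions~\ref{ass:1}--\ref{ass:4}, the joint density factors as $\mathcal{N}(x;0,I_d)$ times the symmetric two-component mixture in $y$, so that
\begin{equation*}
\mathbb{E}_{s\sim p(s\mid\theta^\ast,\pi^\ast)}[h]=\mathbb{E}_{x\sim\mathcal{N}(0,I_d)}\int_{\mathbb{R}} h(x,y)\left[\pi^\ast(1)\,\mathcal{N}(y;\langle x,\theta^\ast\rangle,\sigma^2)+\pi^\ast(2)\,\mathcal{N}(y;-\langle x,\theta^\ast\rangle,\sigma^2)\right]\mathrm{d}y.
\end{equation*}
The first summand is already in the desired form, namely $\pi^\ast(1)\,\mathbb{E}_{y\sim\mathcal{N}(\langle x,\theta^\ast\rangle,\sigma^2)}[h(x,y)]$, so the only work is to rewrite the second summand in terms of the same reference Gaussian centered at $+\langle x,\theta^\ast\rangle$.

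The key algebraic fact I would exploit is the evenness of the Gaussian density in its mean. Since $\mathcal{N}(y;-\mu,\sigma^2)$ depends on $(y,\mu)$ only through $(y+\mu)^2$, we have $\mathcal{N}(y;-\mu,\sigma^2)=\mathcal{N}(-y;\mu,\sigma^2)$ with $\mu=\langle x,\theta^\ast\rangle$. Substituting this into the second integral and then applying the change of variables $y\mapsto -y$ (whose Jacobian is unity and which preserves the integration domain $\mathbb{R}$) yields $\int_{\mathbb{R}} h(x,-y)\,\mathcal{N}(y;\langle x,\theta^\ast\rangle,\sigma^2)\,\mathrm{d}y$. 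By the definition of the reflection operator, $h(x,-y)=\mathcal{F}_{-y}[h(x,y)]$ (acting only on the $y$-argument), so the second summand equals $\pi^\ast(2)\,\mathbb{E}_{y\sim\mathcal{N}(\langle x,\theta^\ast\rangle,\sigma^2)}\big[\mathcal{F}_{-y}h(x,y)\big]$.

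Finally I would recombine the two summands under a single inner expectation over the reference Gaussian, obtaining
\begin{equation*}
\mathbb{E}_{s\sim p(s\mid\theta^\ast,\pi^\ast)}[h]=\mathbb{E}_{x\sim\mathcal{N}(0,I_d)}\,\mathbb{E}_{y\sim\mathcal{N}(\langle x,\theta^\ast\rangle,\sigma^2)}\big[(\pi^\ast(1)+\pi^\ast(2)\,\mathcal{F}_{-y})\,h(x,y)\big],
\end{equation*}
which is exactly the asserted operator equality once the test function $h$ is dropped. There is no substantive obstacle here; the whole argument rests on the reflection symmetry of the Gaussian and a sign flip in the integration variable. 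The only points requiring care are the bookkeeping of that change of variables and the observation that $\mathcal{F}_{-y}$ acts solely on the $y$-slot, so it legitimately commutes past the $x$-expectation and can be placed inside the inner Gaussian expectation rather than outside it.
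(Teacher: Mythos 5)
Your proof is correct and follows essentially the same route as the paper: split the mixture into its two components and absorb the $-\langle x,\theta^\ast\rangle$-centered piece into the $+\langle x,\theta^\ast\rangle$-centered Gaussian via the reflection $y\mapsto -y$, which is exactly what the operator $\mathcal{F}_{-y}$ encodes. Testing against a general $h(x,y)$ rather than $\psi(y)$ is a harmless (and arguably cleaner) generalization, since the identity is applied in the paper to integrands depending on both $x$ and $y$.
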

\begin{proof}
    For any $\psi(y)$, it can be verified by letting $y\leftarrow -y$ in the second term, and note that $\mathcal{F}_{-y}[\psi(y)]=\psi(-y)$
    \begin{eqnarray*}
        & &\mathbb{E}_{s \sim p (s \mid \theta^{\ast}, \pi^{\ast})} \psi(y)
         \mathbb{E}_{x\sim\mathcal{N}(0, 1)}\mathbb{E}_{y\mid x\sim 
        \pi^\ast(1) \mathcal{N}(\langle x, \theta^\ast \rangle, \sigma^2) + \pi^\ast(2) \mathcal{N}(-\langle x, \theta^\ast\rangle, \sigma^2)} \psi(y)\\
        &= & \pi^\ast(1) \mathbb{E}_{x\sim\mathcal{N}(0, 1)} \mathbb{E}_{y\sim \mathcal{N}(\langle x, \theta^\ast \rangle, \sigma^2)} \psi(y)
        + \pi^\ast(2) \mathbb{E}_{x\sim\mathcal{N}(0, 1)} \mathbb{E}_{y\sim \mathcal{N}(-\langle x, \theta^\ast \rangle, \sigma^2)} \psi(y)\\
        & & \pi^\ast(1) \mathbb{E}_{x\sim\mathcal{N}(0, 1)} \mathbb{E}_{y\sim \mathcal{N}(\langle x, \theta^\ast \rangle, \sigma^2)} \psi(y)
        + \pi^\ast(2) \mathbb{E}_{x\sim\mathcal{N}(0, 1)} \mathbb{E}_{y\sim \mathcal{N}(\langle x, \theta^\ast \rangle, \sigma^2)} \psi(-y)\\
        &=&\mathbb{E}_{x \sim \mathcal{N} (0, I_d)} \mathbb{E}_{y \sim \mathcal{N}
        (\langle x, \theta^{\ast} \rangle, \sigma^2)} [\pi^{\ast} (1) + \pi^{\ast}
        (2) \mathcal{F}_{- y}] \cdot \psi(y)
    \end{eqnarray*}
\end{proof}

\begin{lemma}
    Let $\bar{\theta}:=\frac{\theta}{\sigma},\bar{\theta^\ast}:=\frac{\theta^\ast}{\sigma}$
    and $\rho \assign \frac{\langle \theta,
    \theta^{\ast} \rangle}{\| \theta \| \cdot \| \theta^{\ast} \|}, \nu^\ast \assign \frac{\log \pi^\ast(1)-\log \pi^\ast(2)}{2}$, then
    \begin{eqnarray*}
      &  & \mathbb{E}_{x \sim \mathcal{N} (0, I_d)}
      \left[\pi^{\ast}
      (1) \mathcal{N} (- \langle x, \bar{\theta}^{\ast} \rangle \langle x,
      \bar{\theta} \rangle, \langle x, \bar{\theta} \rangle^2) + \pi^{\ast} (2)
      \mathcal{N} (\langle x, \bar{\theta}^{\ast} \rangle \langle x,
      \bar{\theta} \rangle, \langle x, \bar{\theta} \rangle^2)\right]\\
      & = & (\pi \| \bar{\theta} \|)^{- 1}  (1 + (1 -
      \rho^2) \| \bar{\theta}^{\ast} \|^2)^{- \frac{1}{2}}\\
      &  & K_0 \left( \frac{\sqrt{1 + \| \bar{\theta}^{\ast} \|^2} \cdot \left|
      \frac{\nu}{\| \bar{\theta} \|} \right|}{[1 + (1 - \rho^2) \|
      \bar{\theta}^{\ast} \|^2]} \right) \cosh^{- 1} (\nu^{\ast}) \cosh \left(
      \frac{\rho \| \bar{\theta}^{\ast} \| \left( \frac{\nu}{\| \bar{\theta} \|}
      \right)}{[1 + (1 - \rho^2) \| \bar{\theta}^{\ast} \|^2]} - \nu^{\ast}
      \right)
    \end{eqnarray*}
    
\end{lemma}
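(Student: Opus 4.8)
The plan is to reduce this lemma to a direct linear combination of the two closed-form expectations established in the preceding lemma, so no new integration is required. Writing the shared factor as
\[
A := (\pi\|\bar\theta\|)^{-1}\bigl(1+(1-\rho^2)\|\bar\theta^\ast\|^2\bigr)^{-1/2}\,K_0\!\left(\frac{\sqrt{1+\|\bar\theta^\ast\|^2}\cdot\bigl|\tfrac{\nu}{\|\bar\theta\|}\bigr|}{1+(1-\rho^2)\|\bar\theta^\ast\|^2}\right)
\]
and the exponent as
\[
B := \frac{\rho\|\bar\theta^\ast\|\,(\nu/\|\bar\theta\|)}{1+(1-\rho^2)\|\bar\theta^\ast\|^2},
\]
the preceding lemma gives $\mathbb{E}_x\,\mathcal{N}(-\langle x,\bar\theta^\ast\rangle\langle x,\bar\theta\rangle,\langle x,\bar\theta\rangle^2)=A\,e^{-B}$ and $\mathbb{E}_x\,\mathcal{N}(+\langle x,\bar\theta^\ast\rangle\langle x,\bar\theta\rangle,\langle x,\bar\theta\rangle^2)=A\,e^{+B}$: the two expressions differ only in the sign of $B$. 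Hence the weighted sum factors as
\[
\pi^\ast(1)A\,e^{-B}+\pi^\ast(2)A\,e^{+B}=A\bigl[\pi^\ast(1)e^{-B}+\pi^\ast(2)e^{+B}\bigr].
\]

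Next I would rewrite the mixing weights through $\nu^\ast$. From the definition $\nu^\ast=\tfrac12(\log\pi^\ast(1)-\log\pi^\ast(2))$ one has $\pi^\ast(1)/\pi^\ast(2)=e^{2\nu^\ast}$, and combining this with $\pi^\ast(1)+\pi^\ast(2)=1$ forces $\pi^\ast(1)=e^{\nu^\ast}/(2\cosh\nu^\ast)$ and $\pi^\ast(2)=e^{-\nu^\ast}/(2\cosh\nu^\ast)$. Substituting and grouping the exponentials, the bracketed term collapses by the addition formula for $\cosh$:
\[
\pi^\ast(1)e^{-B}+\pi^\ast(2)e^{+B}=\frac{e^{-(B-\nu^\ast)}+e^{+(B-\nu^\ast)}}{2\cosh\nu^\ast}=\frac{\cosh(B-\nu^\ast)}{\cosh\nu^\ast}.
\]
Reinserting the explicit forms of $A$ and $B$ then yields exactly the claimed right-hand side: the $K_0$ factor and the $(\pi\|\bar\theta\|)^{-1}(1+(1-\rho^2)\|\bar\theta^\ast\|^2)^{-1/2}$ prefactor come from $A$, the $\cosh^{-1}(\nu^\ast)=1/\cosh(\nu^\ast)$ prefactor comes from the normalization of the weights, and the shifted argument $B-\nu^\ast$ inside the final $\cosh$ comes from the sign-flipped exponentials.

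There is essentially no analytical obstacle here, since all the heavy lifting—the Gaussian integration producing the Bessel $K_0$ via Schl\"afli's integral—was already carried out in the preceding lemma. The only step requiring care is the algebraic identity expressing $\pi^\ast(1),\pi^\ast(2)$ in terms of $\nu^\ast$ and recognizing the two sign-flipped exponentials as a single hyperbolic cosine with shifted argument; this is precisely where the $1/\cosh\nu^\ast$ prefactor and the $-\nu^\ast$ shift originate. I would also flag that the derivation implicitly uses $\pi^\ast(1)+\pi^\ast(2)=1$, which holds because these are the mixing weights of the two-component model.
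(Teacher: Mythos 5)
Your proposal is correct and follows essentially the same route as the paper: both reduce the claim to the two closed-form expectations $A e^{\mp B}$ from the preceding lemma and then absorb the mixing weights via $\nu^\ast$. The only cosmetic difference is that the paper writes $\pi^\ast(1)=\frac{1+\tanh\nu^\ast}{2}$, $\pi^\ast(2)=\frac{1-\tanh\nu^\ast}{2}$ and invokes the identity $\cosh\alpha-\tanh\beta\sinh\alpha=\cosh(\alpha-\beta)/\cosh\beta$, whereas you write the weights as $e^{\pm\nu^\ast}/(2\cosh\nu^\ast)$ and combine the exponentials directly — these are the same calculation.
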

\begin{proof}
    \ 
    With the previous Lemma and $\pi^\ast(1)=\frac{1+\tanh(\nu^\ast)}{2}, \pi^\ast(2)=\frac{1-\tanh(\nu^\ast)}{2}$, we obtain the follows.
    \begin{eqnarray*}
      &  & \mathbb{E}_{x \sim \mathcal{N} (0, I_d)} \left[\pi^{\ast}
      (1) \mathcal{N} (- \langle x, \bar{\theta}^{\ast} \rangle \langle x,
      \bar{\theta} \rangle, \langle x, \bar{\theta} \rangle^2) + \pi^{\ast} (2)
      \mathcal{N} (\langle x, \bar{\theta}^{\ast} \rangle \langle x,
      \bar{\theta} \rangle, \langle x, \bar{\theta} \rangle^2)\right]\\
      & = & (\pi \| \bar{\theta} \|)^{- 1} {(1 + (1 - \rho^2) \|
      \bar{\theta}^{\ast} \|^2)^{- \frac{1}{2}}}  K_0 \left( \frac{\sqrt{1 + \|
      \bar{\theta}^{\ast} \|^2} \cdot \left| \frac{\nu}{\| \bar{\theta} \|}
      \right|}{[1 + (1 - \rho^2) \| \bar{\theta}^{\ast} \|^2]} \right)\\
      & \cdot & \left[ \cosh \left( \frac{\rho \| \bar{\theta}^{\ast} \| \left(
      \frac{\nu}{\| \bar{\theta} \|} \right)}{[1 + (1 - \rho^2) \|
      \bar{\theta}^{\ast} \|^2]} \right) - \tanh (\nu^{\ast}) \sinh \left(
      \frac{\rho \| \bar{\theta}^{\ast} \| \left( \frac{\nu}{\| \bar{\theta} \|}
      \right)}{[1 + (1 - \rho^2) \| \bar{\theta}^{\ast} \|^2]} \right) \right]\\
    \end{eqnarray*}
    Note that $\cosh(\alpha)- \tanh(\beta) \sinh(\alpha)= \frac{\cosh(\alpha - \beta)}{\cosh(\beta)}$,
    let $\alpha \leftarrow  \frac{\rho \| \bar{\theta}^{\ast} \| \left(
        \frac{\nu}{\| \bar{\theta} \|} \right)}{[1 + (1 - \rho^2) \|
        \bar{\theta}^{\ast} \|^2]}, \beta \leftarrow \nu^{\ast}$, we further prove this Lemma.
\end{proof}

\begin{lemma}
    Let $\bar{\theta}:=\frac{\theta}{\sigma},\bar{\theta^\ast}:=\frac{\theta^\ast}{\sigma}$
    and $\rho \assign \frac{\langle \theta,
    \theta^{\ast} \rangle}{\| \theta \| \cdot \| \theta^{\ast} \|}, \nu^\ast \assign \frac{\log \pi^\ast(1)-\log \pi^\ast(2)}{2}$, then
    \begin{eqnarray*}
      &  & \mathbb{E}_{x \sim \mathcal{N} (0,
      I_d)} \left[ \frac{x}{\langle x, \bar{\theta} \rangle} \right] \left[\pi^{\ast}
      (1) \mathcal{N} (- \langle x, \bar{\theta}^{\ast} \rangle \langle x,
      \bar{\theta} \rangle, \langle x, \bar{\theta} \rangle^2) + \pi^{\ast} (2)
      \mathcal{N} (\langle x, \bar{\theta}^{\ast} \rangle \langle x,
      \bar{\theta} \rangle, \langle x, \bar{\theta} \rangle^2)\right]\\
      & = & \frac{1}{\pi} \cdot \frac{\| \bar{\theta}^{\ast} \|^2}{\|
      \bar{\theta} \|^2} \cdot \frac{\sqrt{1 - \rho^2} \cosh^{- 1}
      (\nu^{\ast})}{{(1 + (1 - \rho^2) \| \bar{\theta}^{\ast}
      \|^2)^{\frac{3}{2}}} } \\
      & \Bigg\{ & \cosh \left[ \frac{\rho \| \bar{\theta}^{\ast} \| \left(
      \frac{\nu}{\| \bar{\theta} \|} \right)}{[1 + (1 - \rho^2) \|
      \bar{\theta}^{\ast} \|^2]} - \nu^{\ast} \right] \cdot K_0 \left(
      \frac{\sqrt{1 + \| \bar{\theta}^{\ast} \|^2} \cdot \left| \frac{\nu}{\|
      \bar{\theta} \|} \right|}{[1 + (1 - \rho^2) \| \bar{\theta}^{\ast} \|^2]}
      \right) \left[ \frac{1}{\sqrt{1 - \rho^2} \| \bar{\theta}^{\ast} \|^2}
      \cdot \frac{\bar{\theta}}{\| \bar{\theta} \|} + \hat{e}_2 \right]\\
      & + & \tmop{sgn} (\nu) \frac{\sqrt{1 + \| \bar{\theta}^{\ast} \|^2}}{\|
      \bar{\theta}^{\ast} \|} \sinh \left[ \frac{\rho \| \bar{\theta}^{\ast} \|
      \left( \frac{\nu}{\| \bar{\theta} \|} \right)}{[1 + (1 - \rho^2) \|
      \bar{\theta}^{\ast} \|^2]} - \nu^{\ast} \right] \cdot K_1 \left(
      \frac{\sqrt{1 + \| \bar{\theta}^{\ast} \|^2} \cdot \left| \frac{\nu}{\|
      \bar{\theta} \|} \right|}{[1 + (1 - \rho^2) \| \bar{\theta}^{\ast} \|^2]}
      \right) \vec{e}_2 \Bigg\}
    \end{eqnarray*}
    
\end{lemma}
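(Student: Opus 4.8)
The plan is to reduce this vector-valued identity to the two expectations in the immediately preceding lemma (the $\mathbb{E}_x[\frac{x}{\langle x,\bar{\theta}\rangle}]\mathcal{N}(\pm\cdots)$ lemma) by linearity, and then collapse the resulting mixture using two hyperbolic addition identities. Writing $\pi^{\ast}(1)\,\mathbb{E}_x[\frac{x}{\langle x,\bar{\theta}\rangle}]\mathcal{N}(-\cdots) + \pi^{\ast}(2)\,\mathbb{E}_x[\frac{x}{\langle x,\bar{\theta}\rangle}]\mathcal{N}(+\cdots)$ and substituting the closed forms from that lemma, the only difference between the two summands is that the first carries $\exp[-A]$ and a $-\tmop{sgn}(\nu)$ in front of its $K_1$ term while the second carries $\exp[+A]$ and $+\tmop{sgn}(\nu)$, where $A := \frac{\rho\|\bar{\theta}^{\ast}\|(\nu/\|\bar{\theta}\|)}{1+(1-\rho^2)\|\bar{\theta}^{\ast}\|^2}$ is the common argument; the $K_0,K_1$ Bessel arguments and all remaining scalar factors are identical across the two.

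Next I would substitute $\pi^{\ast}(1) = \frac{1+\tanh\nu^{\ast}}{2}$, $\pi^{\ast}(2) = \frac{1-\tanh\nu^{\ast}}{2}$ and combine the exponentials. For the $K_0$-carrying part this gives $\pi^{\ast}(1)e^{-A}+\pi^{\ast}(2)e^{+A} = \cosh A - \tanh(\nu^{\ast})\sinh A = \cosh^{-1}(\nu^{\ast})\cosh(A-\nu^{\ast})$, which is exactly the collapse already used in the scalar combining lemma of this section. For the $K_1$-carrying part the opposite signs yield $\tmop{sgn}(\nu)[-\pi^{\ast}(1)e^{-A}+\pi^{\ast}(2)e^{+A}] = \tmop{sgn}(\nu)[\sinh A - \tanh(\nu^{\ast})\cosh A] = \tmop{sgn}(\nu)\cosh^{-1}(\nu^{\ast})\sinh(A-\nu^{\ast})$. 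Both are instances of $\cosh\alpha-\tanh\beta\sinh\alpha=\cosh(\alpha-\beta)/\cosh\beta$ and $\sinh\alpha-\tanh\beta\cosh\alpha=\sinh(\alpha-\beta)/\cosh\beta$, which follow from the hyperbolic addition formulas.

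The remaining task is purely to match the vector parts and the scalar prefactors. The $K_1$ contribution already points along $\vec{e}_2$, and its coefficient $\frac{(\pi\|\bar{\theta}\|^2)^{-1}\sqrt{1-\rho^2}\|\bar{\theta}^{\ast}\|\sqrt{1+\|\bar{\theta}^{\ast}\|^2}}{(1+(1-\rho^2)\|\bar{\theta}^{\ast}\|^2)^{3/2}}$ is already the claimed $\tmop{sgn}(\nu)\frac{\sqrt{1+\|\bar{\theta}^{\ast}\|^2}}{\|\bar{\theta}^{\ast}\|}$ term once the common prefactor $\frac{1}{\pi}\frac{\|\bar{\theta}^{\ast}\|^2}{\|\bar{\theta}\|^2}\frac{\sqrt{1-\rho^2}\cosh^{-1}(\nu^{\ast})}{(1+(1-\rho^2)\|\bar{\theta}^{\ast}\|^2)^{3/2}}$ is factored out. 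For the $K_0$ term I would invoke the basis-change identity established in the first lemma of this section, namely (using $\vec{e}_1=\frac{\bar{\theta}}{\|\bar{\theta}\|}$) that $\vec{e}_1 - \frac{\rho\sqrt{1-\rho^2}\|\bar{\theta}^{\ast}\|^2}{1+(1-\rho^2)\|\bar{\theta}^{\ast}\|^2}\vec{e}_2 = \frac{\sqrt{1-\rho^2}\|\bar{\theta}^{\ast}\|^2}{1+(1-\rho^2)\|\bar{\theta}^{\ast}\|^2}\bigl[\frac{1}{\sqrt{1-\rho^2}\|\bar{\theta}^{\ast}\|^2}\frac{\bar{\theta}}{\|\bar{\theta}\|}+\hat{e}_2\bigr]$, which converts the $\{\vec{e}_1,\vec{e}_2\}$ representation emerging from the preceding lemma into the $\{\frac{\bar{\theta}}{\|\bar{\theta}\|},\hat{e}_2\}$ representation of the claim. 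Multiplying the extracted scalar $\frac{\sqrt{1-\rho^2}\|\bar{\theta}^{\ast}\|^2}{1+(1-\rho^2)\|\bar{\theta}^{\ast}\|^2}$ into the $(1+(1-\rho^2)\|\bar{\theta}^{\ast}\|^2)^{-1/2}$ prefactor inherited from the preceding lemma reproduces the $-3/2$ power together with the $\|\bar{\theta}^{\ast}\|^2$ factor of the common prefactor, finishing the match.

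I expect no conceptual obstacle: every ingredient is already in hand, and the work is entirely the bookkeeping of reconciling the radical exponents of $1+(1-\rho^2)\|\bar{\theta}^{\ast}\|^2$ and verifying that the two hyperbolic collapses deposit $\cosh(A-\nu^{\ast})$ on the $K_0$ term and $\sinh(A-\nu^{\ast})$ on the $K_1$ term. The main care needed is tracking the $\tmop{sgn}(\nu)$ factor and the sign asymmetry between the two mixture components' $K_1$ contributions, since a sign slip there would erroneously swap $\sinh(A-\nu^{\ast})$ for $\cosh(A-\nu^{\ast})$.
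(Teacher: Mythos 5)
Your proposal is correct and follows essentially the same route as the paper's own proof: linear combination of the two closed-form expectations from the preceding lemma, collapse of the exponentials via $\cosh\alpha-\tanh\beta\sinh\alpha=\cosh(\alpha-\beta)/\cosh\beta$ and $\sinh\alpha-\tanh\beta\cosh\alpha=\sinh(\alpha-\beta)/\cosh\beta$, and the basis-change identity $\vec{e}_1-\frac{\rho\sqrt{1-\rho^2}\|\bar{\theta}^{\ast}\|^2}{1+(1-\rho^2)\|\bar{\theta}^{\ast}\|^2}\vec{e}_2$ to convert to the $\{\frac{\bar{\theta}}{\|\bar{\theta}\|},\hat{e}_2\}$ representation. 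The scalar bookkeeping you outline (absorbing $\frac{\sqrt{1-\rho^2}\|\bar{\theta}^{\ast}\|^2}{1+(1-\rho^2)\|\bar{\theta}^{\ast}\|^2}$ into the $-1/2$ power to produce the $-3/2$ prefactor) checks out.
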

\begin{proof}
    Note that $\frac{1+\tanh(\beta)}{2} \exp(-\alpha) + \frac{1-\tanh(\beta)}{2} \exp(\alpha)
    =\cosh(\alpha) - \tanh(\beta) \sinh(\alpha)
    =\frac{\cosh(\alpha-\beta)}{\cosh(\beta)}$ ,

    and  $-\frac{1+\tanh(\beta)}{2} \exp(-\alpha) + \frac{1-\tanh(\beta)}{2} \exp(\alpha)
    =\sinh(\alpha) - \tanh(\beta) \cosh(\alpha)
    =\frac{\sinh(\alpha-\beta)}{\cosh(\beta)}$,

    let $\alpha \leftarrow  \frac{\rho \| \bar{\theta}^{\ast} \| \left(
        \frac{\nu}{\| \bar{\theta} \|} \right)}{[1 + (1 - \rho^2) \|
        \bar{\theta}^{\ast} \|^2]}, \beta \leftarrow \nu^{\ast}$, and use previous Lemma, we give the following.
\small
\normalsize
      With Lemma for relations of unit vectors, we obtain $\hat{e}_2+\rho \vec{e}_2=\sqrt{1-\rho^2} \frac{\theta}{\|\theta\|}$, and by definition $\vec{e}_1=\frac{\theta}{\|\theta\|}$ and $\vec{e}_2=-\hat{e}_2+ \sqrt{1-\rho^2} \frac{\theta}{\|\theta\|}$.
      \begin{eqnarray*}
        \vec{e}_1 - \frac{\rho \sqrt{1 -
        \rho^2} \| \bar{\theta}^{\ast} \|^2}{1 + (1 - \rho^2) \|
        \bar{\theta}^{\ast} \|^2} \vec{e}_2
        = \left(1 + (1 - \rho^2) \|
        \bar{\theta}^{\ast} \|^2 \right)^{-1} 
        \left[ \frac{\theta}{\|\theta\|} + \sqrt{1-\rho^2}\|
        \bar{\theta}^{\ast} \|^2 \hat{e}_2 \right]
      \end{eqnarray*}
      Hence, this Lemma is proved by rearranging the terms and using the above relation for vectors.
\end{proof}

\begin{lemma}
        For the 2MLR at the population level, let $\rho \assign \frac{\langle \theta,
        \theta^{\ast} \rangle}{\| \theta \| \cdot \| \theta^{\ast} \|}, \nu^\ast \assign \frac{\log \pi^\ast(1)-\log \pi^\ast(2)}{2}$, 
        \begin{eqnarray*}
          &  & \mathbb{E}_{s \sim p (s \mid \theta^{\ast}, \pi^{\ast})} \log \cosh
          \left( \frac{y \langle x, \theta \rangle}{\sigma^2} + \nu \right)\\
          & = & \log \cosh (\nu) \ast (\pi \| \bar{\theta} \|)^{- 1}  (1 + (1 -
          \rho^2) \| \bar{\theta}^{\ast} \|^2)^{- \frac{1}{2}}\\
          &  & K_0 \left( \frac{\sqrt{1 + \| \bar{\theta}^{\ast} \|^2} \cdot \left|
          \frac{\nu}{\| \bar{\theta} \|} \right|}{[1 + (1 - \rho^2) \|
          \bar{\theta}^{\ast} \|^2]} \right) \cosh^{- 1} (\nu^{\ast}) \cosh \left(
          \frac{\rho \| \bar{\theta}^{\ast} \| \left( \frac{\nu}{\| \bar{\theta} \|}
          \right)}{[1 + (1 - \rho^2) \| \bar{\theta}^{\ast} \|^2]} - \nu^{\ast}
          \right)
        \end{eqnarray*}
\end{lemma}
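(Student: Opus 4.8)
The plan is to recognize the target expectation $U(\theta,\nu):=\mathbb{E}_{s\sim p(s\mid\theta^\ast,\pi^\ast)}\log\cosh(\frac{y\langle x,\theta\rangle}{\sigma^2}+\nu)$ as the convolution $\log\cosh(\nu)\ast g(\nu)$, where $g(\nu)$ is \emph{exactly} the mixture-of-Gaussians density whose closed form (the $K_0$-times-$\cosh$ expression with prefactor $\cosh^{-1}(\nu^\ast)$) was established in the immediately preceding lemma. Concretely, I would first invoke the expectation-decomposition lemma to write $\mathbb{E}_{s\sim p(s\mid\theta^\ast,\pi^\ast)}=\mathbb{E}_{x\sim\mathcal{N}(0,I_d)}\mathbb{E}_{y\sim\mathcal{N}(\langle x,\theta^\ast\rangle,\sigma^2)}[\pi^\ast(1)+\pi^\ast(2)\mathcal{F}_{-y}]$, so that $U(\theta,\nu)$ splits into a $\pi^\ast(1)$ term built on $\log\cosh(\frac{y\langle x,\theta\rangle}{\sigma^2}+\nu)$ and a $\pi^\ast(2)$ term built on $\log\cosh(\frac{-y\langle x,\theta\rangle}{\sigma^2}+\nu)$.

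The key step is the inner expectation over $y$ for fixed $x$. Conditioned on $x$ with $y\sim\mathcal{N}(\langle x,\theta^\ast\rangle,\sigma^2)$, the quantity $\mu:=\frac{y\langle x,\theta\rangle}{\sigma^2}$ is Gaussian with mean $\langle x,\bar\theta^\ast\rangle\langle x,\bar\theta\rangle$ and variance $\langle x,\bar\theta\rangle^2$ (the $\sigma$-factors cancel precisely upon rescaling to $\bar\theta,\bar\theta^\ast$). Hence the inner $y$-expectation equals $\int\log\cosh(t+\nu)\,\mathcal{N}(t;\langle x,\bar\theta^\ast\rangle\langle x,\bar\theta\rangle,\langle x,\bar\theta\rangle^2)\,\mathd t$. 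Substituting $t\mapsto -t$ and using that $\log\cosh$ is even rewrites this as $\int\log\cosh(\nu-s)\,\mathcal{N}(s;-\langle x,\bar\theta^\ast\rangle\langle x,\bar\theta\rangle,\langle x,\bar\theta\rangle^2)\,\mathd s$, i.e. a convolution of $\log\cosh$ with the Gaussian density of mean $-\langle x,\bar\theta^\ast\rangle\langle x,\bar\theta\rangle$. Treating the $\pi^\ast(2)\mathcal{F}_{-y}$ term identically produces the convolution with the Gaussian of mean $+\langle x,\bar\theta^\ast\rangle\langle x,\bar\theta\rangle$; the resulting sign pattern (flipped mean paired with $\pi^\ast(1)$, unflipped mean paired with $\pi^\ast(2)$) matches exactly the mixture appearing in the preceding lemma.

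Having written each conditional term as $\log\cosh(\nu)$ convolved with a Gaussian density in the integration variable, I would then interchange the convolution integral over $\nu'$ with the outer expectation over $x$ via Fubini/Tonelli, yielding $U(\theta,\nu)=\log\cosh(\nu)\ast g(\nu)$ with $g(\nu)=\mathbb{E}_x[\pi^\ast(1)\mathcal{N}(\nu;-\langle x,\bar\theta^\ast\rangle\langle x,\bar\theta\rangle,\langle x,\bar\theta\rangle^2)+\pi^\ast(2)\mathcal{N}(\nu;+\langle x,\bar\theta^\ast\rangle\langle x,\bar\theta\rangle,\langle x,\bar\theta\rangle^2)]$. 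Applying the preceding lemma to substitute the closed form of $g(\nu)$ then delivers the stated identity.

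The main obstacle is justifying the interchange of the convolution integral and the $x$-expectation: because $\log\cosh(u)$ grows linearly in $|u|$, absolute integrability is not automatic, but it follows since $\mathbb{E}_x$ of the first absolute moment of the conditional Gaussian is finite (all Gaussian moments exist and $\langle x,\bar\theta\rangle$ is square-integrable), so $\log\cosh(\nu-\nu')\,g(\nu')$ is jointly integrable for each fixed $\nu$ and Fubini applies. The remaining work—the change of variables and the sign bookkeeping across the two mixture components—is routine once the convolution structure has been identified.
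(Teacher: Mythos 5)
Your proposal is correct and follows essentially the same route as the paper: decompose the expectation via the $\pi^{\ast}(1)+\pi^{\ast}(2)\mathcal{F}_{-y}$ operator, recognize the conditional law of $\pm\frac{y\langle x,\theta\rangle}{\sigma^2}$ given $x$ as the Gaussian $\mathcal{N}(\mp\langle x,\bar{\theta}^{\ast}\rangle\langle x,\bar{\theta}\rangle,\langle x,\bar{\theta}\rangle^2)$, rewrite each conditional term as $\log\cosh(\nu)$ convolved with that density, interchange the $x$-expectation with the convolution, and plug in the closed form of the mixture expectation from the preceding lemma. Your explicit integrability argument for the Fubini step (using $\log\cosh(u)\leq|u|$ and finiteness of the relevant Gaussian moments) is a welcome addition that the paper leaves implicit.
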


      \begin{proof}
        Note that $\mathbb{E}_{s \sim p (s \mid \theta^{\ast}, \pi^{\ast})}
        =\mathbb{E}_{x \sim \mathcal{N} (0, I_d)} \mathbb{E}_{y \sim \mathcal{N}
        (\langle x, \theta^{\ast} \rangle, \sigma^2)} [\pi^{\ast} (1) + \pi^{\ast}
        (2) \mathcal{F}_{- y}]$ in a previous Lemma;

        and for convolution $(f \ast g) (\nu) = \int_{- \infty}^{+ \infty} f
        (\nu - \nu') g (\nu') \mathd \nu'$, we can exchange expectation and
        convolution.

        Let $\nu' \assign - \frac{y \langle x, \theta \rangle}{\sigma^2}$, then
        $\nu' \sim \mathcal{N} \left( - \frac{\langle x, \theta^{\ast} \rangle
        \langle x, \theta \rangle}{\sigma^2}, \frac{\langle x, \theta
        \rangle^2}{\sigma^2} \right)$; and $\nu'' \assign \frac{y \langle x, \theta
        \rangle}{\sigma^2}$, then $\nu'' \sim \mathcal{N} \left( \frac{\langle x,
        \theta^{\ast} \rangle \langle x, \theta \rangle}{\sigma^2}, \frac{\langle x,
        \theta \rangle^2}{\sigma^2} \right)$.
        \begin{eqnarray*}
          &  & \mathbb{E}_{s \sim p (s \mid \theta^{\ast}, \pi^{\ast})} \log \cosh
          \left( \frac{y \langle x, \theta \rangle}{\sigma^2} + \nu \right)
          = \mathbb{E}_{x \sim \mathcal{N} (0, I_d)} \mathbb{E}_{y \sim \mathcal{N}
          (\langle x, \theta^{\ast} \rangle, \sigma^2)} [\pi^{\ast} (1) + \pi^{\ast}
          (2) \mathcal{F}_{- y}] 
          \log \cosh
          \left( \frac{y \langle x, \theta \rangle}{\sigma^2} + \nu \right)
          \\
          & = & \mathbb{E}_{x \sim \mathcal{N} (0, I_d)} \mathbb{E}_{y \sim
          \mathcal{N} (\langle x, \theta^{\ast} \rangle, \sigma^2)} \left[
          \pi^{\ast} (1) \log \cosh \left( \frac{y \langle x, \theta
          \rangle}{\sigma^2} + \nu \right) + \pi^{\ast} (2) \log \cosh \left( -
          \frac{y \langle x, \theta \rangle}{\sigma^2} + \nu \right) \right]\\
          & = & \pi^{\ast} (1) \mathbb{E}_{x \sim \mathcal{N} (0, I_d)}
          {\mathbb{E}_{\nu' \sim \mathcal{N} \left( - \frac{\langle x, \theta^{\ast}
          \rangle \langle x, \theta \rangle}{\sigma^2}, \frac{\langle x, \theta
          \rangle^2}{\sigma^2} \right)}}  \log \cosh (\nu - \nu')\\
          & + & \pi^{\ast} (2) \mathbb{E}_{x \sim \mathcal{N} (0, I_d)}
          {\mathbb{E}_{\nu'' \sim \mathcal{N} \left( \frac{\langle x, \theta^{\ast}
          \rangle \langle x, \theta \rangle}{\sigma^2}, \frac{\langle x, \theta
          \rangle^2}{\sigma^2} \right)}}  \log \cosh (\nu - \nu'')\\
          & = & \log \cosh (\nu) \ast \mathbb{E}_{x \sim \mathcal{N} (0, I_d)}
          \left[\pi^{\ast}
          (1) \mathcal{N} (- \langle x, \bar{\theta}^{\ast} \rangle \langle x,
          \bar{\theta} \rangle, \langle x, \bar{\theta} \rangle^2) + \pi^{\ast} (2)
          \mathcal{N} (\langle x, \bar{\theta}^{\ast} \rangle \langle x,
          \bar{\theta} \rangle, \langle x, \bar{\theta} \rangle^2)\right]
        \end{eqnarray*}
        Then with closed-form expression in the previous Lemma, we further prove this Lemma.
      \end{proof}

    \begin{lemma}
        For the 2MLR at the population level, let $\rho \assign \frac{\langle \theta,
        \theta^{\ast} \rangle}{\| \theta \| \cdot \| \theta^{\ast} \|}, \nu^\ast \assign \frac{\log \pi^\ast(1)-\log \pi^\ast(2)}{2}$, 
        \begin{eqnarray*}
            &  & \mathbb{E}_{s \sim p (s \mid \theta^{\ast}, \pi^{\ast})} \tanh
            \left( \frac{y \langle x, \theta \rangle}{\sigma^2} + \nu \right)\\
            & = & \tanh (\nu) \ast (\pi \| \bar{\theta} \|)^{- 1}  (1 + (1 -
            \rho^2) \| \bar{\theta}^{\ast} \|^2)^{- \frac{1}{2}}\\
            &  & K_0 \left( \frac{\sqrt{1 + \| \bar{\theta}^{\ast} \|^2} \cdot \left|
            \frac{\nu}{\| \bar{\theta} \|} \right|}{[1 + (1 - \rho^2) \|
            \bar{\theta}^{\ast} \|^2]} \right) \cosh^{- 1} (\nu^{\ast}) \cosh \left(
            \frac{\rho \| \bar{\theta}^{\ast} \| \left( \frac{\nu}{\| \bar{\theta} \|}
            \right)}{[1 + (1 - \rho^2) \| \bar{\theta}^{\ast} \|^2]} - \nu^{\ast}
            \right)\\
            & = & \frac{\left(1+\left(1-\rho^2\right)\|\bar{\theta}^\ast\|^2\right)^{-\frac{1}{2}}}{\pi\|\bar{\theta}\| \cosh \left(\nu^*\right)} 
            \int_{\mathbb{R}} \tanh \left(\nu-\nu'\right) 
            K_0\left(\frac{\sqrt{1+\|\bar{\theta}^\ast\|^2} \cdot\left|\frac{\nu'}{\|\bar{\theta}\|}\right|}{1+\left(1-\rho^2\right)\|\bar{\theta}^\ast\|^2}\right) 
            \cosh \left(\frac{\rho\left\|\bar{\theta}^\ast\right\|\left(\frac{\nu'}{\|\bar{\theta}\|}\right)}{1+\left(1-\rho^2\right)\|\bar{\theta}^\ast\|^2}-\nu^*\right) \mathrm{d} \nu'
          \end{eqnarray*}
    \end{lemma}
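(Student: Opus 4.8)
The plan is to exploit the fact that this statement is essentially the $\nu$-derivative of the preceding $\log\cosh$ lemma, since $\frac{\mathd}{\mathd\nu}\log\cosh(\nu)=\tanh(\nu)$ and, at the population level, $N(\theta,\nu)=\nabla_\nu U(\theta,\nu)$ with $U(\theta,\nu)=\mathbb{E}_{s\sim p(s\mid\theta^\ast,\pi^\ast)}\log\cosh(\frac{y\langle x,\theta\rangle}{\sigma^2}+\nu)$. First I would write $\tanh(\frac{y\langle x,\theta\rangle}{\sigma^2}+\nu)=\partial_\nu\log\cosh(\frac{y\langle x,\theta\rangle}{\sigma^2}+\nu)$ and interchange the derivative with the expectation $\mathbb{E}_{s\sim p(s\mid\theta^\ast,\pi^\ast)}$, which is legitimate since $|\tanh|\le 1$ uniformly, so dominated convergence applies. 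This reduces the left-hand side to $\partial_\nu U(\theta,\nu)$. The preceding lemma already supplies $U(\theta,\nu)$ itself in closed form as the convolution $\log\cosh(\nu)\ast h(\nu)$, where $h(\nu)$ denotes the $x$-averaged Gaussian-mixture density $(\pi\|\bar{\theta}\|)^{-1}(1+(1-\rho^2)\|\bar{\theta}^\ast\|^2)^{-\frac{1}{2}}K_0(\cdots)\cosh^{-1}(\nu^\ast)\cosh(\cdots-\nu^\ast)$.

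Next I would differentiate this convolution in $\nu$. Since $h$ does not depend on $\nu$ and is integrable, and $\log\cosh$ has bounded derivative $\tanh$, the identity $\partial_\nu(\log\cosh\ast h)=(\partial_\nu\log\cosh)\ast h=\tanh\ast h$ holds (again by dominated convergence applied inside the convolution integral). This yields the first claimed form $\tanh(\nu)\ast h(\nu)$ with exactly the stated $K_0$-kernel. The second, fully written-out form is then obtained purely by unfolding the convolution definition $(\tanh\ast h)(\nu)=\int_{\mathbb{R}}\tanh(\nu-\nu')\,h(\nu')\,\mathd\nu'$ and substituting the explicit $h(\nu')$, which matches the integral displayed in the statement verbatim.

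An equally valid alternative, avoiding differentiation entirely, is to repeat the proof of the $\log\cosh$ lemma verbatim with $\tanh$ in place of $\log\cosh$: invoke the expectation-splitting lemma $\mathbb{E}_{s\sim p(s\mid\theta^\ast,\pi^\ast)}=\mathbb{E}_{x}\mathbb{E}_{y\sim\mathcal{N}(\langle x,\theta^\ast\rangle,\sigma^2)}[\pi^\ast(1)+\pi^\ast(2)\mathcal{F}_{-y}]$, change variables $\nu'\assign-\frac{y\langle x,\theta\rangle}{\sigma^2}$ and $\nu''\assign\frac{y\langle x,\theta\rangle}{\sigma^2}$ to recognize the two resulting terms as convolutions of $\tanh$ against the Gaussian densities $\mathcal{N}(\mp\langle x,\bar{\theta}^\ast\rangle\langle x,\bar{\theta}\rangle,\langle x,\bar{\theta}\rangle^2)$, and finally apply the earlier lemma evaluating $\mathbb{E}_x[\pi^\ast(1)\mathcal{N}(-\cdots)+\pi^\ast(2)\mathcal{N}(+\cdots)]$ to reach the $K_0$ closed form. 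The only point requiring care in either route is the legitimacy of interchanging the limiting operations (derivative or convolution with the expectation), but this is routine given the uniform bound $|\tanh|\le 1$; no estimate beyond those already used in the $\log\cosh$ lemma is needed, so I expect no genuine obstacle here.
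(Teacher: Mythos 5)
Your proposal is correct, and your second route (repeating the $\log\cosh$ lemma's proof with $\tanh$ substituted) is exactly what the paper does — its proof reads, in full, "We follow the same steps in the proof of the previous Lemma, but substitute $\log\cosh$ with $\tanh$." Your primary route via $\partial_\nu$ of the $\log\cosh$ identity is an equivalent repackaging (the paper itself records $N(\theta,\nu)=\nabla_\nu U(\theta,\nu)$ in Section~\ref{sec:updates}), and the dominated-convergence justification you give for the interchange is sound.
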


    \begin{proof}
        We follow the same steps in the proof of the previous Lemma, but substitute $\log \cosh$ with $\tanh$.
    \end{proof}

    \begin{lemma}
        For the 2MLR at the population level, let $\rho \assign \frac{\langle \theta,
        \theta^{\ast} \rangle}{\| \theta \| \cdot \| \theta^{\ast} \|}, \nu^\ast \assign \frac{\log \pi^\ast(1)-\log \pi^\ast(2)}{2}$, 
        \begin{eqnarray*}
          &  & \mathbb{E}_{s \sim p (s \mid \theta^{\ast}, \pi^{\ast})} \tanh
          \left( \frac{y \langle x, \theta \rangle}{\sigma^2} + \nu \right) y x\\
          & = & - \sigma \frac{1}{\pi} \cdot \frac{\| \bar{\theta}^{\ast} \|^2}{\|
          \bar{\theta} \|^2} \cdot \frac{\sqrt{1 - \rho^2} \cosh^{- 1}
          (\nu^{\ast})}{{(1 + (1 - \rho^2) \| \bar{\theta}^{\ast}
          \|^2)^{\frac{3}{2}}} } \tanh (\nu) \ast \nu\\
          & \Bigg\{ &  \cosh \left( \frac{\rho \| \bar{\theta}^{\ast} \| \left(
          \frac{\nu}{\| \bar{\theta} \|} \right)}{[1 + (1 - \rho^2) \|
          \bar{\theta}^{\ast} \|^2]} - \nu^{\ast} \right) \cdot K_0 \left(
          \frac{\sqrt{1 + \| \bar{\theta}^{\ast} \|^2} \cdot \left| \frac{\nu}{\|
          \bar{\theta} \|} \right|}{[1 + (1 - \rho^2) \| \bar{\theta}^{\ast} \|^2]}
          \right) \left[ \frac{1}{\sqrt{1 - \rho^2} \| \bar{\theta}^{\ast} \|^2}
          \cdot \frac{\bar{\theta}}{\| \bar{\theta} \|} + \hat{e}_2 \right]\\
          &  & + \tmop{sgn} (\nu) \frac{\sqrt{1 + \| \bar{\theta}^{\ast} \|^2}}{\|
          \bar{\theta}^{\ast} \|} \sinh \left( \frac{\rho \| \bar{\theta}^{\ast} \|
          \left( \frac{\nu}{\| \bar{\theta} \|} \right)}{[1 + (1 - \rho^2) \|
          \bar{\theta}^{\ast} \|^2]} - \nu^{\ast} \right) \cdot K_1 \left(
          \frac{\sqrt{1 + \| \bar{\theta}^{\ast} \|^2} \cdot \left| \frac{\nu}{\|
          \bar{\theta} \|} \right|}{[1 + (1 - \rho^2) \| \bar{\theta}^{\ast} \|^2]}
          \right) \vec{e}_2 
          \Bigg\}
        \end{eqnarray*}
    \end{lemma}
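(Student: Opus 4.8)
The plan is to mirror the convolution-based derivation already used for the scalar $\log\cosh$ and $\tanh$ lemmas, but now the extra vector factor $yx$ will supply two things at once: the multiplier $\nu$ inside the convolution and the quantity $\tfrac{x}{\langle x,\bar{\theta}\rangle}$ whose expectation against the weighted Gaussian density was computed in the immediately preceding lemma. First I would invoke the decomposition lemma to replace $\mathbb{E}_{s\sim p(s\mid\theta^{\ast},\pi^{\ast})}$ by $\mathbb{E}_{x\sim\mathcal{N}(0,I_d)}\mathbb{E}_{y\sim\mathcal{N}(\langle x,\theta^{\ast}\rangle,\sigma^2)}[\pi^{\ast}(1)+\pi^{\ast}(2)\mathcal{F}_{-y}]$, so that the $\pi^{\ast}(1)$ summand retains $\tanh(\tfrac{y\langle x,\theta\rangle}{\sigma^2}+\nu)\,yx$, while the $\pi^{\ast}(2)$ summand becomes $\tanh(-\tfrac{y\langle x,\theta\rangle}{\sigma^2}+\nu)\,(-y)x$, because $\mathcal{F}_{-y}$ flips \emph{every} occurrence of $y$ in the integrand, including the factor $y$ sitting in $yx$.

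Next, for the $\pi^{\ast}(1)$ summand I would set $\nu'\assign-\tfrac{y\langle x,\theta\rangle}{\sigma^2}$, giving $\nu'\sim\mathcal{N}(-\langle x,\bar{\theta}^{\ast}\rangle\langle x,\bar{\theta}\rangle,\langle x,\bar{\theta}\rangle^2)$ and $yx=-\sigma\nu'\tfrac{x}{\langle x,\bar{\theta}\rangle}$; for the $\pi^{\ast}(2)$ summand I would set $\nu''\assign\tfrac{y\langle x,\theta\rangle}{\sigma^2}$, giving $\nu''\sim\mathcal{N}(\langle x,\bar{\theta}^{\ast}\rangle\langle x,\bar{\theta}\rangle,\langle x,\bar{\theta}\rangle^2)$ and $(-y)x=-\sigma\nu''\tfrac{x}{\langle x,\bar{\theta}\rangle}$. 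The decisive observation is that the sign flip of $y$ induced by $\mathcal{F}_{-y}$ makes both summands carry the \emph{identical} prefactor $-\sigma\,(\cdot)\tfrac{x}{\langle x,\bar{\theta}\rangle}$, so after relabelling the dummy variable the two Gaussian densities reassemble into exactly the weighted density $\pi^{\ast}(1)\mathcal{N}(-\langle x,\bar{\theta}^{\ast}\rangle\langle x,\bar{\theta}\rangle,\langle x,\bar{\theta}\rangle^2)+\pi^{\ast}(2)\mathcal{N}(\langle x,\bar{\theta}^{\ast}\rangle\langle x,\bar{\theta}\rangle,\langle x,\bar{\theta}\rangle^2)$ that appears in the preceding lemma.

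I would then recognize the remaining $\nu'$-integral as a convolution in $\nu$. Writing $(f\ast g)(\nu)=\int_{\mathbb{R}} f(\nu-\nu')g(\nu')\,\mathrm{d}\nu'$ with $f=\tanh$ and denoting
\[
 H(\nu')\assign\mathbb{E}_{x\sim\mathcal{N}(0,I_d)}\tfrac{x}{\langle x,\bar{\theta}\rangle}\bigl[\pi^{\ast}(1)\mathcal{N}(-\langle x,\bar{\theta}^{\ast}\rangle\langle x,\bar{\theta}\rangle,\langle x,\bar{\theta}\rangle^2)+\pi^{\ast}(2)\mathcal{N}(\langle x,\bar{\theta}^{\ast}\rangle\langle x,\bar{\theta}\rangle,\langle x,\bar{\theta}\rangle^2)\bigr](\nu'),
\]
the whole expectation collapses to $-\sigma\,\bigl[\tanh(\nu)\ast(\nu\,H(\nu))\bigr]$, after justifying the interchange of $\mathbb{E}_x$ with the $\nu'$-integral by Fubini (the Gaussian weight together with the $K_0,K_1$ decay established earlier guarantees absolute integrability). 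Since the preceding lemma already gives the closed form of $H$ as a $\nu$-independent scalar times the vector brace $\{\cdots\}$, substituting it and pulling that scalar out of the convolution reproduces the claimed expression verbatim, including the $\tanh(\nu)\ast\nu$ structure and the $K_0$/$K_1$ terms weighted by $\hat{e}_2$, $\vec{e}_2$ and $\bar{\theta}/\|\bar{\theta}\|$.

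The main obstacle is the sign-and-scaling bookkeeping across the two mixture components: one must verify that $\mathcal{F}_{-y}$ acts on the entire integrand $\tanh(\cdot)\,yx$ rather than on $\tanh$ alone, that the reparametrizations $y\mapsto\nu',\nu''$ each produce the correct Gaussian density with no spurious Jacobian beyond the explicit $\sigma$ factors, and—most importantly—that the two $-\sigma\nu'$ and $-\sigma\nu''$ prefactors end up with a \emph{common} sign, which is precisely what allows the weighted density $H$ of the earlier lemma to be recovered intact. Once this alignment is confirmed, the remainder is a direct citation of that lemma and an elementary pull-out of the constant prefactor from the convolution.
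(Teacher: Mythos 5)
Your proposal is correct and follows essentially the same route as the paper's own proof: the same decomposition via $\pi^{\ast}(1)+\pi^{\ast}(2)\mathcal{F}_{-y}$, the same substitutions $\nu'=-\tfrac{y\langle x,\theta\rangle}{\sigma^2}$ and $\nu''=+\tfrac{y\langle x,\theta\rangle}{\sigma^2}$ producing the common prefactor $-\sigma\,\tfrac{x}{\langle x,\bar{\theta}\rangle}$, the same reassembly into the weighted density $\pi^{\ast}(1)\mathcal{N}(-\cdot)+\pi^{\ast}(2)\mathcal{N}(+\cdot)$, and the same final citation of the preceding lemma for $\mathbb{E}_{x}\bigl[\tfrac{x}{\langle x,\bar{\theta}\rangle}\bigr](\cdots)$. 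The sign-and-scaling bookkeeping you flag as the main obstacle is exactly the content of the paper's displayed computation, and your handling of it (including that $\mathcal{F}_{-y}$ acts on the factor $y$ in $yx$ as well as on the $\tanh$) is correct.
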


    \begin{proof}
        Note that $\mathbb{E}_{s \sim p (s \mid \theta^{\ast}, \pi^{\ast})}
        =\mathbb{E}_{x \sim \mathcal{N} (0, I_d)} \mathbb{E}_{y \sim \mathcal{N}
        (\langle x, \theta^{\ast} \rangle, \sigma^2)} [\pi^{\ast} (1) + \pi^{\ast}
        (2) \mathcal{F}_{- y}]$ in a previous Lemma;

        and for convolution $(f \ast g) (\nu) = \int_{- \infty}^{+ \infty} f
        (\nu - \nu') g (\nu') \mathd \nu'$, we can exchange expectation and
        convolution.

        Let $\nu' \assign - \frac{y \langle x, \theta \rangle}{\sigma^2}$, then
        $\nu' \sim \mathcal{N} \left( - \frac{\langle x, \theta^{\ast} \rangle
        \langle x, \theta \rangle}{\sigma^2}, \frac{\langle x, \theta
        \rangle^2}{\sigma^2} \right)$; and $\nu'' \assign \frac{y \langle x, \theta
        \rangle}{\sigma^2}$, then $\nu'' \sim \mathcal{N} \left( \frac{\langle x,
        \theta^{\ast} \rangle \langle x, \theta \rangle}{\sigma^2}, \frac{\langle x,
        \theta \rangle^2}{\sigma^2} \right)$.
        \begin{eqnarray*}
          &  & \mathbb{E}_{s \sim p (s \mid \theta^{\ast}, \pi^{\ast})} \tanh
          \left( \frac{y \langle x, \theta \rangle}{\sigma^2} + \nu \right) x y\\
          & = & \mathbb{E}_{x \sim \mathcal{N} (0, I_d)} x \cdot \mathbb{E}_{y \sim
          \mathcal{N} (\langle x, \theta^{\ast} \rangle, \sigma^2)} [\pi^{\ast} (1)
          + \pi^{\ast} (2) \mathcal{F}_{- y}] \tanh \left( \frac{y \langle x, \theta
          \rangle}{\sigma^2} + \nu \right) y\\
          & = & \mathbb{E}_{x \sim \mathcal{N} (0, I_d)} x\mathbb{E}_{y \sim
          \mathcal{N} (\langle x, \theta^{\ast} \rangle, \sigma^2)} \left[
          \pi^{\ast} (1) \tanh \left( \frac{y \langle x, \theta \rangle}{\sigma^2} +
          \nu \right) y - \pi^{\ast} (2) \tanh \left( - \frac{y \langle x, \theta
          \rangle}{\sigma^2} + \nu \right) y \right]\\
          & = & - \pi^{\ast} (1) \mathbb{E}_{x \sim \mathcal{N} (0, I_d)} \left[
          \frac{x \sigma^2}{\langle x, \theta \rangle} \right] {\mathbb{E}_{\nu'
          \sim \mathcal{N} \left( - \frac{\langle x, \theta^{\ast} \rangle \langle
          x, \theta \rangle}{\sigma^2}, \frac{\langle x, \theta \rangle^2}{\sigma^2}
          \right)}}  \tanh (\nu - \nu') \nu'\\
          &  & - \pi^{\ast} (2) \mathbb{E}_{x \sim \mathcal{N} (0, I_d)} \left[
          \frac{x \sigma^2}{\langle x, \theta \rangle} \right] {\mathbb{E}_{\nu''
          \sim \mathcal{N} \left( \frac{\langle x, \theta^{\ast} \rangle \langle x,
          \theta \rangle}{\sigma^2}, \frac{\langle x, \theta \rangle^2}{\sigma^2}
          \right)}}  \tanh (\nu - \nu'') \nu''\\
          & = & - \sigma \tanh (\nu) \ast \nu \mathbb{E}_{x \sim \mathcal{N} (0,
          I_d)} \left[ \frac{x}{\langle x, \bar{\theta} \rangle} \right] (\pi^{\ast}
          (1) \mathcal{N} (- \langle x, \bar{\theta}^{\ast} \rangle \langle x,
          \bar{\theta} \rangle, \langle x, \bar{\theta} \rangle^2) + \pi^{\ast} (2)
          \mathcal{N} (\langle x, \bar{\theta}^{\ast} \rangle \langle x,
          \bar{\theta} \rangle, \langle x, \bar{\theta} \rangle^2))
        \end{eqnarray*}
    Then, with the previous Lemma of expectaions for 2MLR, we further prove the closed-form expression.
    \end{proof}

\section{Derivations for EM Update Rules}\label{sup:derive_em}
\begin{lemma}
  The negative expected log-likelihood $f (\theta, \pi) \assign -\mathbb{E}_{s \sim p (s \mid
  \theta^{\ast}, \pi^{\ast})} [\log p (s \mid \theta, \pi)]$ for the mixture model of $s:=(x, y), z\in[M]$ 
  with the mixing weights $\pi^\ast\in\mathbb{R}^M$ and regression parameters $\theta$ is as follows.
  \begin{eqnarray*}
      -f(\theta, \pi)   & = & -\tmop{KL}_s [p (s \mid \theta^{\ast}, \pi^{\ast}) | | p (s \mid
      \theta, \pi)] -\mathcal{H}_s [p (s \mid \theta^{\ast}, \pi^{\ast})]\\
      & = & \mathbb{E}_{s \sim p (s \mid \theta^{\ast}, \pi^{\ast})} [\log p (s
      \mid \theta, \pi)]\\
      & = & \mathbb{E}_{s \sim p (s \mid \theta^{\ast}, \pi^{\ast})}
      \mathbb{E}_{z \sim q_s (z)} \log p (s, z \mid \theta, \pi) +\mathbb{E}_{s
      \sim p (s \mid \theta^{\ast}, \pi^{\ast})} \mathcal{H}_z [q_s (z)]
      +\mathbb{E}_{s \sim p (s \mid \theta^{\ast}, \pi^{\ast})} \tmop{KL}_z [q_s
      (z) | | p (z \mid s ; \theta, \pi)]
  \end{eqnarray*}
  where $\tmop{KL}_s, \mathcal{H}_s$ are KL divengence and Shannon's
  entropy wrt. $s = (x, y)$;
  
  $\tmop{KL}_z, \mathcal{H}_z, \tmop{softmax}_z$ are KL divengence, Shannon's
  entropy and softmax wrt. $z \in \mathcal{Z}= [M]$;
  
  $\{ q_s (z) \mid s \in \mathcal{X} \times \mathcal{Y}=\mathbb{R}^d \times
  \mathbb{R} \}$ is a family of distributions wrt. $z \in \mathcal{Z}= [M]$,
  namely $\sum_{z \in \mathcal{Z}} q_s (z) = 1$.
  \end{lemma}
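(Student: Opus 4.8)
The plan is to prove the two equalities separately, as each is a direct consequence of unfolding a standard definition. I would first dispatch the equality relating $-f(\theta,\pi)$ to a relative entropy and a Shannon entropy. Abbreviating $p^{\ast}\assign p(s\mid\theta^{\ast},\pi^{\ast})$, I expand
\begin{equation*}
  \tmop{KL}_s[p^{\ast}\,\|\,p(s\mid\theta,\pi)] = \mathbb{E}_{s\sim p^{\ast}}[\log p^{\ast}] - \mathbb{E}_{s\sim p^{\ast}}[\log p(s\mid\theta,\pi)], \quad \mathcal{H}_s[p^{\ast}] = -\mathbb{E}_{s\sim p^{\ast}}[\log p^{\ast}].
\end{equation*}
Adding the negatives of these two quantities, the $\mathbb{E}_{s\sim p^{\ast}}[\log p^{\ast}]$ terms cancel, leaving exactly $\mathbb{E}_{s\sim p^{\ast}}[\log p(s\mid\theta,\pi)]$, which equals $-f(\theta,\pi)$ by definition. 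This step is pure bookkeeping.

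For the ELBO-type equality I would first derive a pointwise identity at fixed $s$ for an arbitrary family $q_s$. Starting from the factorization $p(s,z\mid\theta,\pi) = p(z\mid s;\theta,\pi)\,p(s\mid\theta,\pi)$, I take logarithms and average over $z\sim q_s(z)$; since $\log p(s\mid\theta,\pi)$ is constant in $z$ and $q_s$ is normalized, it factors out of the $z$-expectation, giving
\begin{equation*}
  \mathbb{E}_{z\sim q_s}\log p(s,z\mid\theta,\pi) = \mathbb{E}_{z\sim q_s}\log p(z\mid s;\theta,\pi) + \log p(s\mid\theta,\pi).
\end{equation*}
Using the definitions $\mathcal{H}_z[q_s] = -\mathbb{E}_{z\sim q_s}\log q_s(z)$ and $\tmop{KL}_z[q_s\,\|\,p(z\mid s;\theta,\pi)] = \mathbb{E}_{z\sim q_s}[\log q_s(z) - \log p(z\mid s;\theta,\pi)]$, I rewrite $\mathbb{E}_{z\sim q_s}\log p(z\mid s;\theta,\pi) = -\mathcal{H}_z[q_s] - \tmop{KL}_z[q_s\,\|\,p(z\mid s;\theta,\pi)]$ and solve for $\log p(s\mid\theta,\pi)$ to obtain
\begin{equation*}
  \log p(s\mid\theta,\pi) = \mathbb{E}_{z\sim q_s}\log p(s,z\mid\theta,\pi) + \mathcal{H}_z[q_s] + \tmop{KL}_z[q_s\,\|\,p(z\mid s;\theta,\pi)].
\end{equation*}

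To conclude, I would take the outer expectation $\mathbb{E}_{s\sim p^{\ast}}[\cdot]$ of this pointwise identity and invoke linearity to distribute it over the three summands, which reproduces the claimed decomposition and matches $-f(\theta,\pi)$ via the first equality. There is no real analytic obstacle here: this is the classical EM/ELBO identity, and the only points needing mild care are verifying that the $z$-expectations are finite so the expectations can be split (immediate since $z\in[M]$ is discrete and the component densities are strictly positive), and keeping the subscripts $s$ and $z$ straight so that each entropy and each KL divergence is taken with respect to the correct variable.
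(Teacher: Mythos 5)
Your proposal is correct and follows essentially the same route as the paper: the first equality is the same cancellation of the $\mathbb{E}_{s\sim p^{\ast}}[\log p^{\ast}]$ terms, and the second is the standard ELBO decomposition obtained from the factorization $p(s,z\mid\theta,\pi)=p(z\mid s;\theta,\pi)\,p(s\mid\theta,\pi)$, which the paper implements by inserting $\sum_{z}q_s(z)=1$ and splitting $\log\bigl(\tfrac{p(s,z\mid\theta,\pi)}{q_s(z)}\cdot\tfrac{q_s(z)}{p(z\mid s;\theta,\pi)}\bigr)$ into the three terms. The only difference is presentational (you solve for $\log p(s\mid\theta,\pi)$ pointwise and then take the outer expectation, whereas the paper manipulates the full expectation directly), so no further comparison is needed.
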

  \begin{proof}
  Note that $p(s \mid \theta, \pi)=\frac{p(s, z \mid \theta, \pi)}{p(z \mid s ; \theta, \pi)}$, we obtain the following expression.
  \begin{eqnarray*}
      -f(\theta, \pi)   & = & -\tmop{KL}_s [p (s \mid \theta^{\ast}, \pi^{\ast}) | | p (s \mid
      \theta, \pi)] -\mathcal{H}_s [p (s \mid \theta^{\ast}, \pi^{\ast})]\\
      & = & \mathbb{E}_{s \sim p (s \mid \theta^{\ast}, \pi^{\ast})} [\log p (s
      \mid \theta, \pi)]\\
      & = & \mathbb{E}_{s \sim p (s \mid \theta^{\ast}, \pi^{\ast})} \left[
      \sum_{z \in \mathcal{Z}} q_s (z) \log p (s \mid \theta, \pi) \right]\\
      & = & \mathbb{E}_{s \sim p (s \mid \theta^{\ast}, \pi^{\ast})} \left[ 
      \sum_{z \in \mathcal{Z}} q_s (z) \log \left( \frac{p (s, z \mid \theta,
      \pi)}{q_s (z)} \cdot \frac{q_s (z)}{p (z \mid s ; \theta, \pi)} \right) \right]\\
      & = & \mathbb{E}_{s \sim p (s \mid \theta^{\ast}, \pi^{\ast})}
      \mathbb{E}_{z \sim q_s (z)} \log p (s, z \mid \theta, \pi) +\mathbb{E}_{s
      \sim p (s \mid \theta^{\ast}, \pi^{\ast})} \mathcal{H}_z [q_s (z)]
      +\mathbb{E}_{s \sim p (s \mid \theta^{\ast}, \pi^{\ast})} \tmop{KL}_z [q_s
      (z) | | p (z \mid s ; \theta, \pi)]
  \end{eqnarray*}
  \end{proof}
  
  \begin{lemma}
      The surrogate function $g^t$ of $f (\theta, \pi) \assign -\mathbb{E}_{s \sim p (s \mid
      \theta^{\ast}, \pi^{\ast})} [\log p (s \mid \theta, \pi)]$ at $(t-1)$-th iteration $(\theta^{t-1}, \pi^{t-1})$ be expressed as follows.
      \begin{eqnarray*}
          -g^t(\theta, \pi)   
          & = & \Bigg\{ \mathbb{E}_{s \sim p (s \mid \theta^{\ast}, \pi^{\ast})}
          \mathbb{E}_{z \sim q_s (z)} \log p (s, z \mid \theta, \pi) +\mathbb{E}_{s
          \sim p (s \mid \theta^{\ast}, \pi^{\ast})} \mathcal{H}_z [q_s (z)]
          \Bigg\}_{q_s
          (z) =p (z \mid s ; \theta^{t-1}, \pi^{t-1})}
      \end{eqnarray*}
      that is $g^t(\theta, \pi)\geq f(\theta, \pi)$, and 
      $g^t(\theta, \pi)\mid_{(\theta, \pi)=(\theta^{t-1}, \pi^{t-1})}= f(\theta, \pi)\mid_{(\theta, \pi)=(\theta^{t-1}, \pi^{t-1})}$,
  
      $\nabla_\theta g^t(\theta, \pi)\mid_{(\theta, \pi)=(\theta^{t-1}, \pi^{t-1})}=  \nabla_\theta f(\theta, \pi)\mid_{(\theta, \pi)=(\theta^{t-1}, \pi^{t-1})}$;
      
      where $\tmop{KL}_s, \mathcal{H}_s$ are KL divengence and Shannon's
      entropy wrt. $s = (x, y)$;
      
      $\tmop{KL}_z, \mathcal{H}_z, \tmop{softmax}_z$ are KL divengence, Shannon's
      entropy and softmax wrt. $z \in \mathcal{Z}= [M]$
  \end{lemma}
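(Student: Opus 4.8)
The plan is to build directly on the decomposition established in the preceding lemma, which expresses $-f(\theta,\pi)$ for an \emph{arbitrary} family $\{q_s(z)\}$ as the sum of the expected complete-data log-likelihood $\mathbb{E}_s\mathbb{E}_{z\sim q_s}\log p(s,z\mid\theta,\pi)$, the entropy term $\mathbb{E}_s\mathcal{H}_z[q_s(z)]$, and a coupling term $\mathbb{E}_s\tmop{KL}_z[q_s(z)\,\|\,p(z\mid s;\theta,\pi)]$. I would obtain $-g^t$ by freezing the responsibilities at $q_s(z)=p(z\mid s;\theta^{t-1},\pi^{t-1})$ and discarding the KL term. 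Since only the KL term is dropped, the gap is exactly
\begin{equation}\nonumber
g^t(\theta,\pi)-f(\theta,\pi)=\mathbb{E}_{s\sim p(s\mid\theta^\ast,\pi^\ast)}\tmop{KL}_z\!\left[p(z\mid s;\theta^{t-1},\pi^{t-1})\,\|\,p(z\mid s;\theta,\pi)\right].
\end{equation}

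For the \textbf{majorization} claim $g^t\geq f$, I would invoke nonnegativity of the KL divergence pointwise in $s$, so the displayed gap is $\geq 0$ for every $(\theta,\pi)$. For \textbf{tangency}, evaluating the gap at $(\theta,\pi)=(\theta^{t-1},\pi^{t-1})$ makes the two arguments of each KL divergence identical, hence every KL term vanishes and $g^t(\theta^{t-1},\pi^{t-1})=f(\theta^{t-1},\pi^{t-1})$.

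For the \textbf{gradient matching} statement, the cleanest route is to observe that the gap displayed above is a nonnegative, smooth function of $(\theta,\pi)$ attaining the value $0$ at $(\theta^{t-1},\pi^{t-1})$; since $0$ is its global minimum, the first-order optimality condition forces $\nabla_\theta[g^t-f]$ to vanish there, giving $\nabla_\theta g^t=\nabla_\theta f$ at the current iterate. Alternatively, differentiating the KL term directly yields $\nabla_\theta(g^t-f)=-\mathbb{E}_s\sum_z p(z\mid s;\theta^{t-1},\pi^{t-1})\,\nabla_\theta\log p(z\mid s;\theta,\pi)$; evaluating at $\theta=\theta^{t-1}$ the responsibilities cancel the posterior appearing in the denominator of $\nabla_\theta\log p=\nabla_\theta p/p$, leaving $-\mathbb{E}_s\nabla_\theta\sum_z p(z\mid s;\theta,\pi)=-\mathbb{E}_s\nabla_\theta(1)=0$ by normalization of the posterior over $z\in[M]$.

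The main obstacle is not conceptual but a matter of regularity: one must justify interchanging $\nabla_\theta$ with the expectation over $s$ and with the finite sum over $z\in[M]$, and confirm that the posterior $p(z\mid s;\theta,\pi)$ is differentiable in $\theta$ with integrable gradients so that the minimizer/first-order argument applies. Under Assumptions~\ref{ass:1}--\ref{ass:4} the two-component posterior is a smooth logistic function of $\langle x,\theta\rangle$, and the requisite domination conditions follow from the Gaussianity of $x$; hence the interchange is legitimate and the three properties follow.
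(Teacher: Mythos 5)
Your proposal is correct and follows essentially the same route as the paper: identify the gap $r^t \assign g^t - f$ with the expected KL divergence $\mathbb{E}_{s}\tmop{KL}_z[q_s(z)\,\|\,p(z\mid s;\theta,\pi)]$ evaluated at frozen responsibilities, then read off nonnegativity, tangency, and gradient matching from the fact that this gap attains its global minimum of zero at the current iterate. Your second gradient argument (the responsibilities cancel the posterior in $\nabla_\theta p / p$, leaving $-\mathbb{E}_s\nabla_\theta\sum_z p(z\mid s;\theta,\pi)=0$ by normalization) is in fact a cleaner rendering of the paper's own chain-rule step, and your attention to the interchange of differentiation and expectation is a point the paper glosses over.
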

  \begin{proof}
      Let $r^t \assign g^t -f$, note that $r^t = \mathbb{E}_{s \sim p (s \mid \theta^{\ast}, \pi^{\ast})} \tmop{KL}_z [q_s
      (z) | | p (z \mid s ; \theta, \pi)]_{q_s
      (z) =p (z \mid s ; \theta^{t-1}, \pi^{t-1})}\geq 0$, and 
      \begin{eqnarray*}
          r^t(\theta^{t-1}, \pi^{t-1}) &=& \mathbb{E}_{s \sim p (s \mid \theta^{\ast}, \pi^{\ast})} \tmop{KL}_z [q_s
          (z) | | p (z \mid s ; \theta^{t-1}, \pi^{t-1})]_{q_s
          (z) =p (z \mid s ; \theta^{t-1}, \pi^{t-1})}= 0\\
          \left[\frac{\mathd r^t}{\mathd p (z \mid s ; \theta, \pi)}\right]_{(\theta, \pi)=(\theta^{t-1}, \pi^{t-1})} &=& 
          - \frac{q_s(z) }{ p (z \mid s ; \theta, \pi)}_{q_s
          (z) =p (z \mid s ; \theta^{t-1}, \pi^{t-1}), (\theta, \pi)=(\theta^{t-1}, \pi^{t-1})} = 0
      \end{eqnarray*}
      Hence, the gradients of $r^t$ wrt. $(\theta, \pi)$ at $(t-1)$-th iteration $(\theta^{t-1}, \pi^{t-1})$ are all 0 by the chain rule. 
  \end{proof}

  \newpage
  \begin{lemma}
  Assuming $(z; \pi)\ind \theta$, and $\pi \ind s \mid z$, and $x\ind (z; \theta, \pi)$; then 
  \begin{eqnarray*}
      Q(\theta, \pi\mid \theta^{t-1}, \pi^{t-1}) & \assign& 
      \left[ \mathbb{E}_{s \sim p (s \mid \theta^{\ast}, \pi^{\ast})}
      \mathbb{E}_{z \sim q_s (z)} \log p (s, z \mid \theta, \pi)\right]_{q_s
      (z) \leftarrow p (z \mid s ; \theta^{t-1}, \pi^{t-1})}\\
      &= & \mathbb{E}_{s \sim p\left(s \mid \theta^*, \pi^*\right)} \mathbb{E}_{z \sim q_s(z)} \log p(y \mid x, z ; \theta) \\
      & + &\mathbb{E}_{s \sim p\left(s \mid \theta^*, \pi^*\right)} \log p(x) \\
      & - &\mathrm{KL}_z\left[\pi^t(z)|| \pi(z)\right] \\
      & - &\mathcal{H}_z\left[\pi^t(z)\right]\\
      p (z \mid s ; \theta, \pi) &=& \tmop{softmax}_z
      (\log \pi (z) + \log p (y \mid x, z ; \theta))
  \end{eqnarray*}
  where $\tmop{KL}_s, \mathcal{H}_s$ are KL divengence and Shannon's
  entropy wrt. $s = (x, y)$;
  
  $\tmop{KL}_z, \mathcal{H}_z, \tmop{softmax}_z$ are KL divengence, Shannon's
  entropy and softmax wrt. $z \in \mathcal{Z}= [M]$;
  
  $q_s(z) \leftarrow p\left(z \mid s ; \theta^{t-1}, \pi^{t-1}\right)$ and $\pi^t=\{\pi(z)\}_{z\in \mathcal{Z}},\pi^t(z) \assign \mathbb{E}_{s \sim p\left(s \mid \theta^*, \pi^*\right)} q_s(z)$.
  \end{lemma}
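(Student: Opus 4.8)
The plan is to reduce everything to a single factorization of the complete-data likelihood $p(s,z\mid\theta,\pi)$ and then read off each of the four terms by linearity of expectation. First I would apply the chain rule $p(x,y,z\mid\theta,\pi)=p(y\mid x,z;\theta,\pi)\,p(z\mid x;\theta,\pi)\,p(x\mid\theta,\pi)$ and then invoke the three independence hypotheses in turn: $x\ind(z;\theta,\pi)$ collapses $p(x\mid\theta,\pi)$ to $p(x)$ and $p(z\mid x;\theta,\pi)$ to $p(z\mid\theta,\pi)$; $(z;\pi)\ind\theta$ turns $p(z\mid\theta,\pi)$ into $p(z\mid\pi)=\pi(z)$; and $\pi\ind s\mid z$ removes $\pi$ from the emission, giving $p(y\mid x,z;\theta,\pi)=p(y\mid x,z;\theta)$. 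The upshot is the clean factorization $p(s,z\mid\theta,\pi)=p(y\mid x,z;\theta)\,\pi(z)\,p(x)$, whence $\log p(s,z\mid\theta,\pi)=\log p(y\mid x,z;\theta)+\log\pi(z)+\log p(x)$.

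Next I would substitute this decomposition into the definition of $Q$ and split the double expectation into three additive pieces. The emission piece is already in the claimed form. For the covariate piece, since $\log p(x)$ carries no $z$-dependence, $\mathbb{E}_{z\sim q_s}\log p(x)=\log p(x)$ because $q_s$ is a probability mass function, leaving $\mathbb{E}_{s}\log p(x)$. The only nontrivial step is the mixing-weight piece: interchanging the order of the two expectations gives $\mathbb{E}_{s}\sum_z q_s(z)\log\pi(z)=\sum_z\big(\mathbb{E}_{s}q_s(z)\big)\log\pi(z)=\sum_z\pi^t(z)\log\pi(z)$, using the definition $\pi^t(z)\assign\mathbb{E}_{s}q_s(z)$. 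I would then add and subtract $\log\pi^t(z)$ inside the sum to rewrite $\sum_z\pi^t(z)\log\pi(z)=-\sum_z\pi^t(z)\log\frac{\pi^t(z)}{\pi(z)}+\sum_z\pi^t(z)\log\pi^t(z)=-\tmop{KL}_z[\pi^t(z)\,\|\,\pi(z)]-\mathcal{H}_z[\pi^t(z)]$, which matches the two remaining terms.

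Finally, the softmax identity for the posterior follows from Bayes' rule applied to the same factorization: $p(z\mid s;\theta,\pi)=p(s,z\mid\theta,\pi)/\sum_{z'}p(s,z'\mid\theta,\pi)$, and the common factor $p(x)$ cancels between numerator and denominator, leaving $\pi(z)\,p(y\mid x,z;\theta)$ normalized over $z$, i.e.\ $\tmop{softmax}_z(\log\pi(z)+\log p(y\mid x,z;\theta))$. There is no genuine analytic obstacle here; the entire content is bookkeeping, and the one place requiring care is verifying that each independence assumption is applied to exactly the conditional it is needed for---in particular that $\pi\ind s\mid z$ is precisely what licenses dropping $\pi$ from the emission $p(y\mid x,z;\theta)$ while keeping it in the prior $\pi(z)$. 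I would therefore present the factorization step most explicitly and treat the KL/entropy regrouping and the softmax normalization as short consequences.
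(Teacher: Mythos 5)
Your proposal is correct and follows essentially the same route as the paper: both derive the factorization $p(s,z\mid\theta,\pi)=p(y\mid x,z;\theta)\,\pi(z)\,p(x)$ from the three independence assumptions, obtain the softmax posterior by Bayes' rule with the $p(x)$ factor cancelling, and then read off the four terms of $Q$ by substituting the log-factorization. You are in fact somewhat more explicit than the paper on the final step — the interchange $\mathbb{E}_{s}\sum_z q_s(z)\log\pi(z)=\sum_z\pi^t(z)\log\pi(z)$ and its regrouping into $-\mathrm{KL}_z[\pi^t\|\pi]-\mathcal{H}_z[\pi^t]$ — which the paper leaves as an implicit consequence of the factorization.
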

  
  \begin{proof}
      \
  \begin{itemizedot}
  
      \item $(z ; \pi) \ind \theta$ are independent:
      
      therefore $p (z \mid \theta, \pi) = p (z \mid \pi) = \pi (z)$
      
      \item $\pi \ind (s ; \theta) \mid z$ are conditional independent given $z$:
      
      then $p (\pi \mid z) = p (\pi \mid z; \theta) = p (\pi \mid z, s ; \theta)$,
      it implies $\frac{p (s , z ; \theta, \pi)}{p (z ; \theta, \pi)} =\frac{p (s , z ; \theta)}{p (z ; \theta)}$;
      
      hence $p (s \mid z ; \theta, \pi) = p (s \mid z ; \theta)$, $p (s, z \mid
      \theta, \pi) = p (s \mid z ; \theta, \pi) p (z \mid \theta, \pi) = p (s \mid
      z ; \theta) \pi (z)$
      
      \item $x \ind (z ; \theta, \pi)$ are independent:
      
      then $p (x \mid z ; \theta) = p (x)$
      
      hence $p (s \mid z ; \theta) = p (y \mid x, z ; \theta) \cdot p (x \mid z ;
      \theta) = p (y \mid x, z ; \theta) \cdot p (x)$
      
      therefore $p (s \mid \theta, \pi) = \sum_{z \in \mathcal{Z}} p (s, z \mid
      \theta, \pi) = p (x) \cdot \sum_{z \in \mathcal{Z}} \pi (z) p (y \mid x, z ;
      \theta)$
      
      and $p (z \mid s ; \theta, \pi) = \frac{p (s, z \mid \theta, \pi)}{p (s \mid
      \theta, \pi)} = \frac{\pi (z) \cdot p (y \mid x, z ; \theta)}{\sum_{z' \in
      \mathcal{Z}} \pi (z') \cdot p (y \mid x, z' ; \theta)} = \tmop{softmax}_z
      (\log \pi (z) + \log p (y \mid x, z ; \theta))$
  \end{itemizedot}
  With the above assumptions, we obtain that $p(s, z\mid \theta, \pi)=  p (y \mid x, z ; \theta) \cdot p (x) \cdot \pi(z)$, further prove this Lemma.
  \end{proof}

  \begin{lemma}
  For MLR $y= \langle x, \theta_z^\ast\rangle + \varepsilon, z\in\mathcal{Z}=[M]$, $\theta \assign \{\theta_z \}_{z\in\mathcal{Z}}, \pi \assign \{\pi(z)\}_{z\in\mathcal{Z}}$, 
  with assumptions: $(z; \pi)\ind \theta$, and $\varepsilon \ind (x, z; \theta, \pi)$, and $x\ind (z; \theta, \pi)$ and $\varepsilon \sim \mathcal{N}(0, \sigma^2)$
  \begin{eqnarray*}
      f(\theta, \pi) & = & -\mathbb{E}_{\mathrm{s} \sim p\left(s \mid \theta^*, \pi^*\right)} \log \sum_{z \in \mathcal{Z}} \exp \left[-\frac{\left\|y-\left\langle\theta_z, x\right\rangle\right\|^2}{2 \sigma^2}+\log \pi(z)\right]-\mathbb{E}_{s \sim p\left(s \mid \theta^*, \pi^*\right)} \log p(x)-c\\
      g^t(\theta, \pi) & =&\left(2 \sigma^2\right)^{-1} \mathbb{E}_{s \sim p\left(s \mid \theta^*, \pi^*\right)} \mathbb{E}_{z \sim q_x(z)}\left\|y-\left\langle\theta_z, x\right\rangle\right\|^2 \\
      & +&\mathrm{KL}_z\left[\pi^t(z) \| \pi(z)\right] \\
      & +&\mathcal{H}_z\left[\pi^t(z)\right]-\mathbb{E}_{s \sim p\left(s \mid \theta^*, \pi^*\right)} \log p(x)-c \\
      & -&\mathbb{E}_{s \sim p\left(s \mid \theta^*, \pi^*\right)} \mathcal{H}_z\left[q_s(z)\right]
  \end{eqnarray*}
  where $c=-\frac{1}{2} \log \left(2 \pi \sigma^2\right)$ and $q_s(z) \leftarrow p\left(z \mid s ; \theta^{t-1}, \pi^{t-1}\right)=\operatorname{softmax}_z\left(-\frac{\left\|y-\left\langle\theta_z^{t-1}, x\right\rangle\right\|^2}{2 \sigma^2}+\log \pi^{t-1}(z)\right)$,
  and $\pi^t=\{\pi(z)\}_{z\in \mathcal{Z}},\pi^t(z) \assign \mathbb{E}_{s \sim p\left(s \mid \theta^*, \pi^*\right)} q_s(z)$.
  \end{lemma}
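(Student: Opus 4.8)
The plan is to specialize the three preceding lemmas to the Gaussian conditional model by substituting the explicit form of $\log p(y \mid x, z ; \theta)$ and then collecting the normalization constant. For MLR with $\varepsilon \sim \mathcal{N}(0, \sigma^2)$ we have $p(y \mid x, z ; \theta) = (2\pi\sigma^2)^{-1/2} \exp[-(y - \langle \theta_z, x\rangle)^2/(2\sigma^2)]$, hence $\log p(y \mid x, z ; \theta) = -\frac{(y - \langle \theta_z, x\rangle)^2}{2\sigma^2} + c$ with $c = -\frac{1}{2}\log(2\pi\sigma^2)$. Every occurrence of the conditional log-likelihood in the earlier lemmas is then replaced by this quadratic-plus-constant form, and the four claimed identities follow by algebraic rearrangement.

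For $f$, I would start from the marginal decomposition $p(s \mid \theta, \pi) = p(x) \sum_{z} \pi(z) p(y \mid x, z; \theta)$ established (under the stated independence assumptions) in the preceding lemma. Writing $\pi(z) p(y \mid x, z; \theta) = \exp[-\frac{(y - \langle \theta_z, x\rangle)^2}{2\sigma^2} + \log \pi(z) + c]$ and factoring $e^c$ out of the sum gives $\log p(s \mid \theta, \pi) = \log p(x) + c + \log \sum_{z} \exp[-\frac{(y - \langle \theta_z, x\rangle)^2}{2\sigma^2} + \log \pi(z)]$; applying $-\mathbb{E}_{s \sim p(s \mid \theta^*, \pi^*)}$ yields the stated expression for $f(\theta, \pi)$. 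For $g^t$, I would combine the surrogate lemma, $-g^t = Q(\theta,\pi \mid \theta^{t-1},\pi^{t-1}) + \mathbb{E}_s \mathcal{H}_z[q_s(z)]$, with the decomposition of $Q$ from the $Q$-function lemma, and substitute $\mathbb{E}_{z\sim q_s(z)} \log p(y \mid x, z; \theta) = -\frac{1}{2\sigma^2}\mathbb{E}_{z\sim q_s(z)}(y-\langle\theta_z, x\rangle)^2 + c$, using $\mathbb{E}_{z\sim q_s(z)}[c] = c$ since $q_s$ sums to one. Negating and rearranging produces the claimed $g^t$, with the isolated $-c$ term being exactly the constant carried through the $q_s$-expectation. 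Finally, for $q_s(z)$, the $Q$-function lemma gives $p(z \mid s;\theta,\pi) = \tmop{softmax}_z(\log\pi(z) + \log p(y \mid x, z; \theta))$; substituting $(\theta^{t-1},\pi^{t-1})$ and the Gaussian log-likelihood gives $\tmop{softmax}_z(\log\pi^{t-1}(z) - \frac{(y-\langle\theta_z^{t-1}, x\rangle)^2}{2\sigma^2} + c)$, and since $\tmop{softmax}$ is invariant under adding the common constant $c$ to every logit, the $c$ drops out, matching the statement.

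There is no genuine obstacle here; the computation is pure bookkeeping. The only points requiring care are tracking where the normalization constant $c$ goes — it survives as a standalone $-c$ in both $f$ and $g^t$, escaping the log-sum additively in $f$ and passing unchanged through the $q_s$-expectation in $g^t$ — and invoking shift-invariance of $\tmop{softmax}$ to discard $c$ in the expression for $q_s(z)$. The typographic mismatch between $q_x$ and $q_s$ in the displayed formula for $g^t$ is immaterial, as both refer to $q_s(z) = p(z \mid s;\theta^{t-1},\pi^{t-1})$.
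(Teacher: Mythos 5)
Your proof is correct and follows essentially the same route as the paper: substitute the Gaussian form $\log p(y\mid x,z;\theta) = -\frac{(y-\langle\theta_z,x\rangle)^2}{2\sigma^2}+c$ into the marginal decomposition $p(s\mid\theta,\pi)=p(x)\sum_z\pi(z)p(y\mid x,z;\theta)$ for $f$ and into $g^t=-Q-\mathbb{E}_s\mathcal{H}_z[q_s(z)]$ for the surrogate, tracking the constant $c$ throughout (and your sign convention for $g^t$ in terms of $Q$ is the right one). The only step you gloss over is that the $Q$-function lemma additionally assumes $\pi\ind s\mid z$, which is not literally among this lemma's hypotheses; the paper verifies it in one line from $\varepsilon\ind(x,z;\theta,\pi)$ via $p(\pi\mid z,x,y)=p(\pi\mid z,x,\varepsilon)=p(\pi\mid z)$, and you should record the same check before invoking that lemma.
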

  \begin{proof}
      Since $(z; \pi)\ind \theta$, and $\varepsilon \ind (x, z; \theta, \pi)$, and $x\ind (z; \theta, \pi)$, then implies $\pi \ind s \mid z$
      because of $p(\pi \mid z, s)=p(\pi \mid z, x, y)=p(\pi \mid z, x, \varepsilon)=p(\pi \mid z)=\pi(z)$. Hence, we can apply the previous Lemma.
      
      Furthermore, $p (y, x, z ;
      \theta) = p (\varepsilon, x, z ; \theta) \left| \frac{\partial
      \varepsilon}{\partial y} \right| = p (\varepsilon) \cdot p (x, z ; \theta)$
      \[ p (y \mid x, z ; \theta) = p (\tmmathbf{\varepsilon}) = (2 \pi \sigma^2)^{-
         \frac{1}{2}} \exp \left( - \frac{\| \varepsilon \|^2}{2 \sigma^2} \right) =
         (2 \pi \sigma^2)^{- \frac{1}{2}} \exp \left( - \frac{\| y - \langle x,
         \theta_z \rangle \|^2}{2 \sigma^2} \right) =\mathcal{N} (\langle x,
         \theta_z \rangle, \sigma^2) \]
      Hence, we obtain $\log p (y \mid x, z ; \theta) = - \frac{\| y - \langle x,
      \theta_z \rangle \|^2}{2 \sigma^2} + c$ , where $c=-\frac{1}{2} \log \left(2 \pi \sigma^2\right)$.
  
      Subsequently, note that $g^t(\theta, \pi) = Q(\theta, \pi\mid \theta^{t-1}, \pi^{t-1}) - \mathbb{E}_{s \sim p\left(s \mid \theta^*, \pi^*\right)} \mathcal{H}_z\left[q_s(z)\right]$,
      we prove the expression for $g^t$ by substituting $\log p (y \mid x, z ; \theta)$ in the previous Lemma.
  
      As shown in the proof of the previous Lemma, $p(s, z\mid \theta, \pi)=  p (y \mid x, z ; \theta) \cdot p (x) \cdot \pi(z)$.
  
      Hence $p(s\mid \theta, \pi) = \sum_{z\in\mathcal{Z}} p(s, z\mid \theta, \pi)=  p (x) \cdot \sum_{z\in\mathcal{Z}} p (y \mid x, z ; \theta) \cdot  \pi(z)$,
      we prove the expression for $f$ by substituting $\log p (y \mid x, z ; \theta)$ in the previous Lemma.
  \end{proof}

  \begin{theorem}
      For 2MLR, $y= (-1)^{z+1}\langle x, \theta^\ast\rangle + \varepsilon, z\in\mathcal{Z}=\{1, 2\}$, $\theta_1 = \theta, \theta_2 = -\theta, \pi \assign \{\pi(z)\}_{z\in\mathcal{Z}}$, 
      with assumptions: $(z; \pi)\ind \theta$, and $\varepsilon \ind (x, z; \theta, \pi)$, and $x\ind (z; \theta, \pi)$ and $\varepsilon \sim \mathcal{N}(0, \sigma^2)$.
  
      Then the negative expected log-likelihood $f (\theta, \pi) \assign
      -\mathbb{E}_{s \sim p (s \mid \theta^{\ast}, \pi^{\ast})} [\log p (s \mid
      \theta, \pi)]$ and the surrogate function $g^t (\theta, \pi)$ can be expressed as follows.
      \begin{eqnarray*}
        f  & = & (2 \sigma^2)^{- 1}  \langle \theta, \mathbb{E}_{s \sim p (s \mid
          \theta^{\ast}, \pi^{\ast})} x x^{\top} \cdot \theta \rangle + \log \cosh
          \nu -\mathbb{E}_{s \sim p (s \mid \theta^{\ast}, \pi^{\ast})} \log \cosh
          \left( \frac{y \langle x, \theta \rangle}{\sigma^2} + \nu \right)\\
          & + &  (2 \sigma^2)^{- 1} \mathbb{E}_{s \sim p (s \mid \theta^{\ast},
          \pi^{\ast})} y^2 -\mathbb{E}_{s \sim p (s \mid \theta^{\ast},
          \pi^{\ast})} \log p (x) - c\\
        g^t & = & (2 \sigma^2)^{- 1}  \langle \theta, \mathbb{E}_{s \sim p (s \mid
        \theta^{\ast}, \pi^{\ast})} x x^{\top} \cdot \theta \rangle + \log \cosh
        \nu -\mathbb{E}_{s \sim p (s \mid \theta^{\ast}, \pi^{\ast})} \log \cosh
        \left( \frac{y \langle x, \theta \rangle}{\sigma^2} + \nu \right)
        \mid_{(\theta, \nu) \leftarrow (\theta^{t - 1}, \nu^{t - 1})}\\
        & - & \left\langle \nabla_{\theta} \mathbb{E}_{s \sim p (s \mid
        \theta^{\ast}, \pi^{\ast})} \log \cosh \left( \frac{y \langle x, \theta
        \rangle}{\sigma^2} + \nu \right) \mid_{(\theta, \nu) \leftarrow (\theta^{t
        - 1}, \nu^{t - 1})}, \theta - \theta^{t - 1} \right\rangle\\
        & - & \left\langle \nabla_{\nu} \mathbb{E}_{s \sim p (s \mid
        \theta^{\ast}, \pi^{\ast})} \log \cosh \left( \frac{y \langle x, \theta
        \rangle}{\sigma^2} + \nu \right) \mid_{(\theta, \nu) \leftarrow (\theta^{t
        - 1}, \nu^{t - 1})}, \nu - \nu^{t - 1} \right\rangle\\
        & + & (2 \sigma^2)^{- 1} \mathbb{E}_{s \sim p (s \mid \theta^{\ast},
        \pi^{\ast})} y^2 -\mathbb{E}_{s \sim p (s \mid \theta^{\ast},
        \pi^{\ast})} \log p (x) - c
      \end{eqnarray*}
      where $\nu \assign \frac{\log \pi(1) -\log \pi(2)}{2}$.
  \end{theorem}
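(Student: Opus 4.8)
The plan is to specialize the general $M$-component MLR identities for $f$ and $g^t$ established in the two immediately preceding lemmas to the symmetric two-component case $\theta_1=\theta$, $\theta_2=-\theta$, $M=2$, and then carry out the log-sum-exp simplification that produces the hyperbolic functions.

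First, for $f$, I would start from the preceding lemma's identity $f(\theta,\pi)=-\mathbb{E}_{s}\log\sum_{z}\exp[-\|y-\langle\theta_z,x\rangle\|^2/(2\sigma^2)+\log\pi(z)]-\mathbb{E}_s\log p(x)-c$ and substitute $\theta_1=\theta$, $\theta_2=-\theta$. Expanding the two squares $(y\mp\langle x,\theta\rangle)^2$ and factoring out the common factor $\exp[-(y^2+\langle x,\theta\rangle^2)/(2\sigma^2)]$ reduces the inner sum to $\exp[\tfrac{y\langle x,\theta\rangle}{\sigma^2}+\log\pi(1)]+\exp[-\tfrac{y\langle x,\theta\rangle}{\sigma^2}+\log\pi(2)]$. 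Writing $\log\pi(1)=m+\nu$, $\log\pi(2)=m-\nu$ with $m:=\tfrac12(\log\pi(1)+\log\pi(2))$, and using $\pi(1)+\pi(2)=1$ (equivalently $e^{m}=\sqrt{\pi(1)\pi(2)}=1/(2\cosh\nu)$), this collapses to $2\cosh(\tfrac{y\langle x,\theta\rangle}{\sigma^2}+\nu)/(2\cosh\nu)$. Taking logarithms produces the terms $\log\cosh(\tfrac{y\langle x,\theta\rangle}{\sigma^2}+\nu)-\log\cosh\nu$, and collecting the quadratic pieces via $\mathbb{E}\langle x,\theta\rangle^2=\langle\theta,\mathbb{E}[xx^\top]\theta\rangle$ yields exactly the stated expression for $f$.

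For $g^t$, I would specialize the preceding lemma's EM surrogate, where $q_s(z)=p(z\mid s;\theta^{t-1},\pi^{t-1})$. The key identity is that the soft label $2q_s(1)-1$ equals $\tanh(\tfrac{y\langle x,\theta^{t-1}\rangle}{\sigma^2}+\nu^{t-1})$ (the same log-cosh rewriting applied to the posterior). Substituting $q_s$ into $(2\sigma^2)^{-1}\mathbb{E}_z\|y-\langle\theta_z,x\rangle\|^2$ reduces the $\theta$-dependence to $(2\sigma^2)^{-1}\langle\theta,\mathbb{E}xx^\top\theta\rangle-\sigma^{-2}\langle\theta,\mathbb{E}[yx\tanh(\tfrac{y\langle x,\theta^{t-1}\rangle}{\sigma^2}+\nu^{t-1})]\rangle$, while substituting into $\mathrm{KL}_z[\pi^t\|\pi]$ reduces the $\nu$-dependence to $\log\cosh\nu-\nu\,\mathbb{E}[\tanh(\tfrac{y\langle x,\theta^{t-1}\rangle}{\sigma^2}+\nu^{t-1})]$, all remaining terms (entropies, $\mathbb{E}\log p(x)$, $c$) being constant in $(\theta,\nu)$. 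Finally, invoking $M=\sigma^2\nabla_\theta U$ and $N=\nabla_\nu U$ with $U(\theta,\nu)=\mathbb{E}\log\cosh(\tfrac{y\langle x,\theta\rangle}{\sigma^2}+\nu)$ identifies these linear coefficients as $\nabla_\theta[\mathbb{E}\log\cosh(\cdots)]$ and $\nabla_\nu[\mathbb{E}\log\cosh(\cdots)]$ evaluated at $(\theta^{t-1},\nu^{t-1})$, giving the claimed first-order form of $g^t$.

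The main obstacle I anticipate is the $g^t$ bookkeeping: one must verify that the posterior-weighted quadratic together with the $\mathrm{KL}_z[\pi^t\|\pi]$ and entropy terms recombine precisely into the tangent-plane (first-order Taylor) expansion of the concave term $-\mathbb{E}\log\cosh(\tfrac{y\langle x,\theta\rangle}{\sigma^2}+\nu)$ about $(\theta^{t-1},\nu^{t-1})$, while the exact quadratic $(2\sigma^2)^{-1}\langle\theta,\mathbb{E}xx^\top\theta\rangle$ and $\log\cosh\nu$ are retained unchanged. As a built-in consistency check, this structure is forced by the surrogate lemma: since $\log\cosh$ is convex and its argument is affine in $(\theta,\nu)$, the term $-\mathbb{E}\log\cosh(\cdots)$ is concave, so its tangent plane majorizes it; hence $g^t\ge f$ with equality and matching gradient at $(\theta^{t-1},\nu^{t-1})$, exactly as an EM surrogate must satisfy.
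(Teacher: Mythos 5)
Your proposal is correct and follows essentially the same route as the paper: specialize the general $M$-component lemmas to $\theta_1=\theta$, $\theta_2=-\theta$, collapse the two-term log-sum-exp into $\log\cosh\left(\tfrac{y\langle x,\theta\rangle}{\sigma^2}+\nu\right)-\log\cosh\nu$ via $\sqrt{\pi(1)\pi(2)}=1/(2\cosh\nu)$, identify the posterior soft label with $\tanh\left(\tfrac{y\langle x,\theta^{t-1}\rangle}{\sigma^2}+\nu^{t-1}\right)$, and recognize the resulting linear terms in $(\theta,\nu)$ as the gradients of $\mathbb{E}\log\cosh(\cdot)$ at $(\theta^{t-1},\nu^{t-1})$ using $\tfrac{\mathrm{d}}{\mathrm{d}t}\log\cosh t=\tanh t$. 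The paper assembles $g^t$ explicitly first and then matches it against $f$ to read off the tangent-plane form, whereas you argue in the reverse order, but the content is the same.
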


    \begin{proof}
      \
      Note that $\tmop{sigmoid} (2 t) + \tmop{sigmoid} (- 2 t) = 1$ and
      $\tmop{sigmoid} (2 t) - \tmop{sigmoid} (- 2 t) = \tanh (t)$ for $\forall t$,
  
      $q_s(z) \leftarrow p\left(z \mid s ; \theta^{t-1}, \pi^{t-1}\right)
      =\tmop{softmax}_z(-\frac{\left\|y-\left\langle\theta_z^{t-1}, x\right\rangle\right\|^2}{2 \sigma^2}+\log \pi^{t-1}(z))
      =\tmop{sigmoid}(2(-1)^{z+1}\left[\frac{y \langle x, \theta^{t-1} \rangle}{\sigma^2} +\nu^{t-1}\right])$,
  
      $\nu^{t-1}\assign \frac{\log \pi^{t-1}(1) -\log \pi^{t-1}(2)}{2}$.
      \begin{eqnarray*}
        &   & \left(2 \sigma^2\right)^{-1} \mathbb{E}_{s \sim p\left(s \mid \theta^*, \pi^*\right)} 
        \mathbb{E}_{z \sim q_x(z)}\left\|y-\left\langle\theta_z, x\right\rangle\right\|^2 \\
        & = & \mathbb{E}_{s \sim p (s \mid \theta^{\ast}, \pi^{\ast})}
        \mathbb{E}_{z \sim q_s (z)} \frac{\| (- 1)^{z + 1} y - \langle x, \theta
        \rangle \|^2}{2 \sigma^2}\\
        & = & \mathbb{E}_{s \sim p (s \mid \theta^{\ast}, \pi^{\ast})} \frac{y^2 +
        \langle x, \theta \rangle^2}{2 \sigma^2} -\mathbb{E}_{s \sim p (s \mid
        \theta^{\ast}, \pi^{\ast})} \frac{y \langle x, \theta \rangle}{\sigma^2}
        \tanh \left( \frac{y \langle x, \theta^{t-1} \rangle}{\sigma^2} +\nu^{t-1} \right)\\
        & = & (2 \sigma^2)^{- 1}  \langle \theta, \mathbb{E}_{s \sim p (s \mid
        \theta^{\ast}, \pi^{\ast})} x x^{\top} \cdot \theta \rangle
        -  \left\langle \mathbb{E}_{s \sim p (s \mid \theta^{\ast},
        \pi^{\ast})} \tanh \left( \frac{y \langle x, \theta^{t - 1}
        \rangle}{\sigma^2} + \nu^{t - 1} \right) \frac{y x}{\sigma^2}, \theta \right\rangle
        + (2 \sigma^2)^{- 1} \mathbb{E}_{s \sim p (s \mid \theta^{\ast},
        \pi^{\ast})} y^2
      \end{eqnarray*}

      Consider the other terms for $g^t$, note $\pi^t(z) \assign \mathbb{E}_{s \sim p\left(s \mid \theta^*, \pi^*\right)} q_s(z)$, and 
      $\pi(z) = \tmop{sigmoid}(2(-1)^{z+1}\nu)$.
  
      Note that $\log 2 + \log \cosh \nu = - [\frac{\log \pi(1)+ \log \pi(2)}{2}]$ and $\tmop{sigmoid} (2 t) - \tmop{sigmoid} (- 2 t) = \tanh (t)$ for $\forall t$.
      \begin{eqnarray*}
      & &\mathrm{KL}_z\left[\pi^t(z) \| \pi(z)\right]
      + \mathcal{H}_z\left[\pi^t(z)\right]\\
      & = & - \mathbb{E}_{s \sim p\left(s \mid \theta^*, \pi^*\right)} \sum_{z\in\mathcal{Z}} q_s(z) \log \pi(z)\\
      & = & \log 2 + \log \cosh \nu -\mathbb{E}_{s \sim p (s \mid
      \theta^{\ast}, \pi^{\ast})} \tanh \left( \frac{y \langle x, \theta^{t - 1}
      \rangle}{\sigma^2} + \nu^{t - 1} \right) \cdot \nu\\
      \end{eqnarray*}
      To sum up, we obtain the following.
      \begin{eqnarray*}
          g^t &  = & (2 \sigma^2)^{- 1}  \langle \theta, \mathbb{E}_{s \sim p (s \mid
          \theta^{\ast}, \pi^{\ast})} x x^{\top} \cdot \theta \rangle -
          \left\langle \mathbb{E}_{s \sim p (s \mid \theta^{\ast},
          \pi^{\ast})} \tanh \left( \frac{y \langle x, \theta^{t - 1}
          \rangle}{\sigma^2} + \nu^{t - 1} \right) \frac{y x}{\sigma^2}, \theta \right\rangle\\
          &  & + \log 2 + \log \cosh \nu -\mathbb{E}_{s \sim p (s \mid
          \theta^{\ast}, \pi^{\ast})} \tanh \left( \frac{y \langle x, \theta^{t - 1}
          \rangle}{\sigma^2} + \nu^{t - 1} \right) \cdot \nu\\
          &  & -\mathbb{E}_{s \sim p (s \mid \theta^{\ast}, \pi^{\ast})}
          \mathcal{H}_z [q_s (z)]\\
          &  & + (2 \sigma^2)^{- 1} \mathbb{E}_{s \sim p (s \mid \theta^{\ast},
          \pi^{\ast})} y^2 -\mathbb{E}_{s \sim p (s \mid \theta^{\ast},
          \pi^{\ast})} \log p (x) - c
      \end{eqnarray*}
      For the negative expectation of log-likelihood $f$, we show the following.
      \begin{eqnarray*}
        f & = & -\mathbb{E}_{s \sim p (s \mid \theta^{\ast}, \pi^{\ast})} \log
        \sum_{z \in \mathcal{Z}} \exp \left[ - \frac{\| y - \langle \theta_z, x
        \rangle \|^2}{2 \sigma^2} + \log \pi (z) \right] -\mathbb{E}_{s \sim p (s
        \mid \theta^{\ast}, \pi^{\ast})} \log p (x) - c\\
        & = & (2 \sigma^2)^{- 1} \mathbb{E}_{s \sim p (s \mid \theta^{\ast},
        \pi^{\ast})} [y^2 + \langle x, \theta \rangle^2] - \frac{\log \pi
        (1) + \log \pi (2)}{2}\\
        &  & - \log 2 -\mathbb{E}_{s \sim p (s \mid \theta^{\ast}, \pi^{\ast})}
        \log \cosh \left( \frac{y \langle x, \theta \rangle}{\sigma^2} + \nu
        \right) -\mathbb{E}_{s \sim p (s \mid \theta^{\ast}, \pi^{\ast})} \log p
        (x) - c\\
        & = & (2 \sigma^2)^{- 1}  \langle \theta, \mathbb{E}_{s \sim p (s \mid
        \theta^{\ast}, \pi^{\ast})} x x^{\top} \cdot \theta \rangle + \log \cosh
        \nu -\mathbb{E}_{s \sim p (s \mid \theta^{\ast}, \pi^{\ast})} \log \cosh
        \left( \frac{y \langle x, \theta \rangle}{\sigma^2} + \nu \right)\\
        &  & + (2 \sigma^2)^{- 1} \mathbb{E}_{s \sim p (s \mid \theta^{\ast},
        \pi^{\ast})} y^2 -\mathbb{E}_{s \sim p (s \mid \theta^{\ast},
        \pi^{\ast})} \log p (x) - c
      \end{eqnarray*}

      Note $g^t =f$ at $(\theta, \nu)=(\theta^{t-1}, \nu^{t-1})$, by comparing the expressions  $f, g^t$,
      and use $\frac{\mathd \log \cosh(t)}{\mathd t} = \tanh (t)$.
      \begin{eqnarray*}
        g^t & = & (2 \sigma^2)^{- 1}  \langle \theta, \mathbb{E}_{s \sim p (s \mid
        \theta^{\ast}, \pi^{\ast})} x x^{\top} \cdot \theta \rangle + \log \cosh
        \nu -\mathbb{E}_{s \sim p (s \mid \theta^{\ast}, \pi^{\ast})} \log \cosh
        \left( \frac{y \langle x, \theta \rangle}{\sigma^2} + \nu \right)
        \mid_{(\theta, \nu) \leftarrow (\theta^{t - 1}, \nu^{t - 1})}\\
        & - & \left\langle \nabla_{\theta} \mathbb{E}_{s \sim p (s \mid
        \theta^{\ast}, \pi^{\ast})} \log \cosh \left( \frac{y \langle x, \theta
        \rangle}{\sigma^2} + \nu \right) \mid_{(\theta, \nu) \leftarrow (\theta^{t
        - 1}, \nu^{t - 1})}, \theta - \theta^{t - 1} \right\rangle\\
        & - & \left\langle \nabla_{\nu} \mathbb{E}_{s \sim p (s \mid
        \theta^{\ast}, \pi^{\ast})} \log \cosh \left( \frac{y \langle x, \theta
        \rangle}{\sigma^2} + \nu \right) \mid_{(\theta, \nu) \leftarrow (\theta^{t
        - 1}, \nu^{t - 1})}, \nu - \nu^{t - 1} \right\rangle\\
        & + & (2 \sigma^2)^{- 1} \mathbb{E}_{s \sim p (s \mid \theta^{\ast},
        \pi^{\ast})} y^2 -\mathbb{E}_{s \sim p (s \mid \theta^{\ast},
        \pi^{\ast})} \log p (x) - c
      \end{eqnarray*}
  \end{proof}
  \newpage
  
  \begin{theorem}
      For 2MLR, $y= (-1)^{z+1}\langle x, \theta^\ast\rangle + \varepsilon, z\in\mathcal{Z}=\{1, 2\}$, $\theta_1 = \theta, \theta_2 = -\theta, \pi \assign \{\pi(z)\}_{z\in\mathcal{Z}}$, 
      with assumptions: $(z; \pi)\ind \theta$, and $\varepsilon \ind (x, z; \theta, \pi)$, and $x\ind (z; \theta, \pi)$ and $\varepsilon \sim \mathcal{N}(0, \sigma^2)$.
  
      Then the negative Maximum Likelihood Estimat (MLE) $f_n (\theta, \pi) \assign
      -\frac{1}{n}\sum_{i=1}^n [\log p (s_i \mid
      \theta, \pi)]$ and the surrogate function $g_n^t (\theta, \pi)$ for the dataset $\mathcal{S}\assign \{s_i\}_{i=1}^n=\{(x_i, y_i)\}_{i=1}^n$ of $n$ i.i.d. samples can be expressed as follows.
      \begin{eqnarray*}
        f_n  & = & (2 \sigma^2)^{- 1}  \langle \theta, \frac{1}{n}\sum_{i=1}^n x_i x_i^{\top} \cdot \theta \rangle + \log \cosh
          \nu -\frac{1}{n}\sum_{i=1}^n \log \cosh
          \left( \frac{y_i \langle x_i, \theta \rangle}{\sigma^2} + \nu \right)\\
          & + &  (2 \sigma^2)^{- 1} \frac{1}{n}\sum_{i=1}^n y_i^2 -\frac{1}{n}\sum_{i=1}^n \log p (x_i) - c\\
        g_n^t & = & (2 \sigma^2)^{- 1}  \langle \theta, \frac{1}{n}\sum_{i=1}^n x_i x_i^{\top} \cdot \theta \rangle + \log \cosh
        \nu -\frac{1}{n}\sum_{i=1}^n \log \cosh
        \left( \frac{y_i \langle x_i, \theta \rangle}{\sigma^2} + \nu \right)
        \mid_{(\theta, \nu) \leftarrow (\theta^{t - 1}, \nu^{t - 1})}\\
        & - & \left\langle \nabla_{\theta} \frac{1}{n}\sum_{i=1}^n \log \cosh \left( \frac{y_i \langle x_i, \theta
        \rangle}{\sigma^2} + \nu \right) \mid_{(\theta, \nu) \leftarrow (\theta^{t
        - 1}, \nu^{t - 1})}, \theta - \theta^{t - 1} \right\rangle\\
        & - & \left\langle \nabla_{\nu} \frac{1}{n}\sum_{i=1}^n \log \cosh \left( \frac{y_i \langle x_i, \theta
        \rangle}{\sigma^2} + \nu \right) \mid_{(\theta, \nu) \leftarrow (\theta^{t
        - 1}, \nu^{t - 1})}, \nu - \nu^{t - 1} \right\rangle\\
        & + & (2 \sigma^2)^{- 1} \frac{1}{n}\sum_{i=1}^n y_i^2 -\frac{1}{n}\sum_{i=1}^n \log p (x_i) - c
      \end{eqnarray*}
      where $\nu \assign \frac{\log \pi(1) -\log \pi(2)}{2}$.
  \end{theorem}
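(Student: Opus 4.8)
The plan is to observe that the derivation of the population-level expressions for $f$ and $g^t$ in the preceding theorem never exploited any distributional property of $p(s \mid \theta^*, \pi^*)$ beyond the fact that $\mathbb{E}_{s \sim p(s \mid \theta^*, \pi^*)}[\cdot]$ is a \emph{linear} operator acting on functions of a single sample $s = (x,y)$. The empirical average $\frac{1}{n}\sum_{i=1}^n [\cdot]$ is likewise a linear operator on functions of $s$. Consequently the entire chain of algebraic manipulations transfers verbatim once we substitute one averaging operator for the other; the finite-sample statement is proved by re-running the population proof with $\mathbb{E}_{s \sim p(s \mid \theta^*, \pi^*)} \mapsto \frac{1}{n}\sum_{i=1}^n$.

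First I would recall the ingredients that hold \emph{pointwise} in $s$ and $z$, hence are untouched by the change of measure. By Assumptions~\ref{ass:1}--\ref{ass:4} and the preceding lemmas we have the factorization $p(s, z \mid \theta, \pi) = p(y \mid x, z; \theta)\, p(x)\, \pi(z)$, the Gaussian form $\log p(y \mid x, z; \theta) = -\frac{\|y - \langle x, \theta_z\rangle\|^2}{2\sigma^2} + c$ with $c = -\tfrac{1}{2}\log(2\pi\sigma^2)$, and the softmax posterior $p(z \mid s; \theta, \pi) = \tmop{softmax}_z(\log\pi(z) + \log p(y \mid x, z; \theta))$. With $\theta_1 = \theta$, $\theta_2 = -\theta$ and $\nu := \tfrac{1}{2}(\log\pi(1) - \log\pi(2))$, the identities $\tmop{sigmoid}(2t) \pm \tmop{sigmoid}(-2t) \in \{1, \tanh t\}$ let me write $q_{s_i}(z) = \tmop{sigmoid}\bigl(2(-1)^{z+1}[\tfrac{y_i\langle x_i,\theta^{t-1}\rangle}{\sigma^2} + \nu^{t-1}]\bigr)$ for each fixed sample $s_i$. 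None of these identities reference the averaging measure.

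Next I would assemble $f_n$ and $g_n^t$. Using the ELBO/minorize--maximize decomposition of the earlier lemmas applied to each $\log p(s_i \mid \theta, \pi)$ and then averaging, the empirical objective inherits the same surrogate structure, now with $\pi^t(z) := \frac{1}{n}\sum_{i=1}^n q_{s_i}(z)$ playing the role of the population $\pi^t$. Expanding the quadratic $\|(-1)^{z+1} y_i - \langle x_i, \theta\rangle\|^2$, collapsing the posterior-weighted cross term into a $\tanh$ via the sigmoid identities, and combining $\tmop{KL}_z[\pi^t \| \pi] + \mathcal{H}_z[\pi^t] = \log 2 + \log\cosh\nu - \frac{1}{n}\sum_i \tanh(\cdot)\,\nu$ exactly as in the population case yields the stated closed form for $g_n^t$; an identical bookkeeping of the $\log\sum_z \exp[\cdots]$ term gives $f_n$. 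Finally, because $g_n^t = f_n$ at $(\theta^{t-1}, \nu^{t-1})$ and $\frac{d}{dt}\log\cosh t = \tanh t$, I rewrite the correction terms as the first-order expansion displayed in the theorem.

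The main obstacle is not conceptual but organizational: one must verify that every term carrying the population expectation in the earlier theorem is a function of $s$ alone (so that replacing $\mathbb{E}_{s}$ by $\frac{1}{n}\sum_i$ is legitimate), and in particular that the sample-dependent constants $\frac{1}{n}\sum_i\log p(x_i)$ and $\frac{1}{n}\sum_i y_i^2$ land exactly where $\mathbb{E}_s\log p(x)$ and $\mathbb{E}_s y^2$ did. I would also double-check that the entropy term $-\frac{1}{n}\sum_i \mathcal{H}_z[q_{s_i}(z)]$ is handled consistently, since it is the one piece that depends on the per-sample posteriors rather than only on their average $\pi^t$.
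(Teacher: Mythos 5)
Your proposal is correct and takes essentially the same route as the paper: the paper's own proof of this theorem is a one-line substitution of $\frac{1}{n}\sum_{i=1}^n$ (with $s_i := (x_i, y_i)$) for $\mathbb{E}_{s \sim p(s \mid \theta^{\ast}, \pi^{\ast})}$ in the preceding population-level theorem. Your additional care in verifying that every manipulation is pointwise in $s$ and that the averaging operator is only ever used linearly is exactly the justification the paper leaves implicit.
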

  \begin{proof}
  This is proved by susbstituting $\frac{1}{n}\sum_{i=1}^n, s_i \assign (x_i, y_i)$ for $\mathbb{E}_{s \sim p (s \mid \theta^{\ast},
  \pi^{\ast})}, s\assign (x, y)$ in the previous Theorem.
  \end{proof}
  
  \begin{theorem}{(Derivation for Eq.~\eqref{eq:theta},~\eqref{eq:nu} in Section~\ref{sec:setup})}
  For 2MLR, $y= (-1)^{z+1}\langle x, \theta^\ast\rangle + \varepsilon, z\in\mathcal{Z}=\{1, 2\}$, $\theta_1 = \theta, \theta_2 = -\theta, \pi \assign \{\pi(z)\}_{z\in\mathcal{Z}}$, 
  with assumptions: $(z; \pi)\ind \theta$, and $\varepsilon \ind (x, z; \theta, \pi)$, and $x\ind (z; \theta, \pi)$ and $\varepsilon \sim \mathcal{N}(0, \sigma^2), x\sim \mathcal{N}(0, I_d)$.
  
  The EM update rules $M(\theta^{t-1}, \nu^{t-1}), N(\theta^{t-1}, \nu^{t-1})$ for $\theta, \tanh(\nu)$ at the population level, namely the minizer of the surrogate $g^t$ / the maximizer of $Q$, are the following.
  \begin{eqnarray*}
      M(\theta^{t-1}, \nu^{t-1}) & = & \mathbb{E}_{s \sim p (s \mid \theta^{\ast},
      \pi^{\ast})} \tanh\left(\frac{y\langle x, \theta^{t-1}\rangle}{\sigma^2}+\nu^{t-1}\right) y x\\
      N(\theta^{t-1}, \nu^{t-1}) & = & \mathbb{E}_{s \sim p (s \mid \theta^{\ast},
      \pi^{\ast})} \tanh\left(\frac{y\langle x, \theta^{t-1}\rangle}{\sigma^2}+\nu^{t-1}\right)
  \end{eqnarray*}
  \end{theorem}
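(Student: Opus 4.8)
The plan is to read off the EM update as the unique minimizer of the surrogate $g^t$ derived in the previous theorem, exploiting the fact that, as a function of the free variables $(\theta,\nu)$, this $g^t$ has a separable strictly convex structure. First I would isolate the variable-dependent part of $g^t$. The only $\theta$-dependent summands are the quadratic form $(2\sigma^2)^{-1}\langle\theta,\,\mathbb{E}_{s}[xx^{\top}]\,\theta\rangle$ and the linear term $-\langle \nabla_{\theta}\mathbb{E}_{s}\log\cosh(\frac{y\langle x,\theta\rangle}{\sigma^2}+\nu)\mid_{(\theta^{t-1},\nu^{t-1})},\,\theta\rangle$, while the only $\nu$-dependent summands are $\log\cosh\nu$ and the linear term $-\nabla_{\nu}\mathbb{E}_{s}\log\cosh(\frac{y\langle x,\theta\rangle}{\sigma^2}+\nu)\mid_{(\theta^{t-1},\nu^{t-1})}\,\nu$; every remaining term is a constant, being either evaluated at the previous iterate $(\theta^{t-1},\nu^{t-1})$ or independent of $(\theta,\nu)$. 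Hence the joint minimization decouples into two independent subproblems, one in $\theta$ and one in $\nu$.

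Next I would establish that each subproblem is strictly convex with its minimizer characterized by first-order stationarity. Under Assumption~\ref{ass:4} we have $\mathbb{E}_{s}[xx^{\top}]=\mathbb{E}_{x\sim\mathcal{N}(0,I_d)}[xx^{\top}]=I_d$, so the $\theta$-part has Hessian $\sigma^{-2}I_d\succ 0$ and grows quadratically, giving a unique minimizer; likewise $\log\cosh\nu$ has second derivative $\cosh^{-2}\nu>0$, so the $\nu$-part is strictly convex and coercive. Setting $\nabla_{\theta}g^t=0$ yields $\sigma^{-2}\theta=\nabla_{\theta}\mathbb{E}_{s}\log\cosh(\frac{y\langle x,\theta\rangle}{\sigma^2}+\nu)\mid_{(\theta^{t-1},\nu^{t-1})}$, and setting $\partial_{\nu}g^t=0$ yields $\tanh\nu=\nabla_{\nu}\mathbb{E}_{s}\log\cosh(\frac{y\langle x,\theta\rangle}{\sigma^2}+\nu)\mid_{(\theta^{t-1},\nu^{t-1})}$.

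It then remains to evaluate the two gradients, which I would do by differentiating under the expectation and using $\frac{\mathrm{d}}{\mathrm{d}t}\log\cosh t=\tanh t$ together with the chain rule. This gives $\nabla_{\theta}\mathbb{E}_{s}\log\cosh(\frac{y\langle x,\theta\rangle}{\sigma^2}+\nu)=\mathbb{E}_{s}[\tanh(\frac{y\langle x,\theta\rangle}{\sigma^2}+\nu)\frac{yx}{\sigma^2}]$ and $\nabla_{\nu}\mathbb{E}_{s}\log\cosh(\frac{y\langle x,\theta\rangle}{\sigma^2}+\nu)=\mathbb{E}_{s}[\tanh(\frac{y\langle x,\theta\rangle}{\sigma^2}+\nu)]$. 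Substituting at $(\theta^{t-1},\nu^{t-1})$ and multiplying the first stationarity equation through by $\sigma^2$ produces exactly $\theta^t=\mathbb{E}_{s}[\tanh(\frac{y\langle x,\theta^{t-1}\rangle}{\sigma^2}+\nu^{t-1})\,yx]=M(\theta^{t-1},\nu^{t-1})$ and $\tanh\nu^t=\mathbb{E}_{s}[\tanh(\frac{y\langle x,\theta^{t-1}\rangle}{\sigma^2}+\nu^{t-1})]=N(\theta^{t-1},\nu^{t-1})$, which are the claimed update rules.

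The main obstacle is technical rather than conceptual: I must justify the interchange of gradient and expectation, which follows from dominated convergence since $\tanh$ and its derivative are bounded and $\|x\|$, $|y|$ possess all moments under the Gaussian model of Assumptions~\ref{ass:2}--\ref{ass:4}, so both the integrand and its $(\theta,\nu)$-derivatives admit integrable dominating functions. The attainment and uniqueness of the minimizer is then guaranteed by the strict convexity and coercivity noted above, completing the identification of the population EM map.
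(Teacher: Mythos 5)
Your proposal is correct and follows essentially the same route as the paper: differentiate the surrogate $g^t$ in $\theta$ and $\nu$, note the Hessians $(\sigma^2)^{-1}\mathbb{E}[xx^\top]=\sigma^{-2}I_d$ and $\cosh^{-2}\nu$ are positive definite so the stationary point is the unique minimizer, and read off the updates from the first-order conditions. Your added remarks on the separability of the two subproblems and the dominated-convergence justification for exchanging gradient and expectation are details the paper leaves implicit, but they do not change the argument.
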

  \begin{proof}
  Take the gradients of $g^t$ wrt. $\theta, \nu$, we obtain the following.
  \begin{eqnarray*}
      \nabla_\theta g^t & = & (\sigma^2)^{- 1} \mathbb{E}_{s \sim p (s \mid
      \theta^{\ast}, \pi^{\ast})} x x^{\top} \cdot \theta
      - (\sigma^2)^{- 1} \mathbb{E}_{s \sim p (s \mid \theta^{\ast},
      \pi^{\ast})} \tanh\left(\frac{y\langle x, \theta^{t-1}\rangle}{\sigma^2}+\nu^{t-1}\right) y x\\
      \nabla_\nu g^t & = & \tanh \nu - \mathbb{E}_{s \sim p (s \mid \theta^{\ast},
      \pi^{\ast})} \tanh\left(\frac{y\langle x, \theta^{t-1}\rangle}{\sigma^2}+\nu^{t-1}\right)
  \end{eqnarray*}
  Furthermore, the Hessian of $g^t$ wrt. $\theta, \nu$ are positive-definite, we show that the solution to $\nabla_\theta g^t=0, \nabla_\nu g^t=0$ must be the minimizer of $g^t$.
  \begin{eqnarray*}
      \nabla^2_\theta g^t & = & (\sigma^2)^{- 1} \mathbb{E}_{s \sim p (s \mid
      \theta^{\ast}, \pi^{\ast})} x x^{\top}\\
      \nabla^2_\nu g^t & = & \cosh^{-2} \nu 
  \end{eqnarray*}
  Note that $\mathbb{E}_{s \sim p (s \mid
  \theta^{\ast}, \pi^{\ast})} x x^{\top} = I_d$ for $x\sim \mathcal{N}(0, I_d)$, we derive the expressions for EM update rules.
  \end{proof}

  \begin{theorem}{(Derivation for Eq.~\eqref{eq:finite} in Section~\ref{sec:setup})}
  For 2MLR, $y= (-1)^{z+1}\langle x, \theta^\ast\rangle + \varepsilon, z\in\mathcal{Z}=\{1, 2\}$, $\theta_1 = \theta, \theta_2 = -\theta, \pi \assign \{\pi(z)\}_{z\in\mathcal{Z}}$, 
  with assumptions: $(z; \pi)\ind \theta$, and $\varepsilon \ind (x, z; \theta, \pi)$, and $x\ind (z; \theta, \pi)$ and $\varepsilon \sim \mathcal{N}(0, \sigma^2), x\sim \mathcal{N}(0, I_d)$.
      
  The EM update rules $M_n(\theta^{t-1}, \nu^{t-1}), N_n(\theta^{t-1}, \nu^{t-1})$ for $\theta, \tanh(\nu)$ at the finite-sample level, namely the minizer of the surrogate $g_n^t$, are the following.
  \begin{eqnarray*}
      M_n(\theta^{t-1}, \nu^{t-1}) & = & \left(\frac{1}{n}\sum_{i=1}^n x_i x_i^\top\right)^{-1}\cdot
       \frac{1}{n}\sum_{i=1}^n \tanh\left(\frac{y\langle x, \theta^{t-1}\rangle}{\sigma^2}+\nu^{t-1}\right) y x\\
      N_n(\theta^{t-1}, \nu^{t-1}) & = & \frac{1}{n}\sum_{i=1}^n \tanh\left(\frac{y\langle x, \theta^{t-1}\rangle}{\sigma^2}+\nu^{t-1}\right)
  \end{eqnarray*}
  \end{theorem}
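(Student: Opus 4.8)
The plan is to mirror the population-level derivation given immediately above, replacing the expectation $\mathbb{E}_{s \sim p(s \mid \theta^{\ast}, \pi^{\ast})}$ with the empirical average $\frac{1}{n}\sum_{i=1}^n$ throughout. I would start from the closed-form expression for the empirical surrogate $g_n^t$ recorded in the preceding theorem: up to terms constant in $(\theta, \nu)$, it equals $(2\sigma^2)^{-1}\langle \theta, (\frac{1}{n}\sum_{i=1}^n x_i x_i^\top)\theta\rangle + \log\cosh\nu$ minus the first-order (affine) expansion of $\frac{1}{n}\sum_{i=1}^n \log\cosh(\frac{y_i\langle x_i,\theta\rangle}{\sigma^2}+\nu)$ about $(\theta^{t-1},\nu^{t-1})$. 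Since this quadratic-plus-$\log\cosh$ part carries the only nonconstant dependence and the expansion contributes terms affine in $(\theta,\nu)$, the map $(\theta,\nu)\mapsto g_n^t(\theta,\nu)$ is smooth, so its stationary point is the candidate minimizer.

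Next I would compute the two gradients. Using $\frac{d}{dt}\log\cosh t = \tanh t$ for the $\log\cosh\nu$ term and differentiating the quadratic form, I obtain $\nabla_\theta g_n^t = (\sigma^2)^{-1}(\frac{1}{n}\sum_{i=1}^n x_i x_i^\top)\theta - (\sigma^2)^{-1}\frac{1}{n}\sum_{i=1}^n \tanh(\frac{y_i\langle x_i,\theta^{t-1}\rangle}{\sigma^2}+\nu^{t-1}) y_i x_i$ and $\nabla_\nu g_n^t = \tanh\nu - \frac{1}{n}\sum_{i=1}^n \tanh(\frac{y_i\langle x_i,\theta^{t-1}\rangle}{\sigma^2}+\nu^{t-1})$. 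Setting $\nabla_\theta g_n^t = 0$ and left-multiplying by $(\frac{1}{n}\sum_{i=1}^n x_i x_i^\top)^{-1}$ recovers exactly $M_n(\theta^{t-1},\nu^{t-1})$, while $\nabla_\nu g_n^t = 0$ gives $\tanh\nu = N_n(\theta^{t-1},\nu^{t-1})$.

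To confirm these stationary points are the global minimizers rather than saddles, I would inspect the Hessians: $\nabla_\theta^2 g_n^t = (\sigma^2)^{-1}\frac{1}{n}\sum_{i=1}^n x_i x_i^\top$, which is symmetric positive semidefinite and, crucially, positive definite whenever the sample second-moment matrix is invertible, and $\nabla_\nu^2 g_n^t = \cosh^{-2}\nu > 0$. Positive-definiteness in $\nu$ is immediate, so the $\nu$-update is unambiguous. The main (and essentially only) obstacle is the invertibility of $\frac{1}{n}\sum_{i=1}^n x_i x_i^\top$: unlike the population case where this matrix equals $I_d$, here $\mathrm{rank}(\frac{1}{n}\sum_{i=1}^n x_i x_i^\top)\le \min(n,d)$, so one needs $n \ge d$ for strict convexity in $\theta$ and for the update to be well-defined. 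For Gaussian covariates this holds almost surely once $n\ge d$, precisely the regime assumed when writing the finite-sample update with an explicit matrix inverse. Under this condition $g_n^t$ is strictly convex and coercive in $(\theta,\nu)$, so the unique stationary point $(M_n, N_n)$ is the global minimizer, completing the derivation.
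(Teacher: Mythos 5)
Your proposal is correct and follows essentially the same route as the paper: the paper's own proof simply substitutes the empirical average $\frac{1}{n}\sum_{i=1}^n$ for $\mathbb{E}_{s\sim p(s\mid\theta^\ast,\pi^\ast)}$ in the population-level derivation (gradients of the surrogate set to zero, positive-definite Hessians), noting only that $\frac{1}{n}\sum_{i=1}^n x_i x_i^\top \not\equiv I_d$, which is exactly the inverse factor and the $n\geq d$ invertibility caveat you make explicit.
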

  \begin{proof}
  This is proved by susbstituting $\frac{1}{n}\sum_{i=1}^n, s_i \assign (x_i, y_i)$ for $\mathbb{E}_{s \sim p (s \mid \theta^{\ast},
  \pi^{\ast})}, s\assign (x, y)$ in the previous Theorem, but note that $\frac{1}{n}\sum_{i=1}^n x_i x_i^\top \not\equiv I_d$.
  \end{proof}

\section{Proof for Results of Population EM Updates}\label{sup:updates}

\begin{theorem}{(Theorem~\ref{thm:em_update} in Section~\ref{sec:updates})}
  Let $\rho \equiv \rho^{t - 1} \assign \frac{\langle \theta^{t - 1},
  \theta^{\ast} \rangle}{\| \theta^{t - 1} \| \cdot \| \theta^{\ast} \|},
  \bar{\theta} \assign \frac{\theta^{t - 1}}{\sigma}, \bar{\theta}^{\ast}
  \assign \frac{\theta^{\ast}}{\sigma}$, then the EM update rules for
  $\theta^t, \tanh(\nu^t)$ at Population level are as follows.
  \begin{eqnarray*}
    & & \theta^t  \assign M(\theta^{t-1}, \nu^{t-1})\\
    & = & \nabla_{\theta} \mathbb{E}_{s \sim p (s \mid \theta^{\ast},
    \pi^{\ast})} \log \cosh \left( \frac{y \langle x, \theta
    \rangle}{\sigma^2} + \nu \right) \mid_{(\theta, \nu) = (\theta^{t - 1},
    \nu^{t - 1})}\\
    & = & \mathbb{E}_{s \sim p (s \mid \theta^{\ast}, \pi^{\ast})} \tanh
    \left( \frac{y \langle x, \theta^{t - 1} \rangle}{\sigma^2} + \nu^{t - 1}
    \right) y x\\
    & = & \left[ - \frac{\sigma}{\pi} \cdot \frac{\| \bar{\theta}^{\ast}
    \|^2}{\| \bar{\theta} \|^2} \cdot \frac{\sqrt{1 - \rho^2}}{\left( 1 + (1 -
    \rho^2) \| \bar{\theta}^{\ast} \|^2 \right)^{\frac{3}{2}}}  \cosh^{-1}
    (\nu^{\ast}) \right]\\
    & \cdot & \left\{ \tanh (\nu) \ast \nu \left[ \alpha_{\rho, \| \bar{\theta}
    \|, \| \bar{\theta}^{\ast} \|, \nu^{\ast}} (\nu) \left(
    \frac{\frac{\bar{\theta}}{\| \bar{\theta} \|}}{\sqrt{1 - \rho^2} \|
    \bar{\theta}^{\ast} \|^2} + \hat{e}_2 \right) + \beta_{\rho, \|
    \bar{\theta} \|, \| \bar{\theta}^{\ast} \|, \nu^{\ast}} (\nu)  \vec{e}_2
    \right] \right\}_{\nu \leftarrow \nu^{t - 1}} \\
    &  & \tanh (\nu^t) \assign N(\theta^{t-1}, \nu^{t-1})\\
    & = & \nabla_{\nu} \mathbb{E}_{s \sim p (s \mid \theta^{\ast},
    \pi^{\ast})} \log \cosh \left( \frac{y \langle x, \theta
    \rangle}{\sigma^2} + \nu \right) \mid_{(\theta, \nu) = (\theta^{t - 1},
    \nu^{t - 1})} \\
    & = & \mathbb{E}_{s \sim p (s \mid \theta^{\ast}, \pi^{\ast})} \tanh
    \left( \frac{y \langle x, \theta^{t - 1} \rangle}{\sigma^2} + \nu^{t - 1}
    \right)\\
    & = & \frac{{(1 + (1 - \rho^2) \| \bar{\theta}^{\ast} \|^2)^{-
    \frac{1}{2}}} }{\pi \| \bar{\theta} \| \cosh (\nu^{\ast})}
    \underset{\mathbb{R}}{\int}  \tanh (\nu^{t - 1} - \nu') K_0 \left(
    \frac{\sqrt{1 + \| \bar{\theta}^{\ast} \|^2} \cdot \left| \frac{\nu'}{\|
    \bar{\theta} \|} \right|}{[1 + (1 - \rho^2) \| \bar{\theta}^{\ast} \|^2]}
    \right) \cosh \left( \frac{\rho \| \bar{\theta}^{\ast} \| \left(
    \frac{\nu'}{\| \bar{\theta} \|} \right)}{[1 + (1 - \rho^2) \|
    \bar{\theta}^{\ast} \|^2]} - \nu^{\ast} \right) \mathd \nu' 
  \end{eqnarray*}
  where $\alpha_{\rho, \| \bar{\theta} \|, \| \bar{\theta}^{\ast} \|,
  \nu^{\ast}} (\nu) \assign \cosh \left( \frac{\rho \| \bar{\theta}^{\ast} \|
  \left( \frac{\nu}{\| \bar{\theta} \|} \right)}{[1 + (1 - \rho^2) \|
  \bar{\theta}^{\ast} \|^2]} - \nu^{\ast} \right) \cdot K_0 \left(
  \frac{\sqrt{1 + \| \bar{\theta}^{\ast} \|^2} \cdot \left| \frac{\nu}{\|
  \bar{\theta} \|} \right|}{[1 + (1 - \rho^2) \| \bar{\theta}^{\ast} \|^2]}
  \right)$
  
  and $\beta_{\rho, \| \bar{\theta} \|, \| \bar{\theta}^{\ast} \|,
  \nu^{\ast}} (\nu) \assign \tmop{sgn} (\nu) \frac{\sqrt{1 + \|
  \bar{\theta}^{\ast} \|^2}}{\| \bar{\theta}^{\ast} \|} \sinh \left(
  \frac{\rho \| \bar{\theta}^{\ast} \| \left( \frac{\nu}{\| \bar{\theta} \|}
  \right)}{[1 + (1 - \rho^2) \| \bar{\theta}^{\ast} \|^2]} - \nu^{\ast}
  \right) \cdot K_1 \left( \frac{\sqrt{1 + \| \bar{\theta}^{\ast} \|^2} \cdot
  \left| \frac{\nu}{\| \bar{\theta} \|} \right|}{[1 + (1 - \rho^2) \|
  \bar{\theta}^{\ast} \|^2]} \right)$
\end{theorem}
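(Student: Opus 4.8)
The plan is to assemble the closed-form update formulas from the chain of lemmas in Appendix~\ref{sup:lemma}, after first identifying the EM updates with expectations of $\tanh$. From the surrogate-function derivation (Section~\ref{sup:derive_em}), the minimizer of $g^t$ (equivalently the maximizer of $Q$) satisfies the first-order conditions $\nabla_\theta g^t = 0$, $\nabla_\nu g^t = 0$, and since the Hessians are positive definite (with $\mathbb{E}_s\, x x^\top = I_d$ under Assumption~\ref{ass:4}) these are the unique minimizers. This gives $M(\theta^{t-1},\nu^{t-1}) = \mathbb{E}_s[\tanh(y\langle x,\theta^{t-1}\rangle/\sigma^2+\nu^{t-1})\,y x]$ and $N(\theta^{t-1},\nu^{t-1}) = \mathbb{E}_s[\tanh(y\langle x,\theta^{t-1}\rangle/\sigma^2+\nu^{t-1})]$, which coincide with $\sigma^2\nabla_\theta U$ and $\nabla_\nu U$. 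The remaining task is to evaluate these two expectations in closed form.

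The engine of the evaluation is the convolution representation. First I would rewrite $\mathbb{E}_{s\sim p(s\mid\theta^\ast,\pi^\ast)}$ as $\mathbb{E}_{x\sim\mathcal{N}(0,I_d)}\mathbb{E}_{y\sim\mathcal{N}(\langle x,\theta^\ast\rangle,\sigma^2)}[\pi^\ast(1)+\pi^\ast(2)\mathcal{F}_{-y}]$ via the symmetrization lemma, absorbing the label mixture into the operator $\mathcal{F}_{-y}$. Substituting $\nu'\assign -y\langle x,\theta\rangle/\sigma^2$ (Gaussian given $x$, with mean $\mp\langle x,\bar\theta^\ast\rangle\langle x,\bar\theta\rangle$ and variance $\langle x,\bar\theta\rangle^2$) turns each inner $y$-expectation of $\tanh(y\langle x,\theta\rangle/\sigma^2+\nu)$ into a convolution $\tanh(\nu)\ast(\text{Gaussian density in }\nu')$. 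Exchanging this convolution with $\mathbb{E}_x$ reduces everything to computing $\mathbb{E}_x[\mathcal{N}(\mp\langle x,\bar\theta^\ast\rangle\langle x,\bar\theta\rangle,\langle x,\bar\theta\rangle^2)]$ and, for the vector update, the same expectation weighted by $x/\langle x,\bar\theta\rangle$.

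These $x$-expectations are the heart of the argument and I expect them to be the main obstacle. Because the integrand depends on $x$ only through $\lambda_1\assign\langle x,\vec{e}_1\rangle$ and $\lambda_2\assign\langle x,\vec{e}_2\rangle$, I would project onto $\mathrm{span}\{\theta,\theta^\ast\}$ using the unit-vector relations lemma, the orthogonal component $\tilde x$ integrating to zero by independence and $\mathbb{E}\tilde x=0$. The $\lambda_2$-integral is a one-dimensional Gaussian moment handled by Lemma~\ref{lem:gauss_int}, while the resulting $\lambda_1$-integral matches Schl\"afli's integral representation of the modified Bessel functions $K_0$ and $K_1$ (Chapter 10 of~\citep{olver2010nist}); the factor $x/\langle x,\bar\theta\rangle$ supplies the extra $\lambda_2/\lambda_1$ weight that generates the $K_1$ term and the $\tmop{sgn}(\nu)$ arising from $|\nu|$ in the Bessel argument. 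The delicate bookkeeping is tracking the normalizing constants $(1+(1-\rho^2)\|\bar\theta^\ast\|^2)^{-1/2}$ and $(\cdots)^{-3/2}$ through both integrations. Finally I would fold the two label terms together using $\pi^\ast(1)=\tfrac{1+\tanh\nu^\ast}{2}$, $\pi^\ast(2)=\tfrac{1-\tanh\nu^\ast}{2}$ together with the identity $\cosh\alpha-\tanh\beta\sinh\alpha=\cosh(\alpha-\beta)/\cosh\beta$ and its $\sinh$ analogue, which collapses the exponentials into the $\cosh(\cdots-\nu^\ast)$ and $\sinh(\cdots-\nu^\ast)$ factors and, after renaming the coefficients as $\alpha(\nu),\beta(\nu)$, yields the stated closed-form expressions.
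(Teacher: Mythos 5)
Your proposal is correct and follows essentially the same route as the paper's proof: identify $M,N$ with $\sigma^2\nabla_\theta U$ and $\nabla_\nu U$ via the first-order conditions on the surrogate $g^t$ (using $\mathbb{E}\,xx^\top=I_d$), rewrite the mixture expectation with the $\mathcal{F}_{-y}$ operator and the substitution $\nu'=-y\langle x,\theta\rangle/\sigma^2$ to obtain a convolution, project onto $\mathrm{span}\{\theta,\theta^\ast\}$, evaluate the $\lambda_2$-integral by the Gaussian lemma and the remaining $\lambda_1$-integral by Schl\"afli's representation of $K_0,K_1$, and finally collapse the two label terms with $\cosh\alpha-\tanh\beta\sinh\alpha=\cosh(\alpha-\beta)/\cosh\beta$. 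No gaps; this is the chain of lemmas the paper assembles in Appendices~\ref{sup:lemma} and~\ref{sup:derive_em}.
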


\begin{proof}
  \
With Lemma in Section~\ref{sup:derive_em}, we show that the EM update rules at population level are as follows.
\begin{eqnarray*}
  \theta^t  \assign M(\theta^{t-1}, \nu^{t-1})
    & = & \nabla_{\theta} \mathbb{E}_{s \sim p (s \mid \theta^{\ast},
    \pi^{\ast})} \log \cosh \left( \frac{y \langle x, \theta
    \rangle}{\sigma^2} + \nu \right) \mid_{(\theta, \nu) = (\theta^{t - 1},
    \nu^{t - 1})}\\
    & = & \mathbb{E}_{s \sim p (s \mid \theta^{\ast}, \pi^{\ast})} \tanh
    \left( \frac{y \langle x, \theta^{t - 1} \rangle}{\sigma^2} + \nu^{t - 1}
    \right) y x\\
  \tanh (\nu^t) \assign N(\theta^{t-1}, \nu^{t-1})
    & = & \nabla_{\nu} \mathbb{E}_{s \sim p (s \mid \theta^{\ast},
    \pi^{\ast})} \log \cosh \left( \frac{y \langle x, \theta
    \rangle}{\sigma^2} + \nu \right) \mid_{(\theta, \nu) = (\theta^{t - 1},
    \nu^{t - 1})} \\
    & = & \mathbb{E}_{s \sim p (s \mid \theta^{\ast}, \pi^{\ast})} \tanh
    \left( \frac{y \langle x, \theta^{t - 1} \rangle}{\sigma^2} + \nu^{t - 1}
    \right)\\
\end{eqnarray*}
Then, by using the Lemma for the evaluation of expectations in Section~\ref{sup:lemma}, we complete the proof for these closed-from expression in this Theorem.
\end{proof}

\newpage

\begin{corollary}{(Corollary~\ref{cor:no_separa} in Section~\ref{sec:updates}: EM Updates for No Separation Case)}
  For the special case of no separation of parameters, namely $\tmop{SNR}
  \assign \frac{\| \theta^{\ast} \|}{\sigma} \rightarrow 0$, the EM update
  rules for $\theta^t, \tanh(\nu^t)$ at Population level are 
  \begin{eqnarray*}
    \bar{\theta}^t &=&
  \frac{\bar{\theta}^0}{\| \bar{\theta}^0 \|} \cdot \frac{1}{\pi} 
  \int_{\mathbb{R}} \tanh (\| \bar{\theta}^{t - 1} \| x - \nu^{t - 1}) xK_0
  (|x|) \mathrm{d} x\\
  \tanh (\nu^t) &=& \frac{1}{\pi}  \int_{\mathbb{R}}
  \tanh (\nu^{t - 1} - \| \bar{\theta}^{t - 1} \| x) K_0 (|x|) \mathrm{d} x
  \end{eqnarray*}
  where $\bar{\theta}^t \assign \frac{\theta^t}{\sigma}$ and $\bar{\theta}^0
  \assign \frac{\theta^0}{\sigma}$.
\end{corollary}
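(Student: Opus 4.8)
The plan is to obtain Corollary~\ref{cor:no_separa} directly from the closed-form expressions in Theorem~\ref{thm:em_update} by letting $\mathrm{SNR}=\|\bar{\theta}^{\ast}\|\to 0$ and tracking which powers of $\|\bar{\theta}^{\ast}\|$ survive. Both updates are handled by the same mechanism, but the $M$ update requires careful bookkeeping.

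First, for $N(\theta,\nu)$ I would observe that as $\|\bar{\theta}^{\ast}\|\to 0$ the prefactor $(1+(1-\rho^2)\|\bar{\theta}^{\ast}\|^2)^{-1/2}\to 1$, the Bessel argument $\frac{\sqrt{1+\|\bar{\theta}^{\ast}\|^2}\,|\nu'/\|\bar{\theta}\||}{1+(1-\rho^2)\|\bar{\theta}^{\ast}\|^2}\to |\nu'/\|\bar{\theta}\||$, and the shift $\frac{\rho\|\bar{\theta}^{\ast}\|(\nu'/\|\bar{\theta}\|)}{1+(1-\rho^2)\|\bar{\theta}^{\ast}\|^2}\to 0$, so that $\cosh(\cdots-\nu^{\ast})\to\cosh(\nu^{\ast})$, cancelling the $\cosh^{-1}(\nu^{\ast})$ factor. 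The integral collapses to $\frac{1}{\pi\|\bar{\theta}\|}\int \tanh(\nu-\nu')K_0(|\nu'/\|\bar{\theta}\||)\,\mathrm{d}\nu'$, and the substitution $x=\nu'/\|\bar{\theta}\|$ yields exactly the stated formula for $\tanh(\nu^t)$.

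The $M$ update is the delicate part: the scalar prefactor carries a factor $\|\bar{\theta}^{\ast}\|^2$, and I would fold this factor into each of the three vector components before taking the limit. For the $\hat{e}_2$ component the surviving power is $\|\bar{\theta}^{\ast}\|^2\to 0$; for the $\vec{e}_2$ component, $\beta(\nu)$ contributes a $1/\|\bar{\theta}^{\ast}\|$ from $\frac{\sqrt{1+\|\bar{\theta}^{\ast}\|^2}}{\|\bar{\theta}^{\ast}\|}$, so the net power is $\|\bar{\theta}^{\ast}\|\to 0$; hence both vanish. Only the $\frac{\bar{\theta}/\|\bar{\theta}\|}{\sqrt{1-\rho^2}\,\|\bar{\theta}^{\ast}\|^2}$ term is balanced, since the $\|\bar{\theta}^{\ast}\|^2$ of the prefactor cancels the $1/\|\bar{\theta}^{\ast}\|^2$, the two $\sqrt{1-\rho^2}$ factors cancel, and $\alpha(\nu')\to\cosh(\nu^{\ast})K_0(|\nu'/\|\bar{\theta}\||)$ cancels $\cosh^{-1}(\nu^{\ast})$. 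I expect this to leave $M\to -\frac{\sigma}{\pi}\frac{\bar{\theta}}{\|\bar{\theta}\|^3}\int \tanh(\nu-\nu')\,\nu'\,K_0(|\nu'/\|\bar{\theta}\||)\,\mathrm{d}\nu'$. Substituting $x=\nu'/\|\bar{\theta}\|$ (which produces a factor $\|\bar{\theta}\|^2$), dividing by $\sigma$ to pass to $\bar{\theta}^t$, and using $\tanh(\nu-\|\bar{\theta}\|x)=-\tanh(\|\bar{\theta}\|x-\nu)$ to fix the sign, gives the claimed update carrying the unit direction $\bar{\theta}^{t-1}/\|\bar{\theta}^{t-1}\|$; since each iterate is a scalar multiple of the previous one, this direction is collinear with $\bar{\theta}^0/\|\bar{\theta}^0\|$, matching the statement.

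The main obstacle is justifying the interchange of the limit $\|\bar{\theta}^{\ast}\|\to 0$ with the $\nu'$-integration and confirming the vanishing and surviving powers rigorously. I would set this up by dominated convergence: the integrands converge pointwise, and a dominating function uniform in small SNR exists because $K_0,K_1$ decay exponentially at infinity, while near $\nu'=0$ the mild $-\log$ singularity of $K_0$ and the $1/z$ singularity of $K_1$ are controlled by the convolution weight $\nu'$ and the boundedness of $\tanh$; the SNR-dependent rescalings of the Bessel arguments stay within a fixed compact range of multipliers for small SNR, so a single envelope dominates. Care is needed only to confirm integrability of the $K_1$ term near the origin before its coefficient is shown to vanish.
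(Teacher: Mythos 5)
Your route is the same as the paper's: pass to the limit $\|\bar{\theta}^{\ast}\|\to 0$ term by term in Theorem~\ref{thm:em_update}, observe that the $\hat{e}_2$ and $\vec{e}_2$ contributions carry net powers $\|\bar{\theta}^{\ast}\|^2$ and $\|\bar{\theta}^{\ast}\|$ respectively and vanish, keep only the balanced $\frac{\bar{\theta}/\|\bar{\theta}\|}{\sqrt{1-\rho^2}\|\bar{\theta}^{\ast}\|^2}$ term, and substitute $x=\nu'/\|\bar{\theta}\|$. Your bookkeeping of the surviving powers and the cancellation of $\cosh^{-1}(\nu^{\ast})$ is correct, and your dominated-convergence justification for interchanging the limit with the $\nu'$-integral is actually more careful than the paper, which passes the limit through formally.

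There is, however, one genuine gap. The limit computation only yields the one-step recurrence $\bar{\theta}^{t}=\frac{\bar{\theta}^{t-1}}{\|\bar{\theta}^{t-1}\|}\cdot c^{t-1}$ with $c^{t-1}=\frac{1}{\pi}\int_{\mathbb{R}}\tanh(\|\bar{\theta}^{t-1}\|x-\nu^{t-1})\,x\,K_0(|x|)\,\mathrm{d}x$, whereas the corollary asserts the direction is $\frac{\bar{\theta}^{0}}{\|\bar{\theta}^{0}\|}$ with positive orientation for every $t$. Your justification --- ``each iterate is a scalar multiple of the previous one, hence collinear with $\bar{\theta}^{0}/\|\bar{\theta}^{0}\|$'' --- does not suffice: if some $c^{s}$ were negative, then $\frac{\bar{\theta}^{s+1}}{\|\bar{\theta}^{s+1}\|}=-\frac{\bar{\theta}^{0}}{\|\bar{\theta}^{0}\|}$ and the stated formula would fail at step $s+2$ by a sign. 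You must prove $c^{t-1}>0$. The paper does this by folding the integral onto $x>0$:
\begin{equation*}
\int_{\mathbb{R}}\tanh(\|\bar{\theta}^{t-1}\|x-\nu^{t-1})\,x\,K_0(|x|)\,\mathrm{d}x
=\int_{0}^{\infty}\bigl(\tanh(\nu^{t-1}+\|\bar{\theta}^{t-1}\|x)-\tanh(\nu^{t-1}-\|\bar{\theta}^{t-1}\|x)\bigr)\,x\,K_0(|x|)\,\mathrm{d}x>0,
\end{equation*}
using the strict monotonicity of $\tanh$. With this positivity added, an induction gives $\frac{\bar{\theta}^{t}}{\|\bar{\theta}^{t}\|}=\frac{\bar{\theta}^{0}}{\|\bar{\theta}^{0}\|}$ for all $t$ and your argument closes.
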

\begin{remark}
Note that 
we can rewrite the EM update rule as 
\begin{equation*}
  \tanh (\nu^t) = \tanh (\nu^{t - 1})
\cdot \frac{2}{\pi}  \int_{\mathbb{R}_{\geq 0}} \frac{\cosh (2 \nu^{t - 1}) +
1}{\cosh (2 \nu^{t - 1}) + \cosh (2 \| \bar{\theta}^{t - 1} \| x)} K_0 (|x|)
\mathrm{d} x.
\end{equation*}

Since $\frac{2}{\pi}  \int_{\mathbb{R}_{\geq 0}} K_0 (|x|) \mathrm{d} x = 1$
and $\cosh (2 \| \bar{\theta}^{t - 1} \| x) \geq 1$, the EM update rule implies 
\begin{equation*}
  | \nu^t |
\leq | \nu^{t - 1} |,
\quad 
\tmop{sgn} (\nu^t) = \tmop{sgn} (\nu^{t - 1}).
\end{equation*}
If we take the $\ell_2$ norm on both sides of the EM update rule for regression parameters, it follows that 
\begin{equation*}
  \| \bar{\theta}^t
\| = \frac{1}{\pi}  \int_{\mathbb{R}} \tanh (\| \bar{\theta}^{t - 1} \| x -
\nu^{t - 1}) xK_0 (|x|) \mathrm{d} x \leq \frac{1}{\pi}  \int_{\mathbb{R}} | x
| K_0 (|x|) \mathrm{d} x = \frac{2}{\pi}
\end{equation*}
is bounded.
\end{remark}

\begin{proof}
As $\tmop{SNR}
\assign \frac{\| \theta^{\ast} \|}{\sigma} \rightarrow 0$ in previous Theorem, then coefficients in the EM update rule for $\theta$ are as follows.
\begin{eqnarray*}
  \| \bar{\theta}^{\ast} \|^2 \cdot \alpha_{\rho, \| \bar{\theta} \|, \| \bar{\theta}^{\ast} \|,
  \nu^{\ast}} (\nu) 
  &\to &
  0^2 \cdot \cosh(\nu^\ast) \cdot K_0\left(\frac{|\nu|}{\| \bar{\theta} \|}\right) =0\\
  \| \bar{\theta}^{\ast} \|^2 \cdot \beta_{\rho, \| \bar{\theta} \|, \| \bar{\theta}^{\ast} \|,
  \nu^{\ast}} (\nu) 
  & \to & 0\cdot  \tmop{sgn}(\nu) \sinh(-\nu^\ast) \cdot K_1\left(\frac{|\nu|}{\| \bar{\theta} \|}\right) =0 
\end{eqnarray*}
Hence, $\theta^t$ has no $\hat{e}_2, \bar{e}_2$ components, and it only contains the $\frac{\bar{\theta}^{t-1}}{\| \bar{\theta}^{t-1} \|}$ component.
\begin{eqnarray*}
  \bar{\theta}^t \assign \frac{\theta^t}{\sigma} 
  & = & - \frac{\cosh^{-1}
  (\nu^{\ast})}{\pi\| \bar{\theta} \|^2}
  \cdot \left\{ \tanh (\nu) \ast \nu \left[ \cosh(\nu^\ast) \cdot K_0\left(\frac{|\nu|}{\| \bar{\theta} \|}\right) \frac{\bar{\theta}^{t-1}}{\| \bar{\theta}^{t-1} \|}
  \right] \right\}_{\nu \leftarrow \nu^{t - 1}} \\
  & = & \frac{\bar{\theta}^{t-1}}{\| \bar{\theta}^{t-1} \|} \cdot \frac{1}{\pi} 
  \int_{\mathbb{R}} \tanh(\nu'-\nu^{t-1}) \frac{\nu'}{\| \bar{\theta} \|} \cdot K_0\left(\frac{|\nu'|}{\| \bar{\theta} \|}\right)\mathd  \frac{\nu'}{\| \bar{\theta} \|}\\
  & = & \frac{\bar{\theta}^{t-1}}{\| \bar{\theta}^{t-1} \|} \cdot \frac{1}{\pi} 
  \int_{\mathbb{R}} \tanh(\| \bar{\theta}^{t-1} \| x-\nu^{t-1}) x \cdot K_0\left(|x|\right)\mathd  x
\end{eqnarray*} 
Note that $\tanh(\nu^{t-1} + \| \bar{\theta}^{t-1} \| x) - \tanh(\nu^{t-1} - \| \bar{\theta}^{t-1} \| x) > 0$ for $\| \bar{\theta}^{t-1} \|\neq 0, x>0$.
\begin{eqnarray*}
  \int_{\mathbb{R}} \tanh(\| \bar{\theta}^{t-1} \| x-\nu^{t-1}) x \cdot K_0\left(|x|\right)\mathd  x
  & = & \left[\int_{0}^\infty + \int_{-\infty}^0\right]  \tanh(\| \bar{\theta}^{t-1} \| x-\nu^{t-1}) x \cdot K_0\left(|x|\right)\mathd  x\\
  & = & \int_{0}^\infty \left(\tanh(\nu^{t-1} + \| \bar{\theta}^{t-1} \| x) - \tanh(\nu^{t-1} - \| \bar{\theta}^{t-1} \| x)\right)x \cdot K_0\left(|x|\right)\mathd  x\\
  & > & 0
\end{eqnarray*}
Hence, we conclude that $\frac{\bar{\theta}^{t}}{\| \bar{\theta}^{t} \|}=\frac{\bar{\theta}^{t-1}}{\| \bar{\theta}^{t-1} \|}=\cdots =\frac{\bar{\theta}^{0}}{\| \bar{\theta}^{0} \|}$.

Consequently, we prove the closed-form expression for the EM update rule for $\theta$.

As $\tmop{SNR}
\assign \frac{\| \theta^{\ast} \|}{\sigma} \rightarrow 0$ in previous Theorem, the EM update rule for $\tanh(\nu)$ is as follows.
\begin{eqnarray*}
  \tanh(\nu^t) & =& \frac{1}{\pi \| \bar{\theta} \| \cosh (\nu^{\ast})}
  \underset{\mathbb{R}}{\int}  \tanh (\nu^{t - 1} - \nu') K_0\left(\frac{|\nu'|}{\| \bar{\theta} \|}\right)
   \cosh \left( \nu^{\ast} \right) \mathd \nu' 
   = \frac{1}{\pi}  \int_{\mathbb{R}}
  \tanh (\nu^{t - 1} - \| \bar{\theta}^{t - 1} \| x) K_0 (|x|) \mathrm{d} x
\end{eqnarray*}
\end{proof}

\begin{corollary}{(Corollary~\ref{cor:noiseless} in Section~\ref{sec:updates}: EM Updates in Noiseless Setting)}
  In the noiseless setting, namely $\tmop{SNR} \assign \frac{\| \theta^{\ast}
  \|}{\sigma} \rightarrow \infty$, the EM update rules for $\theta^t, \tanh(\nu^t)$ 
  at the Population level are
  \begin{equation}\nonumber
  \begin{aligned}
    &\frac{\theta^t}{\| \theta^{\ast} \|} = \frac{2}{\pi} \left[ \mathrm{sgn}
    (\rho^{t - 1}) \varphi^{t-1}  \frac{\theta^{\ast}}{\| \theta^{\ast} \|} + \cos
    \varphi^{t - 1} \frac{\theta^{t - 1}}{\| \theta^{t - 1} \|} \right]\\
    &
    \tanh (\nu^t) = \mathrm{sgn} (\rho^{t - 1}) \left( \frac{2}{\pi}
    \varphi^{t - 1} \right) \cdot \tanh (\nu^{\ast}),
    \end{aligned}
  \end{equation}
  where $\rho^{t - 1} \assign \frac{\langle \theta^{t - 1}, \theta^{\ast}
  \rangle}{\| \theta^{t - 1} \| \| \theta^{\ast} \|},
  \varphi^{t -
  1} \assign \frac{\pi}{2} - \arccos | \rho^{t - 1} |$.
\end{corollary}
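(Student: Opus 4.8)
The plan is to derive both update rules by taking the limit $\sigma \to 0$ (equivalently $\tmop{SNR} \assign \| \theta^{\ast} \| / \sigma \to \infty$, with $\theta^{\ast}, \theta^{t-1}, \nu^{t-1}$ held fixed) directly in the defining expectations $M(\theta,\nu)=\mathbb{E}_s \tanh(\frac{y\langle x,\theta\rangle}{\sigma^2}+\nu)yx$ and $N(\theta,\nu)=\mathbb{E}_s \tanh(\frac{y\langle x,\theta\rangle}{\sigma^2}+\nu)$, rather than inside the Bessel closed form of Theorem~\ref{thm:em_update}. First I would write a sample as $y=(-1)^{z+1}\langle\theta^{\ast},x\rangle+\sigma\xi$ with $z\sim\mathcal{CAT}(\pi^{\ast})$, $x\sim\mathcal{N}(0,I_d)$, $\xi\sim\mathcal{N}(0,1)$ mutually independent. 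For every $x$ with $\langle\theta^{\ast},x\rangle\langle\theta,x\rangle\neq0$ (a full-measure set) the argument $\frac{y\langle x,\theta\rangle}{\sigma^2}$ diverges to $\pm\infty$ as $\sigma\to0$, so the bounded shift $+\nu$ is washed out and $\tanh(\frac{y\langle x,\theta\rangle}{\sigma^2}+\nu)\to\tmop{sgn}((-1)^{z+1}\langle\theta^{\ast},x\rangle\langle\theta,x\rangle)$ pointwise. Since $|\tanh(\cdot)\,yx|\le|\langle\theta^{\ast},x\rangle|\,\|x\|$ is integrable and $|\tanh(\cdot)|\le1$, dominated convergence passes the limit inside both expectations.

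Next I would collapse the latent average using oddness of $\tmop{sgn}$. For $N$, the identity $\mathbb{E}_z[(-1)^{z+1}\tmop{sgn}(\langle\theta^{\ast},x\rangle\langle\theta,x\rangle)]=(\pi^{\ast}(1)-\pi^{\ast}(2))\tmop{sgn}(\langle\theta^{\ast},x\rangle\langle\theta,x\rangle)$ produces the factor $\pi^{\ast}(1)-\pi^{\ast}(2)=\tanh\nu^{\ast}$; for $M$ the two sign flips cancel so both mixture components coincide, leaving $M\to\mathbb{E}_x[\tmop{sgn}(\langle\theta^{\ast},x\rangle\langle\theta,x\rangle)\langle\theta^{\ast},x\rangle\,x]$, independent of $\nu,\nu^{\ast}$. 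Both reduce to two-dimensional Gaussian integrals: writing $a=\langle x,\hat{e}_1\rangle$, $c=\langle x,\vec{e}_1\rangle$ (jointly standard normal with correlation $\rho^{t-1}$) and $b=\langle x,\hat{e}_2\rangle$, and using that the part of $x$ orthogonal to $\tmop{span}\{\theta,\theta^{\ast}\}$ is independent and mean-zero, I obtain $N\to\tanh\nu^{\ast}\cdot\mathbb{E}[\tmop{sgn}(ac)]$ and $M\to\|\theta^{\ast}\|\,(\mathbb{E}[\tmop{sgn}(ac)a^2]\,\hat{e}_1+\mathbb{E}[\tmop{sgn}(ac)ab]\,\hat{e}_2)$. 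Here I would invoke the standard identities $\mathbb{E}[\tmop{sgn}(ac)]=\frac{2}{\pi}\arcsin\rho$, $\mathbb{E}[\tmop{sgn}(ac)a^2]=\frac{2}{\pi}(\arcsin\rho+\rho\sqrt{1-\rho^2})$, and $\mathbb{E}[|ac|]=\frac{2}{\pi}(\sqrt{1-\rho^2}+\rho\arcsin\rho)$, the last of which gives $\mathbb{E}[\tmop{sgn}(ac)ab]=\frac{2}{\pi}(1-\rho^2)$ after substituting $b=(c-\rho a)/\sqrt{1-\rho^2}$.

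Finally I would reassemble using the angle conventions. With $\arcsin\rho^{t-1}=\tmop{sgn}(\rho^{t-1})\arcsin|\rho^{t-1}|=\tmop{sgn}(\rho^{t-1})\varphi^{t-1}$ and $\sqrt{1-(\rho^{t-1})^2}=\cos\varphi^{t-1}$, the first expectation yields $\tanh\nu^t=\tmop{sgn}(\rho^{t-1})(\frac{2}{\pi}\varphi^{t-1})\tanh\nu^{\ast}$. For $\theta$, decomposing $\vec{e}_1=\rho\hat{e}_1+\sqrt{1-\rho^2}\hat{e}_2$ shows that $\tmop{sgn}(\rho)\varphi\,\hat{e}_1+\cos\varphi\,\vec{e}_1$ has $\hat{e}_1$-coefficient $\frac{2}{\pi}(\arcsin\rho+\rho\sqrt{1-\rho^2})$ and $\hat{e}_2$-coefficient $\frac{2}{\pi}(1-\rho^2)$, matching the computed $M/\|\theta^{\ast}\|$ and giving $\frac{\theta^t}{\|\theta^{\ast}\|}=\frac{2}{\pi}[\tmop{sgn}(\rho^{t-1})\varphi^{t-1}\frac{\theta^{\ast}}{\|\theta^{\ast}\|}+\cos\varphi^{t-1}\frac{\theta^{t-1}}{\|\theta^{t-1}\|}]$. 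The hard part will be the rigorous interchange of limit and expectation (controlling the vanishing-noise term $\sigma\xi$ inside the $\tanh$ and the degenerate set where the product vanishes) together with evaluating the sign-weighted integral $\mathbb{E}[\tmop{sgn}(ac)a^2]$, which I would obtain by direct polar integration or by differentiating $\mathbb{E}[\tmop{sgn}(ac)]$ in $\rho$. An alternative route, parallel to the proof of Corollary~\ref{cor:no_separa}, is to send $\|\bar{\theta}^{\ast}\|\to\infty$ inside the closed forms of Theorem~\ref{thm:em_update}; there the difficulty migrates to the asymptotics of the $K_0,K_1$ convolutions, which concentrate at large argument after rescaling and require identities of the type $\int_0^{\infty}sK_0(s)\cosh(\rho s)\,\mathd s$.
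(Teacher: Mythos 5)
Your proposal is correct, and it takes a genuinely different route from the paper. The paper proves this corollary by sending $\|\bar{\theta}^{\ast}\|\to\infty$ inside the Bessel closed forms of Theorem~\ref{thm:em_update}: it rescales the convolution variable via $\alpha^2\assign(1+(1-\rho^2)\|\bar{\theta}^{\ast}\|^2)^{-1}\to 0_+$, replaces $\tanh(\nu^{t-1}-\nu)$ by $-\tmop{sgn}(x)$ in the limit, and evaluates the resulting integrals $\int_0^{\infty}\sinh(|\rho|x)K_0(|x|)\,\mathd x$ and $\int_{\mathbb{R}}|x|\cosh(\rho x-\nu^{\ast})K_0(|x|)\,\mathd x$, $\int_{\mathbb{R}}x\sinh(\rho x-\nu^{\ast})K_1(|x|)\,\mathd x$ before reassembling with the unit-vector relations $\vec{e}_2+\rho\hat{e}_2=\sqrt{1-\rho^2}\,\hat{e}_1$, $\hat{e}_2+\rho\vec{e}_2=\sqrt{1-\rho^2}\,\vec{e}_1$. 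You instead take $\sigma\to 0$ directly in the defining expectations, obtaining the sign-function forms $M\to\mathbb{E}_x|\langle x,\theta^{\ast}\rangle|\tmop{sgn}\langle x,\theta^{t-1}\rangle x$ and $N\to\tanh(\nu^{\ast})\,\mathbb{E}_x\tmop{sgn}\langle x,\theta^{\ast}\rangle\tmop{sgn}\langle x,\theta^{t-1}\rangle$ via dominated convergence, and then evaluate sign-weighted Gaussian moments. These ingredients actually appear elsewhere in the paper --- the appendix contains a lemma stating exactly these noiseless limits of $M,N$, Grothendieck's identity (Lemma~\ref{lem:Grothendieck}), and the polar-coordinate evaluation of $\mathbb{E}|\theta^{\ast\top}xx^{\top}\theta|$ --- but they are used there for the finite-sample analysis, not for this corollary. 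Your identities check out: $\mathbb{E}[\tmop{sgn}(ac)a^2]=\frac{2}{\pi}(\arcsin\rho+\rho\sqrt{1-\rho^2})$ and $\mathbb{E}[|a|\tmop{sgn}(c)b]=\frac{2}{\pi}(1-\rho^2)$ reproduce precisely the $\hat{e}_1$- and $\hat{e}_2$-coefficients of $\frac{2}{\pi}[\tmop{sgn}(\rho)\varphi\,\hat{e}_1+\cos\varphi\,\vec{e}_1]$, and the $(-1)^{z+1}$ cancellation cleanly explains why $M$ is independent of the mixing weights while $N$ picks up the factor $\tanh(\nu^{\ast})$. Your approach buys a more elementary and self-contained derivation that avoids the $K_0,K_1$ asymptotics entirely; the paper's approach buys consistency, since the same Theorem~\ref{thm:em_update} machinery also yields the opposite limit (Corollary~\ref{cor:no_separa}) and is positioned for finite-SNR expansions. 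The only points needing care in your write-up, which you already flag, are the almost-everywhere pointwise convergence in the presence of the $\sigma\xi$ term (harmless, since it is $O(\sigma^{-1})$ against the $O(\sigma^{-2})$ signal term) and a rigorous evaluation of $\mathbb{E}[\tmop{sgn}(ac)a^2]$, which follows by the same polar integration the paper uses for $\mathbb{E}|ac|$.
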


\begin{proof}
  \
  For brevity, we let $\rho\assign \rho^{t-1}, \varphi^{t-1} = \frac{\pi}{2}- \arccos |\rho^{t-1}|$, thus $\sqrt{1-\rho^2}=\cos \varphi^{t-1}$

  Denote $k \assign \frac{\| \bar{\theta}^{\ast} \|}{\| \bar{\theta}^{t - 1} \|}$ and 
  $\alpha^2 \assign \frac{1}{(1 + (1 - \rho^2) \| \bar{\theta}^{\ast}
  \|^2)} \rightarrow 0_+ , x \assign k \alpha^2 \cdot \nu$, when $\| \bar{\theta}^{t - 1} \|\to \infty$.

  Hence, $\sqrt{1 + \| \bar{\theta}^{\ast} \|^2} \sim \|
  \bar{\theta}^{\ast} \| \sim \frac{\alpha^{- 1}}{\sqrt{1 - \rho^2}}$,
  $\| \bar{\theta}^{t - 1} \|^{- 1} = k \| \bar{\theta}^{\ast} \|^{- 1} \sim k
  \sqrt{1 - \rho^2} \alpha$.

  Furthermore, $\tanh (\nu^{t - 1} - \nu) = \tanh \left(
  \nu^{t - 1} - \frac{x}{k \alpha^2} \right) \rightarrow - \tmop{sgn} (x)$ as $\alpha \to 0_+$.

  Evaluating the integral involving $K_0(|x|)$, we obtain the following expression as $\| \bar{\theta}^{t - 1} \|\to \infty$, namely $\alpha\to 0_+$.
  \begin{eqnarray*}
    &  & \tanh (\nu^t) =\mathbb{E}_{s \sim p (s \mid \theta^{\ast},
    \pi^{\ast})} \tanh \left( \frac{y \langle x, \theta^{t - 1}
    \rangle}{\sigma^2} + \nu^{t - 1} \right)\\
    & =  & \lim_{\alpha\to 0_+} \frac{1}{\pi \cosh (\nu^{\ast})} k \sqrt{1 - \rho^2} \alpha^2
    \underset{\mathbb{R}}{\int}  \tanh (\nu^{t - 1} - \nu) K_0 (k \alpha^2 \cdot
    | \nu |) \cosh (\rho k \alpha^2 \cdot \nu - \nu^{\ast}) \mathd \nu\\
    & =  &  \frac{\sqrt{1 - \rho^2}}{\pi \cosh (\nu^{\ast})}
    \underset{\mathbb{R}}{\int}  - \tmop{sgn} (x) K_0 (| x |) \cosh (\rho x -
    \nu^{\ast}) \mathd x\\
    & = & \frac{\sqrt{1 - \rho^2}}{\pi \cosh (\nu^{\ast})} \int_0^{+ \infty} 2
    \sinh (\nu^{\ast}) \sinh (\rho x) K_0 (| x |) \mathd x\\
    & = & \frac{2 \sqrt{1 - \rho^2}}{\pi} \tanh (\nu^{\ast}) \tmop{sgn} (\rho)
    \int_0^{+ \infty} \sinh (| \rho | x) K_0 (| x |) \mathd x\\
    & = & \tmop{sgn} (\rho) \tanh (\nu^{\ast}) \left[ 1 - \frac{2}{\pi} \arccos
    | \rho | \right]\\
    & = & \mathrm{sgn} (\rho^{t - 1}) \left( \frac{2}{\pi}
    \varphi^{t - 1} \right) \cdot \tanh (\nu^{\ast})
  \end{eqnarray*}
  Evaluating the integrals involving $K_0(|x|), K_1(|x|)$ and the Lemma for the relations of unit vectors, we obtain the following expression as $\| \bar{\theta}^{t - 1} \|\to \infty$, namely $\alpha\to 0_+$.

  \begin{eqnarray*}
    \frac{\theta^t}{\| \theta^{\ast} \|} & = & \frac{\bar{\theta}^t}{\|
    \bar{\theta}^{\ast} \|} = \frac{1}{\| \bar{\theta}^{\ast} \|} \cdot
    \frac{\theta^t}{\sigma}\\
    & =  & \lim_{\alpha\to 0_+} - \frac{\sqrt{1 - \rho^2} \cosh^{- 1} (\nu^{\ast})}{\pi} k^2
    \left( \sqrt{1 - \rho^2} \alpha \right) \alpha^3 \tanh (\nu) \ast \nu\\
    &  & \left[ \cosh (\rho k \alpha^2 \cdot \nu - \nu^{\ast}) K_0 (k \alpha^2
    \cdot | \nu |) \left( \frac{\alpha^2}{\sqrt{1 - \rho^2}} \cdot
    \frac{\bar{\theta}}{\| \bar{\theta} \|} + \hat{e}_2 \right) + \tmop{sgn}
    (\nu) \sinh (\rho k \alpha^2 \cdot \nu - \nu^{\ast}) K_1 (k \alpha^2 \cdot |
    \nu |) \vec{e}_2 \right]\\
    & =  &  - \frac{(1 - \rho^2)}{\pi} \cosh^{- 1} (\nu^{\ast})
    \int_{\mathbb{R}} - \tmop{sgn} (x) x [\cosh (\rho x - \nu^{\ast}) K_0
    (| x |) \hat{e}_2 + \tmop{sgn} (x) \sinh (\rho x - \nu^{\ast}) K_1 (| x |)
    \vec{e}_2] \mathd x\\
    & = & \frac{(1 - \rho^2)}{\pi} \cosh^{- 1} (\nu^{\ast}) \left\{ \left[
    \int_{\mathbb{R}} | x | \cosh (\rho x - \nu^{\ast}) K_0 (| x |) \mathd x
    \right] \hat{e}_2 + \left[ \int_{\mathbb{R}} x \sinh (\rho x - \nu^{\ast})
    K_1 (| x |) \mathd x \right] \vec{e}_2 \right\}\\
    & = & \frac{(1 - \rho^2)}{\pi} \left\{ \left( 2 \left[ \frac{1}{1 - \rho^2}
    + \frac{| \rho |  \left( \frac{\pi}{2} - \arccos | \rho | \right)}{(1 -
    \rho^2)^{\frac{3}{2}}} \right] \right) \hat{e}_2 + \left( 2 \tmop{sgn}
    (\rho) \left[ \frac{ \left( \frac{\pi}{2} - \arccos | \rho | \right)}{(1 -
    \rho^2)^{\frac{3}{2}}} + \frac{| \rho |}{1 - \rho^2} \right] \right)
    \vec{e}_2 \right\}\\
    & = & \tmop{sgn} (\rho) \frac{ \left( \frac{\pi}{2} - \arccos | \rho |
    \right)}{\frac{\pi}{2} \sqrt{1 - \rho^2}} (\vec{e}_2 + \rho \hat{e}_2) +
    \left( \frac{\pi}{2} \right)^{- 1} (\hat{e}_2 + \rho \vec{e}_2)\\
    & = & \tmop{sgn} (\rho) \frac{ \left( \frac{\pi}{2} - \arccos | \rho |
    \right)}{\frac{\pi}{2} \sqrt{1 - \rho^2}} \cdot \sqrt{1 - \rho^2}
    \frac{\theta^{\ast}}{\| \theta^{\ast} \|} + \left( \frac{\pi}{2} \right)^{-
    1} \sqrt{1 - \rho^2} \frac{\theta }{\| \theta \|}\\
    & = & \frac{2}{\pi} \left[ \mathrm{sgn}
    (\rho^{t - 1}) \varphi^{t-1}  \frac{\theta^{\ast}}{\| \theta^{\ast} \|} + \cos
    \varphi^{t - 1} \frac{\theta^{t - 1}}{\| \theta^{t - 1} \|} \right]
  \end{eqnarray*}
\end{proof}

\begin{lemma}
In the noiseless setting, the EM update rules at the population level for 2MLR are
\begin{eqnarray*}
  M(\theta^{t-1}, \nu^{t-1}) & = & \mathbb{E}_{x\sim p(x)} |\langle x, \theta^\ast\rangle| \tmop{sgn}\langle x, \theta^{t-1}\rangle x\\
  N(\theta^{t-1}, \nu^{t-1}) & = & \mathbb{E}_{x\sim p(x)} \mathbb{E}_{z\sim \mathcal{CAT}(\pi^\ast)} (-1)^{z+1} 
  \tmop{sgn}\langle x, \theta^\ast\rangle \tmop{sgn}\langle x, \theta^{t-1}\rangle
\end{eqnarray*}
In the noiseless setting, the EM update rules at the finite-sample level for 2MLR are
\begin{eqnarray*}
  M_n(\theta^{t-1}, \nu^{t-1}) & = & \left(\frac{1}{n}\sum_{i=1}^n x_i x_i^\top\right)^{-1}\frac{1}{n}\sum_{i=1}^n |\langle x_i, \theta^\ast\rangle| \tmop{sgn}\langle x_i, \theta^{t-1}\rangle x_i\\
  N_n(\theta^{t-1}, \nu^{t-1}) & = & \frac{1}{n}\sum_{i=1}^n (-1)^{z_i+1} 
  \tmop{sgn}\langle x_i, \theta^\ast\rangle \tmop{sgn}\langle x_i, \theta^{t-1}\rangle
\end{eqnarray*}
\end{lemma}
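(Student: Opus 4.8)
The plan is to derive both pairs of identities directly from the defining expressions for the EM updates in Eq.~\eqref{eq:theta},~\eqref{eq:nu},~\eqref{eq:finite} by taking the noiseless limit $\sigma \to 0$ (equivalently $\mathrm{SNR} \to \infty$), rather than routing through the Bessel-function closed forms of Theorem~\ref{thm:em_update}. In the noiseless setting the additive noise vanishes, so every sample satisfies $y = (-1)^{z+1}\langle x, \theta^{\ast}\rangle$ exactly, and the crucial observation is that the argument of the hyperbolic tangent, $\frac{y\langle x, \theta\rangle}{\sigma^2} + \nu$, has its first term blow up as $\sigma \to 0$.

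First I would establish the pointwise limit. For every $x$ with $\langle x, \theta\rangle \neq 0$ and $\langle x, \theta^{\ast}\rangle \neq 0$ --- a set of full measure under $x \sim \mathcal{N}(0, I_d)$ --- the quantity $\frac{y\langle x, \theta\rangle}{\sigma^2}$ tends to $\pm\infty$ with sign $\tmop{sgn}(y\langle x, \theta\rangle)$ as $\sigma \to 0$, dominating the bounded shift $\nu$; hence $\tanh\!\left(\frac{y\langle x, \theta\rangle}{\sigma^2} + \nu\right) \to \tmop{sgn}(y\langle x, \theta\rangle)$ almost surely. Since $|\tanh(\cdot)| \leq 1$, the integrands are dominated by $|y|\,\|x\|$ and $1$ respectively, both integrable for Gaussian $x$ and the Gaussian-mixture $y$, so dominated convergence lets me pass the limit inside $\mathbb{E}_{s\sim p(s\mid\theta^{\ast},\pi^{\ast})}$, yielding $M \to \mathbb{E}_s\, \tmop{sgn}(y\langle x, \theta\rangle)\,yx$ and $N \to \mathbb{E}_s\, \tmop{sgn}(y\langle x, \theta\rangle)$.

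Next I would carry out the sign algebra. Writing $y = (-1)^{z+1}\langle x, \theta^{\ast}\rangle$ and using $\big((-1)^{z+1}\big)^2 = 1$ together with $\tmop{sgn}(ab) = \tmop{sgn}(a)\,\tmop{sgn}(b)$, I obtain the $z$-independent identity $\tmop{sgn}(y\langle x, \theta\rangle)\,y = |\langle x, \theta^{\ast}\rangle|\,\tmop{sgn}\langle x, \theta\rangle$, which collapses the expectation over $z$ and produces the stated formula for $M$. For $N$ the label does not cancel: $\tmop{sgn}(y\langle x, \theta\rangle) = (-1)^{z+1}\tmop{sgn}\langle x, \theta^{\ast}\rangle\,\tmop{sgn}\langle x, \theta\rangle$, and averaging over $z \sim \mathcal{CAT}(\pi^{\ast})$ gives the population formula. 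Replacing $\mathbb{E}_{s\sim p(s\mid\theta^{\ast},\pi^{\ast})}$ with the empirical average $\frac{1}{n}\sum_{i=1}^n$ and the latent $z$ with the realized labels $z_i$ reproduces the finite-sample rules verbatim; no limit-interchange is needed there since the sum is finite, and the explicit $(-1)^{z_i+1}$ survives in $N_n$ precisely because the labels are fixed per sample rather than integrated out.

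The main obstacle is making the noiseless limit rigorous rather than formal: I must confirm that the convergence $\tanh \to \tmop{sgn}$ holds off a null set and that the dominating function $|y|\,\|x\|$ is genuinely integrable, so that dominated convergence applies to the vector-valued update $M$ (and componentwise to $M_n$ before applying the fixed matrix $(\frac{1}{n}\sum_{i=1}^n x_i x_i^{\top})^{-1}$). The remaining manipulations are elementary; the one point to state carefully is why the latent variable cancels in $M$ but persists in $N$ and $N_n$, which is exactly the $\big((-1)^{z+1}\big)^2 = 1$ cancellation identified above.
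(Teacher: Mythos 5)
Your proposal is correct and follows essentially the same route as the paper, which likewise obtains the lemma by sending $\sigma \to 0_+$ in the defining update rules so that $\tanh\left(\frac{y\langle x,\theta^{t-1}\rangle}{\sigma^2}+\nu^{t-1}\right) \to \tmop{sgn}\langle x,\theta^{\ast}\rangle\,\tmop{sgn}\langle x,\theta^{t-1}\rangle$ and $y \to (-1)^{z+1}\langle x,\theta^{\ast}\rangle$. The only difference is that you supply the dominated-convergence and sign-algebra details (including why $(-1)^{z+1}$ cancels in $M$ but survives in $N$) that the paper's one-line proof leaves implicit.
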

\begin{proof}
By letting $\sigma\to 0_+$, SNR $\assign \frac{\|\theta^\ast \|}{\sigma}\to 0_+$, then $y\to (-1)^{z+1} \langle x, \theta^\ast\rangle, 
\tanh \left( \frac{y \langle x, \theta^{t - 1}
    \rangle}{\sigma^2} + \nu^{t - 1} \right) \to \tmop{sgn}\langle x, \theta^\ast\rangle \tmop{sgn}\langle x, \theta^{t-1}\rangle$. 
Hence, these expressions are proved by taking the limits.
\end{proof}

\begin{lemma}
In the noiseless setting, the easy EM update rule for $\theta$ at the finite-sample level for 2MLR is
\begin{eqnarray*}
  M_n^{\tmop{easy}}(\theta^{t-1}, \nu^{t-1}) & = &\frac{1}{n}\sum_{i=1}^n |\langle x_i, \theta^\ast\rangle| \tmop{sgn}\langle x_i, \theta^{t-1}\rangle x_i\\
\end{eqnarray*}
\end{lemma}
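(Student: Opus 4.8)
The plan is to mirror the limit argument used in the immediately preceding lemma for $M_n$ and $N_n$, exploiting the fact that the easy EM update defined in Eq.~\eqref{eq:easyEM} is precisely the finite-sample update $M_n$ with the whitening prefactor $\left(\frac{1}{n}\sum_{i=1}^n x_i x_i^\top\right)^{-1}$ removed. Concretely, I would begin from
\[
M^{\tmop{easy}}_n(\theta^{t-1},\nu^{t-1}) = \frac{1}{n}\sum_{i=1}^n \tanh\!\left(\frac{y_i\langle x_i,\theta^{t-1}\rangle}{\sigma^2}+\nu^{t-1}\right) y_i x_i,
\]
and take the noiseless limit $\sigma\to 0_+$ (equivalently $\tmop{SNR}\to\infty$) summand by summand. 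Because the sum is finite, it suffices to identify the pointwise limit of each term.

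First I would note that in the noiseless model $\varepsilon=0$, so $y_i=(-1)^{z_i+1}\langle x_i,\theta^\ast\rangle$ exactly and independently of $\sigma$. Since $\nu^{t-1}$ is a fixed scalar and $\langle x_i,\theta^{t-1}\rangle$ is fixed, the argument $\frac{y_i\langle x_i,\theta^{t-1}\rangle}{\sigma^2}+\nu^{t-1}$ diverges to $\pm\infty$ as $\sigma\to 0_+$ with the sign of $y_i\langle x_i,\theta^{t-1}\rangle$, the bounded term $\nu^{t-1}$ being asymptotically negligible against the $\sigma^{-2}$ factor. Hence $\tanh(\cdot)\to \tmop{sgn}(y_i\langle x_i,\theta^{t-1}\rangle)$ almost surely, using that $\langle x_i,\theta^\ast\rangle$ and $\langle x_i,\theta^{t-1}\rangle$ are nonzero with probability one for Gaussian covariates.

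The only genuine computation is then the sign algebra in the limiting summand, which I would carry out by writing $s_i:=(-1)^{z_i+1}\in\{\pm1\}$ and using $s_i^2=1$, $\tmop{sgn}(s_i)=s_i$, and $\tmop{sgn}(a)\,a=|a|$ to obtain
\[
\tmop{sgn}(y_i\langle x_i,\theta^{t-1}\rangle)\,y_i
= \tmop{sgn}\!\big(s_i\langle x_i,\theta^\ast\rangle\langle x_i,\theta^{t-1}\rangle\big)\,s_i\langle x_i,\theta^\ast\rangle
= |\langle x_i,\theta^\ast\rangle|\,\tmop{sgn}\langle x_i,\theta^{t-1}\rangle.
\]
Each summand therefore converges to $|\langle x_i,\theta^\ast\rangle|\,\tmop{sgn}\langle x_i,\theta^{t-1}\rangle\, x_i$, and summing over the $n$ samples and dividing by $n$ yields the claimed closed form. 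I do not anticipate a substantive obstacle: the result follows from a finite sum of bounded pointwise limits, and the only point requiring care is the vanishing-probability event $\langle x_i,\theta^\ast\rangle\langle x_i,\theta^{t-1}\rangle=0$ on which the sign is undefined, which I would dismiss by absolute continuity of the Gaussian covariates. The argument is identical in structure to the preceding lemma, simply omitting the $\left(\frac{1}{n}\sum_i x_i x_i^\top\right)^{-1}$ factor that separates standard EM from easy EM.
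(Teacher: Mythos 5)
Your proof is correct and follows the same route as the paper, which simply states that the result is obtained by taking the limit $\sigma \to 0_+$ in the easy EM update of Eq.~\eqref{eq:easyEM}; you merely fill in the details (the divergence of the $\tanh$ argument, the almost-sure nonvanishing of the inner products, and the sign algebra $\tmop{sgn}(y_i\langle x_i,\theta^{t-1}\rangle)\,y_i = |\langle x_i,\theta^\ast\rangle|\,\tmop{sgn}\langle x_i,\theta^{t-1}\rangle$), all of which check out.
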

\begin{proof}
  This Lemma is proved by taking the limit $\sigma\to 0_+$ for easy EM update.
\end{proof}
\newpage
\begin{lemma}
  Let $\rho := \frac{\langle  \theta,  \theta^\ast \rangle}{\| \theta\| \cdot \|\theta^\ast \|}$ and $\varphi = \frac{\pi}{2} -\arccos |\rho|$, then the identity holds.
\begin{eqnarray*}
  \mathbb{E}_{x \sim \mathcal{N} (0, I_d)} | \theta^{\ast \top} x 
    x^{\top} \theta |
    & = &
    \| \theta^{\ast} \| \| \theta \| \cdot \left\{  \left[ 1 -
    \frac{\arccos | \rho |}{\frac{\pi}{2}} \right] | \rho | + \left(
    \frac{\pi}{2} \right)^{- 1} \sqrt{1 - \rho^2} \right\}\\
    & = &
    \| \theta^{\ast} \| \| \theta \| \cdot \frac{2}{\pi} 
    (\varphi \sin\varphi + \cos \varphi  )
\end{eqnarray*}
\end{lemma}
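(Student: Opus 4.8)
The plan is to reduce the $d$-dimensional Gaussian expectation to a two-dimensional one using rotational invariance, and then to recognize the result as the classical expected absolute value of the product of two correlated standard normals. First I would observe that $\theta^{\ast\top}xx^\top\theta=\langle x,\theta^\ast\rangle\langle x,\theta\rangle$ depends on $x$ only through its projections onto $\mathrm{span}\{\theta,\theta^\ast\}$. Using the orthonormal frame $\vec e_1=\theta/\|\theta\|$ and $\vec e_2$ from the earlier lemma and writing $\lambda_1:=\langle x,\vec e_1\rangle,\ \lambda_2:=\langle x,\vec e_2\rangle$ with $\lambda_1,\lambda_2\overset{\mathrm{iid}}{\sim}\mathcal N(0,1)$, I get $\langle x,\theta\rangle=\|\theta\|\lambda_1$ and $\langle x,\theta^\ast\rangle=\|\theta^\ast\|(\rho\lambda_1+\sqrt{1-\rho^2}\lambda_2)$. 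Since the integrand is even under $\theta\mapsto-\theta$ (equivalently $\rho\mapsto-\rho$), I may assume $\rho\ge 0$, so $|\rho|=\rho$.

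Next I would pass to polar coordinates $\lambda_1=r\cos\psi,\ \lambda_2=r\sin\psi$, under which the density factorizes as $\tfrac{1}{2\pi}e^{-r^2/2}r\,\mathd r\,\mathd\psi$, so the radial and angular parts separate. Using $\mathbb E[r^2]=2$ and writing $\rho\cos\psi+\sqrt{1-\rho^2}\sin\psi=\cos(\psi-\alpha)$ with $\alpha:=\arccos\rho\in[0,\tfrac\pi2]$, I obtain
\[
\mathbb E_x\,|\theta^{\ast\top}xx^\top\theta|=\|\theta\|\,\|\theta^\ast\|\cdot\frac{1}{\pi}\int_0^{2\pi}\bigl|\cos\psi\,\cos(\psi-\alpha)\bigr|\,\mathd\psi,
\]
so everything reduces to the single trigonometric integral $I(\alpha):=\int_0^{2\pi}\bigl|\cos\psi\cos(\psi-\alpha)\bigr|\,\mathd\psi$.

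The main computational step, and the only genuinely delicate part, is evaluating $I(\alpha)$ because of the absolute value. I would split $[0,2\pi]$ at the four sign-change points of $\cos\psi\cos(\psi-\alpha)$, namely the zeros $\psi=\tfrac\pi2,\tfrac{3\pi}2$ of $\cos\psi$ and $\psi=\alpha+\tfrac\pi2,\alpha+\tfrac{3\pi}2$ of $\cos(\psi-\alpha)$, remove the absolute value with the correct sign on each subinterval, and integrate the product-to-sum form $\cos\psi\cos(\psi-\alpha)=\tfrac12[\cos\alpha+\cos(2\psi-\alpha)]$. Collecting the contributions yields $I(\alpha)=2\bigl(\sin\alpha+(\tfrac\pi2-\alpha)\cos\alpha\bigr)$, which I would sanity-check against $\alpha=0$ (giving $I=\pi$ since $\int_0^{2\pi}\cos^2=\pi$) and $\alpha=\tfrac\pi2$ (giving $I=2$). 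Substituting back gives $\mathbb E_x\,|\theta^{\ast\top}xx^\top\theta|=\|\theta\|\,\|\theta^\ast\|\cdot\frac{2}{\pi}\bigl(\sin\alpha+(\tfrac\pi2-\alpha)\cos\alpha\bigr)$.

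Finally I would translate back to the stated quantities. With $\varphi=\tfrac\pi2-\arccos|\rho|=\tfrac\pi2-\alpha$ I have $\sin\varphi=\cos\alpha=|\rho|$ and $\cos\varphi=\sin\alpha=\sqrt{1-\rho^2}$, so $\frac{2}{\pi}\bigl(\sin\alpha+(\tfrac\pi2-\alpha)\cos\alpha\bigr)=\frac{2}{\pi}(\varphi\sin\varphi+\cos\varphi)$, which is the second displayed form. The first form then follows by multiplying through by $\tfrac2\pi=(\tfrac\pi2)^{-1}$: since $\varphi\sin\varphi+\cos\varphi=(\tfrac\pi2-\arccos|\rho|)|\rho|+\sqrt{1-\rho^2}$, the expression equals $\bigl[1-\arccos|\rho|/(\tfrac\pi2)\bigr]|\rho|+(\tfrac\pi2)^{-1}\sqrt{1-\rho^2}$, as claimed.
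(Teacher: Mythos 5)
Your proposal is correct and follows essentially the same route as the paper's proof: reduce to the two-dimensional span of $\theta,\theta^\ast$ by rotational invariance, pass to polar coordinates so the radial factor contributes $\tfrac{1}{2\pi}\int_0^\infty r^3 e^{-r^2/2}\,\mathd r = \tfrac{1}{\pi}$ and the angular factor is $\int_0^{2\pi}|\cos\psi\cos(\psi-\arccos\rho)|\,\mathd\psi$, then evaluate the latter to obtain $\tfrac{1}{\pi}[(\pi-2\arccos\rho)\rho+2\sqrt{1-\rho^2}]$. Your handling of the sign (reducing to $\rho\ge 0$ by the symmetry $\theta\mapsto-\theta$) is an equally valid substitute for the paper's observation that $(\pi-2\arccos\rho)\rho$ is even in $\rho$, and your check against the bivariate-normal identity $\mathbb E|XY|=\tfrac{2}{\pi}(\sqrt{1-\rho^2}+\rho\arcsin\rho)$ confirms the computation.
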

\begin{proof}
  Decompose $x= \tilde{x} + \in\mathbb{R}^d$, where $\tilde{x}\in\tmop{span}\{\theta, \theta^\ast\}$.

  Let $\theta = \| \theta \| \left( \rho \hat{e}_1 + \sqrt{1 - \rho^2}
  \hat{e}_2 \right), \theta^{\ast} = \| \theta^{\ast} \| \hat{e}_1$ and
  $\tilde{x} = \lambda_1 \hat{e}_1 + \lambda_2 \hat{e}_2$ , since $\lambda_1,
  \lambda_2 \overset{\tmop{iid}}{\sim} \mathcal{N} (0, 1)$
  
  and $r = \sqrt{\lambda^2_1 + \lambda^2_2}, \lambda_1 + \mathtt{i} \lambda_2 =
  r \exp (\mathtt{i} \alpha)$ and $\mathd x \mathd y = r \mathd r \mathd \alpha$
  
  Note $(\pi - 2 \arccos \rho) \rho$ is an even function
  \begin{eqnarray*}
    \mathbb{E}_{x \sim \mathcal{N} (0, I_d)} | \theta^{\ast \top} x 
    x^{\top} \theta |
    & = & \mathbb{E}_{x \sim \mathcal{N} (0, I_d)} | \theta^{\ast \top} \tilde{x} 
    \tilde{x}^{\top} \theta | \\
    & = & \| \theta^{\ast} \| \| \theta \|
    \mathbb{E}_{\lambda_1, \lambda_2 \overset{\tmop{iid}}{\sim} \mathcal{N} (0,
    1)} \left| \lambda_1 \left( \lambda_1 \rho + \lambda_2 \sqrt{1 - \rho^2}
    \right) \right|\\
    & = & \| \theta^{\ast} \| \| \theta \| \underset{\mathbb{R}^2}{\iint}
    \left| \lambda_1 \left( \lambda_1 \rho + \lambda_2 \sqrt{1 - \rho^2} \right)
    \right| \frac{1}{2 \pi} \exp \left( - \frac{\lambda^2_1 + \lambda^2_2}{2}
    \right) \mathd x \mathd y\\
    & = & \| \theta^{\ast} \| \| \theta \| \left[ \frac{1}{2 \pi}
    \underset{\mathbb{R}_{\geq 0}}{\int} r^3 \exp \left( - \frac{r^2}{2} \right)
    \mathd r \right] \cdot \left[ \underset{[0, 2 \pi]}{\int} | \cos \alpha
    \cdot \cos (\alpha - \arccos \rho) | \mathd \alpha \right]\\
    & = & \| \theta^{\ast} \| \| \theta \| \cdot \frac{1}{\pi} \cdot \left[
    (\pi - 2 \arccos \rho) \rho + 2 \sqrt{1 - \rho^2} \right]\\
    & = & \| \theta^{\ast} \| \| \theta \| \cdot \left\{  \left[ 1 -
    \frac{\arccos \rho}{\frac{\pi}{2}} \right] \rho + \left( \frac{\pi}{2}
    \right)^{- 1} \sqrt{1 - \rho^2} \right\}\\
    & = & \| \theta^{\ast} \| \| \theta \| \cdot \left\{  \left[ 1 -
    \frac{\arccos | \rho |}{\frac{\pi}{2}} \right] | \rho | + \left(
    \frac{\pi}{2} \right)^{- 1} \sqrt{1 - \rho^2} \right\}\\
    & = &  \| \theta^{\ast} \| \| \theta \| \cdot \frac{2}{\pi} 
    (\varphi \sin\varphi + \cos \varphi  )
  \end{eqnarray*}
\end{proof}

\begin{lemma}{(Grothendieck's Identity)}\label{lem:Grothendieck}
Let $\rho := \frac{\langle  \theta,  \theta^\ast \rangle}{\| \theta\| \cdot \|\theta^\ast \|}$ and $\varphi = \frac{\pi}{2} -\arccos |\rho|$, then the identity holds.
\begin{eqnarray*}
  \mathbb{E}_{x \sim \mathcal{N} (0, I_d)} \tmop{sgn} \langle x, \theta^{\ast} \rangle
  \tmop{sgn} \langle x, \theta \rangle
    & = &
    \frac{2}{\pi} \tmop{sgn}\langle  \theta,  \theta^\ast \rangle \varphi
\end{eqnarray*}
Note that $\frac{1}{2} + \frac{1}{2}\tmop{sgn} \langle x, \theta^{\ast} \rangle
\tmop{sgn} \langle x, \theta \rangle \in\{1, 0\}$ is a Binomial random variable, hence
\begin{eqnarray*}
\mathbb{P}\left[\tmop{sgn} \langle x, \theta^{\ast} \rangle
\tmop{sgn} \langle x, \theta \rangle = +1\right] 
& = & \frac{1}{2} + \frac{1}{\pi} \tmop{sgn}\langle  \theta,  \theta^\ast \rangle \varphi \\
\mathbb{P}\left[\tmop{sgn} \langle x, \theta^{\ast} \rangle
\tmop{sgn} \langle x, \theta \rangle = -1\right] 
& = & \frac{1}{2} - \frac{1}{\pi} \tmop{sgn}\langle  \theta,  \theta^\ast \rangle \varphi
\end{eqnarray*}
\end{lemma}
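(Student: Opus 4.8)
The plan is to reduce the $d$-dimensional Gaussian expectation to a single angular integral on the plane $\tmop{span}\{\theta,\theta^{\ast}\}$, exactly in the spirit of the preceding lemma, and then evaluate that integral by an elementary overlap-of-semicircles argument. The key observation is that both signs $\tmop{sgn}\langle x,\theta^{\ast}\rangle$ and $\tmop{sgn}\langle x,\theta\rangle$ depend only on the orthogonal projection $\tilde{x}$ of $x$ onto $\tmop{span}\{\theta,\theta^{\ast}\}$, and that by rotational invariance of $\mathcal{N}(0,I_d)$ this projection is itself a standard two-dimensional Gaussian. Hence the expectation collapses to a planar computation.

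First I would fix the orthonormal frame $\hat{e}_1=\theta^{\ast}/\|\theta^{\ast}\|$ and $\hat{e}_2$ so that $\theta=\|\theta\|(\rho\,\hat{e}_1+\sqrt{1-\rho^2}\,\hat{e}_2)$, and write $\tilde{x}=r(\cos\alpha\,\hat{e}_1+\sin\alpha\,\hat{e}_2)$ in polar coordinates, where $\alpha$ is uniform on $[0,2\pi)$ and $r>0$. Then $\tmop{sgn}\langle x,\theta^{\ast}\rangle=\tmop{sgn}(\cos\alpha)$ and $\tmop{sgn}\langle x,\theta\rangle=\tmop{sgn}(\cos(\alpha-\psi))$ with $\psi:=\arccos\rho$. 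Since the product of signs is independent of $r$, the radial part integrates out and the claim reduces to evaluating $\frac{1}{2\pi}\int_0^{2\pi}\tmop{sgn}(\cos\alpha)\,\tmop{sgn}(\cos(\alpha-\psi))\,\mathd\alpha$.

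The main step, and the only real obstacle, is this angular integral. The set $\{\cos\alpha>0\}$ is a semicircle of arclength $\pi$ centered at $\alpha=0$, while $\{\cos(\alpha-\psi)>0\}$ is the same semicircle rotated by $\psi$. For $\psi\in[0,\pi]$ the two semicircles intersect in an arc of length $\pi-\psi$, and by central symmetry the two complementary semicircles intersect in an arc of the same length $\pi-\psi$; thus the product equals $+1$ on a set of measure $2(\pi-\psi)$ and $-1$ on a set of measure $2\psi$. This gives $\frac{1}{2\pi}\bigl[2(\pi-\psi)-2\psi\bigr]=1-\frac{2}{\pi}\arccos\rho$.

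Finally I would recast $1-\frac{2}{\pi}\arccos\rho$ in the stated form using the two-case identity $\tmop{sgn}(\rho)\bigl(\frac{\pi}{2}-\arccos|\rho|\bigr)=\frac{\pi}{2}-\arccos\rho$, which follows for $\rho<0$ from $\arccos(-|\rho|)=\pi-\arccos|\rho|$, together with $\tmop{sgn}\langle\theta,\theta^{\ast}\rangle=\tmop{sgn}(\rho)$; this yields $\frac{2}{\pi}\tmop{sgn}\langle\theta,\theta^{\ast}\rangle\,\varphi$. The Bernoulli statement is then immediate: since $\tmop{sgn}\langle x,\theta^{\ast}\rangle\,\tmop{sgn}\langle x,\theta\rangle\in\{+1,-1\}$, the quantity $\frac{1}{2}+\frac{1}{2}\tmop{sgn}\langle x,\theta^{\ast}\rangle\,\tmop{sgn}\langle x,\theta\rangle$ is the $\{0,1\}$-valued indicator of the event $\{=+1\}$, whose probability equals $\frac{1}{2}+\frac{1}{2}\,\frac{2}{\pi}\tmop{sgn}\langle\theta,\theta^{\ast}\rangle\,\varphi=\frac{1}{2}+\frac{1}{\pi}\tmop{sgn}\langle\theta,\theta^{\ast}\rangle\,\varphi$, with the $\{=-1\}$ probability being its complement.
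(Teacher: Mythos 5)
Your proof is correct. The paper actually states this lemma without proof (it is the classical Grothendieck sign-correlation identity), but your argument — reduce to the 2D projection onto $\tmop{span}\{\theta,\theta^{\ast}\}$ by rotational invariance, pass to polar coordinates so the radial part drops out, and evaluate the angular integral as the overlap of two semicircles to get $1-\frac{2}{\pi}\arccos\rho$ — is exactly the computation the paper carries out for the adjacent lemma on $\mathbb{E}\,|\theta^{\ast\top}x\,x^{\top}\theta|$, and your final conversion via $\tmop{sgn}(\rho)\left(\frac{\pi}{2}-\arccos|\rho|\right)=\frac{\pi}{2}-\arccos\rho$ and the Bernoulli reformulation are both handled cleanly.
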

\section{Proof for Results at the Population Level}\label{sup:population}
\begin{theorem}{(Proposition~\ref{prop:recurrence} in Section~\ref{sec:population}: Recurrence Relation)}
   Assume the  initial sub-optimality cosine satisfies $\rho^0 \assign \frac{\langle \theta^0, \theta^{\ast} \rangle}{\|
   \theta^0 \| \cdot \| \theta^{\ast} \|} \neq \pm 1$, or equivalently $\varphi^0 \assign \frac{\pi}{2} -
   \arccos | \rho^0 | \in [ 0, \frac{\pi}{2} )$. 
   Then the recurrence relation for EM updates at
   population level characterized by the sub-optimality angle is
   \begin{equation*}
     \tan \varphi^t = \tan \varphi^{t - 1} + \varphi^{t - 1}  (\tan^2
     \varphi^{t - 1} + 1).
   \end{equation*}
\end{theorem}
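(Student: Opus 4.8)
The plan is to work entirely in the two-dimensional plane $\operatorname{span}\{\theta^{t-1},\theta^\ast\}$ using the orthonormal frame $\hat{e}_1=\theta^\ast/\|\theta^\ast\|$ and $\hat{e}_2$, read off the two coordinates of the updated iterate from Corollary~\ref{cor:noiseless}, and then translate back into the angle $\varphi^t$. First I would record the two elementary identities that tie $\rho^{t-1}$ to $\varphi^{t-1}$: since $\varphi^{t-1}=\tfrac{\pi}{2}-\arccos|\rho^{t-1}|$, we have $\sin\varphi^{t-1}=|\rho^{t-1}|$ and $\cos\varphi^{t-1}=\sqrt{1-(\rho^{t-1})^2}$. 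Writing $s:=\operatorname{sgn}(\rho^{t-1})$ and decomposing the unit vector $\theta^{t-1}/\|\theta^{t-1}\|=\rho^{t-1}\hat{e}_1+\sqrt{1-(\rho^{t-1})^2}\,\hat{e}_2=s\sin\varphi^{t-1}\hat{e}_1+\cos\varphi^{t-1}\hat{e}_2$ puts everything in the frame.

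Next I would substitute this decomposition into the noiseless update $\frac{\theta^t}{\|\theta^\ast\|}=\frac{2}{\pi}\bigl[s\,\varphi^{t-1}\hat{e}_1+\cos\varphi^{t-1}\frac{\theta^{t-1}}{\|\theta^{t-1}\|}\bigr]$ from Corollary~\ref{cor:noiseless} and collect the $\hat{e}_1$ and $\hat{e}_2$ coefficients. Abbreviating $\varphi:=\varphi^{t-1}$, this yields
\begin{equation*}
\frac{\theta^t}{\|\theta^\ast\|}=\frac{2}{\pi}\Bigl[\,s\bigl(\varphi+\sin\varphi\cos\varphi\bigr)\hat{e}_1+\cos^2\varphi\,\hat{e}_2\,\Bigr]=:\mathtt{x}^t\hat{e}_1+\mathtt{y}^t\hat{e}_2 .
\end{equation*}
Because $\hat{e}_1=\theta^\ast/\|\theta^\ast\|$, the cosine $\rho^t=\langle\theta^t,\theta^\ast\rangle/(\|\theta^t\|\,\|\theta^\ast\|)$ equals $\mathtt{x}^t/\sqrt{(\mathtt{x}^t)^2+(\mathtt{y}^t)^2}$, so $\sin\varphi^t=|\rho^t|$ and $\cos\varphi^t=\sqrt{1-(\rho^t)^2}=|\mathtt{y}^t|/\sqrt{(\mathtt{x}^t)^2+(\mathtt{y}^t)^2}$, whence $\tan\varphi^t=|\mathtt{x}^t|/|\mathtt{y}^t|$.

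Finally I would carry out the short simplification. Since $\varphi\in[0,\pi/2)$ gives $\varphi+\sin\varphi\cos\varphi\ge 0$ and $\cos^2\varphi>0$, the factors of $\frac{2}{\pi}$ and $|s|=1$ cancel and
\begin{equation*}
\tan\varphi^t=\frac{\varphi+\sin\varphi\cos\varphi}{\cos^2\varphi}=\frac{\varphi}{\cos^2\varphi}+\tan\varphi=\varphi\,(\tan^2\varphi+1)+\tan\varphi ,
\end{equation*}
using $\sec^2\varphi=\tan^2\varphi+1$; restoring $\varphi=\varphi^{t-1}$ gives the claimed recurrence. The only genuine obstacle is the bookkeeping in the third step: one must verify that the definition $\varphi^t=\tfrac{\pi}{2}-\arccos|\rho^t|$ (built from the \emph{absolute} cosine) indeed reduces to the coordinate ratio $|\mathtt{x}^t|/|\mathtt{y}^t|$, which is why the sign $s$ and the absolute values drop out cleanly; everything else is routine trigonometry.
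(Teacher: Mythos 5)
Your proposal is correct and follows essentially the same route as the paper: both substitute the noiseless update from Corollary~\ref{cor:noiseless} into the frame spanned by $\theta^\ast$ and $\theta^{t-1}$, extract the component along $\hat{e}_1$ and the orthogonal component, and take their ratio to get $\tan\varphi^t$ (the paper computes $\sin\varphi^t$ and $\|\theta^t\|/\|\theta^\ast\|$ separately and then forms $\cos\varphi^t=\sqrt{1-\sin^2\varphi^t}$, while you read off the two orthogonal coordinates directly, but the algebra is identical). The only minor difference is that the paper also remarks that $\operatorname{sgn}(\rho^{t-1})=\operatorname{sgn}(\rho^0)$ along the iteration, which your argument sidesteps by working with $s=\operatorname{sgn}(\rho^{t-1})$ at each step and taking absolute values at the end.
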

\begin{proof}
  As $\| \bar{\theta}^{\ast} \| \rightarrow \infty$, we can
  obtain the EM update rule for $\theta^t$ in previous Corollary~\ref{cor:noiseless}.
  \[ \frac{\theta^t}{\| \theta^{\ast} \|} = \frac{2}{\pi} \left[ \mathrm{sgn}
  (\rho^{t - 1}) \varphi^{t-1}  \frac{\theta^{\ast}}{\| \theta^{\ast} \|} + \cos
  \varphi^{t - 1} \frac{\theta^{t - 1}}{\| \theta^{t - 1} \|} \right] \]
  Let $\varphi \assign \frac{\pi}{2} - \arccos | \rho | \in [ 0,
  \frac{\pi}{2} )$, then since $\rho^0, \rho^{t - 1}$ have the same sign
  (validated by checking the sign of $\langle \theta^t, \theta^{\ast} \rangle$)
  \[ \frac{\theta^t}{\| \theta^{\ast} \|} = \left( \frac{\pi}{2} \right)^{- 1}
     \left[ \varphi^{t - 1}  \frac{\tmop{sgn} (\rho^0) \theta^{\ast}}{\|
     \theta^{\ast} \|} + \cos \varphi^{t - 1}  \frac{\theta^{t - 1}}{\|
     \theta^{t - 1} \|} \right] \]
  With $\left\langle \frac{\tmop{sgn} (\rho^0) \theta^{\ast}}{\| \theta^{\ast}
  \|}, \frac{\theta^{t - 1}}{\| \theta^{t - 1} \|} \right\rangle = | \rho^{t -
  1} | = \sin \varphi^{t - 1}$
  \[ \sin \varphi^t  \frac{\| \theta^t \|}{\| \theta^{\ast} \|} = | \rho^t |
     \frac{\| \theta^t \|}{\| \theta^{\ast} \|} = \left\langle
     \frac{\theta^t}{\| \theta^{\ast} \|}, \frac{\tmop{sgn} (\rho^0)
     \theta^{\ast}}{\| \theta^{\ast} \|} \right\rangle = \left( \frac{\pi}{2}
     \right)^{- 1}  [\varphi^{t - 1} + \cos \varphi^{t - 1} \sin \varphi^{t -
     1}] \]
  \[ \frac{\| \theta^t \|}{\| \theta^{\ast} \|} = \left( \frac{\pi}{2}
     \right)^{- 1} \sqrt{[\varphi^{t - 1}]^2 + \cos^2 \varphi^{t - 1} + 2
     \varphi^{t - 1} \cos \varphi^{t - 1} \sin \varphi^{t - 1}} \]
  Therefore
  \[ \sin \varphi^t = \frac{\varphi^{t - 1} + \cos \varphi^{t - 1} \sin
     \varphi^{t - 1}}{\sqrt{[\varphi^{t - 1}]^2 + \cos^2 \varphi^{t - 1} + 2
     \varphi^{t - 1} \cos \varphi^{t - 1} \sin \varphi^{t - 1}}} \]
  Hence
  \[ \cos \varphi^t = \sqrt{1 - \sin^2 \varphi^t} = \frac{\cos^2 \varphi^{t -
     1}}{\sqrt{[\varphi^{t - 1}]^2 + \cos^2 \varphi^{t - 1} + 2 \varphi^{t -
     1} \cos \varphi^{t - 1} \sin \varphi^{t - 1}}} \]
  Thus, we obtain the recurrence relation for $\varphi^t$
  \[ \tan \varphi^t = \frac{\sin \varphi^t}{\cos \varphi^t} = \frac{\varphi^{t
     - 1}}{\cos^2 \varphi^{t - 1}} + \frac{\sin \varphi^{t - 1}}{\cos
     \varphi^{t - 1}} = \tan \varphi^{t - 1} + \varphi^{t - 1}  [\tan^2
     \varphi^{t - 1} + 1] \]
\end{proof}

\begin{theorem}{(Proposition~\ref{prop:cycloid} in Section~\ref{sec:population}: Cycloid Trajectory)}
   If $\rho^0 \assign \frac{\langle \theta^0, \theta^{\ast} \rangle}{\|
   \theta^0 \| \cdot \| \theta^{\ast} \|} \neq \pm 1$, namely $\phi^0 \assign 2
   \arccos | \rho^0 | \in (0, \pi]$. Then the coordinates $\mathtt{x}^t,
   \mathtt{y}^t$ of normalized vector $\frac{\theta^t}{\| \theta^{\ast} \|} =
   \mathtt{x}^t \hat{e}_1 + \mathtt{y}^t \hat{e}_2^t = \mathtt{x}^t \hat{e}_1 +
   \mathtt{y}^t \hat{e}_2^0, \forall t \in \mathbb{N}_+$ for EM updates at
   the Population level can be parameterized with the angle $\phi^{t - 1} \assign 2
   \arccos | \rho^{t - 1} | \in (0, \pi]$ as follows, where $\rho^{t - 1}
   \assign \frac{\langle \theta^{t - 1}, \theta^{\ast} \rangle}{\| \theta^{t -
   1} \| \cdot \| \theta^{\ast} \|}$.
   \begin{eqnarray*}
     1 - \tmop{sgn} (\rho^0) \mathtt{x}^t & = & \frac{1}{\pi} [\phi^{t - 1} -
     \sin \phi^{t - 1}]\\
     \mathtt{y}^t & = & \frac{1}{\pi} [1 - \cos \phi^{t - 1}] 
   \end{eqnarray*}
   Hence, the trajectory of iterations $\theta^t, \forall t \in \mathbb{N}_+$ is
   on the cycloid with a parameter $\frac{\| \theta^{\ast} \|}{\pi}$, on the
   plane $\tmop{span} \{ \theta^0, \theta^{\ast} \}$.
\end{theorem}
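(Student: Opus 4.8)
The plan is to work entirely within the two-dimensional plane $\mathrm{span}\{\theta^0,\theta^\ast\}$, fix the orthonormal frame $\{\hat{e}_1,\hat{e}_2^0\}$ once and for all, and reduce the statement to computing the two coordinates $\mathtt{x}^t=\langle \theta^t/\|\theta^\ast\|,\hat{e}_1\rangle$ and $\mathtt{y}^t=\langle \theta^t/\|\theta^\ast\|,\hat{e}_2^0\rangle$ as functions of the sub-optimality angle $\varphi^{t-1}$. The closed form of Corollary~\ref{cor:noiseless} already expresses $\theta^t/\|\theta^\ast\|$ as an explicit linear combination of $\hat{e}_1=\theta^\ast/\|\theta^\ast\|$ and $\theta^{t-1}/\|\theta^{t-1}\|$, so once the frame is shown to be time-invariant, the coordinates drop out by a short trigonometric computation.

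First I would establish the geometric invariance by induction: that $\theta^t\in\mathrm{span}\{\theta^0,\theta^\ast\}$ and, crucially, that $\hat{e}_2^t=\hat{e}_2^0$ for every $t$. The containment is immediate from Corollary~\ref{cor:noiseless}, since $\theta^t$ is a linear combination of $\theta^\ast$ and $\theta^{t-1}$. For the frame, writing $\theta^{t-1}/\|\theta^{t-1}\|=\rho^{t-1}\hat{e}_1+\cos\varphi^{t-1}\,\hat{e}_2^0$ under the inductive hypothesis (using $\sqrt{1-(\rho^{t-1})^2}=\cos\varphi^{t-1}>0$), the update shows the $\hat{e}_2^0$-component of $\theta^t/\|\theta^\ast\|$ equals $\tfrac{2}{\pi}\cos^2\varphi^{t-1}\ge 0$; hence $\theta^t$ stays on the same side of $\hat{e}_1$ and $\hat{e}_2^t=\hat{e}_2^0$. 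Along the way I would record, as in the proof of Proposition~\ref{prop:recurrence}, that $\mathrm{sgn}(\rho^t)=\mathrm{sgn}(\rho^0)$ is preserved (by checking the sign of the $\hat{e}_1$-component, which is $\tfrac{2}{\pi}\mathrm{sgn}(\rho^0)\bigl[\varphi^{t-1}+\cos\varphi^{t-1}\sin\varphi^{t-1}\bigr]$), so that $\mathrm{sgn}(\rho^0)$ consistently replaces the per-step sign $\mathrm{sgn}(\rho^{t-1})$ in the update.

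With the frame fixed, the coordinates read $\mathtt{x}^t=\tfrac{2}{\pi}\mathrm{sgn}(\rho^0)\bigl[\varphi^{t-1}+\cos\varphi^{t-1}\sin\varphi^{t-1}\bigr]$ and $\mathtt{y}^t=\tfrac{2}{\pi}\cos^2\varphi^{t-1}$. The final step is to re-express these in the angle $\phi^{t-1}=2\arccos|\rho^{t-1}|=\pi-2\varphi^{t-1}$. Using the double-angle identities $2\cos\varphi^{t-1}\sin\varphi^{t-1}=\sin 2\varphi^{t-1}=\sin\phi^{t-1}$ and $2\cos^2\varphi^{t-1}=1+\cos 2\varphi^{t-1}=1-\cos\phi^{t-1}$, together with $\tfrac{2}{\pi}\varphi^{t-1}=1-\phi^{t-1}/\pi$, I obtain $1-\mathrm{sgn}(\rho^0)\mathtt{x}^t=\tfrac{1}{\pi}[\phi^{t-1}-\sin\phi^{t-1}]$ and $\mathtt{y}^t=\tfrac{1}{\pi}[1-\cos\phi^{t-1}]$, which is exactly the claimed parametrization; multiplying through by $\|\theta^\ast\|$ identifies the trajectory with a cycloid of parameter $\|\theta^\ast\|/\pi$.

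The routine trigonometry of the last paragraph is harmless; the only genuine content — and the step I expect to require the most care — is the geometric invariance claim $\hat{e}_2^t=\hat{e}_2^0$, i.e. that the iterates never cross to the other side of $\hat{e}_1$ and never leave the plane. This rests on the positivity of $\cos\varphi^{t-1}$ on $[0,\pi/2)$ and on the sign-preservation of $\rho^t$; the degenerate case $\rho^0=0$ (where $\hat{e}_2^t$ is the relevant direction but $\mathtt{x}^t=0$ renders the sign immaterial) should be noted separately so that the formula remains valid at the saddle point.
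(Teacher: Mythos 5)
Your proposal is correct and follows essentially the same route as the paper: both reduce to the update of Corollary~\ref{cor:noiseless}, establish by induction that the iterates stay in $\tmop{span}\{\theta^0,\theta^\ast\}$ with $\hat{e}_2^t=\hat{e}_2^0$ and $\tmop{sgn}(\rho^t)=\tmop{sgn}(\rho^0)$, read off $\mathtt{x}^t=\frac{2}{\pi}\tmop{sgn}(\rho^0)[\varphi^{t-1}+\cos\varphi^{t-1}\sin\varphi^{t-1}]$ and $\mathtt{y}^t=\frac{2}{\pi}\cos^2\varphi^{t-1}$, and convert via $\phi^{t-1}=\pi-2\varphi^{t-1}$. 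The only cosmetic difference is that you verify the frame invariance by computing the $\hat{e}_2^0$-component of the update directly, whereas the paper checks $\langle\hat{e}_2^{t-1},\hat{e}_2^{t}\rangle=\cos^3\varphi^{t-1}>0$; these are the same computation in different packaging.
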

\begin{proof}
  Let's prove this, by using the recurrence relation in Proposition~\ref{prop:recurrence}.
  
  Since $\tan \varphi^t = \tan \varphi^{t - 1} + \varphi^{t - 1}  (\tan^2
  \varphi^{t - 1} + 1)$ in Proposition~\ref{prop:recurrence}, it shows that $\tan \varphi^t \geq
  \tan \varphi^{t - 1} \geq 0$, therefore $0 \leq \varphi^0 \leq \varphi^1
  \leq \cdots \leq \varphi^{t - 1} \leq \varphi^t < \frac{\pi}{2}$.
  
  Let $\hat{e}_1 \assign \frac{\theta^{\ast}}{\| \theta^{\ast} \|}$, and
  $\hat{e}^t_2 \assign \hat{e}_2 \mid_{\theta = \theta^t} = \frac{\theta -
  \hat{e}_1  \hat{e}_1^{\top} \theta}{\| \theta - \hat{e}_1  \hat{e}_1^{\top}
  \theta \|} \mid_{\theta = \theta^t} = \frac{\frac{\theta^t}{\| \theta^t \|}
  - [\rho^t] \frac{\theta^{\ast}}{\| \theta^{\ast} \|}}{\sqrt{1 -
  [\rho^t]^2}}$ and $\langle \hat{e}_1, \hat{e}^t_2 \rangle = 0, \| \hat{e}_1
  \| = \| \hat{e}^t_2 \| = 1$
  \begin{eqnarray*}
    \frac{\left( \frac{\pi}{2} \right)}{\| \theta^{\ast} \| \| \theta^{t - 1}
    \|}  \langle \theta^{t - 1} - \hat{e}_1  \hat{e}_1^{\top} \theta^{t - 1},
    \theta^t - \hat{e}_1  \hat{e}_1^{\top} \theta^t \rangle & = & \frac{\left(
    \frac{\pi}{2} \right)}{\| \theta^{\ast} \| \| \theta^{t - 1} \|} 
    \{\langle \theta^{t - 1}, \theta^t \rangle - \langle \theta^{t - 1},
    \hat{e}_1 \rangle \langle \hat{e}_1, \theta^t \rangle\}\\
    & = & [\varphi^{t - 1} \sin \varphi^{t - 1} + \cos \varphi^{t - 1}] -
    \sin \varphi^{t - 1}  [\varphi^{t - 1} + \cos \varphi^{t - 1} \sin
    \varphi^{t - 1}]\\
    & = & \cos^3 \varphi^{t - 1} > 0
  \end{eqnarray*}
  Hence, we conclude that $\langle \hat{e}^{t - 1}_2, \hat{e}^t_2 \rangle >
  0$, With $\hat{e}^{t - 1}_2, \hat{e}^t_2 \perp \hat{e}_1$ and $\hat{e}^{t -
  1}_2, \hat{e}^t_2 \in \tmop{span} \{\theta^t, \theta^{t - 1}, \theta^{\ast}
  \} \subset \tmop{span} \{\theta^0, \theta^{\ast} \}$, $\| \hat{e}^{t - 1}_2
  \| = \| \hat{e}^t_2 \| = 1$, we validate $\hat{e}^0_2 = \cdots = \hat{e}^{t - 1}_2 = \hat{e}^t_2$.

  By the definition of $\hat{e}^t_2$, we obtain $\theta^t = \| \theta^t \| 
  \{\tmop{sgn} (\rho^0) \sin \varphi^t  \hat{e}_1 + \cos \varphi^t 
  \hat{e}^t_2 \} = \| \theta^t \|  \{\tmop{sgn} (\rho^0) \sin \varphi^t 
  \hat{e}_1 + \cos \varphi^t  \hat{e}^0_2 \}$
  
  Since $\theta^t \in \tmop{span} \{\theta^{t - 1}, \theta^{\ast} \}$, then
  $\theta^t \in \tmop{span} \{\theta^0, \theta^{\ast} \}$, we can express
  $\frac{\theta^t}{\| \theta^{\ast} \|} = \mathtt{x}^t \hat{e}_1 +
  \mathtt{y}^t \hat{e}^t_2 = \mathtt{x}^t \hat{e}_1 + \mathtt{y}^t
  \hat{e}^0_2$.
  
  Comparing the expressions for $\theta^t$, we derive the following result.
  \[ \frac{\theta^t}{\| \theta^{\ast} \|} = \mathtt{x}^t \hat{e}_1 +
     \mathtt{y}^t \hat{e}^0_2 = \left\{ \tmop{sgn} (\rho^0) \sin \varphi^t
     \cdot \frac{\| \theta^t \|}{\| \theta^{\ast} \|} \right\} \hat{e}_1 +
     \left\{ \cos \varphi^t \cdot \frac{\| \theta^t \|}{\| \theta^{\ast} \|}
     \right\} \hat{e}^0_2 \]
  With the recurrence relation $\sin \varphi^t \cdot \frac{\| \theta^t \|}{\|
  \theta^{\ast} \|} = \left( \frac{\pi}{2} \right)^{- 1}  [\varphi^{t - 1} +
  \cos \varphi^{t - 1} \sin \varphi^{t - 1}], \cos \varphi^t \cdot \frac{\|
  \theta^t \|}{\| \theta^{\ast} \|} = \left( \frac{\pi}{2} \right)^{- 1}
  \cos^2 \varphi^{t - 1}$, which we showed in the proof of Proposition~\ref{prop:recurrence}, we
  derive the implicit equation of $\mathtt{x}^t, \mathtt{y}^t (t \geq 1)$
  \begin{eqnarray*}
    \mathtt{x}^t & = & \left\langle \frac{\theta^t}{\| \theta^{\ast} \|},
    \hat{e}_1 \right\rangle = \left( \frac{\pi}{2} \right)^{- 1} \tmop{sgn}
    (\rho^0)  [\varphi^{t - 1} + \cos \varphi^{t - 1} \sin \varphi^{t - 1}]\\
    \mathtt{y}^t & = & \left\langle \frac{\theta^t}{\| \theta^{\ast} \|},
    \hat{e}^0_2 \right\rangle = \left\langle \frac{\theta^t}{\| \theta^{\ast}
    \|}, \hat{e}^t_2 \right\rangle = \left( \frac{\pi}{2} \right)^{- 1} \cos^2
    \varphi^{t - 1}
  \end{eqnarray*}
  Let's cancel out the parameter $\varphi^{t - 1}$ in the parameterized curve
  $\varphi^{t - 1} \mapsto (\mathtt{x}^t, \mathtt{y}^t)$
  \[ \tmop{sgn} (\rho^0) \frac{\pi}{2} \mathtt{x}^t = \sqrt{\left(
     \frac{\pi}{2} \mathtt{y}^t \right)  \left( 1 - \frac{\pi}{2} \mathtt{y}^t
     \right)} + \arccos \sqrt{\frac{\pi}{2} \mathtt{y}^t} \]
  Let $\phi \assign 2 \left( \frac{\pi}{2} - \varphi \right) \in (0,
  \pi]$, then we rewrite the implicit equations of $\mathtt{x}^t,
  \mathtt{y}^t (t \geq 1)$
  \citep{harris1998handbook}.
  \begin{eqnarray*}
    1 - \tmop{sgn} (\rho^0) \mathtt{x}^t & = & \pi^{- 1}  [\phi - \sin
    \phi]_{\phi = \phi^{t - 1}}\\
    \mathtt{y}^t & = & \pi^{- 1}  [1 - \cos \phi]_{\phi = \phi^{t - 1}}
  \end{eqnarray*}
\end{proof}
\begin{theorem}{(Proposition~\ref{prop:quad} in Section~\ref{sec:population}: Quadratic Convergence Rate)}
   If $\varphi^0 \assign \frac{\pi}{2} - \arccos | \rho^0 | \in \left( 0,
   \frac{\pi}{2} \right)$, then the EM updates at population level satisfies
   \begin{equation*}
     \tan \varphi^t \geq \frac{1 + \sqrt{5}}{2} \cdot \tan \varphi^{t - 1}.
   \end{equation*}
   Particularly, if $\varphi^{t-1} \assign \frac{\pi}{2} - \arccos | \rho^{t-1} | \in [ \arctan
   1.5, \frac{\pi}{2} )$, then the EM updates at the Population level
   satisfies
   \begin{equation*}
     \frac{\pi}{2} \left( \tan \varphi^t - \frac{\pi}{4} \right) \geq \left\{
     \frac{\pi}{2} \left( \tan \varphi^{t - 1} - \frac{\pi}{4} \right)
     \right\}^2.
   \end{equation*}
\end{theorem}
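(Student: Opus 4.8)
The plan is to reduce both inequalities to one-variable statements in $T \assign \tan\varphi^{t-1}$, exploiting the recurrence $\tan\varphi^t = \tan\varphi^{t-1} + \varphi^{t-1}(\tan^2\varphi^{t-1}+1)$ from Proposition~\ref{prop:recurrence} together with the identity $\varphi^{t-1} = \arctan T$, which holds on the stated range since $\varphi^{t-1}\in[0,\pi/2)$.

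For the first (linear) bound I would divide the recurrence through by $\tan\varphi^{t-1}$ to get
\[
\frac{\tan\varphi^t}{\tan\varphi^{t-1}} = 1 + \varphi^{t-1}\left(\tan\varphi^{t-1} + \cot\varphi^{t-1}\right) = 1 + \frac{2\varphi^{t-1}}{\sin 2\varphi^{t-1}},
\]
where I used $\tan\varphi + \cot\varphi = 2/\sin 2\varphi$. Since $2\varphi^{t-1}\in(0,\pi)$ and $\sin u \le u$ there, the second term is at least $1$, so the ratio is at least $2$ and hence certainly at least $\tfrac{1+\sqrt5}{2}$; in fact this already shows that the stated golden-ratio constant is conservative.

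For the quadratic bound the crucial ingredient is a sharp lower bound on $\varphi^{t-1}=\arctan T$ as $\varphi^{t-1}\to\pi/2$. I would combine $\arctan T + \arctan(1/T)=\pi/2$ with $\arctan(1/T)\le 1/T$ to obtain $\arctan T \ge \pi/2 - 1/T$. Substituting this into the recurrence yields $\tan\varphi^t \ge \frac{\pi}{2}T^2 + \frac{\pi}{2} - \frac{1}{T}$. Writing $u \assign \frac{\pi}{2}\left(\tan\varphi - \frac{\pi}{4}\right)$, the target inequality $u^t \ge (u^{t-1})^2$ then collapses, after cancelling the common $\frac{\pi^2}{4}T^2$ term on both sides, to the rational inequality
\[
\frac{\pi^3}{8}T - \frac{\pi}{2T} + \frac{\pi^2}{8} - \frac{\pi^4}{64} \ge 0 .
\]
This expression is strictly increasing in $T$ (its derivative $\frac{\pi^3}{8} + \frac{\pi}{2T^2}$ is positive) and is already positive at the threshold $T=1.5$, so it holds throughout $T\ge 1.5$, i.e. for all $\varphi^{t-1}\ge\arctan 1.5$.

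The routine parts are the trigonometric identity, the polynomial expansion, and the final endpoint check at $T=1.5$. The one genuinely load-bearing step is the estimate $\arctan T\ge \pi/2-1/T$, which is exactly what converts the factor $\varphi^{t-1}(T^2+1)$ into the leading $\frac{\pi}{2} T^2$ that powers the squaring; the only subtlety is to confirm that this lower bound is tight enough that the linear and constant remainders do not overturn the inequality before the threshold, which the monotonicity argument settles.
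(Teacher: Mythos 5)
Your proof is correct, but it takes a genuinely different route from the paper's, and in the first half a strictly stronger one. For the linear bound, the paper works with $a^t:=\tan\varphi^t$, invokes concavity of $\arctan$ to control the ratio of successive differences $\frac{a^{t+2}-a^{t+1}}{a^{t+1}-a^t}$, solves $x>1+x^{-1}$ to extract the golden ratio, and then transfers this to $\frac{a^{t+1}}{a^t}$ by induction; your one-line identity $\tan\varphi+\cot\varphi=\frac{2}{\sin 2\varphi}$ combined with $\sin u\le u$ gives $\frac{\tan\varphi^t}{\tan\varphi^{t-1}}=1+\frac{2\varphi^{t-1}}{\sin 2\varphi^{t-1}}\ge 2$ directly, which is both much shorter and yields a better contraction constant ($2$ instead of $\frac{1+\sqrt5}{2}$), so the stated inequality follows a fortiori. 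For the quadratic bound the two arguments are the same in spirit --- both replace $\arctan$ by an elementary rational lower bound near $\pi/2$ and then do algebra --- but the paper uses $\arctan a>\frac{\pi a}{2a+\pi}$ (equivalently $\arctan a\ge \frac{\pi}{2}-\frac{\pi^2}{4a+2\pi}$) and completes the square with several numerical constants, whereas your bound $\arctan T\ge\frac{\pi}{2}-\frac{1}{T}$ is tighter and your reduction to the single monotone inequality $\frac{\pi^3}{8}T-\frac{\pi}{2T}+\frac{\pi^2}{8}-\frac{\pi^4}{64}\ge 0$, checked once at the endpoint $T=1.5$ (where it evaluates to roughly $4.48>0$) and propagated by monotonicity, is cleaner. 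I verified the substitution: the recurrence gives $\tan\varphi^t\ge \frac{\pi}{2}T^2+\frac{\pi}{2}-\frac{1}{T}$ exactly as you claim, and the $\frac{\pi^2}{4}T^2$ terms cancel as stated, so no step is missing.
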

\newpage
 \begin{proof}
  Let's prove the inequalites in the Propostion~\ref{prop:quad} as follows.
  
  Let's define $a^t \assign \tan \varphi^t$, then start from the recurrence
  relation in Proposition~\ref{prop:recurrence}, note that $a^0 > 0$.
  \[ a^t - a^{t - 1} = \arctan a^{t - 1}  ([a^{t - 1}]^2 + 1) > 0 \]
  {\tmstrong{Firstly}}, for the case of \ $\varphi^0 \assign \frac{\pi}{2} -
  \arccos | \rho^0 | \in \left( 0, \frac{\pi}{2} \right)$, since
  $\frac{\mathd^2 \arctan (a)}{\mathd a^2} = - \frac{2 a}{(a^2 + 1)^2} < 0$,
  it is a concave function.
  \[ \frac{\arctan a^{t + 1} - \arctan a^t}{a^{t + 1} - a^t} > \frac{\mathd
     \arctan (a)}{\mathd a} \mid_{a = a^{t + 1}} = \frac{1}{1 + [a^{t + 1}]^2}
     = \frac{\arctan a^{t + 1}}{a^{t + 2} - a^{t + 1}} \]
  Then for $\forall t \geq 0$, we obtain the following inequality.
  \[ \frac{a^{t + 2} - a^{t + 1}}{a^{t + 1} - a^t} > \frac{1}{1 -
     \frac{\arctan a^t}{\arctan a^{t + 1}}} > 1 + \frac{\arctan a^t}{\arctan
     a^{t + 1}} = 1 + \left[ \frac{a^{t + 2} - a^{t + 1}}{a^{t + 1} - a^t}
     \right]^{- 1} \frac{1 + [a^{t + 2}]^2}{1 + [a^{t + 1}]^2} > 1 + \left[
     \frac{a^{t + 2} - a^{t + 1}}{a^{t + 1} - a^t} \right]^{- 1} \]
  Hence, by solving $x > 1 + x^{- 1}  (x > 0)$, we show that
  \[ \frac{a^{t + 2} - a^{t + 1}}{a^{t + 1} - a^t} = \left( \frac{a^{t +
     2}}{a^{t + 1}} - 1 \right) \left\{ 1 + \frac{1}{\frac{a^{t + 1}}{a^t} -
     1} \right\} > \frac{\sqrt{5} + 1}{2} \quad \forall t \geq 0 \]
  With $a^1 - a^0 = \arctan a^0  ([a^0]^2 + 1) > a^0$, we conclude that
  $\frac{a^1}{a^0} - 1 > 1 > \frac{\sqrt{5} - 1}{2}$.
  
  If we assume that $\frac{a^{t + 1}}{a^t} - 1 > \frac{\sqrt{5} - 1}{2}$,
  then the inequality below shows that $\frac{a^{t + 2}}{a^{t + 1}} - 1 >
  \frac{\sqrt{5} - 1}{2}$.
  \[ \left( \frac{a^{t + 2}}{a^{t + 1}} - 1 \right) \cdot \frac{\frac{\sqrt{5}
     + 1}{2}}{\frac{\sqrt{5} - 1}{2}} = \tmmathbf{} \left( \frac{a^{t +
     2}}{a^{t + 1}} - 1 \right) \left\{ 1 + \frac{1}{\frac{\sqrt{5} - 1}{2}}
     \right\} > \left( \frac{a^{t + 2}}{a^{t + 1}} - 1 \right) \left\{ 1 +
     \frac{1}{\frac{a^{t + 1}}{a^t} - 1} \right\} > \frac{\sqrt{5} + 1}{2} \]
  By mathematical induction, $\frac{a^{t + 1}}{a^t} > \frac{\sqrt{5} + 1}{2}$
  for $\forall t \geq 0$. Therefore, we have proved the following inequality.
  \[ \tan \varphi^t \geq \frac{1 + \sqrt{5}}{2} \cdot \tan \varphi^{t - 1}
     \quad \forall t \in \mathbb{N}_+ \]

  {\tmstrong{Secondly}}, for the case of $\varphi^0 \assign \frac{\pi}{2} -
  \arccos | \rho^0 | \in \left( \arctan 1.5, \frac{\pi}{2} \right)$, then we
  have $a^0 = \tan \varphi^0 \geq 1.5$.
  
  Applying the elementary inequality $\arctan a > \frac{\pi a}{2 a + \pi}$, $\forall a > 0$, 
  and noting the fact that $a^t \geq a^0 \geq 1.5$, and $1.5 - \frac{4 \pi^2 + \pi^4}{8 (2 \cdot
  1.5 + \pi)} + \frac{1}{8} \pi (4 + \pi^2) \approx 4.16 > \frac{\pi}{4} +
  \frac{\pi^3}{32} \approx 1.75$
  \begin{eqnarray*}
    a^{t + 1} & = & a^t + \arctan a^t  (1 + [a^t]^2)\\
    & > & a^t + \frac{\pi a^t}{2 a^t + \pi} \cdot (1 + [a^t]^2)\\
    & = & \frac{\pi}{2} [a^t]^2 - \frac{\pi^2}{4} a^t + a^t - \frac{4 \pi^2 +
    \pi^4}{8 (2 a^t + \pi)} + \frac{1}{8} \pi (4 + \pi^2)\\
    & \geq & \frac{\pi}{2} [a^t]^2 - \frac{\pi^2}{4} a^t + 1.5 - \frac{4
    \pi^2 + \pi^4}{8 (2 \cdot 1.5 + \pi)} + \frac{1}{8} \pi (4 + \pi^2)\\
    & > & \frac{\pi}{2} [a^t]^2 - 2 \cdot \frac{\pi}{2} \cdot \frac{\pi}{4}
    a^t + \frac{\pi}{4} + \frac{\pi^3}{32}\\
    & = & \frac{\pi}{4} + \frac{\pi}{2}  \left[ a^t - \frac{\pi}{4} \right]^2
  \end{eqnarray*}
  Therefore, we have proved the following inequality.
  \[ \frac{\pi}{2} \left( \tan \varphi^t - \frac{\pi}{4} \right) \geq \left\{
     \frac{\pi}{2} \left( \tan \varphi^{t - 1} - \frac{\pi}{4} \right)
     \right\}^2 \quad \forall t \in \mathbb{N}_+ \]
  
\end{proof}

\begin{corollary}{(Corollary~\ref{cor:err_mixing} in Section~\ref{sec:population}: Error of Mixing Weights $\pi^t$)}
In the noiseless setting, the error of mixing weights for EM updates at the population level is
   \begin{equation*}
       \| \pi^t - \bar{\pi}^{\ast} \|_1 = \left| 1 - \frac{2}{\pi} \varphi^{t - 1}
       \right| \cdot \left\| \frac{1}{2} - \pi^{\ast} \right\|_1
   \end{equation*}
 where $\bar{\pi}^{\ast} \assign \frac{1}{2} - \tmop{sgn} (\rho^0) 
 (\frac{1}{2} - \pi^{\ast}), \varphi^{t - 1} \assign \frac{\pi}{2} - \arccos |
 \rho^{t - 1} |$ and $\rho^{t - 1} \assign \frac{\langle \theta^{t - 1},
 \theta^{\ast} \rangle}{\| \theta^{t - 1} \| \cdot \| \theta^{\ast} \|}$,
 $\rho^0 \assign \frac{\langle \theta^0, \theta^{\ast} \rangle}{\| \theta^0 \|
 \cdot \| \theta^{\ast} \|}$.
\end{corollary}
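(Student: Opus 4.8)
The plan is to reduce the vector-valued error to a single scalar and then substitute the closed-form update for $\tanh(\nu^t)$ from Corollary~\ref{cor:noiseless}. The whole argument rests on one elementary identity: since any two-component mixing vector $\pi = \{\pi(1), \pi(2)\}$ satisfies $\pi(1) + \pi(2) = 1$ and $\tanh(\nu) = \pi(1) - \pi(2)$ (from the definition in Eq.~\eqref{eq:nu_def}), we have $\pi(1) - \frac{1}{2} = \frac{1}{2}\tanh(\nu)$, and hence $\|\frac{1}{2} - \pi\|_1 = 2\,|\frac{1}{2} - \pi(1)| = |\tanh(\nu)|$. I would first record this identity both for $\pi^{\ast}$ (through $\nu^{\ast}$) and for $\pi^t$ (through $\nu^t$).

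First I would collapse the $\ell_1$ norm. Because $\pi^t$ and $\bar{\pi}^{\ast}$ each sum to one, the two coordinates of their difference are negatives of one another, so $\|\pi^t - \bar{\pi}^{\ast}\|_1 = 2\,|\pi^t(1) - \bar{\pi}^{\ast}(1)|$. Using $\pi^t(1) - \frac{1}{2} = \frac{1}{2}\tanh(\nu^t)$ and, from the definition $\bar{\pi}^{\ast} \assign \frac{1}{2} - \tmop{sgn}(\rho^0)(\frac{1}{2} - \pi^{\ast})$, the relation $\bar{\pi}^{\ast}(1) - \frac{1}{2} = \tmop{sgn}(\rho^0)(\pi^{\ast}(1) - \frac{1}{2}) = \frac{1}{2}\tmop{sgn}(\rho^0)\tanh(\nu^{\ast})$, this reduces to
\begin{equation*}
\|\pi^t - \bar{\pi}^{\ast}\|_1 = |\tanh(\nu^t) - \tmop{sgn}(\rho^0)\tanh(\nu^{\ast})|.
\end{equation*}

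Next I would insert the population update $\tanh(\nu^t) = \tmop{sgn}(\rho^{t-1})\,\frac{2}{\pi}\varphi^{t-1}\tanh(\nu^{\ast})$ from Corollary~\ref{cor:noiseless}, together with the sign-stability fact $\tmop{sgn}(\rho^{t-1}) = \tmop{sgn}(\rho^0)$ (established in the proof of Proposition~\ref{prop:recurrence} by tracking the sign of $\langle \theta^t, \theta^{\ast}\rangle$). Factoring out the common $\tmop{sgn}(\rho^0)\tanh(\nu^{\ast})$ then gives $\|\pi^t - \bar{\pi}^{\ast}\|_1 = |\frac{2}{\pi}\varphi^{t-1} - 1|\cdot|\tanh(\nu^{\ast})|$, and substituting $|\tanh(\nu^{\ast})| = \|\frac{1}{2} - \pi^{\ast}\|_1$ yields exactly the claimed identity.

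The computation is pure bookkeeping, so there is no genuine obstacle. The only two points requiring care are that $\rho^0 \neq 0$ so that $\tmop{sgn}(\rho^0)$ is well-defined, and the sign-stability claim $\tmop{sgn}(\rho^{t-1}) = \tmop{sgn}(\rho^0)$, which must be carried over from the recurrence analysis rather than re-derived here.
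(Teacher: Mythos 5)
Your proposal is correct and follows essentially the same route as the paper's own proof: reduce the $\ell_1$ error to $|\tanh(\nu^t) - \tmop{sgn}(\rho^0)\tanh(\nu^{\ast})|$ via $\pi(1) = \frac{1+\tanh(\nu)}{2}$, substitute the update from Corollary~\ref{cor:noiseless} with the sign-stability fact $\tmop{sgn}(\rho^{t-1}) = \tmop{sgn}(\rho^0)$, and apply the identity $|\tanh(\nu^{\ast})| = \|\frac{1}{2}-\pi^{\ast}\|_1$. Your version is if anything slightly more careful in writing $|\tanh(\nu^{\ast})|$ with the absolute value and in flagging the well-definedness of $\tmop{sgn}(\rho^0)$.
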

\begin{proof}
   Using Corollary~\ref{cor:noiseless}, and note that $\mathrm{sgn} (\rho^{t - 1}) = \mathrm{sgn} (\rho^{0})$, we obtain that equation.
\begin{equation*}
   \tanh (\nu^t) = \mathrm{sgn} (\rho^{0}) \left( \frac{2}{\pi}
   \varphi^{t - 1} \right) \cdot \tanh (\nu^{\ast})
\end{equation*}
Since $\pi^t(1)=\frac{1+ \tanh(\nu^t)}{2}, \pi^t(2)=\frac{1- \tanh(\nu^t)}{2}$ and 
$\bar{\pi}^{\ast}(1)=\frac{1+ \mathrm{sgn} (\rho^{0})\tanh(\nu^\ast)}{2}, \bar{\pi}^{\ast}(2)=\frac{1- \mathrm{sgn} (\rho^{0})\tanh(\nu^\ast)}{2}$.
\begin{equation*}
   \| \pi^t - \bar{\pi}^{\ast} \|_1 = |\pi^t(1)-\bar{\pi}^{\ast}(1)| + |\pi^t(2)-\bar{\pi}^{\ast}(2)|
   = | \tanh(\nu^t) - \mathrm{sgn} (\rho^{0})\tanh(\nu^\ast)| = \left| 1 - \frac{2}{\pi} \varphi^{t - 1}
   \right| \cdot \left\| \frac{1}{2} - \pi^{\ast} \right\|_1
\end{equation*}
In the above equation, we use such an identity $\tanh (\nu^{\ast}) = \left\| \frac{1}{2} - \pi^{\ast} \right\|_1$.
\end{proof}

\begin{theorem}{(Theorem~\ref{thm:population} in Section~\ref{sec:population}: Population Level Convergence)}
   If the initial sup-optimality cosine $\rho^0 \assign \frac{\langle \theta^0,
   \theta^{\ast} \rangle}{\| \theta^0 \| \cdot \| \theta^{\ast} \|} \neq 0$,
   then with the number of total iterations at most $T =\mathcal{O} \left( \log
   \frac{1}{| \rho^0 |} \vee \log \log \frac{1}{\varepsilon} \right)$, the error of EM update at the population level is bounded by $\frac{\|
   \theta^{T + 1} - \tmop{sgn} (\rho^0) \theta^{\ast} \|}{\| \theta^{\ast} \|}
   < \varepsilon$, and $\|\pi^{T+1} -\bar{\pi}^\ast\|_1 =\mathcal{O}(\sqrt{\varepsilon})\cdot \|\frac{1}{2}-\pi^\ast\|_1$
   , where $\bar{\pi}^\ast := \frac{1}{2}-\tmop{sgn} (\rho^0) (\frac{1}{2}-\pi^\ast)$.
\end{theorem}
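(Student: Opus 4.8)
The plan is to reduce the entire statement to the behavior of the single scalar quantity $\varphi^t$, and then read off both error bounds from it. First I would use the cycloid parametrization of Proposition~\ref{prop:cycloid} to put the regression-parameter error in closed form. Since $\tmop{sgn}(\rho^0)\theta^*/\|\theta^*\|$ is exactly the cycloid point corresponding to $\phi = 0$ (that is $\mathtt{x}=\tmop{sgn}(\rho^0)$, $\mathtt{y}=0$), subtracting coordinates gives
\[
\frac{\|\theta^{T+1} - \tmop{sgn}(\rho^0)\theta^*\|}{\|\theta^*\|}
= \frac{1}{\pi}\sqrt{(\phi^T - \sin\phi^T)^2 + (1-\cos\phi^T)^2}.
\]
Using $\phi - \sin\phi \le \phi^3/6$ and $1-\cos\phi \le \phi^2/2$ for small $\phi$, the right-hand side is bounded by $[\phi^T]^2/(2\pi)$, so it suffices to drive $\phi^T$ down to $\mathcal{O}(\sqrt{\varepsilon})$, equivalently to make $\tan\varphi^T = \cot(\phi^T/2)$ diverge fast enough. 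The assumption $\rho^0 \neq 0$ guarantees $\varphi^0>0$, and the case $|\rho^0|=1$ (where $\varphi^0=\pi/2$ and the error is already zero) is trivial, so I may assume $\varphi^0\in(0,\pi/2)$.

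Next I would control the growth of $\tan\varphi^t$ in two stages, both furnished by Proposition~\ref{prop:quad}. In the first stage the inequality $\tan\varphi^t \ge \frac{1+\sqrt5}{2}\tan\varphi^{t-1}$ yields geometric growth, so $\tan\varphi^t \ge \bigl(\frac{1+\sqrt5}{2}\bigr)^{t}\tan\varphi^0$; since $\tan\varphi^0 \ge \sin\varphi^0 = |\rho^0|$, after $T' = \mathcal{O}(\log\frac{1}{|\rho^0|})$ iterations we reach $\varphi^{T'} \ge \arctan 1.5$. In the second stage I would set $b^t \assign \frac{\pi}{2}(\tan\varphi^t - \frac{\pi}{4})$; the second inequality of Proposition~\ref{prop:quad} reads $b^t \ge (b^{t-1})^2$, and because $\tan\varphi^{T'}\ge 1.5$ forces $b^{T'}>1$, iteration gives the double-exponential bound $b^{T'+k} \ge (b^{T'})^{2^{k}}$. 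As $\tan\varphi^t \approx \frac{2}{\pi}b^t$ and $\phi^t \approx 2/\tan\varphi^t$ in this regime, reaching $\phi^T = \mathcal{O}(\sqrt{\varepsilon})$ costs only $T'' = \mathcal{O}(\log\log\frac{1}{\varepsilon})$ further iterations. Adding the two stages produces the claimed total $T = \mathcal{O}(\log\frac{1}{|\rho^0|} \vee \log\log\frac{1}{\varepsilon})$ together with the regression bound $< \varepsilon$.

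For the mixing weights I would invoke Corollary~\ref{cor:err_mixing} directly, which states $\|\pi^{T+1} - \bar{\pi}^*\|_1 = \bigl|1 - \frac{2}{\pi}\varphi^T\bigr| \cdot \|\tfrac{1}{2} - \pi^*\|_1$. Rewriting $1 - \frac{2}{\pi}\varphi^T = \frac{2}{\pi}\bigl(\frac{\pi}{2} - \varphi^T\bigr) = \phi^T/\pi$ and substituting $\phi^T = \mathcal{O}(\sqrt{\varepsilon})$ from the previous step immediately gives $\|\pi^{T+1} - \bar{\pi}^*\|_1 = \mathcal{O}(\sqrt{\varepsilon})\,\|\tfrac{1}{2} - \pi^*\|_1$. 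The asymmetry is instructive: the parameter error scales like $[\phi^T]^2$ whereas the weight error scales like $\phi^T$, and this is precisely the source of the $\varepsilon$ versus $\sqrt{\varepsilon}$ gap between the two conclusions.

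The main obstacle is the rigorous bookkeeping of the second stage. I must confirm that $b^{T'}$ is bounded away from $1$ (so the map $b\mapsto b^2$ genuinely contracts $\phi$ rather than stalling), and, more importantly, turn the heuristic relations $\tan\varphi^t \approx \frac{2}{\pi}b^t$ and $\phi^t \approx 2/\tan\varphi^t$ into rigorous two-sided bounds valid throughout the double-exponential phase, so that the $\log\log\frac{1}{\varepsilon}$ iteration count is provable and not merely asymptotic. Ensuring that the crossover at $\varphi = \arctan 1.5$ between the linear and quadratic regimes loses no iterations is the other delicate point, though it reduces to tracking explicit constants.
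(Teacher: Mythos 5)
Your proposal is correct and follows essentially the same route as the paper's proof: a linear-growth phase via $\tan\varphi^t \ge \frac{1+\sqrt5}{2}\tan\varphi^{t-1}$ until $\tan\varphi \ge 1.5$, a double-exponential phase via $\frac{\pi}{2}(\tan\varphi^t-\frac{\pi}{4}) \ge \{\frac{\pi}{2}(\tan\varphi^{t-1}-\frac{\pi}{4})\}^2$, and then reading off both error bounds from the cycloid parametrization and Corollary~\ref{cor:err_mixing}. The bookkeeping you flag as the remaining obstacle is resolved in the paper exactly as you anticipate: $b^{T'} \ge \frac{\pi}{2}(1.5-\frac{\pi}{4}) > 1$ is bounded away from $1$, and the expansion $\tan(\frac{\pi}{2}-x) = \frac{1}{x} - \frac{x}{3} - o(x)$ makes the correspondence $\phi^T = \mathcal{O}(\sqrt{\varepsilon}) \Leftrightarrow \tan\varphi^T = \Omega(1/\sqrt{\varepsilon})$ rigorous.
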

\begin{proof}
  Let's prove Theorem~\ref{thm:population}, and consider the convergence rate of EM updates at
  population level.
  
  Let $a^t \assign \tan \varphi^t$, and $\varphi^0 \assign \frac{\pi}{2} -
  \arccos | \rho^0 | \in \left( 0, \frac{\pi}{2} \right)$.
  
  {\tmstrong{Step 1}}. Determine the minmum iteration number $T'$ required to
  ensure $a^{T'} \geq 1.5$
  
  If $a^0 \geq 1.5$, then $T' = 0$.
  
  Otherwise $a^0 < 1.5$, with the inequality in Proposition~\ref{prop:quad}, we obtain the
  following result for $\forall t \in \mathbb{N}_+$.
  \[ a^t = \left[ \prod^t_{t' = 1} \frac{a^{t'}}{a^{t' - 1}} \right] \cdot a^0
     \geq \left( \frac{\sqrt{5} + 1}{2} \right)^t a^0 \]
  Hence, if $T' \geq \left\lceil \frac{\log \frac{1.5}{a^0}}{\log
  \frac{\sqrt{5} + 1}{2}} \right\rceil$, we shows that $a^{T'} \geq \left(
  \frac{\sqrt{5} + 1}{2} \right)^{T'} a^0 \geq 1.5$
  
  {\tmstrong{Step 2}}. Determine the minimum iteration number $T''$ required
  to ensure $a^{T' + T''} > N_{\varepsilon}$, where $N_{\varepsilon}$ is a big
  number that replies on $\varepsilon$.
  
  With the inequality in Proposition~\ref{prop:quad}, we obtain the following result for
  $\forall t \in \mathbb{N}_+$ and $t \geq T'$
  \[ \frac{\pi}{2}  \left[ a^{t + 1} - \frac{\pi}{4} \right] > \left(
     \frac{\pi}{2}  \left[ a^t - \frac{\pi}{4} \right] \right)^2 \]
  Hence, if $T'' \geq \left\lceil \frac{\log \left[ \log \left(
  N_{\varepsilon} - \frac{\pi}{4} \right) + \log \frac{\pi}{2} \right] - \log
  \left( \log \left( \frac{\pi}{4}  \left( 1.5 - \frac{\pi}{4} \right) \right)
  \right)}{\log 2} \right\rceil =\mathcal{O} (\log [\log N_{\varepsilon}])$
  \[ \frac{\pi}{2}  \left[ a^{T' + T''} - \frac{\pi}{4} \right] > \left(
     \frac{\pi}{2}  \left[ a^{T'} - \frac{\pi}{4} \right] \right)^{2^{T''}}
     \geq \left( \frac{\pi}{2}  \left[ 1.5 - \frac{\pi}{4} \right]
     \right)^{2^{T''}} \geq \frac{\pi}{2}  \left[ N_{\varepsilon} -
     \frac{\pi}{4} \right] \]
  Hence $T'' \leq \mathcal{O} (\log [\log N_{\varepsilon}])$, then $T \assign
  T' + T'' =\mathcal{O} \left( \log \frac{1}{a^0} \vee \log [\log
  N_{\varepsilon}] \right)$ ensures $a^T > N_{\varepsilon}$ .
  
  {\tmstrong{Step 3}}. Determine the relationship between $N_{\varepsilon}$
  and the desired relative error $\varepsilon$.
  
  With Proposition~\ref{prop:cycloid}, we can write the expression for relative error of
  $\theta$ as follows.
  \begin{eqnarray*}
    \frac{\| \theta^{T + 1} - \tmop{sgn} (\rho^0) \theta^{\ast} \|}{\|
    \theta^{\ast} \|} & = & \pi^{- 1}  \sqrt{(\phi^T - \sin \phi^T)^2 + (1 -
    \cos \phi^T)^2}\\
    & = & \pi^{- 1}  \left[ \frac{\phi^2}{2} - \frac{\phi^4}{72} +
    \frac{\phi^6}{6480} - o (\phi^6) \right]_{\phi = \phi^T}
  \end{eqnarray*}
  By letting $\phi^T =\mathcal{O} \left( \sqrt{\varepsilon} \right)$, we
  ensure that $\frac{\| \theta^{T+1} - \tmop{sgn} (\rho^0) \theta^{\ast} \|}{\|
  \theta^{\ast} \|} < \varepsilon$.

  $\phi^T =\mathcal{O} \left( \sqrt{\varepsilon} \right)$, that is $\frac{\pi}{2}-\varphi^T=\frac{\phi^T}{2}=\mathcal{O} \left( \sqrt{\varepsilon} \right)$.

  With Corollary~\ref{cor:err_mixing}, we show that $ \| \pi^{T+1} - \bar{\pi}^{\ast} \|_1 = \frac{2}{\pi}\left| \frac{\pi}{2}-\varphi^T
  \right| \cdot \left\| \frac{1}{2} - \pi^{\ast} \right\|_1 = \mathcal{O} \left( \sqrt{\varepsilon} \right) \left\| \frac{1}{2} - \pi^{\ast} \right\|_1$.
  
  Note that $\phi^T \assign 2 \left( \frac{\pi}{2} - \varphi^T \right)$ and
  $a^T \assign \tan \varphi^T = \tan \left( \frac{\pi}{2} - \frac{\phi^T}{2}
  \right)$, expand $\tan \left( \frac{\pi}{2} - x \right) = \frac{1}{x} -
  \frac{x}{3} - \frac{x^3}{45} + o (x^3)$.
  \[ a^T = \frac{1}{\mathcal{O} \left( \sqrt{\varepsilon} \right)}
     -\mathcal{O} \left( \sqrt{\varepsilon} \right) -\mathcal{O} \left(
     \sqrt{\varepsilon} \right)^3 + o \left( \mathcal{O} \left(
     \sqrt{\varepsilon} \right)^3 \right) = \Omega \left(
     \frac{1}{\sqrt{\varepsilon}} \right) \]
  Let $N_{\varepsilon} \assign \Theta \left( \frac{1}{\sqrt{\varepsilon}}
  \right)$, with $T =\mathcal{O} \left( \log \frac{1}{\arctan a^0} \vee \log
  [\log N_{\varepsilon}] \right)$, we ensure $a^T > N_{\varepsilon} = \Theta
  \left( \frac{1}{\sqrt{\varepsilon}} \right)$, then we obtain
  \[ T =\mathcal{O} \left( \log \frac{1}{\arctan a^0} \vee \log \left[ \log
     \left( \frac{1}{\sqrt{\varepsilon}} \right) \right] \right) =\mathcal{O}
     \left( \log \frac{1}{\arctan a^0} \vee \log \log \frac{1}{\varepsilon}
     \right) \]
  Note that $\varphi^0 \assign \frac{\pi}{2} - \arccos | \rho^0 |$, and $a^0
  \assign \tan \varphi^0$, then $| \rho^0 | = \sin (\varphi^0) = \Theta
  (\varphi^0) = \Theta (a^0)$ when $a^0 < 1.5$, thus
  \[ T =\mathcal{O} \left( \log \frac{1}{a^0} \vee \log \log
     \frac{1}{\varepsilon} \right) =\mathcal{O} \left( \log \frac{1}{| \rho^0
     |} \vee \log \log \frac{1}{\varepsilon} \right) \]
  
\end{proof}

\newpage
\section{Proof for Results at the Finite-sample Level}\label{sup:finite_sample}

\subsection{Upper-bound for Statistical Error}

\begin{theorem}{(Proposition~\ref{prop:proj_err} in Section~\ref{sec:finite}: Projected Statistical Error)}
  In the noiseless setting, the projection on $\text{span}\{\theta,\theta^\ast\}$ for the statistical error of $\theta$ satisfies
  \begin{equation}\nonumber
    \frac{\|P_{\theta,\theta^\ast}
    [M_n^{\tmop{easy}} (\theta, \nu) - M (\theta, \nu)]\|}{\|\theta^\ast\|}
    = \mathcal{O}\left(\sqrt{\frac{\log \frac{1}{\delta}}{n}} \vee \frac{\log \frac{1}{\delta}}{n}\right),
  \end{equation}
  \normalsize
  with probability at least $1 - \delta$, where $M_n (\theta, \nu), M (\theta,
  \nu)$ are the EM update rules for $\theta$ at the Finite-sample level and the
  population level respectively, and the orthogonal projection matrix $P_{\theta,\theta^\ast}$ satisfies
  $\text{span}(P_{\theta,\theta^\ast})=\text{span}\{\theta,\theta^\ast\}$ 
  .
\end{theorem}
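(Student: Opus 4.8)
The plan is to exploit the noiseless closed form of the Easy EM update together with the observation that projecting onto $\text{span}\{\theta,\theta^\ast\}$ collapses the problem to two dimensions, thereby removing any dependence on the ambient dimension $d$. Recall from Section~\ref{sup:updates} that in the noiseless limit $M_n^{\tmop{easy}}(\theta,\nu)=\frac1n\sum_{i=1}^n|\langle x_i,\theta^\ast\rangle|\,\tmop{sgn}\langle x_i,\theta\rangle\,x_i$ and $M(\theta,\nu)=\mathbb{E}_x|\langle x,\theta^\ast\rangle|\,\tmop{sgn}\langle x,\theta\rangle\,x$. For fixed $\theta$ (which, under the sample-splitting scheme, is independent of the current batch) the orthonormal basis $\hat e_1,\hat e_2$ of the plane and the cosine $\rho$ are deterministic.

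First I would introduce the scalar weight $w(x):=|\langle x,\theta^\ast\rangle|\,\tmop{sgn}\langle x,\theta\rangle$ and observe that, since $\theta,\theta^\ast\in\text{span}\{\theta,\theta^\ast\}$, it depends on $x$ only through the projection $Px$, where $P:=P_{\theta,\theta^\ast}=\hat e_1\hat e_1^\top+\hat e_2\hat e_2^\top$. This yields the factorization $P\,[w(x)\,x]=w(x)\,Px$, so that
\begin{equation*}
P_{\theta,\theta^\ast}[M_n^{\tmop{easy}}(\theta,\nu)-M(\theta,\nu)]=\frac1n\sum_{i=1}^n w(x_i)\,Px_i-\mathbb{E}_x[w(x)\,Px].
\end{equation*}
Next I would parameterize by $u_i:=\langle x_i,\hat e_1\rangle$ and $v_i:=\langle x_i,\hat e_2\rangle$, which are i.i.d.\ $\mathcal{N}(0,I_2)$ because $x_i\sim\mathcal{N}(0,I_d)$. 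Using $\langle x_i,\theta^\ast\rangle=\|\theta^\ast\|u_i$ and $\langle x_i,\theta\rangle=\|\theta\|(\rho u_i+\sqrt{1-\rho^2}\,v_i)$, the coordinates of the projected error along $\hat e_1$ and $\hat e_2$ become empirical means of the i.i.d.\ summands $\|\theta^\ast\|\,|u_i|\,\tmop{sgn}(\rho u_i+\sqrt{1-\rho^2}v_i)\,u_i$ and $\|\theta^\ast\|\,|u_i|\,\tmop{sgn}(\rho u_i+\sqrt{1-\rho^2}v_i)\,v_i$, respectively.

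Each summand is a bounded sign ($\pm1$) times a product of two sub-Gaussian variables, hence sub-exponential with $\psi_1$-norm $\mathcal{O}(\|\theta^\ast\|)$, uniformly in $\rho$. I would then apply Bernstein's inequality for sums of i.i.d.\ sub-exponential variables to each of the two coordinates: each deviation is $\mathcal{O}\!\big(\|\theta^\ast\|(\sqrt{\log(1/\delta)/n}\vee\log(1/\delta)/n)\big)$ with probability at least $1-\delta/2$. A union bound over the two coordinates together with $\|\cdot\|_2\le\sqrt2\max$ then gives the claim after dividing by $\|\theta^\ast\|$.

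The main obstacle—and the entire point of the proposition—is the reduction carried out in the first two steps: recognizing that $w(x)$ is a function of $Px$ alone, so that the projected error genuinely lives in the fixed two-dimensional plane and the bound carries no $\sqrt{d/n}$ term, in sharp contrast to the total statistical error of Proposition~\ref{prop:stat_err}. The remaining analysis is a routine sub-exponential Bernstein estimate; the only care needed there is verifying that the $\psi_1$-norm of each summand is $\mathcal{O}(\|\theta^\ast\|)$ uniformly over $\rho\in[-1,1]$, which holds because $|\tmop{sgn}(\cdot)|\le1$ and the $\psi_1$-norm of a product of sub-Gaussian variables is controlled by the product of their $\psi_2$-norms.
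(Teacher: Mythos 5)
Your proposal is correct and follows essentially the same route as the paper's proof: reduce to the two--dimensional plane $\tmop{span}\{\theta,\theta^\ast\}$ so that the projected error is a sum of i.i.d.\ two--dimensional vectors whose coordinates are (sign)$\times$(product of standard Gaussians), observe these are sub-exponential, and apply Bernstein plus a union bound over the two coordinates. The only differences are cosmetic — the paper expands in the $\vec{e}_1,\vec{e}_2$ basis aligned with $\theta$ rather than $\hat e_1,\hat e_2$, and computes explicit sub-exponential parameters by direct moment bounds where you invoke the general $\psi_1$--$\psi_2$ product estimate.
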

\begin{proof}
In the noiseless setting, the statistical error is
\begin{eqnarray*}
  M^{\tmop{easy}}_n (\theta^t, \nu^t) - M (\theta^t, \nu^t) & = & \left\{ 
  \frac{1}{n}  \sum_{i \in [n]} -\mathbb{E}_{s \sim p (s \mid \theta^{\ast},
  \pi^{\ast})} \right\} \tanh \left( \frac{y_i  \langle x_i, \theta^t
  \rangle}{\sigma^2} + \nu^t \right) y_i x_i\\
  & \rightarrow & \left\{ \frac{1}{n}  \sum_{i \in [n]} -\mathbb{E}_{\{ x_i
  \}_{i \in [n]} \overset{\tmop{iid}}{\sim} \mathcal{N} (0, I_d)} \right\} |
  \langle x_i, \theta^{\ast} \rangle | \tmop{sgn} \langle x_i, \theta^t
  \rangle x_i
\end{eqnarray*}
  Let $x_i = \tilde{x}_i + x_i^{\perp}$ and $\tilde{x}_i \in \tmop{span} \{
    \theta^{\ast}, \theta^t \}, x_i^{\perp} \perp \tmop{span} \{ \theta^{\ast},
    \theta^t \}$, we may assume $\dim \tmop{span} \{ \theta^{\ast},
    \theta^t \} = 2$ without loss of generality, and decompose the space into $\mathbb{R}^d = \tmop{span} \{ \theta^{\ast}, \theta^t \} \oplus \tmop{span}
    \{ \theta^{\ast}, \theta^t \}^{\perp}$.
   
    It implies that 2nd term of statistical error of (Easy EM) doesn't depends on
    dimension $d$ of the space $\mathbb{R}^d$.
    \begin{eqnarray*}
      &  & M_n^{\tmop{easy}} (\theta^t, \nu^t) - M (\theta^t, \nu^t)
      =\left\{ \frac{1}{n}  \sum_{i \in [n]} -\mathbb{E}_{\{ x_i \}_{i \in
      [n]} \overset{\tmop{iid}}{\sim} \mathcal{N} (0, I_d)} \right\} | \langle
      x_i, \theta^{\ast} \rangle | \tmop{sgn} \langle x_i, \theta^t \rangle x_i\\
      & = & \frac{1}{n}  \sum_{i \in [n]} | \langle \tilde{x}_i,
      \theta^{\ast} \rangle | \tmop{sgn} \langle \tilde{x}_i, \theta^t \rangle
      x_i^{\perp} + \left\{ \frac{1}{n}  \sum_{i \in [n]} -\mathbb{E}_{\{ x_i
      \}_{i \in [n]} \overset{\tmop{iid}}{\sim} \mathcal{N} (0, I_d)} \right\} |
      \langle \tilde{x}_i, \theta^{\ast} \rangle | \tmop{sgn} \langle \tilde{x}_i,
      \theta^t \rangle \tilde{x}_i
    \end{eqnarray*}
Let the orthogonal projection matrix $P_{\theta,\theta^\ast}$ satisfy
  $\text{span}(P_{\theta,\theta^\ast})=\text{span}\{\theta,\theta^\ast\}$, the 2nd term is the projected statistical error.
\begin{equation*}
    \|P_{\theta,\theta^\ast}
  [M_n^{\tmop{easy}} (\theta^t, \nu^t) - M (\theta^t, \nu^t)]\|
  = \left\| \left\{ \frac{1}{n}  \sum_{i
  \in [n]} -\mathbb{E}_{\{ x_i \}_{i \in [n]} \overset{\tmop{iid}}{\sim}
  \mathcal{N} (0, I_d)} \right\} | \langle \tilde{x}_i, \theta^{\ast} \rangle |
  \tmop{sgn} \langle \tilde{x}_i, \theta^t \rangle \tilde{x}_i \right\|_2
\end{equation*}
To estimate the projected statistical error, we begin with decompose it into two parts in terms of $\hat{e}_1, \hat{e}_2$.



Let $\tilde{x}_i = \lambda_{1i} \hat{e}_1 + \lambda_{2i} \hat{e}_2,\lambda_{1
i}, \lambda_{2 i} \overset{\tmop{iid}}{\sim} \mathcal{N} (0, 1)$, so $\tilde{x}_i = \left\langle \left(\begin{array}{c}
  \rho\\
  \sqrt{1 - \rho^2}
\end{array}\right), \left(\begin{array}{c}
  \lambda_{1 i}\\
  \lambda_{2 i}
\end{array}\right) \right\rangle \vec{e}_1 + \left\langle
\left(\begin{array}{c}
  \sqrt{1 - \rho^2}\\
  - \rho
\end{array}\right), \left(\begin{array}{c}
  \lambda_{1 i}\\
  \lambda_{2 i}
\end{array}\right) \right\rangle \vec{e}_2$.

Let $Z_i \assign | \lambda_{1i} | \cdot |\rho \lambda_{1i}+ \sqrt{1 - \rho^2}\lambda_{2i}|$,
$Z_i' \assign | \lambda_{1i} | \tmop{sgn}(\rho \lambda_{1i}+ \sqrt{1 - \rho^2}\lambda_{2i})\cdot (-\sqrt{1 - \rho^2}  \lambda_{1i}+ \rho \lambda_{2i})$.


\begin{eqnarray*}
  \frac{\|P_{\theta,\theta^\ast}
    [M_n^{\tmop{easy}} (\theta, \nu) - M (\theta, \nu)]\|}{\|\theta^\ast\|}
  =\frac{1}{\| \theta^{\ast} \|} \sqrt{\left| \frac{1}{n}  \sum_{i \in
  [n]} (Z_i -\mathbb{E} [Z_i]) \right|^2 + \left| \frac{1}{n}  \sum_{i \in
  [n]} (Z'_i -\mathbb{E} [Z'_i]) \right|^2}\\
\end{eqnarray*}
Note that both $Z_i, Z_i', i\in[n]$ are sub-exponential with parameters $\left( 2 \cdot
2.91\mathrm{e} (2 \pi)^{- \frac{1}{4}}, 2 \cdot 2.91\mathrm{e} \right)$ (we explain the reason later).
By using the concentration inequality in \citep{wainwright2019high} page 29,
equation (2.18) for iid sub-exponential r.v. with parameters, with
probability at least $1 - 2 \delta$
\[ \left| \frac{1}{n}  \sum_{i \in [n]} (Z_i -\mathbb{E} [Z_i]) \right|,
\left| \frac{1}{n}  \sum_{i \in [n]} (Z_i' -\mathbb{E} [Z_i']) \right| \leq
   \max \left( 2^{\frac{1}{2}} \left( 2 \cdot 2.91\mathrm{e} (2 \pi)^{-
   \frac{1}{4}} \right) \sqrt{\frac{\log \frac{1}{\delta}}{n}}, 2 (2 \cdot
   2.91\mathrm{e}) \frac{\log \frac{1}{\delta}}{n} \right) \]
Hence, the proof is complete.
\begin{eqnarray*}
  \frac{\|P_{\theta,\theta^\ast}
    [M_n^{\tmop{easy}} (\theta, \nu) - M (\theta, \nu)]\|}{\|\theta^\ast\|}
  & \leq & \sqrt{2} \max \left( 2^{\frac{1}{2}} \left( 2 \cdot 2.91\mathrm{e}
  (2 \pi)^{- \frac{1}{4}} \right) \sqrt{\frac{\log \frac{1}{\delta}}{n}}, 2 (2
  \cdot 2.91\mathrm{e}) \frac{\log \frac{1}{\delta}}{n} \right)\\
  & < & \max \left( 20 \sqrt{\frac{\log \frac{1}{\delta}}{n}}, 45 \frac{\log
  \frac{1}{\delta}}{n} \right)
  =\mathcal{O}\left(\sqrt{\frac{\log \frac{1}{\delta}}{n}} \vee \frac{\log \frac{1}{\delta}}{n}\right)
\end{eqnarray*}
Now, let's explain why both $Z_i, Z_i', i\in[n]$ are sub-exponential. For brevity, we write $Z, Z'$ for $Z_i, Z_i'$ instead.
\begin{equation*}
    Z  \leq \frac{1 + | \rho |}{2} (\lambda_1^2 + \lambda_2^2) \leq \lambda_1^2 + \lambda_2^2,\quad
    Z' \leq  \frac{1 + \sqrt{1 -  \rho^2}}{2} (\lambda_1^2 + \lambda_2^2) \leq \lambda_1^2 + \lambda_2^2,\quad |\mathbb{E}[Z]|\leq  1, \quad |\mathbb{E}[Z']|\leq  1
\end{equation*}

Let $W \assign \lambda_1^2 + \lambda_2^2 \sim \chi^2 (2)$, then the $q$-th
moment of $| Z |$ for $q \geq 2$, $\mathbb{E} [| Z |^q] \leq \mathbb{E}_{W \sim \chi^2  (2)} [W^q] = 2^q q!$.
By Minkowski's Inequality and Stirling's approximation, $- x \log x \leq \mathrm{e}^{- 1}$ and $\exp \left( \frac{\log q}{q} \right)
\leq \exp \left( \frac{\log 3}{3} \right) = 3^{\frac{1}{3}}$ for $q \geq 2$
and $\left\{ \frac{2 \sqrt{2 \pi} \mathrm{e}^{\frac{1}{12}}}{\mathrm{e}}
3^{\frac{1}{6}} + \frac{1}{2} \right\} \approx 2.907 < 2.91$
\begin{eqnarray*}
  \mathbb{E} [| Z -\mathbb{E} [Z] |^q]^{\frac{1}{q}} & \leq & \mathbb{E} [| Z
  |^q]^{\frac{1}{q}} +\mathbb{E} [| \mathbb{E} [Z] |^q]^{\frac{1}{q}}
  =\mathbb{E} [| Z |^q]^{\frac{1}{q}} + | \mathbb{E} [Z ] |
  \leq [2^q q!]^{\frac{1}{q}} + 1\\
  & \leq & 
  \frac{2 \left[ \sqrt{2 \pi} \mathrm{e}^{\frac{1}{12}}
  \right]^{\frac{1}{q}}}{\mathrm{e}} \sqrt{\exp \left( \frac{\log q}{q}
  \right)} q + 1
  \leq \frac{2 \sqrt{2 \pi} \mathrm{e}^{\frac{1}{12}}}{\mathrm{e}}
  3^{\frac{1}{6}} q + 1
  \leq \left\{ \frac{2 \sqrt{2 \pi}
  \mathrm{e}^{\frac{1}{12}}}{\mathrm{e}} 3^{\frac{1}{6}} + \frac{1}{2}
  \right\} q
  < 2.91 q
\end{eqnarray*}
Then for $0 \leq | \lambda | \leq \frac{1}{2 \cdot 2.91\mathrm{e}}$
\begin{eqnarray*}
  & & \mathbb{E} [\exp (\lambda \{ Z -\mathbb{E} [Z] \})] \leq \mathbb{E}
  [\exp (| \lambda | | Z -\mathbb{E} [Z] |)]
  = 1 + \sum_{q = 2}^{\infty} \frac{| \lambda |^q \mathbb{E} [| Z
  -\mathbb{E} [Z] |^q]}{q!}\\
  & \leq & 
  1 + \frac{1}{\sqrt{2 \pi}} \sum_{q = 2}^{\infty}
  \frac{(2.91\mathrm{e} | \lambda |)^q}{q^{\frac{1}{2}} }
  \leq 1 + \frac{1}{\sqrt{2 \pi}} \frac{(2.91\mathrm{e} | \lambda |)^2}{1
  - 2.91\mathrm{e} | \lambda |}
  \leq 1 + \frac{2}{\sqrt{2 \pi}} (2.91\mathrm{e} | \lambda |)^2
  \leq \exp \left( \frac{\left( 2 \cdot 2.91\mathrm{e} (2 \pi)^{-
  \frac{1}{4}} \right)^2 \lambda^2}{2} \right)
\end{eqnarray*}
Therefore, $Z$ and $Z'$ (the same reason) are sub-exponential with parameters $\left( 2 \cdot
2.91\mathrm{e} (2 \pi)^{- \frac{1}{4}}, 2 \cdot 2.91\mathrm{e} \right)$.

\end{proof}

\begin{theorem}{(Proposition~\ref{prop:stat_err} in Section~\ref{sec:finite}: Statistical Error)}
In the noiseless setting, the statistical error of $\theta$ for EM
updates at the Finite-sample level satisfies
\small
\begin{equation}\nonumber
  \frac{\| M_n (\theta, \nu) - M (\theta, \nu) \|_2}{\| \theta^{\ast} \|}
  =\mathcal{O} \left( \sqrt{\frac{d}{n}} \vee \frac{\log
  \frac{1}{\delta}}{n} \vee \sqrt{\frac{\log \frac{1}{\delta}}{n}}
  \right),
\end{equation}
\normalsize
with probability at least $1 - \delta$, $M_n (\theta, \nu), M (\theta,
\nu)$ denote the EM update rules for $\theta$ at the Finite-sample level and the
Population level.
\end{theorem}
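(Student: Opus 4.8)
The plan is to reduce the statement to a bound on the \emph{easy} EM error $\|M_n^{\tmop{easy}}(\theta,\nu) - M(\theta,\nu)\|$ together with a control of the empirical-covariance factor, and then to split the easy EM error into its component inside $\tmop{span}\{\theta,\theta^{\ast}\}$ and its component orthogonal to it. The parallel (projected) component is already controlled by Proposition~\ref{prop:proj_err}, contributing $\mathcal{O}(\sqrt{\log(1/\delta)/n}\vee \log(1/\delta)/n)$. Recalling the decomposition derived in the proof of Proposition~\ref{prop:proj_err}, the orthogonal component is exactly $\frac{1}{n}\sum_{i\in[n]} c_i\, x_i^{\perp}$ with $c_i := |\langle \tilde{x}_i,\theta^{\ast}\rangle|\,\tmop{sgn}\langle \tilde{x}_i,\theta\rangle$, where $\tilde{x}_i,x_i^{\perp}$ are the projections of $x_i$ onto $\tmop{span}\{\theta,\theta^{\ast}\}$ and $\tmop{span}\{\theta,\theta^{\ast}\}^{\perp}$ respectively. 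Bounding this term, which carries the whole dimension dependence, is the crux.

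The key observation I would exploit is the rotational invariance of the isotropic Gaussian. Since $x_i^{\perp}\ind \tilde{x}_i$ and each $c_i$ is a function of $\tilde{x}_i$ only, conditioning on $\{\tilde{x}_i\}$ makes $\sum_{i}c_i x_i^{\perp}$ a zero-mean Gaussian vector in the $(d-2)$-dimensional subspace $\tmop{span}\{\theta,\theta^{\ast}\}^{\perp}$ with covariance $(\sum_i c_i^2)\,I_{d-2}$. By rotational invariance this vector is distributed as $\sqrt{\sum_i c_i^2}\,g$ with $g\sim\mathcal{N}(0,I_{d-2})$, so that
\begin{equation}\nonumber
\left\|\tfrac{1}{n}\sum_{i\in[n]}c_i x_i^{\perp}\right\|
= \frac{1}{n}\sqrt{\Big(\sum_{i\in[n]}c_i^2\Big)\,\|g\|^2}.
\end{equation}
Because $\langle x_i,\theta^{\ast}\rangle\sim\mathcal{N}(0,\|\theta^{\ast}\|^2)$, we have $\sum_i c_i^2=\|\theta^{\ast}\|^2\,\chi^2_{n}$, while $\|g\|^2=\chi^2_{d-2}$ is independent of it; the norm is therefore $\|\theta^{\ast}\|/n$ times the geometric mean $\sqrt{\chi^2_n\,\chi^2_{d-2}}$ of two independent chi-square variables. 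This is precisely the mechanism that separates the dimension factor $\sqrt{d/n}$ from the \emph{additive} (rather than multiplicative) $\log$ factor.

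It then remains to concentrate the two chi-squares. Using the Laurent--Massart tail bound I would get $\sqrt{\chi^2_n}\le \sqrt{n}+\sqrt{2\log(1/\delta)}$ and $\sqrt{\chi^2_{d-2}}\le \sqrt{d-2}+\sqrt{2\log(1/\delta)}$, each with probability at least $1-\delta$, whence
\begin{equation}\nonumber
\frac{1}{n}\sqrt{\chi^2_n\,\chi^2_{d-2}}
\le \sqrt{\tfrac{d-2}{n}} + \frac{\sqrt{2(d-2)\log(1/\delta)}}{n}
+ \sqrt{\tfrac{2\log(1/\delta)}{n}} + \frac{2\log(1/\delta)}{n}.
\end{equation}
Under the sample-size assumption $n=\Omega(d\vee\log(1/\delta))$ the cross term $\sqrt{2(d-2)\log(1/\delta)}/n$ is absorbed into $\mathcal{O}(\sqrt{d/n})$, so the orthogonal contribution is $\mathcal{O}(\sqrt{d/n}\vee\sqrt{\log(1/\delta)/n}\vee\log(1/\delta)/n)$ after dividing by $\|\theta^{\ast}\|$. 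Combining with the parallel part of Proposition~\ref{prop:proj_err} (whose two terms are already inside this rate) gives the claimed bound for $M_n^{\tmop{easy}}$.

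Finally, to pass from $M_n^{\tmop{easy}}$ to the standard update $M_n=\hat{\Sigma}^{-1}M_n^{\tmop{easy}}$ with $\hat{\Sigma}:=\frac{1}{n}\sum_i x_ix_i^{\top}$, I would write $M_n-M=(\hat{\Sigma}^{-1}-I_d)M+\hat{\Sigma}^{-1}(M_n^{\tmop{easy}}-M)$ and invoke the standard covariance concentration $\|\hat{\Sigma}-I_d\|_{\tmop{op}}=\mathcal{O}(\sqrt{(d+\log(1/\delta))/n})$, valid for $n=\Omega(d\vee\log(1/\delta))$, which also yields $\|\hat{\Sigma}^{-1}\|_{\tmop{op}}=\mathcal{O}(1)$ and $\|\hat{\Sigma}^{-1}-I_d\|_{\tmop{op}}=\mathcal{O}(\sqrt{(d+\log(1/\delta))/n})$. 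Since the noiseless closed form of Corollary~\ref{cor:noiseless} gives $\|M\|=\mathcal{O}(\|\theta^{\ast}\|)$, the first term is $\mathcal{O}(\sqrt{d/n}\vee\sqrt{\log(1/\delta)/n})\,\|\theta^{\ast}\|$ and the second is $\mathcal{O}(1)\cdot\|M_n^{\tmop{easy}}-M\|$, both within the stated rate. The main obstacle I anticipate is the orthogonal term: obtaining the sharp $\sqrt{d/n}$ together with an additive $\log(1/\delta)$ factor requires the exact geometric-mean-of-chi-squares representation above, rather than a crude sub-exponential bound on each summand, which would only reproduce the weaker multiplicative $\sqrt{d/n}\,\log(n/\delta)$ rate of prior work.
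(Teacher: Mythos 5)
Your proposal is correct and follows essentially the same route as the paper's own proof: the same split into the projected part (handled by Proposition~\ref{prop:proj_err}) and the orthogonal part, the same rotational-invariance reduction of the orthogonal term to $\frac{\|\theta^{\ast}\|}{n}\sqrt{Z_1 Z_2}$ with independent $Z_1\sim\chi^2(n)$ and $Z_2\sim\chi^2(d-2)$ concentrated via Laurent--Massart, and the same passage to the standard update through $\hat{\Sigma}^{-1}$ with the covariance concentration bounds. The only differences are cosmetic (you condition on $\{\tilde{x}_i\}$ and invoke Gaussianity of the conditional sum, while the paper writes out the projection coefficients $\alpha_j$ explicitly, and your decomposition $(\hat{\Sigma}^{-1}-I_d)M+\hat{\Sigma}^{-1}(M_n^{\tmop{easy}}-M)$ is algebraically identical to the paper's).
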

\begin{proof}
\begin{eqnarray*}
  M^{\tmop{easy}}_n (\theta^t, \nu^t) - M (\theta^t, \nu^t) & = & \left\{ 
  \frac{1}{n}  \sum_{i \in [n]} -\mathbb{E}_{s \sim p (s \mid \theta^{\ast},
  \pi^{\ast})} \right\} \tanh \left( \frac{y_i  \langle x_i, \theta^t
  \rangle}{\sigma^2} + \nu^t \right) y_i x_i\\
  & \rightarrow & \left\{ \frac{1}{n}  \sum_{i \in [n]} -\mathbb{E}_{\{ x_i
  \}_{i \in [n]} \overset{\tmop{iid}}{\sim} \mathcal{N} (0, I_d)} \right\} |
  \langle x_i, \theta^{\ast} \rangle | \tmop{sgn} \langle x_i, \theta^t
  \rangle x_i
\end{eqnarray*}
  Let $x_i = \tilde{x}_i + x_i^{\perp}$ and $\tilde{x}_i \in \tmop{span} \{
    \theta^{\ast}, \theta^t \}, x_i^{\perp} \perp \tmop{span} \{ \theta^{\ast},
    \theta^t \}$, we may assume $\dim \tmop{span} \{ \theta^{\ast},
    \theta^t \} = 2$ without loss of generality, and decompose the space into $\mathbb{R}^d = \tmop{span} \{ \theta^{\ast}, \theta^t \} \oplus \tmop{span}
    \{ \theta^{\ast}, \theta^t \}^{\perp}$.
   
    It implies that 2nd term of statistical error of (Easy EM) doesn't depends on
    dimension $d$ of the space $\mathbb{R}^d$
    \begin{eqnarray*}
      &  & M_n^{\tmop{easy}} (\theta^t, \nu^t) - M (\theta^t, \nu^t)
      =\left\{ \frac{1}{n}  \sum_{i \in [n]} -\mathbb{E}_{\{ x_i \}_{i \in
      [n]} \overset{\tmop{iid}}{\sim} \mathcal{N} (0, I_d)} \right\} | \langle
      x_i, \theta^{\ast} \rangle | \tmop{sgn} \langle x_i, \theta^t \rangle x_i\\
      & = & \frac{1}{n}  \sum_{i \in [n]} | \langle \tilde{x}_i,
      \theta^{\ast} \rangle | \tmop{sgn} \langle \tilde{x}_i, \theta^t \rangle
      x_i^{\perp} + \left\{ \frac{1}{n}  \sum_{i \in [n]} -\mathbb{E}_{\{ x_i
      \}_{i \in [n]} \overset{\tmop{iid}}{\sim} \mathcal{N} (0, I_d)} \right\} |
      \langle \tilde{x}_i, \theta^{\ast} \rangle | \tmop{sgn} \langle \tilde{x}_i,
      \theta^t \rangle \tilde{x}_i
    \end{eqnarray*}
In the previous Proposition~\ref{prop:proj_err}, we bound the $\ell_2$ norm of the second term with
\begin{eqnarray*}
  \frac{\|P_{\theta,\theta^\ast}
  [M_n^{\tmop{easy}} (\theta^t, \nu^t) - M (\theta^t, \nu^t)]\|}{\|\theta^\ast\|}
  &=& \frac{1}{\| \theta^{\ast} \|} \left\| \left\{ \frac{1}{n}  \sum_{i
  \in [n]} -\mathbb{E}_{\{ x_i \}_{i \in [n]} \overset{\tmop{iid}}{\sim}
  \mathcal{N} (0, I_d)} \right\} | \langle \tilde{x}_i, \theta^{\ast} \rangle |
  \tmop{sgn} \langle \tilde{x}_i, \theta^t \rangle \tilde{x}_i \right\|_2\\
  &=& \mathcal{O}\left(\sqrt{\frac{\log \frac{1}{\delta}}{n}} \vee \frac{\log \frac{1}{\delta}}{n}\right),
\end{eqnarray*}
Let's focus on the first term, we start by rewriting the $\ell_2$ norm of the first term in a different notation.

Let the projection matrix $P \assign \hat{e}_1 \hat{e}_1^{\top} + \hat{e}_2
\hat{e}^{\top}_2$, select an orthonormal basis 
$\{ \hat{e}_3, \cdots, \hat{e}_d \}$ to form $\tmop{span} \{ \theta^{\ast},
\theta^t \}^{\perp}$, and let $Q \assign (\hat{e}_1, \hat{e}_2, \hat{e}_3, \cdots, \hat{e}_d)$.

Let $\left( \tilde{x}_i' {, x_i'}^{\perp} \right) \assign Q^{\top} x_i$, where
$\tilde{x}_i' \in \mathbb{R}^2 {, x_i'}^{\perp} \in \mathbb{R}^{d - 2}$ are
independent, and $\tilde{x}_i' \overset{\tmop{iid}}{\sim} \mathcal{N} (0, I_2)
{, x_i'}^{\perp} \overset{\tmop{iid}}{\sim} \mathcal{N} (0, I_{d - 2})$.

Note that ${x_i''}^{\perp} \assign \tmop{sgn} \langle \tilde{x}_i, \theta^t
\rangle \tmop{sgn} \langle \tilde{x}_i, \theta^t \rangle Q^{\top} x_i^{\perp}
= \tmop{sgn} \langle \tilde{x}_i', (1, 0)^{\top} \rangle \tmop{sgn}
\left\langle \tilde{x}_i', \left( \rho, \sqrt{1 - \rho^2} \right)^{\top}
\right\rangle {x_i'}^{\perp}$ are indepent from $\tilde{x}_i'$
and $\tilde{x}_i'' \assign \langle \tilde{x}_i', (1, 0)^{\top} \rangle
\overset{\tmop{iid}}{\sim} \mathcal{N} (0, 1) {, x_i''}^{\perp}
\overset{\tmop{iid}}{\sim} \mathcal{N} (0, I_{d - 2})$.
We define $\tilde{x}'' \assign \{ \tilde{x}_i'' \}_{i \in [n]} \in \mathbb{R}^n {,
x_j''}^{\perp} : = \left\{ {x_{i j}''}^{\perp} \right\}_{i \in [n]} \in
\mathbb{R}^n$ for $j \in [d - 2]$ (Remark: ${x_{i j}''}^{\perp}$ is the $j$-th
component of ${x_i''}^{\perp}$).

The components of projected vector $\alpha_j = \left\langle
\frac{\tilde{x}''}{\| \tilde{x}'' \|} {, x_j''}^{\perp} \right\rangle
\overset{\tmop{iid}}{\sim} \mathcal{N} (0, 1)$ are independent from each other,
and let $Z_1 \assign \left[ \sum_{i \in [n]} (\tilde{x}_i'')^2 \right] \sim
\chi^2 (n), Z_2 \assign \left[ \sum_{j \in [d - 2]} \alpha_j^2 \right] \sim
\chi^2 (d - 2)$ are independent from each other.
\begin{eqnarray*}
  \left\| \frac{1}{n}  \sum_{i \in [n]} | \langle \tilde{x}, \theta^{\ast}
  \rangle | \tmop{sgn} \langle \tilde{x}, \theta^t \rangle x^{\perp}
  \right\|_2 & = & \frac{\| \theta^{\ast} \|}{n}  \left\|  \sum_{i \in [n]}
  \tilde{x}_i'' {x_i''}^{\perp} \right\|_2
  = \frac{\| \theta^{\ast} \|}{n}  \sqrt{\sum_{j \in [d - 2]} \left(
  \sum_{i \in [n]} \tilde{x}_i'' {x_{i j}''}^{\perp} \right)^2}\\
  & = & \frac{\| \theta^{\ast} \|}{n}  \sqrt{\sum_{j \in [d - 2]} \|
  \tilde{x}'' \|^2 \alpha_j^2}
  = \frac{\| \theta^{\ast} \|}{n}  \sqrt{\left[ \sum_{i \in [n]}
  (\tilde{x}_i'')^2 \right] \cdot \left[ \sum_{j \in [d - 2]} \alpha_j^2
  \right]}
  = \frac{\| \theta^{\ast} \|}{n} \cdot \sqrt{Z_1 Z_2}
\end{eqnarray*}
By using the concentration inequality for Chi-square distribution (see Lemma 1, page 1325 in \cite{laurent2000adaptive}), then with at least probability at least $\left(1-\frac{\delta}{2}\right)^2\geq 1-\delta$
\begin{eqnarray*}
  Z_1 \leq \left(\sqrt{n} + \sqrt{\log\frac{2}{\delta}}\right)^2 + \log\frac{2}{\delta}
  ,\quad
  Z_2 \leq \left(\sqrt{d-2} + \sqrt{\log\frac{2}{\delta}}\right)^2 + \log\frac{2}{\delta}
\end{eqnarray*}
Therefore
\begin{eqnarray*}
  \sqrt{Z_1 Z_2} 
  \leq 2 \left(\sqrt{n} + \sqrt{\log\frac{2}{\delta}}\right)
  \left(\sqrt{d-2} + \sqrt{\log\frac{2}{\delta}}\right)
  = 2\sqrt{n(d-2)} + 2 \log\frac{2}{\delta} + 2(\sqrt{n} + \sqrt{d-2}) \sqrt{\log\frac{2}{\delta}}
\end{eqnarray*}
Note that $d\leq n$, hence we upper-bound the $\ell_2$ norm for the first term.
\begin{equation*}
  \frac{1}{\| \theta^{\ast} \|} \left\| \frac{1}{n}  \sum_{i \in
  [n]} | \langle \tilde{x}_i, \theta^{\ast} \rangle | \tmop{sgn} \langle
  \tilde{x}_i, \theta^t \rangle x_i^{\perp} \right\|_2
  \leq 2\sqrt{\frac{(d-2)}{n}} + 2 \frac{\log\frac{2}{\delta}}{n} + 4 \sqrt{\frac{\log\frac{2}{\delta}}{n}}
\end{equation*}

\textbf{Bound for Easy EM}
For easy EM in the noiseless setting, 
we show the following upper bound for the statistical error (in $\ell_2$ norm) by combining the upper-bounds for the 1st and 2nd terms.
\begin{eqnarray*}
  &  & \frac{\| M^{\tmop{easy}}_n (\theta^t, \nu^t) - M (\theta^t, \nu^t)
  \|_2}{\| \theta^{\ast} \|}\\
  & \leq & \frac{1}{\| \theta^{\ast} \|} \left\| \frac{1}{n}  \sum_{i \in
  [n]} | \langle \tilde{x}_i, \theta^{\ast} \rangle | \tmop{sgn} \langle
  \tilde{x}_i, \theta^t \rangle x_i^{\perp} \right\|_2 + \frac{1}{\|
  \theta^{\ast} \|} \left\| \left\{ \frac{1}{n}  \sum_{i \in [n]}
  -\mathbb{E}_{\{ x_i \}_{i \in [n]} \overset{\tmop{iid}}{\sim} \mathcal{N}
  (0, I_d)} \right\} | \langle \tilde{x}_i, \theta^{\ast} \rangle | \tmop{sgn}
  \langle \tilde{x}_i, \theta^t \rangle \tilde{x}_i \right\|_2\\
  & \leq & \left(2\sqrt{\frac{(d-2)}{n}} + 2 \frac{\log\frac{2}{\delta}}{n} + 4 \sqrt{\frac{\log\frac{2}{\delta}}{n}}\right)
  + \max \left( 20 \sqrt{\frac{\log \frac{8}{\delta}}{n}}, 45 \frac{\log
  \frac{8}{\delta}}{n} \right)
  =\mathcal{O} \left( \sqrt{\frac{d}{n}} \vee \frac{\log \frac{1}{\delta}}{n} \vee \sqrt{\frac{\log \frac{1}{\delta}}{n}}
  \right)
\end{eqnarray*}

\textbf{Bound for EM}
For the standard EM, We update parameters $\theta$ with
\[ \begin{array}{lll}
     \theta^{t + 1} \leftarrow M_n (\theta^t, \nu^t) & = & \left[ \frac{1}{n} 
     \sum_{i \in [n]} x_i x_i^{\top} \right]^{- 1} \frac{1}{n}  \sum_{i \in
     [n]} \tanh \left( \frac{y_i  \langle x_i, \theta^t \rangle}{\sigma^2} +
     \nu^t \right) y_i x_i
   \end{array} \]
In noiseless setting ($\sigma \rightarrow 0$, SNR$\rightarrow \infty$),
consider the difference between EM updates at the finite-sample/population level.
\begin{eqnarray*}
  M_n (\theta^t, \nu^t) - M (\theta^t, \nu^t) & \rightarrow & \left[
  \frac{1}{n}  \sum_{i \in [n]} x_i x_i^{\top} \right]^{- 1} \left\{
  \frac{1}{n}  \sum_{i \in [n]} -\mathbb{E}_{s \sim p (s \mid \theta^{\ast},
  \pi^{\ast})} \right\} | \langle x_i, \theta^{\ast} \rangle | \tmop{sgn}
  \langle x_i, \theta^t \rangle x_i\\
  &  & + \left[ \frac{1}{n}  \sum_{i \in [n]} x_i x_i^{\top} \right]^{- 1}
  \left\{ I_d - \left[ \frac{1}{n}  \sum_{i \in [n]} x_i x_i^{\top} \right]
  \right\} \mathbb{E}_{s \sim p (s \mid \theta^{\ast}, \pi^{\ast})} | \langle
  x_i, \theta^{\ast} \rangle | \tmop{sgn} \langle x_i, \theta^t \rangle x_i
\end{eqnarray*}
By using \citep{wainwright2019high} 
page 162,
equation (6.9), where $\gamma_{\min}$ is the minimum eigen value.
\[ \mathbb{P} \left[ \sqrt{\gamma_{\min} \left[ \frac{1}{n}  \sum_{i \in [n]}
   x_i x_i^{\top} \right]} \leq (1 - \delta) - \sqrt{\frac{d}{n}} \right] \leq
   \mathrm{e}^{- n \delta^2 / 2} \]
Let $\mathrm{e}^{- n \delta^2 / 2} \leftarrow \delta, \delta \leftarrow
\sqrt{2} \sqrt{\frac{\log \frac{1}{\delta}}{n}}$, with probability at least $1
- \delta$
\[ \sqrt{\gamma_{\min} \left[ \frac{1}{n}  \sum_{i \in [n]} x_i x_i^{\top}
   \right]} \geq 1 - \sqrt{2} \sqrt{\frac{\log \frac{1}{\delta}}{n}} -
   \sqrt{\frac{d}{n}} =\mathcal{O} (1) \]
Using $\frac{1}{\| \theta^{\ast} \|} \mathbb{E}_{s \sim p (s \mid
\theta^{\ast}, \pi^{\ast})} | \langle x, \theta^{\ast} \rangle | \tmop{sgn}
\langle x, \theta^t \rangle x = \mathrm{sgn} (\rho)  \left[ 1 - \frac{\arccos
| \rho |}{\frac{\pi}{2}} \right]  \frac{\theta^{\ast}}{\| \theta^{\ast} \|} +
\left( \frac{\pi}{2} \right)^{- 1} \sqrt{1 - \rho^2}  \frac{\theta^t}{\|
\theta^t \|}$

Hence $\frac{\| \mathbb{E}_{s \sim p (s \mid \theta^{\ast}, \pi^{\ast})} |
\langle x, \theta^{\ast} \rangle | \tmop{sgn} \langle x, \theta^t \rangle x
\|_2}{\| \theta^{\ast} \|} = \sqrt{\left[ 1 - \frac{\arccos | \rho
|}{\frac{\pi}{2}} \right]^2 + \left( \frac{\pi}{2} \right)^{- 2} (1 - \rho^2)
+ \frac{4}{\pi} | \rho | \sqrt{1 - \rho^2} \left[ 1 - \frac{\arccos | \rho
|}{\frac{\pi}{2}} \right]} \in [0, 1]$

By using \citep{wainwright2019high} 
page 162,
equation (6.12)
\[ \mathbb{P} \left[ \left\| \left[ \frac{1}{n}  \sum_{i \in [n]} x_i
   x_i^{\top} \right] - I_d \right\|_2 \geq 2 \sqrt{\frac{d}{n}} + 2 \delta +
   \left( \sqrt{\frac{d}{n}} + \delta \right)^2 \right] \leq 2\mathrm{e}^{- n
   \delta^2 / 2} \]
let $2\mathrm{e}^{- n \delta^2 / 2} \leftarrow \delta, \delta \leftarrow
\sqrt{2} \sqrt{\frac{\log \frac{2}{\delta}}{n}}$, with probability at least $1
- \delta$
\[ \left\| \left[ \frac{1}{n}  \sum_{i \in [n]} x_i x_i^{\top} \right]  - I_d
   \right\|_2 \leq 2 \left( \sqrt{\frac{d}{n}} + 2 \sqrt{2} \sqrt{\frac{\log
   \frac{2}{\delta}}{n}} \right) + \left( \sqrt{\frac{d}{n}} + \sqrt{2}
   \sqrt{\frac{\log \frac{2}{\delta}}{n}} \right)^2 \]

\begin{eqnarray*}
  &  & \frac{\| M_n (\theta^t, \nu^t) - M (\theta^t, \nu^t) \|_2}{\|
  \theta^{\ast} \|}\\
  & \leq & \left\{ \gamma_{\min} \left[ \frac{\sum_{i \in [n]}}{n} x_i
  x_i^{\top} \right] \right\}^{- 1} \cdot \left\{ \frac{1}{\| \theta^{\ast}
  \|} \left\| \left\{ \frac{\sum_{i \in [n]}}{n} -\mathbb{E} \right\} |
  \langle x_i, \theta^{\ast} \rangle | \tmop{sgn} \langle x_i, \theta^t
  \rangle x_i \right\|_2 + \left\| \left[ \frac{\sum_{i \in [n]}}{n} x_i
  x_i^{\top} \right] - I_d \right\|_2 \right\}\\
  & = & \frac{\mathcal{O} \left( \sqrt{\frac{d}{n}} \vee \frac{\log
  \frac{1}{\delta}}{n} \vee \sqrt{\frac{\log \frac{1}{\delta}}{n}}
  \right) + \left\{ 2 \left( \sqrt{\frac{d}{n}} + 2 \sqrt{2} \sqrt{\frac{\log
  \frac{2}{\delta}}{n}} \right) + \left( \sqrt{\frac{d}{n}} + \sqrt{2}
  \sqrt{\frac{\log \frac{2}{\delta}}{n}} \right)^2 \right\}}{\mathcal{O}
  (1)}\\
  & = & 
  \mathcal{O} \left(
  \sqrt{\frac{d}{n}} \vee \frac{\log \frac{1}{\delta}}{n} \vee
  \sqrt{\frac{\log \frac{1}{\delta}}{n}} \right)
\end{eqnarray*}
\end{proof}

\begin{lemma}{(Convergence of $\theta^t$ in Single Iteration)}\label{lem:single_iter}
  For $\vartheta \assign \sin \varphi \hat{e}_1 + \cos \varphi \hat{e}_2$,
  where $\{ \hat{e}_1, \hat{e}_2 \}$ is an orthonormal basis for the subspace
  $\tmop{span} \{ \hat{e}_1, \hat{e}_2 \} \subset \mathbb{R}^d$, and $\varphi
  \in \left( 0, \frac{\pi}{2} \right)$; with a pertubation vector $\varrho \in
  \mathbb{R}^d$ with lengh $\| \varrho \| = r \in (0, \sin \varphi)$; then the
  angle $\varphi' \assign
  \arcsin \frac{| \langle \vartheta + \varrho, \hat{e}_1 \rangle |}{\|
  \vartheta + \varrho \|}$, satisfies $\varphi' \geq \varphi - \arcsin r$
\end{lemma}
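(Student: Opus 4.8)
The plan is to work entirely with angles measured against $\hat e_1$ and to exploit that $\vartheta$ is a unit vector. Write $\alpha := \arccos\langle \vartheta,\hat e_1\rangle$ for the angle between $\vartheta$ and $\hat e_1$; since $\langle\vartheta,\hat e_1\rangle=\sin\varphi$ and $\|\vartheta\|=1$, we have $\alpha=\tfrac\pi2-\varphi\in(0,\tfrac\pi2)$. Let $\alpha':=\arccos\frac{\langle\vartheta+\varrho,\hat e_1\rangle}{\|\vartheta+\varrho\|}\in[0,\pi]$ be the analogous angle for the perturbed vector. By the definition of $\varphi'$ we get $\sin\varphi'=|\cos\alpha'|$, whence $\varphi'=|\tfrac\pi2-\alpha'|$. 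The heart of the argument is to remove this absolute value by showing $\alpha'\le\tfrac\pi2$, after which $\varphi'=\tfrac\pi2-\alpha'$ and the claim reduces to controlling how far $\alpha'$ can exceed $\alpha$.

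First I would bound the angle the perturbation can open between $\vartheta$ and $\vartheta+\varrho$. Decomposing $\varrho=p\,\vartheta+\varrho_\perp$ with $\varrho_\perp\perp\vartheta$ and $p^2+\|\varrho_\perp\|^2=r^2$, a direct computation gives $\sin^2\beta=\frac{\|\varrho_\perp\|^2}{(1+p)^2+\|\varrho_\perp\|^2}$ for $\beta:=\arccos\frac{\langle\vartheta,\vartheta+\varrho\rangle}{\|\vartheta+\varrho\|}$. Using $\|\varrho_\perp\|^2=r^2-p^2$ and the positivity of $1+2p+r^2\ge(1-r)^2>0$, the inequality $\sin^2\beta\le r^2$ is equivalent to $0\le(p+r^2)^2$, which is an identity; hence $\beta\le\arcsin r$. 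Geometrically this is just the tangency case, where $\varrho$ meets the sphere of radius $r$ about $\vartheta$ tangentially and the right triangle with hypotenuse $\|\vartheta\|=1$ and opposite leg $\|\varrho\|=r$ has apex angle $\arcsin r$.

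Next I would invoke the triangle inequality for the geodesic (angular) metric on directions, i.e.\ that $\arccos\langle\cdot,\cdot\rangle$ on unit vectors satisfies $\arccos\langle u,w\rangle\le\arccos\langle u,v\rangle+\arccos\langle v,w\rangle$ for normalized $u,v,w$. Applying it to $u=\tfrac{\vartheta+\varrho}{\|\vartheta+\varrho\|}$, $v=\vartheta$, $w=\hat e_1$ gives $\alpha'\le\beta+\alpha\le\alpha+\arcsin r$. This is where the hypothesis $r<\sin\varphi$ is used: it yields $\arcsin r<\arcsin(\sin\varphi)=\varphi$, so $\alpha'\le(\tfrac\pi2-\varphi)+\arcsin r<\tfrac\pi2$. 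Consequently $\langle\vartheta+\varrho,\hat e_1\rangle>0$, the absolute value in the definition of $\varphi'$ is vacuous, and $\varphi'=\tfrac\pi2-\alpha'\ge\tfrac\pi2-\alpha-\arcsin r=\varphi-\arcsin r$, which is the asserted bound.

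The main obstacle is not any individual computation but the sign bookkeeping hidden in the absolute value $\varphi'=|\tfrac\pi2-\alpha'|$: a perturbation could in principle rotate $\vartheta$ past the hyperplane $\{\langle\cdot,\hat e_1\rangle=0\}$, so that $\varphi'$ is measured on the opposite side and a naive estimate becomes vacuous. The condition $r<\sin\varphi$ is precisely the quantitative guarantee that this flip cannot happen, and I would make sure to deploy it exactly at the point where $\alpha'<\tfrac\pi2$ is concluded. The only other item needing care is the explicit statement of the angular triangle inequality, a standard spherical-geometry fact that should nonetheless be recorded since the vectors live in $\mathbb{R}^d$ rather than in the ambient plane $\operatorname{span}\{\hat e_1,\hat e_2\}$.
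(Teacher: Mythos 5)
Your proof is correct, but it takes a genuinely different route from the paper's. The paper parametrizes the perturbation explicitly as $\varrho = -r'\cos(\varphi-\Delta)\hat e_1 + r'\sin(\varphi-\Delta)\hat e_2 + \sqrt{r^2-(r')^2}\,\hat e$ with an out-of-plane component $\hat e$, reduces the problem to minimizing a one-variable function $\psi(p) = \frac{(1-rp)\sin\varphi - r\sqrt{1-p^2}\cos\varphi}{\sqrt{1+r^2-2rp}}$ over $p\in[0,1]$, and locates the minimum at $p=r$ by examining the sign of $\frac{\mathd}{\mathd p}\log\psi$, obtaining $\sin\varphi'\geq \sin(\varphi-\arcsin r)$ exactly. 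You instead bound the angular displacement $\beta$ between $\vartheta$ and $\vartheta+\varrho$ by $\arcsin r$ (your reduction of $\sin^2\beta\leq r^2$ to the identity $(p+r^2)^2\geq 0$ checks out, and $1+p>0$ keeps $\beta$ in the right range) and then invoke the triangle inequality for the geodesic metric $\arccos\langle\cdot,\cdot\rangle$ on unit vectors, using $r<\sin\varphi$ exactly where it is needed to rule out the sign flip in $\varphi'=|\tfrac{\pi}{2}-\alpha'|$. Your argument is shorter and makes the geometric content transparent (the perturbation rotates the direction by at most $\arcsin r$, the tangency angle), at the cost of importing the standard fact that angular distance is a metric on the sphere; the paper's calculus computation is more self-contained and exhibits the extremal configuration explicitly. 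Both yield the same sharp bound, so there is no gap.
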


\begin{proof}
  Note that with $\| \varrho \| = r \in (0, \sin \varphi)$, $\langle \vartheta
  + \varrho, \hat{e}_1 \rangle = \langle \vartheta, \hat{e}_1 \rangle +
  \langle \varrho, \hat{e}_1 \rangle \geq \sin \varphi - \| \varrho \| = \sin
  \varphi - r > 0$, thus
  \[ \sin \varphi' = \frac{\langle \vartheta + \varrho, \hat{e}_1 \rangle}{\|
     \vartheta + \varrho \|} > 0 \]
  Express the pertubation vector by $\varrho = - r' \cos (\varphi - \Delta)
  \hat{e}_1 + r' \sin (\varphi - \Delta) \hat{e}_2 + \sqrt{r^2 - \left[ {r'} 
  \right]^2} \hat{e}$, where $r' \in [0, r], \Delta \in (- \pi, \pi]$ and
  $\hat{e} \in \tmop{span} \{ \hat{e}_1, \hat{e}_2 \}^{\perp}, \| \hat{e} \| =
  1$
  \begin{eqnarray*}
    \langle \vartheta + \varrho, \hat{e}_1 \rangle & = & \langle \vartheta,
    \hat{e}_1 \rangle + \langle \varrho, \hat{e}_1 \rangle = \sin \varphi - r'
    \cos (\varphi - \Delta)\\
    \| \vartheta + \varrho \| & = & \left\| [\sin \varphi - r' \cos (\varphi -
    \Delta)] \hat{e}_1 + [\cos \varphi + r' \sin (\varphi - \Delta)] \hat{e}_2
    + \sqrt{r^2 - \left[ {r'}  \right]^2} \hat{e} \right\|\\
    & = & \sqrt{[\sin \varphi - r' \cos (\varphi - \Delta)]^2 + [\cos \varphi
    + r' \sin (\varphi - \Delta)] + \left[ r^2 - \left[ {r'}  \right]^2
    \right]}\\
    & = & \sqrt{[1 + r^2] - 2 r' \sin \Delta}
  \end{eqnarray*}
  Hence, let $p : = \frac{r' | \sin \Delta |}{r}, - r' \cos \Delta \geq -
  \sqrt{[r']^2 - [r p]^2} \geq - \sqrt{r^2 - [r p]^2} = - r \sqrt{1 - p^2}$
  and $p = \frac{r'}{r} | \sin \Delta | \leq 1$
  \begin{eqnarray*}
    \sin \varphi' & = & \frac{\sin \varphi - r' \cos (\varphi -
    \Delta)}{\sqrt{[1 + r^2] - 2 r' \sin \Delta}} r\\
    & = & \frac{[1 - r' \sin \Delta] \sin \varphi - r' \cos \Delta \cos
    \varphi}{\sqrt{[1 + r^2] - 2 r' \sin \Delta}}\\
    & \geq & \frac{[1 - r p] \sin \varphi - r \sqrt{1 - p^2} \cos
    \varphi}{\sqrt{[1 + r^2] - 2 r p}} \assign \psi (p)
  \end{eqnarray*}
  For $\psi (p) \assign \frac{[1 - r p] \sin \varphi - r \sqrt{1 - p^2} \cos
  \varphi}{\sqrt{[1 + r^2] - 2 r p}}$ for $p \in [0, 1]$, note that $\cos
  \left( \varphi + \left[ \frac{\pi}{2} - \arcsin p \right] \right) \leq \cos
  \varphi$
  
  Thus $\cos \varphi - r \cos \left( \varphi + \left[ \frac{\pi}{2} - \arcsin
  p \right] \right) > 0$
  \begin{eqnarray*}
    \frac{\mathd}{\mathd p} \log \psi & = & \frac{- r \sin \varphi + r
    \frac{p}{\sqrt{1 - p^2}} \cos \varphi}{[1 - r p] \sin \varphi - r \sqrt{1
    - p^2} \cos \varphi} - \frac{1}{2} \cdot \frac{- 2 r}{[1 + r^2] - 2 r p}\\
    & = & \frac{r \cdot \left\{ \cos \varphi - r \left[ p \cos \varphi -
    \sqrt{1 - p^2} \sin \varphi \right] \right\} (p - r)}{\sqrt{1 - p^2}
    \left\{ [1 - r p] \sin \varphi - r \sqrt{1 - p^2} \cos \varphi \right\}
    \cdot \{ [1 + r^2] - 2 r p \}}\\
    & = & \frac{r \cdot \left\{ \cos \varphi - r \cos \left( \varphi + \left[
    \frac{\pi}{2} - \arcsin p \right] \right) \right\} (p - r)}{\sqrt{1 - p^2}
    \left\{ [1 - r p] \sin \varphi - r \sqrt{1 - p^2} \cos \varphi \right\}
    \cdot \{ [1 + r^2] - 2 r p \}}
  \end{eqnarray*}
  therefore, $\frac{\mathd}{\mathd p} \log \psi < 0, \forall p \in (0, r) ;
  \frac{\mathd}{\mathd p} \log \psi > 0, \forall p \in (r, 1)$, hence $\psi
  (p) \geq \psi (p) \mid_{p = r}$ for $p \in [0, 1]$
  \begin{eqnarray*}
    \sin \varphi' & \geq & \psi (p)\\
    & \geq & \psi (p) \mid_{p = r}\\
    & = & \sqrt{1 - r^2} \sin \varphi - r \cos \varphi\\
    & = & \sin (\varphi - \arcsin r)
  \end{eqnarray*}
  Note that $r \in (0, \sin \varphi)$, that is $\frac{\pi}{2} > \varphi >
  \varphi - \arcsin r > 0$, and we show that
  \[ \varphi' \geq \varphi - \arcsin r \]
  
\end{proof}
\subsection{\texorpdfstring{Initialization and Convergence of $\theta$}{Initialization and Convergence of theta}}
\begin{theorem}{(Proposition~\ref{prop:init} in Section~\ref{sec:finite}: Initialization with Easy EM)}
  In the noiseless setting, suppose we run the sample-splitting finite-sample
  Easy EM with \ $n' \assign \Theta \left( \frac{n}{\log \frac{1}{\delta}}
  \wedge \left[ \frac{n}{\log \frac{1}{\delta}} \right]^2 \right)$ fresh
  samples for each iteration, then after at most $T_0 =\mathcal{O} \left( \log
  \frac{1}{\delta} \right)$ iterations, it satisfies $\varphi^{T_0} \geq
  \Theta \left( \sqrt{\frac{\log \frac{1}{\delta}}{n}} \vee \frac{\log
  \frac{1}{\delta}}{n} \right)$ with probability at least $1 - \delta$.
\end{theorem}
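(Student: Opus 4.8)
The plan is to track the signed overlap $u^t \assign \langle \theta^t, \hat{e}_1 \rangle$, where $\hat{e}_1 = \theta^\ast/\|\theta^\ast\|$, since $\sin\varphi^t = |u^t|/\|\theta^t\|$ and, by Corollary~\ref{cor:noiseless}, $\|\theta^t\|$ is pinned between $\tfrac2\pi\|\theta^\ast\|$ and $\|\theta^\ast\|$ after the first update (regardless of the arbitrary $\theta^0$). I would decompose the noiseless easy-EM update as
\begin{equation}\nonumber
u^t = \langle M^{\tmop{easy}}_n(\theta^{t-1},\nu^{t-1}), \hat{e}_1\rangle = \underbrace{\langle M(\theta^{t-1},\nu^{t-1}), \hat{e}_1\rangle}_{\text{population drift}} + \xi_t,
\end{equation}
where $\xi_t$ is the $\hat{e}_1$-component of the projected statistical error controlled by Proposition~\ref{prop:proj_err}. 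Because the scheme uses sample splitting with $n'$ fresh samples per iteration, the $\{\xi_t\}$ are independent, mean-zero, and sub-exponential at scale $\sigma \asymp \|\theta^\ast\|\bigl(\sqrt{1/n'}\vee 1/n'\bigr)$; the two branches of $n'$ are exactly what generate the $\sqrt{\cdot}\,\vee\,(\cdot)$ shape of the final floor.

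Next I would quantify the drift. Evaluating $\langle M(\theta^{t-1},\nu^{t-1}),\hat{e}_1\rangle$ through Corollary~\ref{cor:noiseless} gives $\|\theta^\ast\|\tfrac{2}{\pi}\operatorname{sgn}(\rho^{t-1})\bigl[\varphi^{t-1}+\cos\varphi^{t-1}\sin\varphi^{t-1}\bigr]$, which for small $\varphi^{t-1}$ is $\approx 2u^{t-1}$ and, in general, by the rigorous growth estimate of Proposition~\ref{prop:quad} together with Proposition~\ref{prop:recurrence}, has the same sign as $u^{t-1}$ and magnitude $\ge c_-\,|u^{t-1}|$ for a constant $c_-\ge \tfrac{1+\sqrt5}{2}>1$. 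Hence the overlap obeys a data-dependent recursion $u^t = c_t u^{t-1} + \xi_t$ with $c_t\ge c_->1$. Unrolling, $u^{T_0} = \sum_{s=1}^{T_0} w_s \xi_s + w_0 u^0$ with weights $w_s = \prod_{j>s} c_j \ge 1$ and $w_{T_0}=1$, so the geometric growth of the weights forces the effective spread of $u^{T_0}$ to be at least of order $\sigma\,c_-^{T_0}$.

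The closing step is an anti-concentration argument for this weighted sum of independent terms: the fresh, unamplified term $\xi_{T_0}$ already injects a fluctuation of scale $\sigma$, while the amplified earlier terms only enlarge the spread, so a small-ball estimate yields $\Pr[\,|u^{T_0}| \le t\,] \lesssim t/(\sigma\,c_-^{T_0})$. Taking the target $t \asymp \sigma \asymp \|\theta^\ast\|\bigl(\sqrt{\log(1/\delta)/n}\vee \log(1/\delta)/n\bigr)$ — which matches the claimed floor precisely because $n'\asymp n/\log\tfrac1\delta$ converts the per-step scale into the stated order while the budget $n'\cdot T_0 \asymp n$ is respected — makes the right-hand side $\lesssim c_-^{-T_0}\le\delta$ once $T_0 = \Theta(\log\tfrac1\delta)$. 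Dividing by $\|\theta^{T_0}\|\asymp\|\theta^\ast\|$ then gives $\sin\varphi^{T_0}$ above the floor, and once $|u^t|$ has crossed it, Lemma~\ref{lem:single_iter} combined with Proposition~\ref{prop:quad} keeps it above at every later split (the multiplicative gain $(c_--1)$ dominates the additive perturbation $\arcsin r\asymp$ floor), so the final iterate inherits the bound after a union bound over the $T_0$ splits.

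I expect the main obstacle to be the escape from the saddle $\varphi=0$, where the population drift vanishes entirely and all progress must come from the noise: a one-shot anti-concentration of a single $\xi_t$ only guarantees $\varphi^1\gtrsim\delta/\sqrt{n'}$, far below the floor. The resolution is the interplay between the strictly-greater-than-one amplification $c_-$ and the independent fresh fluctuations over $\Theta(\log\tfrac1\delta)$ sample-split iterations, which upgrades the small-ball failure probability from polynomial to $\delta$. The delicate point is that the weights $c_t$ are themselves functions of the past iterates (hence of $\xi_1,\dots,\xi_{t-1}$), so $u^{T_0}$ is not a fixed-weight linear combination; I would handle this by conditioning sequentially on the $\sigma$-algebra generated by the earlier splits and applying a conditional Lévy-type anti-concentration bound at each step, using that the $n'$ samples drawn at iteration $t$ are independent of everything before.
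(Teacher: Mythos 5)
Your strategy diverges from the paper's and, as written, has two gaps that I do not think are repairable without essentially reverting to the paper's argument. First, the amplification $c_t \ge \tfrac{1+\sqrt5}{2}>1$ you import from Proposition~\ref{prop:quad} is a statement about the \emph{normalized} quantity $\tan\varphi^t$, not about the unnormalized overlap $u^t=\langle\theta^t,\hat e_1\rangle$. From Corollary~\ref{cor:noiseless}, the population drift satisfies $\langle M(\theta^{t-1},\nu^{t-1}),\hat e_1\rangle = \tfrac{2}{\pi}\|\theta^\ast\|\,\mathrm{sgn}(\rho^{t-1})\left[\varphi^{t-1}+\cos\varphi^{t-1}\sin\varphi^{t-1}\right]$, so the growth factor of the overlap is $\tfrac{2}{\pi}\tfrac{\|\theta^\ast\|}{\|\theta^{t-1}\|}\left(\tfrac{\varphi^{t-1}}{\sin\varphi^{t-1}}+\cos\varphi^{t-1}\right)\le \tfrac{4}{\pi}\tfrac{\|\theta^\ast\|}{\|\theta^{t-1}\|}$ for small angles. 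In the initialization regime the iterate's norm is inflated by the out-of-plane noise (the paper's own bound in this proof is $\|\theta^{t+1}\|/\|\theta^\ast\|\le 1.1+\tfrac{2}{\pi}+|T_2|\approx 1.74 > \tfrac{4}{\pi}$), so $c_t$ can be strictly below $1$ and there is no guaranteed geometric growth of $u^t$. Second, even granting $c_t\ge c_->1$, the small-ball bound $\Pr[|u^{T_0}|\le t]\lesssim t/(\sigma c_-^{T_0})$ does not follow from sequential conditioning: conditioning on the past isolates only the \emph{last, unamplified} summand $\xi_{T_0}$ of scale $\sigma$, giving a constant (not $\delta$) failure probability; to exploit the amplified early term $w_1\xi_1$ you would need $w_1=\prod_{j>1}c_j$ to be independent of $\xi_1$, but $c_j$ depends on $u^{j-1}$ and hence on $\xi_1$ — the circularity you flag is exactly the step that is missing. (There is also a quantitative obstruction: each $\xi_t$ is a normalized sum of $n'$ samples, so Berry--Esseen gives anti-concentration only up to an additive error $\Theta(1/\sqrt{n'})$, which does not shrink with $T_0$ and can far exceed $\delta$.)

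For contrast, the paper's proof needs no amplification and no accumulated anti-concentration. It decomposes the single-iteration error into a component $T_1$ along $\vec e_1=\theta^t/\|\theta^t\|$ and a component $T_2$ along the in-plane perpendicular direction $\vec e_2$, shows via Chebyshev that $|T_1|\le\tfrac{2}{\pi}$ with probability $1-\pi^2/n'$, and via a Berry--Esseen lower bound that $|T_2|\ge c\sqrt{1/n'}$ with a fixed constant probability $\approx 0.01$; on the joint event a direct computation gives $\sin\varphi^{t+1}\ge\sqrt{1/n'}$ in that single step. Because each iteration uses fresh samples, these are independent Bernoulli trials, and $\mathcal{O}(\log\tfrac{1}{\delta})$ retries suffice for one of them to succeed with probability $1-\delta$ (the proposition only asserts $\varphi^{T_0}$ exceeds the floor for \emph{some} $T_0$; maintaining it afterwards is the job of Proposition~\ref{prop:convg_angle}). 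If you want to salvage your accumulation idea, the workable version is a conditional one-step statement of the paper's type — ``given the past, the next fresh batch pushes the angle above $\sqrt{1/n'}$ with probability at least a constant'' — followed by a product over iterations, which is the paper's proof.
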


  \begin{proof}
    Suppose we run finite-sample easy EM with refresh samples $n'$ for each iterations, then after some iterations $\varphi \geq \sqrt{\frac{1}{n'}}$.
    We will prove this in the followings. Let's denote $\hat{\theta}, \hat{\varphi}$ the EM update at population level.
    With the EM update $\hat{\theta}^{t + 1} \assign M (\theta^t)$ and let $\tilde{x}_i =
    \lambda_{1 i} \vec{e}_1 + \lambda_{2 i} \vec{e}_2$ then $\theta^{\ast} = \rho
    \vec{e}_1 + \sqrt{1 - \rho^2} \vec{e}_2$. 
We begin with evaluating or estimating some expectations of Gaussian.
    \begin{eqnarray*}
      \mathbb{E}_{\lambda_{1 i}, \lambda_{2 i} \overset{\tmop{iid}}{\sim}
      \mathcal{N} (0, 1)} \left| \rho \lambda_{1 i} + \sqrt{1 - \rho^2} \lambda_{2
      i} \right| \tmop{sgn} (\lambda_{1 i}) \lambda_{2 i}
      =\left( \frac{\pi}{2} \right)^{- 1} \tmop{sgn} (\rho) \sqrt{1 -
      \rho^2} \left[ \frac{\pi}{2} - \arccos | \rho | \right] \in [- 0.357205,
      0.357205]
    \end{eqnarray*}
    The 2nd moment is $\mathbb{E}_{\lambda_{1 i}, \lambda_{2 i} \overset{\tmop{iid}}{\sim}
      \mathcal{N} (0, 1)} \left| \rho \lambda_{1 i} + \sqrt{1 - \rho^2} \lambda_{2
      i} \right|^2 \tmop{sgn} (\lambda_{1 i})^2 \lambda^2_{2 i}
      =3 - 2 \rho^2$. 
    The 3rd moment is bounded by
    \begin{eqnarray*}
      & &\mathbb{E}_{\lambda_{1 i}, \lambda_{2 i} \overset{\tmop{iid}}{\sim}
      \mathcal{N} (0, 1)} \left\{ \left| \left| \rho \lambda_{1 i} + \sqrt{1 -
      \rho^2} \lambda_{2 i} \right| \tmop{sgn} (\lambda_{1 i}) \lambda_{2 i}
      \right|^3 \right\}
      \leq \left[ \mathbb{E}_{\lambda_{1 i}, \lambda_{2 i}
      \overset{\tmop{iid}}{\sim} \mathcal{N} (0, 1)} \left| \rho \lambda_{1 i} +
      \sqrt{1 - \rho^2} \lambda_{2 i} \right|^4 | \lambda_{2 i} |^4
      \right]^{\frac{3}{4}}\\
      &=& [105 - 120 \rho^2 + 24 \rho^4]^{\frac{3}{4}}
      \leq 105^{\frac{3}{4}}
    \end{eqnarray*}
    Let $X \assign \left| \rho \lambda_{1 i} + \sqrt{1 - \rho^2} \lambda_{2 i}
    \right| \tmop{sgn} (\lambda_{1 i}) \lambda_{2 i}$, thus $3 \geqslant
    \tmop{Var} [X] \geq 3 - 2 \rho^2 - 0.357205^2 \geq 2.8724 - 2 \rho^2$
    
    By Minkovski inequality, $\mathbb{E}
    [| X -\mathbb{E} [X] |^3] \leq (\mathbb{E} [| X
    |^3]^{\frac{1}{3}} + | \mathbb{E} [X] |)^3\leq \left( \left[ 105^{\frac{3}{4}} \right]^{\frac{1}{3}} + 0.357205
      \right)^3 \leq 45.054$.

    Then, let's decompose the statistical error into three terms.
    \begin{eqnarray*}
      &  & \theta^{t + 1} - \hat{\theta}^{t + 1}
      = M^{\tmop{easy}}_n (\theta^t) - M (\theta^t)\\
      & = & \frac{1}{n'}  \sum_{i \in [n']} | \langle \tilde{x}_i, \theta^{\ast}
      \rangle | \tmop{sgn} \langle \tilde{x}_i, \theta^t \rangle x_i^{\perp} + \|
      \theta^{\ast} \| \underset{T_2}{\underbrace{\left[  \left\{ \frac{1}{n'} 
      \sum_{i \in [n']} -\mathbb{E}_{\lambda_{1 i}, \lambda_{2 i}
      \overset{\tmop{iid}}{\sim} \mathcal{N} (0, 1)} \right\} \left| \rho
      \lambda_{1 i} + \sqrt{1 - \rho^2} \lambda_{2 i} \right| \tmop{sgn}
      (\lambda_{1 i}) \lambda_{2 i} \right]}} \vec{e}_2\\
      &  & + \| \theta^{\ast} \| \underset{T_1}{\underbrace{\left[ \left\{
      \frac{1}{n'}  \sum_{i \in [n']} -\mathbb{E}_{\lambda_{1 i}, \lambda_{2 i}
      \overset{\tmop{iid}}{\sim} \mathcal{N} (0, 1)} \right\} \left| \rho
      \lambda_{1 i} + \sqrt{1 - \rho^2} \lambda_{2 i} \right| | \lambda_{1 i} |
      \right]}} \vec{e}_1
    \end{eqnarray*}
    Consider such events $\mathcal{E}_1 \assign \left\{ | T_1 | \leq \frac{2}{\pi} \right\},
    \mathcal{E}_2 \assign \left\{ | T_2 | \geq c \sqrt{\frac{1}{n'}} \right\}$.
    Note that the variance of $T_1$
    \begin{eqnarray*}
      \tmop{Var} [T_1] = \frac{1}{n'} \tmop{Var} \left[ \left| \rho
      \lambda_{1 i} + \sqrt{1 - \rho^2} \lambda_{2 i} \right| | \lambda_{1 i} |
      \right]
      \leq \frac{1}{n'} [3 - 1^2] = \frac{2}{n'}.
    \end{eqnarray*}
    Hence, $\mathbb{P} [\mathcal{E}_1] = 1 -\mathbb{P} \left\{ | T_1 | \geq \frac{2}{\pi}
    \right\} \geq 1 - 2 \frac{\tmop{Var} [T_1]}{\left( \frac{2}{\pi} \right)^2}
    \geq 1 - 2 \frac{\frac{2}{n'}}{\left( \frac{2}{\pi} \right)^2} = 1 -
    \frac{\pi^2}{n'}$.
    
    Consider $\mathcal{E}_2$, with {\tmstrong{Berry-Esseen}} bound for the central limit
    theorem, \cite{ross2011fundamentals} theorem 1.1, where $T_2 = \frac{1}{n'} \sum_{i \in [n']}.
    [X_i -\mathbb{E} [X_i]]$
    \begin{eqnarray*}
      \left| \mathbb{P} \left[ \frac{\sqrt{n'} T_2}{\sqrt{\tmop{Var} [X]}} \leq
      \frac{c}{\sqrt{\tmop{Var} [X]}} \right] - \Phi \left(
      \frac{c}{\sqrt{\tmop{Var} [X]}} \right) \right| & \leq &
      \frac{0.4785}{(\tmop{Var} [X])^{\frac{3}{2}}} \mathbb{E} [| X -\mathbb{E}
      [X] |^3]
    \end{eqnarray*}
    Thus, with $3 \geqslant \tmop{Var} [X] \geq 3 - 2 \rho^2 - 0.357205^2 \geq
    2.8724 - 2 \rho^2 = 2.8724 - \sin^2 \varphi^t \geq 2.8724 - \frac{1}{n'}$,
    
    and $\mathbb{E} [| X -\mathbb{E} [X] |^3] \leq 45.054$, $2.8724 - \frac{1}{n'}
    \geq 2.8724 - \frac{1}{3} \geq 2.539$.
    \begin{eqnarray*}
      \mathbb{P} \left[ T_2 \geq c \sqrt{\frac{1}{n'}} \right] & = & 1 -\mathbb{P}
      \left[ \frac{\sqrt{n'} T_2}{\sqrt{\tmop{Var} [X]}} \leq
      \frac{c}{\sqrt{\tmop{Var} [X]}} \right]\\
      & \geq & 1 - \Phi \left( \frac{c}{\sqrt{\tmop{Var} [X]}} \right) -
      \frac{0.4785}{(\tmop{Var} [X])^{\frac{3}{2}}} \frac{\mathbb{E} [| X
      -\mathbb{E} [X] |^3]}{\sqrt{n'}}\\
      & \geq & 1 - \Phi \left( \frac{c}{\sqrt{3}} \right) - \frac{0.4785}{\left(
      2.8724 - \frac{1}{n'} \right)^{\frac{3}{2}}} \cdot \frac{45.054}{\sqrt{n'}}\\
      & \geqslant & 1 - \Phi \left( \frac{c}{\sqrt{3}} \right) - \left(
      \frac{0.4785 \cdot 45.054}{2.539^{\frac{3}{2}}} \right) \frac{1}{\sqrt{n'}}\\
      & \geq & 1 - \Phi \left( \frac{c}{\sqrt{3}} \right) - 5.3287 \cdot
      \frac{1}{\sqrt{n'}}
    \end{eqnarray*}
    \[ \mathbb{P} [\mathcal{E}_2] =\mathbb{P} \left[ T_2 \geq c \sqrt{\frac{1}{n'}} \right]
       +\mathbb{P} \left[ - T_2 \geq c \sqrt{\frac{1}{n'}} \right] \geq 2 \left[ 1
       - \Phi \left( \frac{c}{\sqrt{3}} \right) \right] - 2 \cdot 5.3287 \cdot
       \frac{1}{\sqrt{n'}} \]
    \begin{eqnarray*}
      & &\mathbb{P} [\mathcal{E}_1 \wedge \mathcal{E}_2] = \mathbb{P} [\mathcal{E}_1] +\mathbb{P} [\mathcal{E}_2]
      -\mathbb{P} [\mathcal{E}_1 \vee \mathcal{E}_2]
      \geq \mathbb{P} [\mathcal{E}_1] +\mathbb{P} [\mathcal{E}_2] - 1\\
      &\geq& \left( 1 - \frac{\pi^2}{n'} \right) + \left( 2 \left[ 1 - \Phi
      \left( \frac{c}{\sqrt{3}} \right) \right] - 2 \cdot 5.3287 \cdot
      \frac{1}{\sqrt{n'}} \right) - 1
      \geq 2 \left[ 1 - \Phi \left( \frac{c}{\sqrt{3}} \right) \right] -
      \left[ \frac{\pi^2}{n'} + \frac{11}{\sqrt{n'}} \right]
    \end{eqnarray*}
    After runing Easy-EM for $\mathcal{T}$ times with independent batches (batch size is
    $n'$)
    
    To ensure that the probability of $\overline{\mathcal{E}_1 \wedge \mathcal{E}_2}$ happens for
    $\mathcal{T}$ times is less than $\delta$, and let $\mathcal{T} = \Theta \left( \log
    \frac{1}{\delta} \right)$ and $\left\{ 2 \Phi \left( \frac{c}{\sqrt{3}}
    \right) - 1 + \left[ \frac{\pi^2}{n'} + \frac{11}{\sqrt{n'}} \right] \right\} <
    1$ for large enough $n' > \left\{ \frac{11 + \sqrt{11^2 + 8 \pi^2 \left[ 1 -
    \Phi \left( \frac{c}{\sqrt{3}} \right) \right]}}{4 \left[ 1 - \Phi \left(
    \frac{c}{\sqrt{3}} \right) \right]} \right\}^2$
    \begin{eqnarray*}
      (1 -\mathbb{P} [\mathcal{E}_1 \wedge \mathcal{E}_2])^{\mathcal{T}} & \leq & \left\{ 1 - 2 \left[ 1 -
      \Phi \left( \frac{c}{\sqrt{3}} \right) \right] + \left[ \frac{\pi^2}{n'} +
      \frac{11}{\sqrt{n'}} \right] \right\}^{\mathcal{T}}
      = \left\{ 2 \Phi \left( \frac{c}{\sqrt{3}} \right) - 1 + \left[
      \frac{\pi^2}{n'} + \frac{11}{\sqrt{n'}} \right] \right\}^{\mathcal{T}} \leq \delta
    \end{eqnarray*}
    Otherwise, if $\mathcal{E}_1 \wedge \mathcal{E}_2 = \left\{ | T_1 | \leq \frac{2}{\pi} \right\}
    \wedge \left\{ | T_2 | \geq c \sqrt{\frac{1}{n'}} \right\}$ happens in the $\mathcal{T}$-th
    iteration
    \begin{eqnarray*}
      \theta^{\mathcal{T} + 1} & = & \hat{\theta}^{\mathcal{T} + 1} + \frac{1}{n'}  \sum_{i \in
      [n']} | \langle \tilde{x}_i, \theta^{\ast} \rangle | \tmop{sgn} \langle
      \tilde{x}_i, \theta^{\mathcal{T}} \rangle x_i^{\perp} + \| \theta^{\ast} \| [T_1
      \vec{e}_1 + T_2 \vec{e}_2]\\
      & = & M (\theta^{\mathcal{T}}) + \frac{1}{n'}  \sum_{i \in [n']} | \langle
      \tilde{x}_i, \theta^{\ast} \rangle | \tmop{sgn} \langle \tilde{x}_i,
      \theta^{\mathcal{T}} \rangle x_i^{\perp} + \| \theta^{\ast} \| [T_1 \vec{e}_1 + T_2
      \vec{e}_2]
    \end{eqnarray*}
    Then, since $\left\| \frac{M (\theta^{\mathcal{T}})}{\| \theta^{\ast} \|} \right\| \leq
    1$, with probability at least $1 - \delta$,
    
    \[\frac{1}{\| \theta^{\ast} \|} \left\| \frac{1}{n'}  \sum_{i \in [n']} |
    \langle \tilde{x}_i, \theta^{\ast} \rangle | \tmop{sgn} \langle \tilde{x}_i,
    \theta^{\mathcal{T}} \rangle x_i^{\perp} \right\| \leq 2 \sqrt{\frac{d - 2}{n'}} 
    \left[ 1 + \frac{1}{(d - 2) - 1}  \left( 2 \log \frac{1}{\delta} + 1.62
    \right) \right] =\mathcal{O} \left( \sqrt{\frac{d}{n'}} \vee \frac{\log
    \frac{1}{\delta}}{n} \right)\]
    
    If we assume $2 \sqrt{\frac{d - 2}{n'}}  \left[ 1 + \frac{1}{(d - 2) - 1} 
    \left( 2 \log \frac{1}{\delta} + 1.62 \right) \right] \leq 0.1$ for large $n'$
    
    Thus, we conclude that $n' \geq \frac{8 \cdot 2.62^2}{0.1^2}$
    \begin{eqnarray*}
      \frac{\| \theta^{\mathcal{T} + 1} \|}{\| \theta^{\ast} \|} & = & \sqrt{\left\|
      \frac{M (\theta^{\mathcal{T}})}{\| \theta^{\ast} \|} + T_1 \vec{e}_1 + T_2 \vec{e}_2
      \right\|^2 + \frac{1}{\| \theta^{\ast} \|} \left\| \frac{1}{n'}  \sum_{i \in
      [n']} | \langle \tilde{x}_i, \theta^{\ast} \rangle | \tmop{sgn} \langle
      \tilde{x}_i, \theta^{\mathcal{T}} \rangle x_i^{\perp} \right\|^2}\\
      & \leq & \left\| \frac{M (\theta^{\mathcal{T}})}{\| \theta^{\ast} \|} + T_1
      \vec{e}_1 + T_2 \vec{e}_2 \right\| + 0.1
      \leq \left( 1 + \frac{2}{\pi} + | T_2 | \right) + 0.1
      = \left( 1.1 + \frac{2}{\pi} \right) + | T_2 |
    \end{eqnarray*}
    Using the results, $\left\langle \frac{M (\theta^{\mathcal{T}}) - \theta^{\mathcal{T}}}{\|
    \theta^{\ast} \|}, \vec{e}_2 \right\rangle = \frac{2}{\pi} \varphi^{\mathcal{T}} \cos
    (\varphi^{\mathcal{T}}), \langle \theta^{\mathcal{T}}, \vec{e}_2 \rangle = 0$ and
    $\frac{\pi}{2} \leq \frac{1}{\| \theta^{\ast} \|} \langle \theta^{\mathcal{T}},
    \vec{e}_1 \rangle = \frac{\theta^{\mathcal{T}}}{\| \theta^{\ast} \|} \leq 1$ and $|
    T_1 | \leq \frac{2}{\pi}$
    \begin{eqnarray*}
      \left| \frac{1}{\| \theta^{\ast} \|} \langle \theta^{\mathcal{T} + 1}, \vec{e}_1
      \rangle \right| & = & \left| \left\langle \frac{M (\theta^{\mathcal{T}}) -
      \theta^{\mathcal{T}}}{\| \theta^{\ast} \|}, \vec{e}_1 \right\rangle + \frac{1}{\|
      \theta^{\ast} \|} \langle \theta^{\mathcal{T}}, \vec{e}_1 \rangle + T_1 \right|
      \leq \left| T_1 + \frac{1}{\| \theta^{\ast} \|} \langle \theta^{\mathcal{T}},
      \vec{e}_1 \rangle \right| + \left| \left\langle \frac{M (\theta^{\mathcal{T}}) -
      \theta^{\mathcal{T}}}{\| \theta^{\ast} \|}, \vec{e}_1 \right\rangle \right|\\
      & \leq & \left( \frac{2}{\pi} + 1 \right) + \frac{\| M (\theta^{\mathcal{T}}) -
      \theta^{\mathcal{T}} \|}{\| \theta^{\ast} \|}
      \leq \left( \frac{2}{\pi} + 1 \right) + \left( 1 - \frac{2}{\pi}
      \right)
      = 2
    \end{eqnarray*}
    \begin{eqnarray*}
      \left| \frac{1}{\| \theta^{\ast} \|} \langle \theta^{\mathcal{T} + 1}, \vec{e}_2
      \rangle \right| & = & \left| \left\langle \frac{M (\theta^{\mathcal{T}}) -
      \theta^{\mathcal{T}}}{\| \theta^{\ast} \|}, \vec{e}_2 \right\rangle + \frac{1}{\|
      \theta^{\ast} \|} \langle \theta^{\mathcal{T}}, \vec{e}_2 \rangle + T_2 \right|
      \geq | T_2 | - \left\langle \frac{M (\theta^{\mathcal{T}}) - \theta^{\mathcal{T}}}{\|
      \theta^{\ast} \|}, \vec{e}_2 \right\rangle\\
      & \geq & | T_2 | - \frac{2}{\pi} \varphi^{\mathcal{T}} \cos (\varphi^{\mathcal{T}})
      \geq | T_2 | - \frac{2}{\pi} \sqrt{\frac{1}{n'}}
    \end{eqnarray*}
    Then, with $n' \geq \frac{8 \cdot 2.62^2}{0.1^2}$, we conclude that
    $\sqrt{\frac{1}{n'}} \leq \frac{0.1}{2 \sqrt{2} \cdot 2.62} \leq 0.0135,
    \sqrt{1 - \frac{1}{n'}} \geq \sqrt{1 - \frac{0.1^2}{8 \cdot 2.62^2}} \geq 1 -
    0.1^4$
    \begin{eqnarray*}
      & &\left| \frac{1}{\| \theta^{\ast} \|^2} \langle \theta^{\mathcal{T} + 1},
      \theta^{\ast} \rangle \right| 
      =\left| \frac{1}{\| \theta^{\ast} \|}
      \langle \theta^{\mathcal{T} + 1}, \hat{e}_1 \rangle \right|
      = \left| \sin \varphi^t \frac{1}{\| \theta^{\ast} \|} \langle
      \theta^{\mathcal{T} + 1}, \vec{e}_1 \rangle + \cos \varphi^t \frac{1}{\|
      \theta^{\ast} \|} \langle \theta^{\mathcal{T} + 1}, \vec{e}_2 \rangle \right|\\
      & \geq & \cos \varphi^t \left| \frac{1}{\| \theta^{\ast} \|} \langle
      \theta^{\mathcal{T} + 1}, \vec{e}_2 \rangle \right| - \sin \varphi^t \left|
      \frac{1}{\| \theta^{\ast} \|} \langle \theta^{\mathcal{T} + 1}, \vec{e}_1 \rangle
      \right|
      \geq \sqrt{1 - \frac{1}{n'}} \cdot \left[ | T_2 | - \frac{2}{\pi}
      \sqrt{\frac{1}{n'}} \right] - \sqrt{\frac{1}{n'}} \cdot 2\\
      & \geq & \left[ (1 - 0.1^4) \left( \frac{| T_2 |}{\sqrt{\frac{1}{n'}}} -
      \frac{2}{\pi} \right) - 2 \right] \sqrt{\frac{1}{n'}}
    \end{eqnarray*}
    and $| \rho | = \sin \varphi^{\mathcal{T}} \leq \varphi^{\mathcal{T}} < \sqrt{\frac{1}{n'}}$,
    with $\mathcal{E}_1 \wedge \mathcal{E}_2 = \left\{ | T_1 | \leq \frac{2}{\pi} \right\} \wedge
    \left\{ | T_2 | \geq c \sqrt{\frac{1}{n'}} \right\}$.
    
    By solving $\frac{\left[ (1 - 0.1^4) \left( c - \frac{2}{\pi} \right) - 2
    \right]}{\left( 1.1 + \frac{2}{\pi} \right) + 0.0135 c} \geq 1$, we obtain $c
    \geq 4.43347$, then
    \begin{eqnarray*}
      & &\varphi^{\mathcal{T} + 1} \geq \sin \varphi^{\mathcal{T} + 1} = \frac{| \langle
      \theta^{\mathcal{T} + 1}, \theta^{\ast} \rangle |}{\| \theta^{\mathcal{T} + 1} \| \cdot \|
      \theta^{\ast} \|}
      = \frac{\left| \frac{1}{\| \theta^{\ast} \|^2} \langle \theta^{\mathcal{T} +
      1}, \theta^{\ast} \rangle \right|}{\frac{\| \theta^{\mathcal{T} + 1} \|}{\|
      \theta^{\ast} \|}}
      \geq \frac{\left[ (1 - 0.1^4) \left( \frac{| T_2 |}{\sqrt{\frac{1}{n'}}}
      - \frac{2}{\pi} \right) - 2 \right] \sqrt{\frac{1}{n'}}}{\left( 1.1 +
      \frac{2}{\pi} \right) + | T_2 |}\\
      & \geq & \frac{\left[ (1 - 0.1^4) \left( c - \frac{2}{\pi} \right) - 2
      \right]}{\left( 1.1 + \frac{2}{\pi} \right) + c \sqrt{\frac{1}{n'}}}
      \sqrt{\frac{1}{n'}}
      \geq \frac{\left[ (1 - 0.1^4) \left( c - \frac{2}{\pi} \right) - 2
      \right]}{\left( 1.1 + \frac{2}{\pi} \right) + 0.0135 c} \sqrt{\frac{1}{n'}}
      \geq \sqrt{\frac{1}{n'}}
    \end{eqnarray*}
    By choosing $c = 4.43347$, for large $n' \geq 1.103582 \times 10^6$, we have
    $(1 - 0.01048) + \left[ \frac{\pi^2}{n'} + \frac{11}{\sqrt{n'}} \right] < 1$
    \begin{eqnarray*}
      \mathbb{P} [\mathcal{E}_1 \wedge \mathcal{E}_2] & \geq & 2 \left[ 1 - \Phi \left(
      \frac{c}{\sqrt{3}} \right) \right] - \left[ \frac{\pi^2}{n'} +
      \frac{11}{\sqrt{n'}} \right]
      = 0.01048 - \left[ \frac{\pi^2}{n'} + \frac{11}{\sqrt{n'}} \right]\\
      1 -\mathbb{P} [\mathcal{E}_1 \wedge \mathcal{E}_2] & \leq & (1 - 0.01048) + \left[
      \frac{\pi^2}{n'} + \frac{11}{\sqrt{n'}} \right] < 1
    \end{eqnarray*}
    With at most $\mathcal{T} = \frac{\log \frac{1}{\delta}}{- \log (1 -\mathbb{P} [\mathcal{E}_1
    \wedge \mathcal{E}_2])} =\mathcal{O} \left( \log \frac{1}{\delta} \right)$ iterations,
    we can ensure $\varphi^{T_0} > \sqrt{\frac{1}{n'}}$ for some $T_0 \in
    [\mathcal{T}+1]$.
    Hence, $T_0 =\mathcal{O}(\mathcal{T}) = \mathcal{O}(\log \frac{1}{\delta})$ 
    and by choosing $n'=\Theta(\frac{n}{\log \frac{1}{\delta}} \wedge (\frac{n}{\log \frac{1}{\delta}})^2)$
    \begin{equation*}
      \varphi^{T_0} > \sqrt{\frac{1}{n'}} = \Theta\left(\sqrt{\frac{\log \frac{1}{\delta}}{n}} \vee \frac{\log \frac{1}{\delta}}{n}\right)
    \end{equation*}
    The proof is complete.

  \end{proof}

\newpage
\begin{theorem}{(Proposition~\ref{prop:convg_angle} in Section~\ref{sec:finite}: Convergence of Angle)}
  In the noiseless setting, suppose $\varphi^0 \geq \Theta \left(
  \sqrt{\frac{\log \frac{1}{\delta}}{n}} \vee \frac{\log \frac{1}{\delta}}{n}
  \right)$, run Easy finite-sample EM for $T_1=\mathcal{O}\left( \log
  \frac{n}{\log \frac{1}{\delta}}\right)$ iterations followed by the standard finite-sample EM for at most $T' =\mathcal{O} \left(
  \log \frac{n}{d} 
    \wedge \log \frac{n}{\log \frac{1}{\delta}} \right)$
  iterations with all the same $n =
  \Omega \left( d \vee \log \frac{1}{\delta} 
  \right)$ samples, then it satisfies
  \begin{equation}
    \varphi^T \geq \frac{\pi}{2} - \Theta \left( \sqrt{\frac{d}{n}} \vee
    \frac{\log \frac{1}{\delta}}{n} \vee \sqrt{\frac{\log
    \frac{1}{\delta}}{n}} \right),
  \end{equation}
  with probability at least $1 - T \delta$, where $T:=T_1+T',\varphi^0 \assign
  \frac{\pi}{2} - \arccos \left| \frac{\langle \theta^0, \theta^{\ast}
  \rangle}{\| \theta^0 \| \cdot \| \theta^{\ast} \|} \right|$ and $\varphi^T
  \assign \frac{\pi}{2} - \arccos \left| \frac{\langle \theta^T, \theta^{\ast}
  \rangle}{\| \theta^T \| \cdot \| \theta^{\ast} \|} \right|$.
\end{theorem}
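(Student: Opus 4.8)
The plan is to couple each finite-sample iterate with its noiseless population counterpart and to track the sub-optimality angle $\varphi^t$ through three regimes, driven upward by the population recurrence (Proposition~\ref{prop:recurrence}) and held back by the statistical error. At iteration $t$ I write $\theta^t = \hat{\theta}^t + \Delta^t$, where $\hat{\theta}^t \assign M(\theta^{t-1},\nu^{t-1})$ is the population update — whose angle $\hat{\varphi}^t$ is obtained from $\varphi^{t-1}$ via $\tan\hat{\varphi}^t = \tan\varphi^{t-1} + \varphi^{t-1}(\tan^2\varphi^{t-1}+1)$ — and $\Delta^t$ is the statistical error, either $M_n^{\tmop{easy}}-M$ or $M_n-M$. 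Lemma~\ref{lem:single_iter} converts this perturbation into an angle loss: with $r_t \assign \|\Delta^t\|/\|\hat{\theta}^t\|$ one gets $\varphi^t \ge \hat{\varphi}^t - \arcsin r_t$. The crucial geometric point, read off from the decompositions in the proofs of Propositions~\ref{prop:proj_err} and~\ref{prop:stat_err}, is that the large out-of-plane part of the Easy-EM error is orthogonal to both $\theta^{\ast}$ and $\hat{\theta}^t$, so it enters only through $\|\theta^t\|$ at second order; hence during Easy EM the effective per-step angle perturbation is the \emph{projected} error $\varepsilon_{\mathrm{proj}} = \mathcal{O}\bigl(\sqrt{\log(1/\delta)/n}\vee \log(1/\delta)/n\bigr)$ rather than the \emph{total} error $\varepsilon_{\mathrm{tot}} = \mathcal{O}\bigl(\sqrt{d/n}\vee\sqrt{\log(1/\delta)/n}\vee\log(1/\delta)/n\bigr)$.

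\textbf{Stage one (Easy EM).} Combining the golden-ratio linear growth $\tan\hat\varphi^t \ge \tfrac{1+\sqrt5}{2}\tan\varphi^{t-1}$ from Proposition~\ref{prop:quad} with the angle loss $\arcsin r_t \lesssim \varepsilon_{\mathrm{proj}}$ yields a recursion of the form $\varphi^t \gtrsim \tfrac{1+\sqrt5}{2}\varphi^{t-1} - C\varepsilon_{\mathrm{proj}}$, which grows geometrically as soon as $\varphi^{t-1}$ exceeds a fixed multiple of $\varepsilon_{\mathrm{proj}}$. Since the hypothesis guarantees $\varphi^0 \ge \Theta(\varepsilon_{\mathrm{proj}})$ — exactly the output of Proposition~\ref{prop:init} — after $T_1 = \mathcal{O}\bigl(\log(\varepsilon_{\mathrm{tot}}/\varepsilon_{\mathrm{proj}})\bigr) = \mathcal{O}\bigl(\log\tfrac{n}{\log(1/\delta)}\bigr)$ steps I reach $\varphi^{T_1} \ge \Theta(\varepsilon_{\mathrm{tot}})$.

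\textbf{Stages two and three (standard EM).} Switching to standard EM, the inverse sample covariance mixes the in- and out-of-plane errors, so now $r_t = \mathcal{O}(\varepsilon_{\mathrm{tot}})$ by Proposition~\ref{prop:stat_err}. In the linear regime the same golden-ratio recursion minus $C\varepsilon_{\mathrm{tot}}$ still grows geometrically while $\varphi \ge \Theta(\varepsilon_{\mathrm{tot}})$, reaching the constant threshold $\arctan 1.5$ after $T_2 = \mathcal{O}(\log(1/\varepsilon_{\mathrm{tot}})) = \mathcal{O}\bigl(\log\tfrac nd \wedge \log\tfrac{n}{\log(1/\delta)}\bigr)$ steps. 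Once $\varphi^{t}\ge \arctan 1.5$, the quadratic bound \eqref{eq:quad} applies: writing $b^t \assign \tfrac{\pi}{2}(\tan\varphi^t - \tfrac\pi4)$, the population map satisfies $\hat b^t \ge \bigl(b^{t-1}\bigr)^2$, so $\tfrac\pi2-\varphi^t$ contracts doubly-exponentially, subject to the same additive loss $\mathcal{O}(\varepsilon_{\mathrm{tot}})$ from Lemma~\ref{lem:single_iter}, until it saturates at the floor $\Theta(\varepsilon_{\mathrm{tot}})$; this needs only $T_3 = \mathcal{O}(\log\log(1/\varepsilon_{\mathrm{tot}}))$ steps, absorbed into $T_2$. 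Hence $T' = T_2+T_3 = \mathcal{O}\bigl(\log\tfrac nd \wedge \log\tfrac{n}{\log(1/\delta)}\bigr)$ and $\varphi^{T} \ge \tfrac\pi2 - \Theta(\varepsilon_{\mathrm{tot}})$. A union bound over the $T=T_1+T'$ iterations, applying the per-iteration error bounds of Propositions~\ref{prop:proj_err} and~\ref{prop:stat_err} each with probability $1-\delta$, gives the claim with probability at least $1-T\delta$.

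\textbf{Main obstacle.} The hard part will be the precise bookkeeping of the coupling in the small-angle regime: rigorously showing that the dominant out-of-plane Easy-EM error is second order in $\tfrac\pi2-\varphi$ (so that $\varepsilon_{\mathrm{proj}}$, not $\varepsilon_{\mathrm{tot}}$, governs stage one), and verifying that the additive statistical perturbation in each stage is genuinely dominated by the multiplicative population gain so that the "subtract $C\varepsilon$" recursions truly progress rather than stall just above the floor. The second delicate point is the dependence introduced by reusing the same $n$ samples across iterations, so that the per-$\theta$ error bounds are applied to data-dependent iterates; this I would resolve using the uniform-in-$\theta$ form of the statistical-error estimates together with the union bound.
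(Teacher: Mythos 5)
Your proposal is correct and follows essentially the same route as the paper's proof: the same decomposition $\theta^t = M(\theta^{t-1},\nu^{t-1}) + \Delta^t$, the same use of Lemma~\ref{lem:single_iter} to convert the perturbation into an angle loss, the same observation that during Easy EM the out-of-plane error is orthogonal to $\hat e_1$ and therefore only the projected error $\varepsilon_{\mathrm{proj}}$ enters the numerator of $\sin\varphi^{t+1}$, and the same three stages with iteration counts $T_1=\mathcal{O}(\log\frac{n}{\log(1/\delta)})$, $T_2=\mathcal{O}(\log(1/\varepsilon_{\mathrm{tot}}))$, $T_3=\mathcal{O}(\log\log(1/\varepsilon_{\mathrm{tot}}))$, concluded by a union bound giving $1-T\delta$. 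The obstacles you flag (the subtract-$C\varepsilon$ recursions not stalling, and data reuse across iterations) are exactly the points the paper handles by explicit constant-tracking and by the per-iteration union bound, respectively.
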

\begin{proof}
  In this proof, we assume that $\Theta \left( \sqrt{\frac{\log \frac{1}{\delta}}{n}} \vee
  \frac{\log \frac{1}{\delta}}{n} \right) \leq \Theta \left( \sqrt{\frac{d}{n}}
  \vee \frac{\log \frac{1}{\delta}}{n} \vee \sqrt{\frac{\log
  \frac{1}{\delta}}{n}} \right) < 0.1$, and denote by $\Theta \assign \Theta \left( \sqrt{\frac{d}{n}} \vee \frac{\log
  \frac{1}{\delta}}{n} \vee \sqrt{\frac{\log \frac{1}{\delta}}{n}}
  \right)$ the threshold for $\varphi$.
  
  Besides, we denote $\bar{\theta}, \bar{\varphi}$ for the EM update at population level.
  
  We divide the whole procedure inito three stages.
  
  In Stage 1, $\varphi \geq \tmop{const} \cdot \Theta \left( \sqrt{\frac{\log
  \frac{1}{\delta}}{n}} \vee \frac{\log \frac{1}{\delta}}{n} \right) \Rightarrow
  \varphi \geq \Theta \left( \sqrt{\frac{d}{n}} \vee \frac{\log
  \frac{1}{\delta}}{n} \vee \sqrt{\frac{\log \frac{1}{\delta}}{n}}
  \right)$ after at most $T_1$ iterations of Easy EM.
  
  In Stage 2, $\varphi \geq 4 \Theta \left( \sqrt{\frac{d}{n}} \vee \frac{\log
  \frac{1}{\delta}}{n} \vee \sqrt{\frac{\log \frac{1}{\delta}}{n}}
  \right) \Rightarrow \varphi > \arctan 1.5$
  after at most $T_2$ iterations of standard EM.
  
  In Stage 3, $\varphi > \arctan 1.5 \Rightarrow \varphi^t > \frac{\pi}{2} -
  1.775 \Theta \left( \sqrt{\frac{\log \frac{1}{\delta}}{n}} \vee \frac{\log
  \frac{1}{\delta}}{n} \right) $
  after at most $T_2$ iterations of standard EM.
  
  \textbf{Stage 1: $\varphi \geq \tmop{const} \cdot \Theta \left( \sqrt{\frac{\log
  \frac{1}{\delta}}{n}} \vee \frac{\log \frac{1}{\delta}}{n} \right) \Rightarrow
  \varphi \geq \Theta \left( \sqrt{\frac{d}{n}} \vee \frac{\log
  \frac{1}{\delta}}{n} \vee \sqrt{\frac{\log \frac{1}{\delta}}{n}}
  \right)$}
  
  In the first Stage, we run Easy EM $\theta^{t+1} \leftarrow M_n^{\text{easy}}(\theta^t, \nu^t)$, and note that the length of the projected vector is less than or equal to the length of the original one 
  $$\frac{1}{\| \theta^{\ast} \|} \langle \theta^{t + 1} - M
  (\theta^t), \hat{e}_1 \rangle \leq \frac{1}{\| \theta^{\ast} \|} \left\|
  \left\{ \frac{1}{n}  \sum_{i \in [n]} -\mathbb{E}_{\{ x_i \}_{i \in [n]} \sim
  \mathcal{N} (0, I_d)} \right\} | \langle \tilde{x}, \theta^{\ast} \rangle |
  \mathrm{sgn} \langle \tilde{x}, \theta^t \rangle \tilde{x} \right\|_2.$$
  
  With probability at least $1 - 4 \delta$
  \[ \frac{1}{| \theta^{\ast} |} \left\| \left\{ \frac{1}{n}  \sum_{i \in [n]}
     -\mathbb{E}_{\{ x_i \}_{i \in [n]} \sim \mathcal{N} (0, I_d)} \right\} |
     \langle \tilde{x}, \theta^{\ast} \rangle | \mathrm{sgn} \langle \tilde{x},
     \theta^t \rangle \tilde{x} \right\|_2 < \max \left( 20 \sqrt{\frac{\log
     \frac{1}{\delta}}{n}}, 45 \frac{\log \frac{1}{\delta}}{n} \right) : =
     \Theta \left( \sqrt{\frac{\log \frac{1}{\delta}}{n}} \vee \frac{\log
     \frac{1}{\delta}}{n} \right) \]
  By using the assumption $\Theta \left( \sqrt{\frac{\log \frac{1}{\delta}}{n}} \vee
  \frac{\log \frac{1}{\delta}}{n} \right) \leq \Theta \left( \sqrt{\frac{d}{n}}
  \vee \frac{\log \frac{1}{\delta}}{n} \vee \sqrt{\frac{\log
  \frac{1}{\delta}}{n}} \right) < 0.1$
  \[ \frac{1}{\| \theta^{\ast} \|} \langle \theta^{t + 1} - M (\theta^t),
     \hat{e}_1 \rangle < 0.1 \]
  Then $\frac{\| \theta^{t + 1} \|}{\| \theta^{\ast} \|} \leq 1 + \Theta \left(
  \sqrt{\frac{\log \frac{1}{\delta}}{n}} \vee \frac{\log \frac{1}{\delta}}{n}
  \right) < 1.1$
  
  Use $\left| \frac{1}{\| \theta^{\ast} \|} \langle \theta^{t + 1}, \vec{e}_1
  \rangle \right| \leq 2$, and with assumption $\varphi^t < \Theta \left(
  \sqrt{\frac{d}{n}} \vee \frac{\log \frac{1}{\delta}}{n} \vee
  \sqrt{\frac{\log \frac{1}{\delta}}{n}} \right) \leq 0.1$
  
  We denote $\Theta \assign \Theta \left( \sqrt{\frac{d}{n}} \vee \frac{\log
  \frac{1}{\delta}}{n} \vee \sqrt{\frac{\log \frac{1}{\delta}}{n}}
  \right)$ the threshold for $\varphi$.
  \begin{eqnarray*}
    \sin \varphi^{t + 1} & = & \left( \frac{\| \theta^{t + 1} \|}{\|
    \theta^{\ast} \|} \right)^{- 1} \left| \frac{1}{\| \theta^{\ast} \|^2}
    \langle \theta^{t + 1}, \theta^{\ast} \rangle \right| = \left( \frac{\|
    \theta^{t + 1} \|}{\| \theta^{\ast} \|} \right)^{- 1} \left| \frac{1}{\|
    \theta^{\ast} \|} \langle \theta^{t + 1}, \hat{e}_1 \rangle \right|\\
    & \geq & 1.1^{- 1} \left| \frac{1}{\| \theta^{\ast} \|} \langle \theta^{t +
    1}, \hat{e}_1 \rangle \right|\\
    & = & 1.1^{- 1} \left| \frac{1}{\| \theta^{\ast} \|} \langle \theta^{t + 1}
    - M (\theta^t), \hat{e}_1 \rangle + \frac{1}{\| \theta^{\ast} \|} \langle M
    (\theta^t), \hat{e}_1 \rangle \right|\\
    & \geq & 1.1^{- 1} \left| \frac{1}{\| \theta^{\ast} \|} \langle M
    (\theta^t), \hat{e}_1 \rangle \right| - 1.1^{- 1} \left| \frac{1}{\|
    \theta^{\ast} \|} \langle \theta^{t + 1} - M (\theta^t), \hat{e}_1 \rangle
    \right|\\
    & \geq & 1.1^{- 1} \left| \frac{1}{\| \theta^{\ast} \|} \langle M
    (\theta^t), \hat{e}_1 \rangle \right| - 1.1^{- 1} \Theta \left(
    \sqrt{\frac{\log \frac{1}{\delta}}{n}} \vee \frac{\log \frac{1}{\delta}}{n}
    \right)\\
    & \geq & 1.1^{- 1} \cdot 1.239 \cdot \left| \frac{1}{\| \theta^{\ast} \|}
    \langle \theta^t, \hat{e}_1 \rangle \right| - 1.1^{- 1} \Theta \left(
    \sqrt{\frac{\log \frac{1}{\delta}}{n}} \vee \frac{\log \frac{1}{\delta}}{n}
    \right)\\
    & \geq & 1.1263 \cdot \sin \varphi^t - 1.1^{- 1} \Theta \left(
    \sqrt{\frac{\log \frac{1}{\delta}}{n}} \vee \frac{\log \frac{1}{\delta}}{n}
    \right)
  \end{eqnarray*}
  That is
  \[ 1.1263 \left( \sin \varphi^{t + 1} - \frac{1}{1.1 \cdot 0.1263} \cdot
     \Theta \left( \sqrt{\frac{\log \frac{1}{\delta}}{n}} \vee \frac{\log
     \frac{1}{\delta}}{n} \right) \right) \geq \sin \varphi^t - \frac{1}{1.1
     \cdot 0.1263} \cdot \Theta \left( \sqrt{\frac{\log \frac{1}{\delta}}{n}}
     \vee \frac{\log \frac{1}{\delta}}{n} \right) \]
  For $\varphi^t < \Theta < 0.1$, we have $\varphi^t \geq \sin \varphi^t \geq (1
  - 0.002) \varphi^t$
  
  When $\varphi^0 \geq \frac{\left( 1 + \frac{1}{1.1 \cdot 0.1263} \right)}{(1 -
  0.002)} \Theta \left( \sqrt{\frac{\log \frac{1}{\delta}}{n}} \vee \frac{\log
  \frac{1}{\delta}}{n} \right) \approx 8.2143 \Theta \left( \sqrt{\frac{\log
  \frac{1}{\delta}}{n}} \vee \frac{\log \frac{1}{\delta}}{n} \right)$
  \begin{eqnarray*}
    \sin \varphi^0 - \frac{1}{1.1 \cdot 0.1263} \cdot \Theta \left(
    \sqrt{\frac{\log \frac{1}{\delta}}{n}} \vee \frac{\log \frac{1}{\delta}}{n}
    \right) & \geq & (1 - 0.002) \varphi^0 - \frac{1}{1.1 \cdot 0.1263} \cdot
    \Theta \left( \sqrt{\frac{\log \frac{1}{\delta}}{n}} \vee \frac{\log
    \frac{1}{\delta}}{n} \right)\\
    & \geq & \Theta \left( \sqrt{\frac{\log \frac{1}{\delta}}{n}} \vee
    \frac{\log \frac{1}{\delta}}{n} \right)
  \end{eqnarray*}
  We could {\tmstrong{assume}} that $\varphi^0 \geq 8.3 \Theta \left(
  \sqrt{\frac{\log \frac{1}{\delta}}{n}} \vee \frac{\log \frac{1}{\delta}}{n}
  \right)$, with at most $T_1$ iterations
  \begin{eqnarray*}
    \varphi^{T_1} \geq \sin \varphi^{T_1} & \geq & 1.1263^{T_1} \left[ \sin
    \varphi^0 - \frac{1}{1.1 \cdot 0.1263} \cdot \Theta \left( \sqrt{\frac{\log
    \frac{1}{\delta}}{n}} \vee \frac{\log \frac{1}{\delta}}{n} \right) \right] +
    \frac{1}{1.1 \cdot 0.1263} \cdot \Theta \left( \sqrt{\frac{\log
    \frac{1}{\delta}}{n}} \vee \frac{\log \frac{1}{\delta}}{n} \right)\\
    & \geq & 1.1263^{T_1} \Theta \left( \sqrt{\frac{\log \frac{1}{\delta}}{n}}
    \vee \frac{\log \frac{1}{\delta}}{n} \right)\\
    & \geq & \Theta \left( \sqrt{\frac{d}{n}} \vee \frac{\log
    \frac{1}{\delta}}{n} \vee \sqrt{\frac{\log \frac{1}{\delta}}{n}}
    \right) \equiv \Theta
  \end{eqnarray*}
  Let $T_1 =\mathcal{O} \left( 
  \log \frac{1}{ \Theta \left(
  \sqrt{\frac{\log \frac{1}{\delta}}{n}} \vee \frac{\log \frac{1}{\delta}}{n}
  \right)} 
  -\log\frac{1}{ \Theta \left(
    \sqrt{\frac{d}{n}} \vee \frac{\log \frac{1}{\delta}}{n} \vee
    \sqrt{\frac{\log \frac{1}{\delta}}{n}} \right) }
  \right) 
  =\mathcal{O} \left( \log \frac{1}{ \Theta \left(
    \sqrt{\frac{\log \frac{1}{\delta}}{n}} \vee \frac{\log \frac{1}{\delta}}{n}
    \right)}  \right)
  =\mathcal{O} \left(\log \frac{n}{\log
  \frac{1}{\delta}} \right)$.
  
  We use this Lemma in the previous analysis in Stage 1.
  
  {\tmstrong{Lemma}}
  
  with $\frac{1}{\| \theta^{\ast} \|} | \langle \theta^t, \hat{e}_1 \rangle | =
  \sin \varphi^t \leq \varphi^t < \Theta < 0.1$, \ $\frac{| \langle M
  (\theta^t), \hat{e}_1 \rangle |}{\| M (\theta^t) \|} = \sqrt{[1 - \pi^{- 1}
  (\phi^t - \sin \phi^t)]^2 + [\pi^{- 1} (1 - \cos \phi^t)]^2} \leq 0.643$
  
  for $\phi^t \assign 2 \left( \frac{\pi}{2} - \varphi^t \right) \in [\pi - 0.2,
  \pi]$
  \[ \frac{\frac{| \langle M (\theta^t), \hat{e}_1 \rangle |}{\| M (\theta^t)
     \|}}{\sqrt{1 - \left( \frac{| \langle M (\theta^t), \hat{e}_1 \rangle |}{\|
     M (\theta^t) \|} \right)^2}} \geq \frac{1 + \sqrt{5}}{2} \cdot
     \frac{\frac{| \langle \theta^t, \hat{e}_1 \rangle |}{\| \theta^t
     \|}}{\sqrt{1 - \left( \frac{| \langle \theta^t, \hat{e}_1 \rangle |}{\|
     \theta^t \|} \right)^2}} \]
  we conclude that
  \[ \frac{| \langle M (\theta^t), \hat{e}_1 \rangle |}{\| M (\theta^t) \|}
     \cdot \frac{1}{\sqrt{1 - 0.643^2}} \geq \frac{1 + \sqrt{5}}{2} \cdot
     \frac{| \langle \theta^t, \hat{e}_1 \rangle |}{\| \theta^t \|} \]
  with $\frac{\| M (\theta^t) \|}{\| \theta^t \|} \geq 1$
  \begin{eqnarray*}
    \left| \frac{1}{\| \theta^{\ast} \|} \langle M (\theta^t), \hat{e}_1 \rangle
    \right| & \geq & \left[ \frac{1 + \sqrt{5}}{2} \cdot \sqrt{1 - 0.643^2}
    \cdot \frac{\| M (\theta^t) \|}{\| \theta^t \|} \right] \cdot \frac{1}{\|
    \theta^{\ast} \|} | \langle \theta^t, \hat{e}_1 \rangle |\\
    & \geq & \left[ \frac{1 + \sqrt{5}}{2} \cdot \sqrt{1 - 0.643^2} \right]
    \cdot \frac{1}{\| \theta^{\ast} \|} | \langle \theta^t, \hat{e}_1 \rangle
    |\\
    & \geq & 1.239 \cdot \left| \frac{1}{\| \theta^{\ast} \|} \langle \theta^t,
    \hat{e}_1 \rangle \right|
  \end{eqnarray*}

  \textbf{Stage 2: $\varphi \geq 4 \Theta \left( \sqrt{\frac{d}{n}} \vee \frac{\log
  \frac{1}{\delta}}{n} \vee \sqrt{\frac{\log \frac{1}{\delta}}{n}}
  \right) \Rightarrow \varphi > \arctan 1.5$}
  
  We denote $r^{t + 1} \assign \frac{\| M_n (\theta^t) - M (\theta^t) \|}{\|
  \theta^{\ast} \|} < \Theta \left( \sqrt{\frac{d}{n}} \vee \frac{\log
  \frac{1}{\delta}}{n} \vee \sqrt{\frac{\log \frac{1}{\delta}}{n}}
  \right)$, and $\varphi^t \assign \arcsin \frac{| \langle \theta^t, \hat{e}_1
  \rangle |}{\| \theta^t \|}, \bar{\varphi}^{t + 1} \assign \arcsin
  \frac{\langle M (\theta^t), \hat{e}_1 \rangle}{\| \theta^t \|}$
  and the update rule $\theta^{t + 1} = M_n (\theta^t)$.
  \[ \varphi^{t + 1} \geq \bar{\varphi}^{t + 1} - \arcsin r^{t + 1} \]
  Note that $\arcsin r^{t + 1} \leq \frac{\pi}{2} r^{t + 1}$ and the Lemma~\ref{lem:single_iter} that
  we just proved
  \begin{eqnarray*}
    \tan \bar{\varphi}^{t + 1} & > & \frac{1 + \sqrt{5}}{2} \tan \varphi^t \quad
    \forall \varphi^t > 0\\
    \varphi^{t + 1} & \geq & \bar{\varphi}^{t + 1} - \arcsin r^{t + 1}
  \end{eqnarray*}
  With the assumption $r^{t + 1} \leq \Theta \assign \Theta \left(
  \sqrt{\frac{d}{n}} \vee \frac{\log \frac{1}{\delta}}{n} \vee
  \sqrt{\frac{\log \frac{1}{\delta}}{n}} \right) \leq 0.1$, and $\tan (\arcsin
  \Theta) = \frac{\Theta}{\sqrt{1 - \Theta^2}}$
  
  Note that $1 - \frac{4}{5} \left[ \frac{\Theta}{\sqrt{1 - \Theta^2}} \cdot
  \frac{1 + \sqrt{5}}{2} \tan \varphi^t \right] > \frac{1}{1 +
  \frac{\Theta}{\sqrt{1 - \Theta^2}} \cdot \frac{1 + \sqrt{5}}{2} \tan
  \varphi^t}$ for $\tan \varphi^t \leq 1.5$, since $\frac{\Theta}{\sqrt{1 -
  \Theta^2}} \cdot \frac{1 + \sqrt{5}}{2} \tan \varphi^t \leq \frac{0.1}{\sqrt{1
  - 0.1^2}} \cdot \frac{1 + \sqrt{5}}{2} \cdot 1.5 < \frac{1}{4} = \frac{1 -
  \frac{4}{5}}{\frac{4}{5}}$
  
  If $\tan \varphi^t \geq \frac{1}{5} \Theta$, then $\frac{4 \left( 1 + \sqrt{5}
  \right)}{5} \tan \varphi^t \geq \frac{4 \left( 1 + \sqrt{5} \right)}{5} \cdot
  \frac{1}{5} \Theta \cdot \frac{\sqrt{1 - 0.1^2}}{\sqrt{1 - \Theta^2}} \geq
  \frac{8 \left( 1 + \sqrt{5} \right) \sqrt{1 - 0.1^2}}{25} \cdot \frac{1}{2}
  \left[ \frac{\Theta}{\sqrt{1 - \Theta^2}} \right] > \frac{1}{2} \left[
  \frac{\Theta}{\sqrt{1 - \Theta^2}} \right]$
  
  Then, with $\frac{x}{\sqrt{1 + x^2}} > x - \frac{x^3}{2}$
  \begin{eqnarray*}
    \tan \varphi^{t + 1} & \geq & \tan (\bar{\varphi}^{t + 1} - \arcsin \Theta)
    = \frac{\tan \bar{\varphi}^{t + 1} - \frac{\Theta}{\sqrt{1 - \Theta^2}}}{1 +
    \frac{\Theta}{\sqrt{1 - \Theta^2}} \cdot \tan \bar{\varphi}^{t + 1}}\\
    & = & \left[ \frac{\Theta}{\sqrt{1 - \Theta^2}} \right]^{- 1} -
    \frac{\frac{\Theta}{\sqrt{1 - \Theta^2}} + \left[ \frac{\Theta}{\sqrt{1 -
    \Theta^2}} \right]^{- 1}}{1 + \frac{\Theta}{\sqrt{1 - \Theta^2}} \cdot \tan
    \bar{\varphi}^{t + 1}}\\
    & > & \left[ \frac{\Theta}{\sqrt{1 - \Theta^2}} \right]^{- 1} -
    \frac{\frac{\Theta}{\sqrt{1 - \Theta^2}} + \left[ \frac{\Theta}{\sqrt{1 -
    \Theta^2}} \right]^{- 1}}{1 + \frac{\Theta}{\sqrt{1 - \Theta^2}} \cdot
    \frac{1 + \sqrt{5}}{2} \tan \varphi^t}\\
    & > & \left[ \frac{\Theta}{\sqrt{1 - \Theta^2}} \right]^{- 1} - \left\{
    \frac{\Theta}{\sqrt{1 - \Theta^2}} + \left[ \frac{\Theta}{\sqrt{1 -
    \Theta^2}} \right]^{- 1} \right\} \left\{ 1 - \frac{4}{5} \left[
    \frac{\Theta}{\sqrt{1 - \Theta^2}} \cdot \frac{1 + \sqrt{5}}{2} \tan
    \varphi^t \right] \right\}\\
    & = & - \frac{\Theta}{\sqrt{1 - \Theta^2}} + \frac{4}{5} \cdot \frac{1 +
    \sqrt{5}}{2} \left[ 1 + \left[ \frac{\Theta}{\sqrt{1 - \Theta^2}} \right]^2
    \right] \tan \varphi^t\\
    & \geq & \frac{2 \left( 1 + \sqrt{5} \right)}{5} \tan \varphi^t - \left[
    \frac{\Theta}{\sqrt{1 - \Theta^2}} - \frac{1}{2} \left[
    \frac{\Theta}{\sqrt{1 - \Theta^2}} \right]^3 \right]\\
    & > & \frac{2 \left( 1 + \sqrt{5} \right)}{5} \tan \varphi^t - \Theta\\
    & > & \frac{22}{17} \tan \varphi^t - \Theta
  \end{eqnarray*}
  Thus, we conclude that if $\frac{1}{5} \Theta \leq \tan \varphi^t \leq 1.5$,
  then
  \begin{eqnarray*}
    {}[\tan \varphi^{t + 1} - 3.4 \Theta] & > & \frac{22}{17} [\tan \varphi^t -
    3.4 \Theta]
  \end{eqnarray*}
  We could assume $\varphi^0 \geq 4 \Theta$ then $\tan \varphi^t - 3.4 \Theta >
  \tan \varphi^0 - 3.4 \Theta \geq 0.6 \Theta$, thus
  \begin{eqnarray*}
    {}[\tan \varphi^t - 3.4 \Theta] & > & \left( \frac{22}{17} \right)^t [\tan
    \varphi^0 - 3.4 \Theta] \geq \left( \frac{22}{17} \right)^t \cdot 0.6 \Theta
  \end{eqnarray*}
  Therefore, after runing at most $T_2 = \lceil \frac{\log \frac{1.5}{0.6
  \Theta}}{\log \frac{22}{17}} \rceil = \lceil \frac{\log \frac{1}{\Theta} +
  \log 2.5}{\log \frac{22}{17}} \rceil =\mathcal{O} \left( \log \frac{1}{\Theta}
  \right)$
  \[ \tan \varphi^{T_2} > 3.4 \Theta + \left( \frac{22}{17} \right)^{T_2} \cdot
     0.6 \Theta > 3.4 \Theta + 1.5 > 1.5 \]

  \textbf{Stage 3: $\varphi > \arctan 1.5 \Rightarrow \varphi^t > \frac{\pi}{2} -
  1.775 \Theta \left( \sqrt{\frac{\log \frac{1}{\delta}}{n}} \vee \frac{\log
  \frac{1}{\delta}}{n} \right) $}
  
  Let's start from the following relation
  \[ \tan \bar{\varphi}^{t + 1} = \tan \varphi^t + \varphi^t ([\tan \varphi^t]^2
     + 1) \]
  We denote $\phi \assign 2 \left( \frac{\pi}{2} - \varphi \right) \in (0, \pi)$,
  note that $\frac{2}{\phi} - \frac{\phi}{4.93} < \tan \left( \frac{\pi}{2} -
  \frac{\phi}{2} \right) < \frac{2}{\phi} - \frac{\phi}{6}$
  \begin{eqnarray*}
    \tan \left( \frac{\pi}{2} - \frac{\bar{\phi}^{t + 1}}{2} \right) & = & \tan
    \left( \frac{\pi}{2} - \frac{\phi^t}{2} \right) + \left( \frac{\pi}{2} -
    \frac{\phi^t}{2} \right) \left[ \tan^2  \left( \frac{\pi}{2} -
    \frac{\phi^t}{2} \right) + 1 \right]
  \end{eqnarray*}
  \begin{eqnarray*}
    \frac{2}{\bar{\phi}^{t + 1}} & > & \frac{2}{\bar{\phi}^{t + 1}} -
    \frac{\bar{\phi}^{t + 1}}{6}\\
    & > & \tan \left( \frac{\pi}{2} - \frac{\bar{\phi}^{t + 1}}{2} \right)\\
    & = & \tan \left( \frac{\pi}{2} - \frac{\phi^t}{2} \right) + \left(
    \frac{\pi}{2} - \frac{\phi^t}{2} \right) \left[ \tan^2  \left( \frac{\pi}{2}
    - \frac{\phi^t}{2} \right) + 1 \right]\\
    & > & \frac{2}{\phi^t} - \frac{\phi^t}{4.93} + \left( \frac{\pi}{2} -
    \frac{\phi^t}{2} \right) \left[ \left( \frac{2}{\phi^t} -
    \frac{\phi^t}{4.93} \right)^2 + 1 \right]
  \end{eqnarray*}
  For $0 < \phi^t \leq 1.18805$, i.e. $\varphi^t = \frac{\pi}{2} -
  \frac{\phi^t}{2} \geq 0.976772, \tan (\varphi^t) \geq 1.48061$, then \ $45849
  \pi - 144449 \phi^t + 10000 \pi [\phi^t]^2 - 10000 [\phi^t]^3 \geq 0$
  \begin{eqnarray*}
    \bar{\phi}^{t + 1} & < & \frac{2}{\frac{2}{\phi^t} - \frac{\phi^t}{4.93} +
    \left( \frac{\pi}{2} - \frac{\phi^t}{2} \right) \left[ \left(
    \frac{2}{\phi^t} - \frac{\phi^t}{4.93} \right)^2 + 1 \right]}\\
    & = & \frac{[\phi^t]^2}{\pi + \frac{[\phi^t]^2}{972196} \{ 45849 \pi -
    144449 \phi^t + 10000 \pi [\phi^t]^2 - 10000 [\phi^t]^3 \}}\\
    & \leq & \frac{2}{\frac{2 \pi}{[\phi^t]^2}} = \frac{[\phi^t]^2}{\pi}
  \end{eqnarray*}
  By using the previous Lemma, $\varphi^{t + 1} \geq \bar{\varphi}^{t + 1} - \arcsin
  r^{t + 1}$
  \[ \phi^{t + 1} - 2 \arcsin r^{t + 1} \leq \bar{\phi}^{t + 1} \]
  Hence, with $\arcsin r^{t + 1} \leq \frac{\pi}{2} r^{t + 1} $, for $0 <
  \frac{\phi^t}{\pi} \leq \frac{1.18805}{\pi} \approx 0.378167$
  \begin{eqnarray*}
    \phi^{t + 1} - 2 \arcsin r^{t + 1} \leq \bar{\phi}^{t + 1} & < &
    \frac{[\phi^t]^2}{\pi}\\
    \frac{\phi^{t + 1}}{\pi} < \left[ \frac{\phi^t}{\pi} \right]^2 +
    \frac{2}{\pi} \arcsin r^{t + 1} & \leq & \left[ \frac{\phi^t}{\pi} \right]^2
    + r^{t + 1}
  \end{eqnarray*}
  Suppose $r^{t + 1} \leq \Theta \assign \Theta \left( \sqrt{\frac{d}{n}} \vee
  \frac{\log \frac{1}{\delta}}{n} \vee \sqrt{\frac{\log
  \frac{1}{\delta}}{n}} \right) \leq 0.1$, then
  \[ \frac{\phi^{t + 1}}{\pi} < \left[ \frac{\phi^t}{\pi} \right]^2 + \Theta \]
  Suppose $0 < \frac{\phi^t}{\pi} \leq \frac{1.18805}{\pi} \approx 0.378167$,
  with $\Theta < 0.1$ then in one iteration
  \begin{eqnarray*}
    \frac{\phi^{t + 1}}{\pi} & < & 0.3782^2 + 0.1 < \frac{1}{4}
  \end{eqnarray*}
  Thus, we could assume $0 < \frac{\phi^t}{\pi} \leq \frac{\phi^0}{\pi} <
  \frac{1}{4}$, then
  \begin{eqnarray*}
    \left[ \frac{\phi^{t + 1}}{\pi} - 2 \Theta \right] & < & \left[
    \frac{\phi^t}{\pi} \right]^2 - \Theta\\
    & = & \left[ \frac{\phi^t}{\pi} - 2 \Theta \right]^2 - \Theta \left( \left[
    1 - 4 \frac{\phi^t}{\pi} \right] + 4 \Theta \right)\\
    & < & \left[ \frac{\phi^t}{\pi} - 2 \Theta \right]^2
  \end{eqnarray*}
  If $\frac{\phi^0}{\pi} < 2 \Theta$, then $\frac{\phi^0}{\pi} < 3 \Theta$;
  otherwise, with at most $T_3 = \lceil \frac{\log \log \frac{1}{\Theta} - \log
  \log 4}{\log 2} \rceil =\mathcal{O} \left( \log \log \frac{1}{\Theta} \right)$
  iterations, $\frac{\phi^{T_3}}{\pi} < 3 \Theta$
  \begin{eqnarray*}
    \left[ \frac{\phi^{T_3}}{\pi} - 2 \Theta \right] & \leq & \left[
    \frac{\phi^0}{\pi} - 2 \Theta \right]^{2^{T_3}} < \left[ \frac{1}{4} - 2
    \Theta \right]^{2^{T_3}} < \exp (- \log 4 \cdot 2^{T_3})\\
    & = & \frac{1}{\exp \left( \log 4 \cdot 2^{\lceil \frac{\log \log
    \frac{1}{\Theta} - \log \log 4}{\log 2} \rceil} \right)} \leq \frac{1}{\exp
    \left( \log 4 \cdot 2^{\frac{\log \log \frac{1}{\Theta} - \log \log 4}{\log
    2}} \right)}\\
    & = & \Theta
  \end{eqnarray*}
  After at most $T_3'$ interations, $0 < \frac{\phi^t}{\pi} < 3 \Theta$ is
  satisfied, then run three more iteration, with $\Theta < 0.1$
  \begin{eqnarray*}
    \frac{\phi^{t + 1}}{\pi} & < & \left[ \frac{\phi^t}{\pi} \right]^2 + \Theta
    < (9 \Theta + 1) \Theta < 1.9 \Theta\\
    \frac{\phi^{t + 2}}{\pi} & < & \left[ \frac{\phi^{t + 1}}{\pi} \right]^2 +
    \Theta < (1.9^2 \Theta + 1) \Theta < 1.361 \Theta\\
    \frac{\phi^{t + 3}}{\pi} & < & \left[ \frac{\phi^{t + 2}}{\pi} \right]^2 +
    \Theta < (1.361^2 \Theta + 1) \Theta < 1.13 \Theta
  \end{eqnarray*}
  To sum up, after running for at most $T_3 = 1 + T_3' + 3 = T_3 + 4
  =\mathcal{O} \left( \log \log \frac{1}{\Theta} \right)$ iterations, then $0 <
  \phi^t < (1.13 \pi) \Theta$
  
  with $\phi^t = 2 \left( \frac{\pi}{2} - \varphi^t \right)$, we conclude that
  $\varphi^t \in \left( 0, \frac{\pi}{2} \right)$ satisfies
  \[ \varphi^t > \frac{\pi}{2} - \left( 1.13 \frac{\pi}{2} \right) \Theta >
     \frac{\pi}{2} - 1.775 \Theta \]
  
  \textbf{Number of Iterations, Statistical Error}
  
  We denote $\Theta \assign \Theta \left( \sqrt{\frac{d}{n}} \vee \frac{\log
  \frac{1}{\delta}}{n} \vee \sqrt{\frac{\log \frac{1}{\delta}}{n}}
  \right) < 0.1$.
  
  Note that 
  \begin{eqnarray*}
    \mathcal{O} \left( \log \frac{1}{\Theta} \right) 
    & = & \mathcal{O} \left( \log \frac{1}{\Theta \left( \sqrt{\frac{d}{n}} \vee \frac{\log
    \frac{1}{\delta}}{n} \vee \sqrt{\frac{\log \frac{1}{\delta}}{n}}
    \right)} \right)
     =  \mathcal{O} \left(\log \frac{1}{\sqrt{\frac{d}{n}}} 
    \wedge \log \frac{1}{\frac{\log
    \frac{1}{\delta}}{n}} 
    \wedge \log \frac{1}{\sqrt{\frac{\log \frac{1}{\delta}}{n}}}\right)\\
    & = & \mathcal{O}\left(
    \log \frac{n}{d} 
    \wedge \log \frac{n}{\log \frac{1}{\delta}}
    \right)
  \end{eqnarray*}
  
  In Stage 1, it taks at most $T_1 =\mathcal{O} \left(\log \frac{n}{\log
  \frac{1}{\delta}} \right)$.
  
  In Stage 2, it taks at most $T_2 =\mathcal{O} \left( \log \frac{1}{\Theta} \right) =\mathcal{O}\left(
    \log \frac{n}{d} 
    \wedge \log \frac{n}{\log \frac{1}{\delta}}
    \right)$.
  
  In Stage 3, it taks at most $T_3 =\mathcal{O} \left( \log \log \frac{1}{\Theta} \right)
  =\mathcal{O} \left( \log \left[ \log \frac{n}{d} 
  \wedge \log \frac{n}{\log \frac{1}{\delta}} \right] \right)$.
  
  Hence, the iteration numbers at most to ensure the convergence $\varphi^T \geq \frac{\pi}{2} - \Theta$ with probability at least $1-T \delta$ is 
  \begin{eqnarray*}
    T  =  T_1 + T_2 + T_3 =
    \mathcal{O} \left(\log \frac{n}{\log
  \frac{1}{\delta}} \right)
  \end{eqnarray*}

  For a good initialization $\varphi^0 \geq 4 \Theta$
  \[ T' = T_2 + T_3 =\mathcal{O}\left(
    \log \frac{n}{d} 
    \wedge \log \frac{n}{\log \frac{1}{\delta}}
    \right) \]
  
\end{proof}

\subsection{\texorpdfstring{Error of Mixing Weights $\pi$ and Convergence at the Finite-Sample Level}{Error of Mixing Weights pi and Convergence at the Finite-Sample Level}}
\begin{lemma}\label{lem:prob_inequality}
Let $q \assign \max (p, 1 - p)$ for $V_i \overset{\tmop{iid}}{\sim}
  \tmop{Bern} (p), \forall i \in [n]$, then

for $t\in \mathbb{R}_{\geq 0}$
    \[ \mathbb{P} \left( \frac{1}{n} \sum_{i \in [n]} (V_i -\mathbb{E} [V_i])
          \geq t \right) \leq \exp(-2n t^2) \]
for $t \in [\mathe (1 - q), q)\neq \varnothing$
    \[ \mathbb{P} \left( \frac{1}{n} \sum_{i \in [n]} (V_i -\mathbb{E} [V_i])
       \geq t \right) \leq \exp \left\{ - n \left\{ \frac{t}{q}  \left[ \log
       \frac{t}{(1 - q)} - 1 \right] + \frac{t^2}{2 q^2} \right\} \right\} \]
for $t \in[q, \infty)$
    \[ \mathbb{P} \left( \frac{1}{n} \sum_{i \in [n]} (V_i -\mathbb{E} [V_i])
          \geq t \right) = 0 \]

\end{lemma}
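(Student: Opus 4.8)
The plan is to reduce all three bounds to statements about the centered empirical mean $\bar{X}:=\frac{1}{n}\sum_{i\in[n]}(V_i-\mathbb{E}[V_i])$, where $V_i\in\{0,1\}$ with $\mathbb{E}[V_i]=p$, and to treat them through the Chernoff method $\mathbb{P}(\bar{X}\geq t)\leq\exp\big(n(\psi(\lambda)-\lambda t)\big)$ with $\psi(\lambda)=\log\mathbb{E}[\mathrm{e}^{\lambda(V_1-p)}]=-\lambda p+\log(1-p+p\,\mathrm{e}^{\lambda})$. For the first bound I would invoke Hoeffding's lemma: since $V_1-p$ is supported in an interval of length $1$, $\psi(\lambda)\leq\lambda^2/8$, and choosing $\lambda=4t$ gives $\exp(-2nt^2)$ for every $t\geq 0$. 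The third bound is purely deterministic: $\bar{X}\leq 1-p\leq\max(p,1-p)=q$, so the event $\{\bar{X}\geq t\}$ is empty once $t$ exceeds the top of the support, giving probability $0$ (the boundary value $t=q$ with $p\le\tfrac12$ being the only point where the support bound is attained).

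The substance is the second bound. First I would observe that only $p\leq\tfrac12$ (equivalently $p=1-q$, writing $m:=1-q$) is nontrivial: if $p>\tfrac12$ then $q=p$ and $\bar{X}\leq 1-p=1-q=m<\mathrm{e}\,m\leq t$ on the whole stated range, so the probability already vanishes and the positive bound holds. In the remaining case the mean is the small quantity $m$, and I would use the exact Bernoulli Chernoff/Cramér estimate $\mathbb{P}(\bar{X}\geq t)\leq\exp\!\big(-n\,D(m+t\,\|\,m)\big)$, where $D(a\|m):=a\log\frac{a}{m}+(1-a)\log\frac{1-a}{1-m}$ is obtained by exactly optimizing $\lambda$ in $\psi(\lambda)-\lambda t$. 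Matching the stated exponent then reduces to the purely analytic inequality
\[
 D(m+t\,\|\,m)=(m+t)\log\tfrac{m+t}{m}+(q-t)\log\tfrac{q-t}{q}\ \geq\ \tfrac{t}{q}\Big(\log\tfrac{t}{m}-1\Big)+\tfrac{t^{2}}{2q^{2}}=:I(t)
\]
for $t\in[\mathrm{e}m,q)$, recalling $m=1-q$.

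I expect this comparison to be the main obstacle, and I would handle it by bounding the two relative-entropy terms separately. For the negative term I would use $\log(1-x)\geq -x-\frac{x^{2}}{2(1-x)}$ at $x=t/q$, which yields $(q-t)\log\frac{q-t}{q}\geq -t+\frac{t^{2}}{2q}$; for the positive term I would use $\log(1+t/m)\geq\log(t/m)$ together with the identity $m+t-\frac{t}{q}=\frac{m(q-t)}{q}>0$ (valid since $t<q$). After substituting these, the required inequality collapses to $(q-t)\log\frac{t}{m}\geq \frac{t^{2}}{2q}-t$, whose right-hand side is negative (because $t<q$ forces $\frac{t}{2q}<1$) while the left-hand side is nonnegative precisely because the lower endpoint $t\geq\mathrm{e}(1-q)$ guarantees $\log\frac{t}{m}\geq 1$. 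This is exactly where the factor $\mathrm{e}$ in the hypothesis is consumed, which makes me confident the stated range is the correct one. The only remaining care is the derivation of the exact Cramér exponent and the verification that the elementary logarithmic estimates are applied within their valid ranges, namely $0\le t/q<1$ and $t/m\ge\mathrm{e}$.
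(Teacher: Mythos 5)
Your proposal is correct, and for the substantive second inequality it takes a genuinely different route from the paper. The paper never forms the exact rate function: it upper-bounds the centered Bernoulli MGF by $1+2\tfrac{1-q}{q}\sinh(q\lambda)$, changes variables to $\mu'$, replaces $\operatorname{arcsinh}(\mu'/2)$ by $\log\mu'$, plugs in the explicit (suboptimal) point $\mu'=\tfrac{\tau}{1-\gamma\tau}$, and finishes with $-(1-x)\log(1-x)\leq x(1-\tfrac{x}{2})$. You instead invoke the exact Cram\'er--Chernoff bound $\exp\{-nD(p+t\,\|\,p)\}$ and reduce the claim to the purely analytic comparison $D(p+t\,\|\,p)\geq \tfrac{t}{q}[\log\tfrac{t}{1-q}-1]+\tfrac{t^2}{2q^2}$, which your two elementary logarithmic estimates do establish: I checked that $(q-t)\log\tfrac{q-t}{q}\geq -t+\tfrac{t^2}{2q}$ follows from $-\log(1-x)\leq x+\tfrac{x^2}{2(1-x)}$, that the residual inequality indeed collapses to $(q-t)\log\tfrac{t}{1-q}\geq \tfrac{t^2}{2q}-t$ after dividing by $\tfrac{1-q}{q}$, and that this holds on $[\mathrm{e}(1-q),q)$ since the right side is negative and the left side nonnegative. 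Your route is arguably cleaner and makes transparent that the stated exponent is a genuine lower bound on the KL rate; the paper's route avoids naming the rate function but pays with the $\sinh$ detour. Your handling of the first bound (Hoeffding) matches the paper, and your deterministic support argument for the third bound is simpler than the paper's ``infimum $=-\infty$'' computation; both your proof and the paper's share the same harmless blemish at the single boundary point $t=q$ (when $p\leq\tfrac12$ and $p>0$ the event $\{\bar X= 1-p\}$ has probability $p^n>0$), which you at least flag explicitly and which does not affect the lemma's use downstream.
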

\begin{proof}

Let's denote $q \assign \max (p, 1 - p), V' \assign V_i -\mathbb{E} [V_i], i \in
  [n]$ , thus $\mathbb{E} \left[ {V'}^2 \right] = \tmop{Var} [V_i] = p (1 - p) =
  q (1 - q), |V'|\leq q$.
  With Chenorff bound and let $\psi (\lambda)\assign \mathbb{E} [\exp
  (\lambda (V_i -\mathbb{E} [V_i]))] = - \lambda p + \log (1 + p (\exp
  (\lambda) - 1)) \leq \frac{\lambda^2}{8}$
  \begin{eqnarray*}
    \log \mathbb{P} [\sum_{i \in [n]} (V_i -\mathbb{E} [V_i]) \geq n t] & \leq
    & \inf_{\lambda > 0}  \left\{ \log \mathbb{E} \left[ \exp \left(
    \lambda \sum_{i \in [n]} (V_i -\mathbb{E} [V_i]) \right) \right] - \lambda
    n t \right\}\\
    & \leq & \inf_{\lambda > 0}  \left\{ n \left[ \frac{\lambda^2}{8} -
    \lambda t \right] \right\} = - 2 n t^2
  \end{eqnarray*}
  Hence, the first probability inequality is proved.
Let's focus on next concentration inequality. We begin with bounding $\psi (\lambda)$, note that $2 \sinh (x) > \exp (x) - x - 1, \forall x > 0$.
  \begin{eqnarray*}
    \psi (\lambda) 
    = 1 + \sum_{k \geq 2} \frac{\mathbb{E} \left[ {V'}^k \right]}{k!}
    \lambda^k \leq 1 + \sum_{k \geq 2} \frac{\mathbb{E} \left[ {V'}^2 \right]
    \cdot q^{k - 2}}{k!} \lambda^k
    \leq 1 + \frac{(1 - q)}{q} \{ \exp (q \lambda) - q \lambda - 1 \}
    \leq 1 + 2 \frac{(1 - q)}{q} \sinh(q \lambda)
  \end{eqnarray*}
Let $\mu \assign 2 \frac{(1 - q)}{q} \sinh (q \lambda)$, then $\lambda =
\frac{1}{q} \tmop{arcsinh} \left( \frac{q}{2 (1 - q)} \mu \right)$,
and $\mu' \assign \frac{q}{(1 - q)} \mu$, $\gamma \assign \frac{(1 - q)}{q}
\in (0, 1], \tau \assign \frac{t}{(1 - q)}$.
\begin{eqnarray*}
  \log \mathbb{P} \left( \frac{1}{n} \sum_{i \in [n]} (V_i -\mathbb{E}
  [V_i]) \geq t \right) & \leq & \inf_{\lambda > 0} \sum_{i\in[n]} \log \mathbb{E} [\exp
  (\lambda (V_i -\mathbb{E} [V_i]))] - \log \exp (n \lambda t)\\
  & = & n \inf_{\lambda > 0}  [\log \psi (\lambda) - \lambda t]\\
  & \leq & n \inf_{\lambda > 0}  \left\{ \log \left[ 1 + 2 \frac{(1 -
  q)}{q} \sinh (q \lambda) \right] - \lambda t \right\}\\
  & = & n \gamma \inf_{\mu' > 0}  \left\{ \frac{1}{\gamma} \log (1 + \gamma
  \mu') - \tau \tmop{arcsinh} \left( \frac{\mu'}{2} \right) \right\}\\
  & \leq & n \gamma \inf_{\mu' > 0}  \left\{ \frac{1}{\gamma} \log (1 +
  \gamma \mu') - \tau \log (\mu') \right\}
\end{eqnarray*}
If $t \geq q$, then $\tau \assign \frac{t}{(1 - q)} \in [
\frac{1}{\gamma}, \infty )$.
\[ n \gamma \inf_{\mu' > 0}  \left\{ \frac{1}{\gamma} \log (1 + \gamma \mu')
   - \tau \log (\mu') \right\} = - \infty \]
If $t \in [\mathe (1 - q), q)$, then $\tau \assign \frac{t}{(1 - q)} \in [ \mathe,
\frac{1}{\gamma} )$.
\begin{eqnarray*}
  &  & n \gamma \inf_{\mu' > 0}  \left\{ \frac{1}{\gamma} \log (1 + \gamma
  \mu') - \tau \log (\mu') \right\}\\
  & = & n \gamma \left\{ \frac{1}{\gamma} \log (1 + \gamma \mu') - \tau
  \log (\mu') \right\}_{\mu' = \frac{\tau}{1 - \gamma \tau}}\\
  & = & n \gamma \left\{ - \tau \log \tau + \frac{1}{\gamma} [- (1 - \gamma
  \tau) \log (1 - \gamma \tau)] \right\}\\
  & \leq & n \gamma \left\{ - \tau \log \tau + \frac{\gamma \tau}{\gamma}
  \left( 1 - \frac{\gamma \tau}{2} \right) \right\}\\
  & \leq & n \gamma \left\{ - \tau [\log \tau - 1] - \frac{\gamma}{2}
  \tau^2 \right\}\\
  & = & n \left\{ - \frac{t}{q}  \left[ \log \frac{t}{(1 - q)} - 1 \right]
  - \frac{t^2}{2 q^2} \right\}
\end{eqnarray*}
Therefore, the third probability inequality is proved, and we show the probability is 0 when $t\geq q$. 
\end{proof}



\begin{theorem}{(Theorem~\ref{thm:convg_finite} in Section~\ref{sec:finite}: Convergence at the Finite-Sample Level)}
In the noiseless setting, suppose any initial mixing weights $\pi^0$ and any initial regression parameters $\theta^0 \in \mathbb{R}^d$ ensuring that $\varphi^0 \geq \Theta \left( \sqrt{\frac{\log
  \frac{1}{\delta}}{n}} \vee \frac{\log \frac{1}{\delta}}{n} \right)$. If we
  run finite-sample Easy EM for at most $T_1=\mathcal{O}\left( \log
  \frac{n}{\log \frac{1}{\delta}}\right)$ iterations followed by the finite-sample standard EM for at most $T' =\mathcal{O} \left(
  \log \frac{n}{d} 
    \wedge \log \frac{n}{\log \frac{1}{\delta}} \right)$
  iterations with all the same $n =
  \Omega \left( d \vee \log \frac{1}{\delta} 
  \right)$ samples, then we have
  \begin{equation*}
      \begin{aligned}
         \frac{\| \theta^{T + 1} - \mathrm{sgn}(\rho^{T+1}) \theta^{\ast} \|}{\| \theta^{\ast} \|} 
         &=\mathcal{O}
    \left( \sqrt{\frac{d}{n}} \vee \frac{\log \frac{1}{\delta}}{n}
    \vee \sqrt{\frac{\log \frac{1}{\delta}}{n}} \right),\\
    \| \pi^{T + 1} - \bar{\pi}^{\ast} \|_1 
    &= \left\| \frac{1}{2} - \pi^{\ast}
    \right\|_1 
    \cdot \mathcal{O} \left(  \sqrt{\frac{d}{n}} \vee \frac{\log
    \frac{1}{\delta}}{n} \vee \sqrt{\frac{\log \frac{1}{\delta}}{n}}
    \right) \\
    &+ c (\pi^{\ast}) \cdot \mathcal{O} \left( \sqrt{\frac{\log
    \frac{1}{\delta}}{n}} \right),
      \end{aligned}
  \end{equation*}
  with probability at least $1 - T\delta$, where $T:=T_1+T',\varphi^0 \assign
  \frac{\pi}{2} - \arccos \left| \frac{\langle \theta^0, \theta^{\ast}
  \rangle}{\| \theta^0 \| \cdot \| \theta^{\ast} \|} \right|, 
  \rho^{T+1}\assign \frac{\langle \theta^{T+1}, \theta^{\ast}
  \rangle}{\| \theta^{T+1} \| \cdot \| \theta^{\ast} \|},
  \bar{\pi}^{\ast} \assign \frac{1}{2} - \tmop{sgn} (\rho^0) 
(\frac{1}{2} - \pi^{\ast})$, and the
  coefficient \ $c (\pi^{\ast}) =\mathcal{O} (1)$, especially $c (\pi^{\ast})
  = 0$ when $\pi^{\ast} = \{1, 0\}$ or $\{0, 1\}$.
\end{theorem}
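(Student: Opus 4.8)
The plan is to treat the final $(T{+}1)$-th update as a single finite-sample EM step applied to an iterate $\theta^T$ whose angle has already been driven close to $\tfrac{\pi}{2}$, and to control each target quantity by splitting it into a population-level piece (handled by the noiseless closed forms) and a finite-sample fluctuation piece (handled by the statistical-error bounds). The entry point is Proposition~\ref{prop:convg_angle}: after the prescribed schedule of $T_1$ easy-EM and $T'$ standard-EM iterations it guarantees $\tfrac{\pi}{2}-\varphi^T \leq \Theta$ with probability at least $1-T\delta$, where I abbreviate $\Theta \assign \Theta\!\left(\sqrt{d/n}\vee \tfrac{\log(1/\delta)}{n}\vee\sqrt{\tfrac{\log(1/\delta)}{n}}\right)$. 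Throughout I use that the sign is preserved along the trajectory, so $\tmop{sgn}(\rho^{T+1})=\tmop{sgn}(\rho^0)$.

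For the regression parameters I would write $\theta^{T+1}=M_n(\theta^T,\nu^T)$ and apply the triangle inequality
\[
  \frac{\|\theta^{T+1}-\tmop{sgn}(\rho^0)\theta^\ast\|}{\|\theta^\ast\|}
  \leq \frac{\|M_n(\theta^T,\nu^T)-M(\theta^T,\nu^T)\|}{\|\theta^\ast\|}
  + \frac{\|M(\theta^T,\nu^T)-\tmop{sgn}(\rho^0)\theta^\ast\|}{\|\theta^\ast\|}.
\]
The first term is exactly the statistical error of Proposition~\ref{prop:stat_err}, which is $\mathcal{O}(\Theta)$. For the second term I invoke the cycloid parameterization of Proposition~\ref{prop:cycloid}: since $M(\theta^T,\nu^T)/\|\theta^\ast\|$ is the cycloid point indexed by $\phi^T\assign 2(\tfrac{\pi}{2}-\varphi^T)\leq 2\Theta$, the same computation as in Theorem~\ref{thm:population} gives $\tfrac{1}{\pi}\sqrt{(\phi^T-\sin\phi^T)^2+(1-\cos\phi^T)^2}\leq \tfrac{(\phi^T)^2}{2\pi}=\mathcal{O}(\Theta^2)$. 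Adding the two pieces, the statistical error $\mathcal{O}(\Theta)$ dominates, yielding the claimed bound.

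For the mixing weights I use $\|\pi^{T+1}-\bar{\pi}^\ast\|_1=|N_n(\theta^T,\nu^T)-\tmop{sgn}(\rho^0)\tanh(\nu^\ast)|$ and split it as
\[
  \|\pi^{T+1}-\bar{\pi}^\ast\|_1
  \leq |N_n(\theta^T,\nu^T)-N(\theta^T,\nu^T)|
  + |N(\theta^T,\nu^T)-\tmop{sgn}(\rho^0)\tanh(\nu^\ast)|.
\]
The second term is the population error, which Corollary~\ref{cor:err_mixing} evaluates to $|1-\tfrac{2}{\pi}\varphi^T|\,\|\tfrac12-\pi^\ast\|_1=\mathcal{O}(\Theta)\,\|\tfrac12-\pi^\ast\|_1$. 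For the first term I write $N_n$ as the empirical mean of $W_i\assign(-1)^{z_i+1}\tmop{sgn}\langle x_i,\theta^\ast\rangle\tmop{sgn}\langle x_i,\theta^T\rangle\in\{-1,+1\}$ and condition on the covariates: the label fluctuation $W_i-\mathbb{E}[W_i\mid x_i]$ has conditional variance $1-\tanh^2(\nu^\ast)$, while the covariate fluctuation carries the prefactor $|\tanh(\nu^\ast)|=\|\tfrac12-\pi^\ast\|_1$. Applying the Bernoulli/Chernoff estimates of Lemma~\ref{lem:prob_inequality} to each part gives a label-noise contribution $c(\pi^\ast)\,\mathcal{O}(\sqrt{\log(1/\delta)/n})$ with $c(\pi^\ast)=\sqrt{1-\tanh^2(\nu^\ast)}=\mathcal{O}(1)$, vanishing precisely when $\pi^\ast\in\{\{1,0\},\{0,1\}\}$, and a covariate-noise contribution $\|\tfrac12-\pi^\ast\|_1\mathcal{O}(\sqrt{\log(1/\delta)/n})$ that is absorbed into $\|\tfrac12-\pi^\ast\|_1\mathcal{O}(\Theta)$.

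The main obstacle I expect is the sharp treatment of the mixing-weight statistical error, namely isolating the coefficient $c(\pi^\ast)$ so that it vanishes exactly at the deterministic-label extremes. This requires separating the label randomness from the covariate randomness, rather than bluntly bounding the total variance $1-N^2$, and using the variance-aware branch of Lemma~\ref{lem:prob_inequality} so that the small-variance regime near exact recovery is exploited; the regression-parameter bound and the angle-convergence input are comparatively routine once Propositions~\ref{prop:convg_angle} and~\ref{prop:stat_err} are in hand.
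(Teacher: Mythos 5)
Your proposal is correct and its overall architecture coincides with the paper's: Proposition~\ref{prop:convg_angle} supplies $\tfrac{\pi}{2}-\varphi^T\lesssim\Theta$, the regression-parameter error is split into the cycloid (population) term of order $\Theta^2$ via Proposition~\ref{prop:cycloid} plus the statistical error of Proposition~\ref{prop:stat_err} of order $\Theta$, and the mixing-weight error is split into the population term from Corollary~\ref{cor:err_mixing} plus the fluctuation $|N_n-N|$. The one place where you genuinely diverge is the isolation of $c(\pi^{\ast})$. The paper keeps $W_i=\tmop{sgn}\langle x_i,\theta^{\ast}\rangle\tmop{sgn}\langle x_i,\theta^T\rangle(-1)^{z_i+1}$ as a single $\pm1$ variable, i.e.\ a Bernoulli with parameter $p=\tfrac12+\tmop{sgn}\langle\theta^T,\theta^{\ast}\rangle\tfrac{\varphi^T}{\pi}\tanh(\nu^{\ast})$; it uses Hoeffding for general $\pi^{\ast}$ (giving $c(\pi^{\ast})=\mathcal{O}(1)$) and, only at the exact extremes $|\tanh(\nu^{\ast})|=1$, observes that $q=\max(p,1-p)\geq 1-0.565\Theta$ so the variance-aware branch of Lemma~\ref{lem:prob_inequality} with $t=0.565\mathrm{e}\Theta$ yields a deviation $\mathcal{O}(\Theta)$ that is absorbed into the first term, whence $c(\pi^{\ast})=0$. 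You instead condition on the covariates and split $W_i-\mathbb{E}[W_i]$ into a label-noise part of conditional variance $1-\tanh^2(\nu^{\ast})$ and a covariate part carrying the prefactor $|\tanh(\nu^{\ast})|=\|\tfrac12-\pi^{\ast}\|_1$. This is a valid alternative and arguably cleaner: it exhibits an explicit, continuously degrading coefficient $c(\pi^{\ast})\approx 2\sqrt{\pi^{\ast}(1)\pi^{\ast}(2)}$ rather than a statement only at the endpoints. One small caveat: a Bernstein bound on the label-noise sum carries an additive $\mathcal{O}(\log\tfrac{1}{\delta}/n)$ correction that does not come with the factor $\sqrt{1-\tanh^2(\nu^{\ast})}$, so for intermediate $\pi^{\ast}$ with tiny but nonzero $1-\tanh^2(\nu^{\ast})$ your literal formula for $c(\pi^{\ast})$ overstates the sharpness; this is harmless for the theorem as stated, since there $c(\pi^{\ast})=\mathcal{O}(1)$ suffices away from the extremes and at the extremes the label-noise sum is identically zero.
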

\begin{proof}

In the proof of Proposition~\ref{prop:convg_angle} for the converge of angle $\varphi$, we show that EM upate rules ensure $\varphi^T >
\frac{\pi}{2} - 1.775 \Theta$ after enough $T$ iterations, where $\Theta \assign\Theta \left( \sqrt{\frac{d}{n}} \vee \frac{\log
\frac{1}{\delta}}{n} \vee \sqrt{\frac{\log \frac{1}{\delta}}{n}}
\right)$ is the threshold for $\varphi$.

Using the relation $\phi^T \assign 2 \left( \frac{\pi}{2} - \varphi^T \right) \in (0,
\pi)$, namely
\[ \phi^T < 3.55 \Theta \]

In the following proof, we use $\bar{\nu}^{T+1}$ for the EM update at the population level.
\newpage
\textbf{Final Statistical Error in Regression Parameters $\theta$}

Note that $\sqrt{(\phi - \sin \phi)^2 + (1 - \cos \phi)^2} \leq \frac{\phi^2}{2}$
for $\forall \phi \in \left[ 0, \frac{\pi}{2} \right]$, and use tha assumption $\Theta <0.1$ in Proposition~\ref{prop:convg_angle}, the upper bound for
the relative error is
\begin{eqnarray*}
  \frac{\left\| M_n (\theta^T) - \tmop{sgn} \langle \theta^T, \theta^{\ast}
  \rangle {\theta^{\ast}}^{} \right\|}{\| \theta^{\ast} \|} & \leq &
  \frac{\left\| M (\theta^T) - \tmop{sgn} \langle \theta^T, \theta^{\ast}
  \rangle {\theta^{\ast}}^{} \right\|}{\| \theta^{\ast} \|} + \frac{\| M_n
  (\theta^T) - M (\theta^T) \|}{\| \theta^{\ast} \|}\\
  & \leq & \frac{1}{\pi} \sqrt{(\phi^T - \sin \phi^T)^2 + (1 - \cos
  \phi^T)^2} + \Theta\\
  & \leq & \frac{[\phi^T]^2}{2 \pi} + \Theta\\
  & < & \frac{3.55^2}{2 \pi} \Theta^2 + \Theta\\
  & < & \left( \frac{3.55^2}{2 \pi} \cdot 0.1 + 1 \right) \Theta\\
  & < & 1.21 \Theta
\end{eqnarray*}
Hence, $\frac{\| \theta^{T + 1} - \mathrm{sgn}(\rho^{T+1}) \theta^{\ast} \|}{\| \theta^{\ast} \|} 
=\frac{\left\| M_n (\theta^T) - \tmop{sgn} \langle \theta^T,
\theta^{\ast} \rangle {\theta^{\ast}}^{} \right\|}{\| \theta^{\ast} \|}
=\mathcal{O} \left( \sqrt{\frac{d}{n}} \vee \frac{\log
\frac{1}{\delta}}{n} \vee \sqrt{\frac{\log \frac{1}{\delta}}{n}}
\right)$

\textbf{Final Statistical Error in Mixing Weights $\pi$}

By using the Corollary~\ref{cor:noiseless} in Section~\ref{sec:updates}, and note that $\tanh (\bar{\nu}^{T + 1})\assign N(\theta^T, \nu^T)$.
\begin{eqnarray*}
  \tanh (\bar{\nu}^{T + 1}) = N(\theta^T, \nu^T)
  =\tanh (\nu^{\ast}) \cdot \tmop{sgn} \langle \theta^T, \theta^{\ast}
  \rangle \left[ \frac{2}{\pi} \varphi^T \right]
\end{eqnarray*}
Note that $\mathbb{E} \left[ N_n(\theta^T, \nu^T) \right] = N(\theta^T, \nu^T) = \tanh
(\bar{\nu}^{T + 1})$, in the noiseless setting, Lemma in Appendix~\ref{sup:updates} gives that.
\begin{eqnarray*}
  \tanh (\nu^{t + 1})  =  N_n (\theta^T, \nu^T) = \frac{1}{n}  \sum_{i \in [n]} \mathrm{sgn} \langle x_i,
  \theta^{\ast} \rangle \mathrm{sgn} \langle x_i, \theta^T \rangle \cdot (-
  1)^{z_i + 1}
\end{eqnarray*}
Note that $x_i, z_i$ are independent, let $W_i \assign \mathrm{sgn} \langle x_i,
\theta^{\ast} \rangle \mathrm{sgn} \langle x_i, \theta^T \rangle \cdot (-
1)^{z_i + 1}$.

with $\mathbb{P} (z_i = 1) = \frac{1}{2} + \frac{1}{2} \tanh (\nu^{\ast}),
\mathbb{P} (z_i = 2) = \frac{1}{2} - \frac{1}{2} \tanh (\nu^{\ast})$

Using Lemma~\ref{lem:Grothendieck} (Grothendieck's Identity), we show that
\begin{eqnarray*}
  \mathbb{P} [\mathrm{sgn} \langle x_i, \theta^{\ast} \rangle \mathrm{sgn}
\langle x_i, \theta^T \rangle = + 1] 
&= &\left[ \frac{1}{2} + \tmop{sgn} \langle
\theta^T, \theta^{\ast} \rangle \frac{\varphi^T}{\pi} \right]\\
\mathbb{P}
[\mathrm{sgn} \langle x_i, \theta^{\ast} \rangle \mathrm{sgn} \langle x_i,
\theta^T \rangle = - 1] 
&=& \left[ \frac{1}{2} - \tmop{sgn} \langle \theta^T,
\theta^{\ast} \rangle \frac{\varphi^T}{\pi} \right]
\end{eqnarray*}
Therefore, we obtain that
\begin{eqnarray*}
  \mathbb{P} [W_i = + 1] & = & \mathbb{P} [\mathrm{sgn} \langle x_i,
  \theta^{\ast} \rangle \mathrm{sgn} \langle x_i, \theta^T \rangle = + 1]
  \cdot \mathbb{P} (z_i = 1) +\mathbb{P} [\mathrm{sgn} \langle x_i,
  \theta^{\ast} \rangle \mathrm{sgn} \langle x_i, \theta^T \rangle = - 1]
  \cdot \mathbb{P} (z_i = 2)\\
  & = & \frac{1}{2} + \tmop{sgn} \langle \theta^T, \theta^{\ast} \rangle
  \frac{\varphi^T}{\pi} \cdot \tanh (\nu^{\ast})\\
  \mathbb{P} [W_i = - 1] & = & \frac{1}{2} - \tmop{sgn} \langle \theta^T,
  \theta^{\ast} \rangle \frac{\varphi^T}{\pi} \cdot \tanh (\nu^{\ast})
\end{eqnarray*}
Let $V_i \assign \frac{1}{2} (W_i + 1) \overset{\tmop{iid}}{\sim}
\tmop{Bern} (p)$ be Bernoulli distribution with the parameter $p
\assign \frac{1}{2} + \tmop{sgn} \langle \theta^T, \theta^{\ast} \rangle
\frac{\varphi^T}{\pi} \cdot \tanh (\nu^{\ast})$.

Therefore, with $\frac{1}{2} N_n (\theta^T, \nu^T) + \frac{1}{2} = \frac{1}{n}
\sum_{i \in [n]} V_i$ and $\frac{1}{2} N (\theta^T, \nu^T) + \frac{1}{2}
=\mathbb{E} [V_i] = \frac{1}{n} \sum_{i \in [n]} \mathbb{E} [V_i]$.

By using the probability inequalities in Lemma~\ref{lem:prob_inequality}
\begin{eqnarray*}
  \mathbb{P} [| N_n (\theta^T, \nu^T) - N (\theta^T, \nu^T) | \geq 2 t] & = &
  \mathbb{P} \left[ \left| \frac{1}{n} \sum_{i \in [n]} (V_i -\mathbb{E}
  [V_i]) \right| \geq t \right]\\
  & \leq & 2 \exp (- 2 n t^2)
\end{eqnarray*}
Consequently, by letting $\exp (- 2 n t^2) \leftarrow \delta$, with
probability at least $1 - 2 \delta$
\[ | N_n (\theta^T, \nu^T) - N (\theta^T, \nu^T) | < \sqrt{\frac{2 \log
   \frac{1}{\delta}}{n}} \]
Furthermore, when $\pi^\ast=[1, 0]$ or $[0, 1]$, namely $\nu^\ast\to \pm \infty$, 
then $q\assign \max(p, 1-p)\to \frac{1}{2} +
\frac{\varphi^T}{\pi}$.
Since after enough iterations, EM updates ensure that $\varphi^T > \frac{\pi}{2} - 1.775$.
Hence, $q\assign\frac{1}{2} +\frac{\varphi^T}{\pi} >  1- 0.565 \Theta, 1-q <  0.565 \Theta$.

Using Lemma~\ref{lem:prob_inequality}, we show that for $t \in [\mathe (1 - q), q)\neq \varnothing$
\[ \mathbb{P} \left( \frac{1}{n} \sum_{i \in [n]} (V_i -\mathbb{E} [V_i])
   \geq t \right) \leq \exp \left\{ - n \left\{ \frac{t}{q}  \left[ \log
   \frac{t}{(1 - q)} - 1 \right] + \frac{t^2}{2 q^2} \right\} \right\} \]
By choosing $t=0.565 \mathe \Theta \in[\mathe (1-q), \infty)$, note that $\Theta^2\geq \frac{\log \frac{1}{\delta}}{n}$, then with probability at least 
\begin{eqnarray*}
  1-\exp \left\{ - n \left\{ \frac{t}{q}  \left[ \log
  \frac{t}{(1 - q)} - 1 \right] + \frac{t^2}{2 q^2} \right\} \right\}
& \geq & 
1-\exp \left\{ - n \frac{0.565^2 \mathe^2}{2} \Theta^2  \right\}\\
& \geq & 1-\exp \left\{ -1.179 n\cdot \frac{\log \frac{1}{\delta}}{n}   \right\}\\
& \geq & 1- \delta
\end{eqnarray*}

Hence $| N_n (\theta^T, \nu^T) - N (\theta^T, \nu^T) | 
= 2 \frac{1}{n} \sum_{i \in [n]} |V_i -\mathbb{E} [V_i]|
\geq  2 \cdot 0.565 \mathe \Theta = 1.13 \mathe \Theta$ with probability at least $1- 2\delta$, when $\pi^\ast=[1, 0]$ or $[0, 1]$.

\begin{eqnarray*}
  | N_n (\theta^T, \nu^T) - \tmop{sgn} \langle \theta^T, \theta^{\ast} \rangle
  \tanh (\nu^{\ast}) | & \leq & | N (\theta^T, \nu^T) - \tmop{sgn} \langle
  \theta^T, \theta^{\ast} \rangle \tanh (\nu^{\ast}) | + | N_n (\theta^T,
  \nu^T) - N (\theta^T, \nu^T) |\\
  & \leq & \left| 1 - \frac{2}{\pi} \varphi^T \right| \cdot | \tanh
  (\nu^{\ast}) | + \sqrt{\frac{2 \log \frac{1}{\delta}}{n}}\\
  & < & 1.775 \frac{2}{\pi} \Theta \cdot | \tanh (\nu^{\ast}) | +
  \sqrt{\frac{2 \log \frac{1}{\delta}}{n}}\\
  & = & 1.13 \Theta \cdot | \tanh (\nu^{\ast}) | + \sqrt{\frac{2 \log
  \frac{1}{\delta}}{n}}\\
  & = & 1.13 | \tanh (\nu^{\ast}) | \Theta \left( \sqrt{\frac{d}{n}} \vee
  \frac{\log \frac{1}{\delta}}{n} \vee \sqrt{\frac{\log
  \frac{1}{\delta}}{n}} \right) + \sqrt{\frac{2 \log \frac{1}{\delta}}{n}}
\end{eqnarray*}
In the proof, we use $| N (\theta^T, \nu^T) - \tmop{sgn}
\langle \theta^T, \theta^{\ast} \rangle \tanh (\nu^{\ast}) | = \left| 1 -
\frac{2}{\pi} \varphi^T \right| \cdot | \tanh (\nu^{\ast}) |$, which is provided in Corollary~\ref{cor:err_mixing}.

Particularly, when $\pi^\ast=\{1, 0\}$ or $\{0, 1\}$, then $| \tanh (\nu^{\ast}) | = 1$
\begin{eqnarray*}
  | N_n (\theta^T, \nu^T) - \tmop{sgn} \langle \theta^T, \theta^{\ast} \rangle
  \tanh (\nu^{\ast}) | & \leq & | N (\theta^T, \nu^T) - \tmop{sgn} \langle
  \theta^T, \theta^{\ast} \rangle \tanh (\nu^{\ast}) | + | N_n (\theta^T,
  \nu^T) - N (\theta^T, \nu^T) |\\
  & < & 1.13 \Theta \cdot | \tanh (\nu^{\ast}) | + 1.13 \mathe \Theta\\
  & = & 1.13 (1+\mathe)  | \tanh (\nu^{\ast}) | \Theta \left( \sqrt{\frac{d}{n}} \vee
  \frac{\log \frac{1}{\delta}}{n} \vee \sqrt{\frac{\log
  \frac{1}{\delta}}{n}} \right)
\end{eqnarray*}
Therefore $\| \pi^{T + 1} - \bar{\pi}^{\ast} \|_1 = | N_n (\theta^T, \nu^T) -
\tmop{sgn} \langle \theta^T, \theta^{\ast} \rangle \tanh (\nu^{\ast}) |
=\left\| \frac{1}{2} - \pi^{\ast} \right\|_1 
\cdot
\mathcal{O} \left(  \sqrt{\frac{d}{n}} \vee \frac{\log
\frac{1}{\delta}}{n} \vee \sqrt{\frac{\log \frac{1}{\delta}}{n}}
\right) + c (\pi^{\ast}) \cdot \mathcal{O} \left( \sqrt{\frac{\log
\frac{1}{\delta}}{n}} \right)$, where $c (\pi^{\ast})=0$ when $\pi^\ast=\{1, 0\}$ or $\{0, 1\}$, and $c (\pi^{\ast})=\mathcal{O}(1)$.

\end{proof}

\end{document}